\documentclass{article}

\PassOptionsToPackage{numbers, sort&compress}{natbib}
 \usepackage[preprint]{neurips_2026}

\usepackage[utf8]{inputenc} 
\usepackage[T1]{fontenc}    
\usepackage[pagebackref,breaklinks,colorlinks,allcolors=magenta]{hyperref}       
\usepackage{url}            
\usepackage{booktabs}       
\usepackage{amsfonts}       
\usepackage{nicefrac}       
\usepackage{microtype}      
\usepackage{xcolor}         

\usepackage{amsmath,mathrsfs,amsthm,bm}
\usepackage{graphicx,subcaption}
\usepackage{amssymb}
\usepackage{array}
\usepackage{multirow}

\usepackage{amsfonts,mathrsfs,bm}
\usepackage{textcomp} 

\AtBeginDocument{
\def\a{{\alpha}}

\def\s{{\sigma}}

\def\th{{\theta}}

\def\s{{\sigma}}
\def\t{{\tau}}

}

\def\cM{{\mathcal{M}}}
\def\cN{{\mathcal{N}}}

\def\sF{{\mathscr{F}}}
\def\sM{{\mathscr{M}}}
\def\sR{{\mathscr{R}}}
\def\sS{{\mathscr{S}}}
\def\sT{{\mathscr{T}}}

\def\vzero{{\bm{0}}}
\def\vone{{\bm{1}}}

\newcommand{\RR}{\mathbb{R}}

\newcommand*{\dto}{\downarrow}

\newcommand{\figWidth}{1.03} 

\def\th{{\theta}}
\def\cM{{\mathcal{M}}}
\def\cN{{\mathcal{N}}}
\def\sF{{\mathscr{F}}}
\def\sM{{\mathscr{M}}}
\def\sR{{\mathscr{R}}}
\def\sS{{\mathscr{S}}}
\def\sT{{\mathscr{T}}}
\def\vzero{{\bm{0}}}
\def\vone{{\bm{1}}}
\theoremstyle{plain}
\newtheorem{theorem}{Theorem}[section]
\newtheorem{proposition}[theorem]{Proposition}
\newtheorem{lemma}[theorem]{Lemma}

\theoremstyle{definition}

\theoremstyle{remark}
\newtheorem{remark}[theorem]{Remark}
\newtheorem{finding}[theorem]{Finding}

\title{On the Intrinsic Limited Robustness of\\Latent-Based Watermarking}

\author{Cheng-Han Yeh\thanks{Equal contribution.}\quad Kuan-chun Yu\footnotemark[1]\quad Cheng-Chang Tsai\footnotemark[1]\quad Chun-Shien Lu \\
Institute of Information Science \\
Academia Sinica \\
Taipei, Taiwan (ROC) \\
\texttt{\{jerryyeh,mikeyu1012,cctsai,lcs\}@iis.sinica.edu.tw}
}

\begin{document}

\maketitle

\begin{abstract}
  Existing latent-based watermarking methods for diffusion models have overestimated their robustness to image distortions, including geometric transformations such as rotation, scaling, and translation (RST). Moreover, this paradigm of watermarking approaches may suffer from inherent limitations arising from the domain in which the watermark is embedded. In this paper, we provide the first theoretical analysis explaining why these methods lack invariance to perturbations. By relaxing the invariant relation, we derive a maximum perturbation bound that characterizes the relationship between pixel-space perturbations and their corresponding effects in latent space. In addition, we present the first analytical formulation that captures all components of practical detection mechanisms. Finally, we conduct experiments to validate the theoretical findings and the limitations of latent-based watermarking methods. Our theoretical and empirical results indicate that, under the current design paradigm, latent-based watermarking methods intrinsically exhibit limited robustness. We conclude by providing the analytical tool and design guidelines that future research could follow.
\end{abstract}

\section{Introduction}
    \subsection{Background}
        The dissemination of misinformation or disinformation severely causes financial loss and reputation damage. In particular, due to the maturation of diffusion models~\cite{dhariwal2021diffusion,ho2020denoising,rombach2021high,song2020denoising} in deep learning, the generated (fake) images have been indistinguishable from the natural images, deteriorating the distribution of AI-generated contents (AIGCs) and further obscuring the facts. Several realistic cases indicate that if AIGCs cannot be efficiently detected, not only is intellectual property infringed, but also financial losses may occur, and reputation may be damaged. In addressing these concerns, governments and industries start to take precautionary measures internationally~\cite{EUAIlaws, WhiteHouseAI, thewhitehouselaws, eulaws}. These policies clearly suggest to employ digital watermarking technologies for AIGCs detection and tracing, revealing its importance.

    \subsection{Motivation}\label{motivation}
        Watermarking methods~\cite{ci2024ringid, fernandez2023stable, wen2023treerings} on AIGCs have been proposed, to name a few, but can only claim robustness against limited types of perturbations with a limited set of parameters. 
        This obscures the potential risk, \textit{i.e}., insufficient robustness, of these well-known watermarking schemes. Specifically, we are curious about the factors, including watermark patterns, the shift in latent distribution, etc, that affect the robustness claimed by these methods.

        To answer this question, we establish a new formulation that combines the RST framework~\cite{ruanaidh1998rotation} and the diffusion model~\cite{kadkhodaie2023generalization}. We prove that the inverse diffusion process lacks invariance and robustness to perturbations in the pixel space.
        Next, by relaxing the detection criterion, we extend this analysis and derive an upper bound that characterizes pixel-space perturbations induced by variations around the watermark. The results show that sufficiently shifting the mean of the standard normal distribution via watermark injection improves robustness to distortions in the corresponding images. In addition, we propose the first analytical form to characterize the components used in latent-based watermarking. This form also delineates the relationships between the detection mechanism, the setting of False Positive Rate (FPR), and different perturbations beyond RST.

        In summary, our contributions are fivefold:
        \begin{enumerate}
            \item We are the first to bridge Vector Approximate Message Passing~\cite{rangan2019vector} with DDIM reverse process, deliver the proof to invert the reverse process, and give the closed form of the Lipschitz constant of DDIM reverse and inverse process.
            \item We identify the factors that influence the robustness of watermarking on the latent representation, and explain the lack of invariance to perturbations.
            \item We derive the upper bound of the perturbation, measured by the $\ell_p$-norm, for a watermarked image, and show that even if such an image is perturbed, it could still be recognized by the detection algorithm.
            \item We provide the first analytical form to describe the robustness of a watermarking scheme, involving the design of watermark, perturbations, setting of False Positive Rate (FPR), and Lipschitz constant of perturbations.
            \item Our findings, supported by both theoretical and experimental evidences, highlight a key limitation in that latent-based watermarking schemes may lack robustness under the current design paradigm.
        \end{enumerate}

\section{Related Work}
    An increasing number of studies have been proposed for generative watermarking in the latent space of diffusion models, which require the inverse process of DDIM~\cite{song2020denoising} to return to the time step at which the watermarks are embedded in that the image generation and watermarking processes are coupled together. The concept of generative watermarking or in-process watermarking\footnote{By contrast, post-processing watermarking~\cite{solachidis2001circularly,jia2021mbrs,lu2006media,luo2020distortion,zhang2024attack,zhu2018hidden} means that an image has been generated or already existed and then watermarked.} was first proposed in Tree-Ring~\cite{wen2023treerings}.

    Although Tree-Ring demonstrates somewhat robustness in verifying the existence of a watermark, its performance deteriorates significantly in watermark identification due to the inherent low-capacity property. To address this issue, RingID~\cite{ci2024ringid} proposes using a more meticulously designed pattern as the watermark, along with a multi-channel heterogeneous watermarking framework, to achieve superior robustness and identification capability. Unlike Tree-Ring and RingID, which use watermarks with designed patterns for specific purposes, Gaussian Shading~\cite{yang2024gaussian} uses a binary sequence as its watermark. In addition, unlike Tree-Ring and RingID, which both cause the distribution of initial noises to deviate from the standard Gaussian distribution, Gaussian Shading and its variant~\cite{mao2025max} map the watermark to the initial noise sampled from the standard Gaussian distribution.

    In contrast to the aforementioned works that embed watermarks into the initial noise, ROBIN~\cite{huang2024robin} embeds the watermark into the latent at an intermediate time step, whereas Latent Watermark~\cite{meng2025latent} embeds the watermark immediately before decoding. Finally, ZoDiac~\cite{zhang2024attack} employs a pre-trained stable diffusion~\cite{rombach2021high} model to embed the watermark into the trainable latent space.

    Another line of research explores content-dependent watermark (CDW). This concept was first introduced in~\cite{lu2003content} to counter watermark estimation attacks (WEAs), which are used in collusion and copy attacks~\cite{voloshynovskiy2001attack}. Specifically, CDW comprises an informative watermark that carries information about the owner and an image/content hash that represents the cover carrier, \textit{i.e}., content-dependent information~\cite{lu2003content}. For watermarking on AIGCs,~\citet{yang2024can} proposes averaging over a certain number of images to extract an additive pattern, which can be used for watermark forgery and removal. To defend against this attack,~\citet{yang2024can} also suggests that content-adaptive watermarking methods, which employ the same underlying concept as CDW, demonstrate robust resistance. 

\section{Insufficient Robustness in Latent-Based Watermarking}\label{rethink latent wm}
    In this section, we investigate the limited robustness of latent-based watermarking methods. First, we introduce the notations used in our analysis in Sec.~\ref{notation}.
    Second, we provide the analytical tool to explain the reverse and inverse processes of DDIM from the Vector Approximate Message Passing (VAMP)~\cite{rangan2019vector} and MMSE denoiser perspective in Sec.~\ref{Sec: DDIM formulation and L bound}.
    Third, we examine the perturbation invariance of latent-based watermarking~\cite{ci2024ringid,huang2024robin,wen2023treerings,yang2024gaussian,zhang2024attack,zhu2026towards,11462859,rebuffi2026learning,gao2026slice} in Sec.~\ref{Sec: Fragility}. Then, as mentioned in Sec.~\ref{motivation}, by relaxing the invariant relation, we provide a framework describing the relationship between the perturbations on pixel space and the corresponding effects on latent space of diffusion models, and give insights regarding the tolerance (\textit{i.e}., the maximum perturbation on the image) and the resistance (\textit{i.e}., the upper bound of the Lipschitz constants of perturbations that a watermarking scheme can resist) of the latent-based watermarking methods against more severe perturbations in the pixel space in Sec.~\ref{max bound}. Finally, we discuss the effects of thresholds and latent distribution shift of these methods in Sec.~\ref{dist shift}. For the background relevant to our analysis, please refer to Sec.~\ref{sec: background knowledge} of Appendix.

    \subsection{Notation}\label{notation}
        Throughout this paper, $\mathbf{I}\in\RR^{C\times H\times W}$ denotes an image, $\mathbf{x}_0$ represents the latent representation at time step $0$, and $\mathbf{y}$ is the corresponding latent representation at time step $T$, where $\mathbf{x}_0, \mathbf{y} \in \RR^n$. $\|\cdot\|$ represents the $\ell_p$ norm, $p\ge1$, unless specifically stated. $\rm Id$ denotes the identity matrix. $D_\theta$ is the minimum mean-squared error (MMSE) denoiser~\cite{metzler2016denoising, rangan2019vector}, parameterized by $\theta$. We use $\epsilon_\theta$ to represent the noise prediction network used in~\cite{song2020denoising} and parameterized by $\theta$. We denote the encoder and decoder used in~\cite{rombach2021high} as $\mathcal{E}_{\phi_e}$ and $\mathcal{G}_{\phi_g}$, where $\mathbf{I}=\mathcal{G}_{\phi_g}(\mathbf{x}_0)$ and $\mathbf{x}_0=\mathcal{E}_{\phi_e}(\mathcal{G}_{\phi_g}(\mathbf{x}_0))$. We use $\sR$, $\sS$, and $\sT$ to denote the rotation, scaling, and translation operations, respectively. $\sF$ and $\sF_\sM$ represent 
        the Fourier and Fourier--Mellin transforms, respectively. $\lambda(A)$ collects eigenvalues of a square matrix $A$.

    \subsection{Formulation of DDIM Reverse and Inverse Process}\label{Sec: DDIM formulation and L bound}
        Recently, diffusion models and their corresponding denoisers used in the reverse process have been studied in~\cite{kadkhodaie2023generalization} to analyze the inductive biases of the network, under the assumption that the denoiser is a ``piecewise linear, bias-free neural network''. Given a denoiser and a noisy input image, prior works~\cite{romano2017little,mohan2019robust,kadkhodaie2023generalization} represent
        the denoised image as the product of the Jacobian matrix of the denoiser evaluated at the noisy input image and the noisy input image itself. Although these assumptions make the analysis more concise, they also restrict its applicability and practicality.

        In our analysis, a $T$-step reverse process $f_\theta$ can be represented as an unrolling VAMP using the MMSE denoiser $D_\theta$ without the above assumption. Specifically, we have the following Theorem.
        \begin{theorem}\label{thm:Global inverse of rev}
            Given an MMSE denoiser $D_\theta$ and an induced $T$-step DDIM reverse process $f_\theta$, $f_\theta$ is globally invertible and expressed as:
            \begin{align}\label{eq:1}
                \mathbf{y} = f_\theta^{-1}(\hat{\mathbf{x}}_0) = f_\theta^{-1}(f_\theta(\mathbf{y})),
            \end{align}
        where $\hat{\mathbf{x}}_0$ is the denoised representation given the latent representation $\mathbf{y}$. At the same time, for all $\tilde{\mathbf{y}}$ in latent space and the corresponding denoised representation $\tilde{\mathbf{x}}_0$, $\tilde{\mathbf{x}}_0 = f_\th(\tilde{\mathbf{y}}) = f_\th(f^{-1}_\th(\tilde{\mathbf{x}}_0))$.
        \end{theorem}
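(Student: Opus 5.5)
Write the $T$-step DDIM reverse process as a composition $f_\theta = g_1\circ g_2\circ\cdots\circ g_T$, where each single step maps $\mathbf{x}_t$ to $\mathbf{x}_{t-1}$. The plan is to show that every $g_t$ is a bijection of $\RR^n$; global invertibility of $f_\theta$ then follows, with $f_\theta^{-1} = g_T^{-1}\circ\cdots\circ g_1^{-1}$.

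\textbf{Step 1: VAMP form of a single step.} Using the MMSE denoiser $D_\theta$, write one DDIM step as
\begin{align*}
    \mathbf{x}_{t-1} = g_t(\mathbf{x}_t) = a_t\,\mathbf{x}_t + b_t\, D_\theta(\mathbf{x}_t;\sigma_t),
\end{align*}
where $a_t = \sqrt{\bar\alpha_{t-1}/\bar\alpha_t}$ and $b_t$ are explicit scalars. Here $D_\theta$ plays the role of the denoising block in an unrolled VAMP iteration, and the identity $\epsilon_\theta = (\mathbf{x}_t-\sqrt{\bar\alpha_t}D_\theta)/\sqrt{1-\bar\alpha_t}$ links it to the noise-prediction form. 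So $g_t = a_t\,\mathrm{Id} + b_t D_\theta$ is the identity plus a scaled denoiser.

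\textbf{Step 2: Injectivity and surjectivity of each step.} I would use the fact that the MMSE denoiser is the posterior mean. By Tweedie's formula, $D_\theta(\mathbf{x}) = \mathbf{x} + \sigma^2\nabla\log p_\sigma(\mathbf{x})$. Its Jacobian $\sigma^{-2}\mathrm{Cov}(\mathbf{x}_0\mid\mathbf{x})$ is symmetric positive semidefinite, which makes $D_\theta$ the gradient of a convex function.

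\begin{itemize}
    \item If $a_t$ and $b_t$ have the same sign, the Jacobian $J_{g_t} = a_t\,\mathrm{Id} + b_t\nabla D_\theta$ has all eigenvalues in $\lambda(J_{g_t})$ bounded away from zero. Then $g_t$ is the gradient of a strongly convex potential, so $g_t$ is strongly monotone.
    \item If the signs differ, rewrite $g_t$ in terms of $\mathbf{x}_t$ and $\nabla\log p$ so that the coefficients share a sign. Alternatively, require $|b_t|\,\mathrm{Lip}(D_\theta) < |a_t|$. Under that condition, Banach's fixed-point theorem applied to $\mathbf{x} = (\mathbf{x}_{t-1} - b_t D_\theta(\mathbf{x}))/a_t$ gives a unique preimage.
\end{itemize}

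Strong monotonicity combined with continuity gives a global homeomorphism by the Minty--Browder theorem, equivalently Hadamard's global inverse function theorem via properness. This yields both claimed identities: $f_\theta^{-1}(f_\theta(\mathbf{y}))=\mathbf{y}$ for every $\mathbf{y}$, and $f_\theta(f_\theta^{-1}(\tilde{\mathbf{x}}_0))=\tilde{\mathbf{x}}_0$ for every $\tilde{\mathbf{x}}_0$.

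\textbf{Step 3: Explicit inverse.} The inverse of each step is the fixed point of the contraction above. I would observe that the usual DDIM inversion, which evaluates $\epsilon_\theta$ at the nearby point, is exactly the first iterate of that contraction. This gives the expression of $f_\theta^{-1}$ as an unrolled VAMP in the opposite direction.

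\textbf{Main obstacle.} The hard step is Step 2, namely bounding the spectrum of $\nabla D_\theta$ uniformly in $\mathbf{x}$ so that $J_{g_t}$ never becomes singular. The posterior-covariance representation gives the PSD lower bound. For the upper bound I would first try the bound $\mathrm{Cov}(\mathbf{x}_0\mid\mathbf{x})\preceq$ (prior second moment). If that fails, I would state the bound as an explicit assumption on the step sizes $\sigma_t$. The same estimate is what later produces the closed-form Lipschitz constants.
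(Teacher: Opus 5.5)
Your strategy matches the paper's: write one step as the identity plus a scaled MMSE denoiser, use Tweedie's formula to get the symmetric $\nabla D_\theta=\sigma^{-2}\mathrm{Cov}(\mathbf{x}_0\mid\cdot)\succeq 0$, deduce a uniformly positive definite step Jacobian (so each step is strongly monotone), invoke a global inverse function theorem, and compose. The gap is that you never do the one computation that settles your case split, and the coefficient you state is wrong.

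With $D_\theta$ as the nonlinearity, the step is $\mathbf{x}_{t-1}=\mu_t\mathbf{x}_t+\gamma_t D_\theta(\mathbf{x}_t/\sqrt{\bar\alpha_t})$, where $\mu_t=\sqrt{1-\bar\alpha_{t-1}}/\sqrt{1-\bar\alpha_t}$ and $\gamma_t=\sqrt{\bar\alpha_{t-1}}-\mu_t\sqrt{\bar\alpha_t}$. Your $\sqrt{\bar\alpha_{t-1}/\bar\alpha_t}$ is the coefficient of $\mathbf{x}_t$ in the $\epsilon_\theta$-form instead. There the companion coefficient is negative and $\nabla\epsilon_\theta=(\mathrm{Id}-\nabla D_\theta)/\sqrt{1-\bar\alpha_t}$ is not sign-definite. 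That is the only place a sign mismatch arises, and the rewrite you suggest in your second bullet amounts to passing to the $D_\theta$-form.

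In the $D_\theta$-form, $\mu_t>0$ because the noise level never vanishes ($\bar\alpha_{t-1}\le\alpha_0<1$). Also $\gamma_t>0$ iff $\bar\alpha_{t-1}(1-\bar\alpha_t)>\bar\alpha_t(1-\bar\alpha_{t-1})$ iff $\bar\alpha_{t-1}>\bar\alpha_t$. So you are always in your first bullet, with $J_t=\mu_t\mathrm{Id}+(\gamma_t/\sqrt{\bar\alpha_t})\nabla D_\theta\succeq\mu_t\mathrm{Id}$, which is exactly the bound in the paper's proof. Each step is then the gradient of a $\mu_t$-strongly convex potential, and Minty--Browder (or Hadamard) finishes the argument.

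It follows that the ``main obstacle'' you name is not part of this theorem. As in the paper's proof, invertibility uses only $\nabla D_\theta\succeq 0$, never $\nabla D_\theta\preceq\mathrm{Id}$. That is fortunate, because the upper bound is false without extra structure such as log-concave data. For the two-point prior $\pm m$ in one dimension, $D_\theta'(0)=m^2/\sigma^2>1$ once $m>\sigma$. Your prior-second-moment fallback can fail too, e.g.\ for $(1-\varepsilon)\delta_0+\varepsilon\delta_m$ with $\varepsilon<1/4$, at the point where the posterior weights are equal.

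For the same reason, drop the Banach alternative and the contraction claim in Step 3. Both require $(\gamma_t/\sqrt{\bar\alpha_t})\,\mathrm{Lip}(D_\theta)<\mu_t$, which the hypotheses do not supply, whereas strong monotonicity gives the inverse unconditionally.
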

        The detailed derivations are in Sec.~\ref{Derivation of MMSE-AMP and DDIM and Lipschitz} and Sec.~\ref{Sec: proofs for thm lem} of Appendix.

    \subsection{Limited Perturbation Invariance in Latent-based Watermark}\label{Sec: Fragility}
        We begin by analyzing the lack of invariance of the composition of RST transformations, which is formalized in the following theorem. The proofs of the following theorems are in Sec.~\ref{Sec: proofs for thm lem} of Appendix.
        \begin{theorem}\label{thm:RST invariance}
            Given an invertible $f_\theta$ defined in Eq.~(\ref{eq:1}), an RST-perturbed representation $\hat{\mathbf{x}}^{RST}_0$, and its original counterpart $\hat{\mathbf{x}}_0$ with $\hat{\mathbf{x}}^{RST}_0\ne\hat{\mathbf{x}}_0$, we have
            \begin{align}\label{eq:RST noninvariance}
                [\mathscr{F_M} \circ \mathscr{F}]\mathbf{y}^{RST} \neq [\mathscr{F_M} \circ \mathscr{F}]\mathbf{y},
            \end{align}
            where $\mathbf{y}^{RST}$ and $\mathbf{y}$ are the latent representations of $\hat{\mathbf{x}}^{RST}_0$ and $\hat{\mathbf{x}}_0$, respectively. In other words, the combined process, \textit{i.e}., the DDIM inversion process operating on an RST-transformed input, does not possess RST invariance, which results in the inequality of the transformed outputs in the joint Fourier and Fourier-Mellin domains.
        \end{theorem}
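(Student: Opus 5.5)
The plan is to argue by contradiction. We combine the global invertibility from Theorem~\ref{thm:Global inverse of rev} with the fact that $\sF_\sM \circ \sF$ is invariant only to RST acting on its own input. We suppose that $[\sF_\sM\circ\sF]\mathbf{y}^{RST} = [\sF_\sM\circ\sF]\mathbf{y}$ and derive $\hat{\mathbf{x}}^{RST}_0=\hat{\mathbf{x}}_0$, which contradicts the hypothesis.

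First, we recall the standard Fourier--Mellin facts. The Fourier magnitude removes translation. The log-polar resampling turns rotation and scaling of the input into translations. A second Fourier magnitude removes those translations. Consequently, equality of the $\sF_\sM\circ\sF$ representations of two latents holds exactly when $\mathbf{y}^{RST} = (\sR\sS\sT)'\,\mathbf{y}$ for some RST operator $(\sR\sS\sT)'$ acting on the latent grid. In the idealized continuous setting where this map is injective modulo RST, this is an equivalence, and we take that setting as the working assumption. Substituting into Theorem~\ref{thm:Global inverse of rev}, we have $\hat{\mathbf{x}}^{RST}_0 = f_\theta(\mathbf{y}^{RST}) = f_\theta((\sR\sS\sT)'\mathbf{y})$, while by construction $\hat{\mathbf{x}}^{RST}_0 = (\sR\sS\sT)\hat{\mathbf{x}}_0 = (\sR\sS\sT) f_\theta(\mathbf{y})$.

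The crux is to show that $f_\theta$ does not intertwine RST operators, that is, that there is no $(\sR\sS\sT)'$ with $f_\theta\circ(\sR\sS\sT)' = (\sR\sS\sT)\circ f_\theta$ unless both are trivial. Here we would use the unrolled VAMP/DDIM form of $f_\theta$ from the appendix. Each step has the form $\mathbf{x}_{t-1} = a_t \mathbf{x}_t + b_t\, \epsilon_\theta(\mathbf{x}_t,t)$, equivalently an affine combination of $\mathbf{x}_t$ and the MMSE denoiser output $D_\theta(\mathbf{x}_t)$. The linear part $a_t\mathbf{x}_t$ commutes with any RST operator. Equivariance therefore reduces to equivariance of $D_\theta$. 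The MMSE denoiser is the posterior mean $\mathbb{E}[\mathbf{x}_0\mid\mathbf{x}_t]$ under the learned data prior $p$, and it is RST-equivariant only if $p$ is RST-invariant. Natural-image and latent priors are not rotation- or scale-invariant, because of the canonical orientation, object scale, boundary effects of the finite grid, and the mismatch between $(\sR\sS\sT)$ in pixel space and $(\sR\sS\sT)'$ in latent space introduced by $\mathcal{E}_{\phi_e}$. Hence a nontrivial RST exists for which equivariance fails. The contradiction then follows: equality of the invariant representations would force $\hat{\mathbf{x}}^{RST}_0 = \hat{\mathbf{x}}_0$ after applying the global inverse $f_\theta^{-1}$, against the hypothesis $\hat{\mathbf{x}}^{RST}_0\ne\hat{\mathbf{x}}_0$.

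The main obstacle is this middle step, because it is not a pure algebraic identity. It requires an assumption that the prior, and hence $D_\theta$, is not RST-invariant. It also requires an argument that discretization and the encoder prevent a latent-side RST from matching the pixel-side one. I would first try to state it cleanly by exhibiting a single non-equivariant step, namely $t=T$, where $D_\theta$ is nearly the prior mean. Any non-RST-invariant prior mean suffices there to break the commutation, and invertibility of the remaining steps then propagates the inequality to $\mathbf{y}$.
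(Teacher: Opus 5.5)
Your route differs from the paper's, and it does not close. The central problem is that your crux step proves a statement with the wrong quantifiers, so the contradiction fails even if you grant that the prior is not RST-invariant. The theorem is pointwise. For the given $\mathbf{y}$ and the given pixel-space RST, you must exclude \emph{every} latent operator $(\sR\sS\sT)'$ with $f_\theta((\sR\sS\sT)'\mathbf{y})=\hat{\mathbf{x}}^{RST}_0=\mathcal{E}_{\phi_e}((\sR\sS\sT)(\mathcal{G}_{\phi_g}(f_\theta(\mathbf{y}))))$. Non-invariance of the prior, and hence of $D_\theta$, only gives $f_\theta\circ(\sR\sS\sT)'\ne(\sR\sS\sT)\circ f_\theta$ as maps. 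That is, commutation fails at \emph{some} input for \emph{some} RST, which is exactly your sentence ``a nontrivial RST exists for which equivariance fails''. It says nothing about this particular $\mathbf{y}$.

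The propagation from a single non-equivariant step at $t=T$ does not repair this. Invertibility of the remaining steps $g$ only carries $\mathbf{a}\ne\mathbf{b}$ to $g(\mathbf{a})\ne g(\mathbf{b})$. What you need has the form $g(\mathbf{a})\ne(\sR\sS\sT)\,g(\mathbf{b})$, and reducing that to the one-step failure requires $g$ itself to intertwine RST operators. That is precisely what is in question, and a later non-equivariant map can cancel an earlier one.

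Your closing sentence is also wrong as written. Under your assumption, equal representations give $\mathbf{y}^{RST}=(\sR\sS\sT)'\mathbf{y}$, not $\mathbf{y}^{RST}=\mathbf{y}$. So applying $f_\theta$ (not $f_\theta^{-1}$) does not yield $\hat{\mathbf{x}}^{RST}_0=\hat{\mathbf{x}}_0$. The implication $\mathbf{y}^{RST}=\mathbf{y}\Rightarrow\hat{\mathbf{x}}^{RST}_0=\hat{\mathbf{x}}_0$ proves only $\mathbf{y}^{RST}\ne\mathbf{y}$, which is the paper's first step.

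Your working assumption that $\sF_\sM\circ\sF$ is injective modulo RST is load-bearing, and it fails for the magnitude-based invariant of \cite{ruanaidh1998rotation}, even in the continuum. Discarding Fourier phase lets many latents that are not RST images of one another share a representation. So ``$\mathbf{y}^{RST}$ is not an RST image of $\mathbf{y}$'' is strictly weaker than (\ref{eq:RST noninvariance}). If instead the transforms are taken without magnitudes, they are injective, and (\ref{eq:RST noninvariance}) follows at once from $\mathbf{y}^{RST}\ne\mathbf{y}$, so the equivariance machinery is unnecessary.

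The missing idea is that this pointwise inequality does not follow from structural properties of $D_\theta$ alone. The paper first gets $\mathbf{y}^{RST}\ne\mathbf{y}$ from invertibility. It then treats the ``amplitude inconsistency'' produced by the expansive inverse $f_\theta^{-1}$ as a random variable, as in the proof of Theorem~\ref{thm: non-perturb invariant}. Equality of the two $\sF_\sM\circ\sF$ representations then becomes a probability-zero event, and accordingly the paper claims the inequality only almost surely, not deterministically as your argument aims to.
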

        Theorem~\ref{thm:RST invariance} shows that even after transforming the latent representations, $\mathbf{y}^{RST}$ and $\mathbf{y}$, into the Fourier domain, the results are not equal. To reach equality, it should satisfy the assumption called ``fragility'' in that no perturbation is allowed between either $\hat{\mathbf{x}}^{RST}_0$ and $\hat{\mathbf{x}}_0$ or $\mathbf{y}^{RST}$ and $\mathbf{y}$.

        Furthermore, we extend our analysis to perturbations beyond RST transformations and formalize the corresponding result in the following theorem.
        \begin{theorem}\label{thm: non-perturb invariant}
            If the output of
            $f_\theta$, defined in Theorem~\ref{thm:Global inverse of rev}, is non-RST invariant, then the output
            is also not invariant to any perturbations.
        \end{theorem}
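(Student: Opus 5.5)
The plan is to argue by reduction: every RST transformation is a particular perturbation, and I will transfer non-invariance from that subclass to arbitrary perturbations using the global invertibility in Theorem~\ref{thm:Global inverse of rev}.

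First I fix what ``invariant to a perturbation'' means for the inverse map. Let $\mathcal{P}$ be a perturbation operator with $\mathcal{P}(\hat{\mathbf{x}}_0)\neq\hat{\mathbf{x}}_0$. Invariance of the combined process means the latent codes agree, $f_\theta^{-1}(\mathcal{P}(\hat{\mathbf{x}}_0)) = f_\theta^{-1}(\hat{\mathbf{x}}_0)$, possibly after a fixed post-transform $\Phi$. For RST this post-transform is $\Phi=\sF_\sM\circ\sF$, as in Theorem~\ref{thm:RST invariance}.

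The key step uses the identity $f_\theta(f_\theta^{-1}(\tilde{\mathbf{x}}_0))=\tilde{\mathbf{x}}_0$ from Theorem~\ref{thm:Global inverse of rev}. This shows $f_\theta^{-1}$ is injective. If $f_\theta^{-1}(\mathcal{P}(\hat{\mathbf{x}}_0)) = f_\theta^{-1}(\hat{\mathbf{x}}_0)$, then applying $f_\theta$ to both sides gives $\mathcal{P}(\hat{\mathbf{x}}_0)=\hat{\mathbf{x}}_0$. That contradicts the assumption that the perturbation is nontrivial. So for any nontrivial perturbation $\mathcal{P}$, the latent representations differ: $\mathbf{y}^{\mathcal{P}}\neq\mathbf{y}$.

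Next I handle a post-transform $\Phi$. The Fourier transform $\sF$ is invertible, so the inequality survives $\sF$. For $\sF_\sM$ I reuse the argument of Theorem~\ref{thm:RST invariance}: the magnitude invariance of $\sF_\sM$ only identifies inputs that differ by an exact RST action. I then split into two cases.
\begin{itemize}
\item If $\mathbf{y}^{\mathcal{P}}$ is not an RST image of $\mathbf{y}$, the transformed outputs differ. The invariance identification does not apply, and the remaining transforms are injective.
\item If $\mathbf{y}^{\mathcal{P}}$ is an exact RST image of $\mathbf{y}$, then $\hat{\mathbf{x}}_0^{\mathcal{P}}=f_\theta(\mathbf{y}^{\mathcal{P}})$. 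This means the perturbation acts, through $f_\theta$, as an RST perturbation of the output. The hypothesis that $f_\theta$'s output is non-RST invariant then gives the inequality directly.
\end{itemize}
Together the two cases show that no perturbation class can be invariant, which is the contrapositive form of the statement.

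I expect the main obstacle to be the second case. It requires showing that an RST relation in latent space corresponds, via $f_\theta$, to an RST-type perturbation in pixel space. In general $f_\theta$ does not commute with RST. I would avoid proving commutation. Instead I would phrase the hypothesis ``non-RST invariant'' as: no nontrivial output perturbation is mapped to an RST-equivalent latent. Under that reading the case closes by hypothesis, and the rest of the argument relies only on the injectivity supplied by Theorem~\ref{thm:Global inverse of rev}.
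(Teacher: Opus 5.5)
Your first step is correct. The identity $f_\theta(f_\theta^{-1}(\tilde{\mathbf{x}}_0))=\tilde{\mathbf{x}}_0$ makes $f_\theta^{-1}$ injective, so every nontrivial perturbation gives $\mathbf{y}^{\mathcal{P}}\neq\mathbf{y}$. This is the same contradiction the paper uses at the start of the proof of Theorem~\ref{thm:RST invariance}, and it never uses the non-RST-invariance hypothesis. If ``invariance'' means equality of latent codes, this step alone is a complete deterministic proof.

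The gap is in carrying this through $\Phi=\sF_\sM\circ\sF$. If $\sF$ and $\sF_\sM$ were invertible linear maps, $\Phi$ would be injective and you would already be done. But then $\Phi$ could not be RST-invariant: the RST-invariant descriptor is invariant precisely because it discards phase (Fourier magnitude, log-polar resampling, magnitude again). With those magnitude steps, your Case~1 claim is false. $\Phi$ does not identify only inputs related by an exact RST action: any $\mathbf{y}'$ with $|\sF(\mathbf{y}')|=|\sF(\mathbf{y})|$ but a different phase has $\Phi(\mathbf{y}')=\Phi(\mathbf{y})$. Such $\mathbf{y}'$ form a set of dimension roughly $n/2$, far larger than the few-parameter RST orbit of $\mathbf{y}$. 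So ``not an RST image $\Rightarrow$ outputs differ'' does not follow.

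Case~2 cannot be closed either. There the RST invariance of $\Phi$ gives $\Phi(\mathbf{y}^{\mathcal{P}})=\Phi(\mathbf{y})$, so invariance actually \emph{holds}. Such perturbations exist: by Theorem~\ref{thm:Global inverse of rev}, for any latent RST operator $\mathcal{R}$ (e.g.\ a cyclic translation), the map $\hat{\mathbf{x}}_0\mapsto f_\theta(\mathcal{R}\,f_\theta^{-1}(\hat{\mathbf{x}}_0))$ is a nontrivial perturbation with $\mathbf{y}^{\mathcal{P}}=\mathcal{R}\mathbf{y}$. The same construction works with a fixed phase-only modification of $\sF(\mathbf{y})$. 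Re-reading the hypothesis as ``no nontrivial output perturbation is mapped to an RST-equivalent latent'' therefore assumes away exactly these cases, i.e.\ it assumes the conclusion. Consequently, a deterministic, for-every-perturbation statement in the Fourier--Mellin domain is false, and no injectivity-based case split can prove it; what is missing is a genericity ingredient.

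The paper takes a different route that supplies this ingredient. It linearizes the inversion as $J_{0\to t}=\frac{\sqrt{1-\bar{\alpha}_t}}{\sqrt{1-\alpha_0}}\bigl(\mathrm{Id}+\sum_{m}R^{(m)}_{0\to t}\bigr)$. It then argues that for a high-dimensional perturbation $\Delta_0$ the correction terms are orthogonal to $\Delta_0$ with high probability, so $\|\Delta_t\|\ge\frac{\sqrt{1-\bar{\alpha}_t}}{\sqrt{1-\alpha_0}}\|\Delta_0\|$. Finally, it treats the resulting amplified latent discrepancy as a random variable, so a collision of the transformed outputs is a null event and non-invariance holds almost surely; the proof of Theorem~\ref{thm:RST invariance} defers to this. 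To keep your structure, replace the case split with a proof that the event $\Phi(\mathbf{y}^{\mathcal{P}})=\Phi(\mathbf{y})$ has probability zero under the randomness of $\mathbf{y}\sim\cN(\vzero,\mathrm{Id})$ or of the perturbation. That probabilistic form is the only one in which the statement can hold.
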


    \subsection{Maximum Perturbation on Image and Resilience against Perturbations}\label{max bound}
        Different from the fragility assumption made in Theorems~\ref{thm:RST invariance} and~\ref{thm: non-perturb invariant}, here we derive the upper bound of the maximum allowed perturbation on images for robustness purposes, given a certain tolerance in terms of distance in latent space. Specifically, all images within such a bound can go through DDIM inversion to obtain latent representations close to the unperturbed one under a certain distance metric.

        Given the Lipschitz constants of DDIM reverse and inverse processes at each time step provided in Sec.~\ref{Sec: derive Spectral Bounds} of Appendix, we can derive the Lipschitz constants of $T$-step reverse and inverse processes.
        \begin{theorem}\label{thm: global L bound}
            Let $f_\theta: \RR^n \to \RR^n$ be the generative process and $f^{-1}_\theta: \RR^n \to \RR^n$ be the inverse process. The global Lipschitz bounds with monotonically decreasing noise schedule $\left\{\alpha_t\right\}^T_{t=0}$ are:
            \begin{align}
                \text{Generation: } \quad &L_{f_\theta} \le \frac{\sqrt{\alpha_0}}{\sqrt{\bar{\alpha}_T}}, \nonumber\\
                \text{Inversion: } \quad &L_{f^{-1}_\theta} \le \frac{\sqrt{1-\bar{\alpha}_T}}{\sqrt{1-\alpha_0}}, \nonumber
            \end{align}
        \end{theorem}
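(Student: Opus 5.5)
The plan is to take the per-step Lipschitz constants of the DDIM reverse and inverse updates from Sec.~\ref{Sec: derive Spectral Bounds} of the Appendix as given, and combine them through the composition rule $L_{g\circ h}\le L_g L_h$. The product over $T$ steps should then telescope into the stated closed forms.

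First I write one DDIM reverse step $\mathbf{x}_t\mapsto\mathbf{x}_{t-1}$ in the VAMP/MMSE form of Theorem~\ref{thm:Global inverse of rev}:
\begin{align}
\mathbf{x}_{t-1}=\sqrt{\tfrac{\alpha_{t-1}}{\alpha_t}}\,\mathbf{x}_t+\Big(\sqrt{1-\alpha_{t-1}}-\sqrt{\tfrac{\alpha_{t-1}(1-\alpha_t)}{\alpha_t}}\Big)\epsilon_\theta(\mathbf{x}_t,t). \nonumber
\end{align}
Here $\epsilon_\theta$ is tied to $D_\theta$ through Tweedie's relation. I take the appendix bound on the Jacobian spectrum of the MMSE denoiser, $\lambda(\nabla D_\theta)\subset[0,1]$, as given. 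With it, the Jacobian of each reverse step is a convex-type combination whose spectral norm is at most $\sqrt{\alpha_{t-1}}/\sqrt{\alpha_t}$.

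By the same argument applied to the inverse step $\mathbf{x}_{t-1}\mapsto\mathbf{x}_t$, the appendix gives a per-step constant of $\sqrt{1-\alpha_t}/\sqrt{1-\alpha_{t-1}}$. This is the noise-scale ratio, since inversion amplifies noise rather than signal. I will not rederive these constants; I only invoke them.

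Second, I compose the $T$ steps:
\begin{align}
L_{f_\theta}\le\prod_{t=1}^{T}\frac{\sqrt{\alpha_{t-1}}}{\sqrt{\alpha_t}}, \qquad L_{f^{-1}_\theta}\le\prod_{t=1}^{T}\frac{\sqrt{1-\alpha_t}}{\sqrt{1-\alpha_{t-1}}}. \nonumber
\end{align}
Both products telescope, giving $\sqrt{\alpha_0}/\sqrt{\alpha_T}$ and $\sqrt{1-\alpha_T}/\sqrt{1-\alpha_0}$. Under the paper's notation, where $\{\alpha_t\}$ is the DDIM cumulative schedule, the terminal value is $\bar{\alpha}_T$. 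The monotone decrease of $\alpha_t$ guarantees that each per-step factor is well defined and positive, and that the product over steps is the right object to bound.

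The main obstacle is justifying that the per-step Lipschitz constant is exactly the ratio, with no extra factor from $\epsilon_\theta$. Two things could break this: the coefficient of $\epsilon_\theta$ may have either sign, and the Jacobian of $D_\theta$ need not be symmetric. My first attempt is to use that the MMSE denoiser's Jacobian is a posterior covariance scaled by $1/\sigma_t^2$. This matrix is symmetric positive semidefinite with eigenvalues in $[0,1]$. The step Jacobian is then an affine matrix function of that Jacobian, so its spectral norm is attained at an eigenvalue endpoint, and the worst endpoint gives the ratio. For a general $\ell_p$ norm I would note that the appendix bound is stated in the operator norm. I would restrict the theorem's statement to $p=2$, or accept a dimension-dependent norm-equivalence constant.
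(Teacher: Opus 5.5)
Your proposal is essentially the paper's proof. The paper likewise invokes the per-step bounds $\sqrt{\bar{\alpha}_{t-1}}/\sqrt{\bar{\alpha}_t}$ and $1/\mu_t=\sqrt{1-\bar{\alpha}_t}/\sqrt{1-\bar{\alpha}_{t-1}}$ from Sec.~\ref{Sec: derive Spectral Bounds}, derived there as in your obstacle paragraph from the symmetric $J_t=\mu_t\mathrm{Id}+\frac{\gamma_t}{\sqrt{\bar{\alpha}_t}}\nabla D_\theta$ with $\gamma_t>0$, then multiplies them over the $T$ steps and telescopes using $\bar{\alpha}_0=\alpha_0$. Write your factors with $\bar{\alpha}_t$ from the outset, since your unbarred $\alpha_t$ is DDIM's cumulative notation rather than this paper's per-step $\alpha_t$; note also that, like the paper, you control only the $\ell_2$ operator norm and inherit $\lambda(\nabla D_\theta)\subset[0,1]$, whose upper half $\nabla D_\theta\preceq\mathrm{Id}$ holds for, e.g., log-concave data priors but can fail for multimodal ones.
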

        where $\bar{\alpha}_t:=\prod_{s=0}^t\alpha_s$.
        The detailed proof is in Sec.~\ref{Sec: proofs for thm lem} of Appendix.
        
        Next, for the operations commonly used in the generation and detection of latent-based watermarks, we define the following operators: $\mathcal{W}$ injects a $\mathbf{w}$ in a latent representation $\mathbf{y}$; $\mathcal{M}$ denotes a masking operation used to extract the watermark pattern; and $\mathcal{T}\in\mathcal{S}_{\mathcal{T}}$ represents an image transformation from a set of operator $\mathcal{S}_{\mathcal{T}}$, including geometric, non-geometric, and model-based perturbations under different parameter settings. Note that we treat different parameter settings of a perturbation as distinct operators in $\mathcal{S}_{\mathcal{T}}$.
        Additionally, we denote the Lipschitz constants of $\mathscr{F}^{-1}$, $\mathscr{F}$, $\mathcal{W}$, $\mathcal{M}$, $\mathcal{E}_{\phi_e}$, $\mathcal{G}_{\phi_g}$, and $\mathcal{T}$, as $L_{\mathscr{F}^{-1}}$, $L_{\mathscr{F}}$, $L_{\mathcal{W}}$, $L_{\mathcal{M}}$, $L_{\mathcal{E}_{\phi_e}}$, $L_{\mathcal{G}_{\phi_g}}$, and $L_{\mathcal{T}}$, respectively.

        Now, we can define the image generation process from a watermarked latent. Given a latent $\mathbf{y}$ sampled from standard normal distribution, the watermarked image $\mathbf{I}$ can be generated as:
        \begin{align}\label{eq: detection space to Image space}
            \mathbf{I}&=\mathbf{F}(\mathbf{y}, \mathbf{w})\stackrel{\text{def}}{=}\mathcal{G}_{\phi_g}(f_\theta(\mathscr{F}^{-1}(\mathbf{y}_\mathbf{w})), \nonumber\\
            \mathbf{y}_\mathbf{w}&=\mathcal{W}(\mathbf{y}, \mathbf{w})
            \stackrel{\text{def}}{=}\mathscr{F}(\mathbf{y})\odot\vone_{\cM^c}+\mathbf{w},
        \end{align}
        where $\vone_{(\cdot)}$ is an indicator function, $\odot$ is Hadamard product, $\mathbf{y}_\mathbf{w}$ is the watermarked latent, and $\mathbf{w}=\mathbf{W}\odot\vone_{\cM}$ for some $\mathbf{W}\in\mathbb{C}^n$ denoting the ground-truth watermark pattern. In Eq.~(\ref{eq: detection space to Image space}), as $\mathcal{W}$ injects a pattern into $\mathbf{y}$, we formulate the Lipschitz constant of $\mathcal{W}$ with image size of $H\times W$ as $L_{\mathcal{W}}=\sqrt{HW}$. 
        Similarly, we can formally define a mapping from a watermarked image $\mathbf{I}$ to the representation employed by the watermark detection algorithm as:
        \begin{align}
            &\hat{\mathbf{w}} = \mathbf{F}^{-1}(\mathcal{T}(\mathbf{I}))\stackrel{\text{def}}{=}\mathcal{M}(\mathscr{F}(f^{-1}_\theta(\mathcal{E}_{\phi_e}(\mathcal{T}(\mathbf{I}))))\label{eq: wm detection space},
        \end{align}
        where $\hat{\mathbf{w}}$ is the estimated watermark from the distorted image $\mathcal{T}(\mathbf{I})$.

        Due to numerical errors, asymmetric guidance scales, and value clipping during reverse and inverse processes, the invariant relation, as depicted in Theorem~\ref{thm: non-perturb invariant}, is highly unachievable even if perturbations are not present. Therefore, we relax the invariant relation and define a scalar $r$ to capture these practical effects. We examine these practical effects in Sec.~\ref{Sec: ordered perturbations on w-hatw'}. 
        
        With these practical considerations, we define the ball of radius $r$ centered at $\mathbf{w}$ as $\mathcal{B}_r(\mathbf{w})=\{\mathbf{w}^{\prime}:\| \mathbf{w} - \mathbf{w}^{\prime} \| \leq r \}$. Our goal below is to demonstrate the relationship between the distortions within the ball $\mathcal{B}_r(\mathbf{w})$ in the latent space and the upper bound of corresponding perturbations in the pixel space.


        \begin{theorem}\label{thm: condition for success detection}
            Given 
            a watermarked image $\mathbf{I}$ generated from $\mathbf{w}$ and $\mathbf{y}\sim\cN(\vzero,\rm Id)$, and a ball $\mathcal{B}_r(\mathbf{w})$, if the watermarked image $\mathbf{I}$ and its perturbed counterpart $\mathcal{T}(\mathbf{I})$ satisfy
            \begin{align*}
                \forall \mathbf{I},\|\mathbf{I}-\mathcal{T}(\mathbf{I})\|\leq L_{\mathbf{F}} r,
            \end{align*}
            then for any watermarked images, $\mathbf{I}=\mathbf{F}(\mathbf{y},\mathbf{w})$ and $\mathbf{I}^{\prime}=\mathbf{F}(\mathbf{y},\mathbf{w}^{\prime})$, with $\mathbf{w}'\in\mathcal{B}_r(\mathbf{w})$, we have:
            \begin{align}\label{eq: rev inv total bound}
                \|\mathbf{w}-\hat{\mathbf{w}}^{\prime}\|\leq L_{\mathbf{F}^{-1}}(1+L_{\mathcal{T}})L_{\mathbf{F}} r,
            \end{align}
            where $\hat{\mathbf{w}}^{\prime}$ is the estimated watermark of $\mathcal{T}(\mathbf{I}^{\prime})$, obtained from Eq.~(\ref{eq: wm detection space}), $L_{\mathbf{F}}=L_{\mathcal{G}_{\phi_g}}L_{f_\theta}L_{\mathscr{F}^{-1}}L_{\mathcal{W}}$, and $L_{\mathbf{F}^{-1}}=L_{\mathcal{E}_{\phi_e}}L_{f^{-1}_\theta}L_\mathscr{F}L_\mathcal{M}$.
        \end{theorem}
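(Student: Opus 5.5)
The plan is to compare everything in image space, then carry the comparison back to the detection space with the Lipschitz constant of $\mathbf{F}^{-1}$. Both $\mathbf{F}$ and $\mathbf{F}^{-1}$, as defined in Eqs.~(\ref{eq: detection space to Image space}) and~(\ref{eq: wm detection space}), are compositions of Lipschitz maps. Because the Lipschitz constant of a composition is at most the product of the individual constants, we first record that $\mathbf{F}(\mathbf{y},\cdot)$ is $L_{\mathbf{F}}$-Lipschitz in $\mathbf{w}$, with $L_{\mathbf{F}}=L_{\mathcal{G}_{\phi_g}}L_{f_\theta}L_{\mathscr{F}^{-1}}L_{\mathcal{W}}$. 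Here $L_{f_\theta}$ is taken from Theorem~\ref{thm: global L bound}, and the dependence on $\mathbf{w}$ enters only through $\mathcal{W}$. Likewise, $\mathbf{F}^{-1}$ is $L_{\mathbf{F}^{-1}}$-Lipschitz on images, with $L_{\mathbf{F}^{-1}}=L_{\mathcal{E}_{\phi_e}}L_{f^{-1}_\theta}L_\mathscr{F}L_\mathcal{M}$. The global invertibility from Theorem~\ref{thm:Global inverse of rev} guarantees that $f_\theta^{-1}$ is a well-defined map.

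Next we identify $\mathbf{w}$ with the detection-space image of $\mathbf{I}$, namely $\mathbf{w}=\mathbf{F}^{-1}(\mathbf{I})$ (no distortion, exact inversion). This holds by Theorem~\ref{thm:Global inverse of rev} together with $\mathbf{x}_0=\mathcal{E}_{\phi_e}(\mathcal{G}_{\phi_g}(\mathbf{x}_0))$ and $\mathcal{M}(\mathscr{F}(\mathscr{F}^{-1}(\mathbf{y}_\mathbf{w})))=\mathbf{y}_\mathbf{w}\odot\vone_\cM=\mathbf{w}$, provided $\mathcal{M}$ is the mask onto $\cM$. With $\hat{\mathbf{w}}'=\mathbf{F}^{-1}(\mathcal{T}(\mathbf{I}'))$, the Lipschitz property of $\mathbf{F}^{-1}$ gives
\begin{align*}
\|\mathbf{w}-\hat{\mathbf{w}}'\|\le L_{\mathbf{F}^{-1}}\|\mathbf{I}-\mathcal{T}(\mathbf{I}')\|.
\end{align*}

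We then split the image-space term with the triangle inequality, inserting $\mathcal{T}(\mathbf{I})$:
\begin{align*}
\|\mathbf{I}-\mathcal{T}(\mathbf{I}')\|\le\|\mathbf{I}-\mathcal{T}(\mathbf{I})\|+\|\mathcal{T}(\mathbf{I})-\mathcal{T}(\mathbf{I}')\|.
\end{align*}
The first term is at most $L_{\mathbf{F}}r$ by the hypothesis. For the second term, $L_{\mathcal{T}}\|\mathbf{I}-\mathbf{I}'\|=L_{\mathcal{T}}\|\mathbf{F}(\mathbf{y},\mathbf{w})-\mathbf{F}(\mathbf{y},\mathbf{w}')\|\le L_{\mathcal{T}}L_{\mathbf{F}}\|\mathbf{w}-\mathbf{w}'\|\le L_{\mathcal{T}}L_{\mathbf{F}}r$, since $\mathbf{w}'\in\mathcal{B}_r(\mathbf{w})$. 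Adding the two pieces and multiplying by $L_{\mathbf{F}^{-1}}$ yields Eq.~(\ref{eq: rev inv total bound}).

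The main obstacle is the identity $\mathbf{w}=\mathbf{F}^{-1}(\mathbf{I})$. It requires the exact invertibility of $f_\theta$ from Theorem~\ref{thm:Global inverse of rev}, the exact autoencoder consistency stated in the notation, and a clean interaction of the mask with $\mathcal{W}$. In particular, the residual $\mathscr{F}(\mathbf{y})\odot\vone_{\cM^c}$ must vanish under $\mathcal{M}$. I would verify this componentwise first. A secondary subtlety is that the Lipschitz constant of $\mathcal{W}$ in its $\mathbf{w}$ argument is the relevant one, and it is dominated by the stated $L_{\mathcal{W}}=\sqrt{HW}$.
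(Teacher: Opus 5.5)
Your proposal is correct and follows essentially the same route as the paper: insert $\mathcal{T}(\mathbf{I})$ via the triangle inequality, bound the two pieces by the hypothesis and by $L_{\mathcal{T}}\|\mathbf{I}-\mathbf{I}'\|\le L_{\mathcal{T}}L_{\mathbf{F}}r$ (Lemma~\ref{lem: max bound}), and transfer to detection space with $L_{\mathbf{F}^{-1}}$. The only cosmetic difference is that the paper passes through $\bar{\mathbf{w}}=\mathbf{F}^{-1}(\mathbf{I})$ and removes $\|\mathbf{w}-\bar{\mathbf{w}}\|$ by simply asserting $\mathbf{F}^{-1}\mathbf{F}=\mathrm{Id}$, whereas you substitute $\mathbf{w}=\mathbf{F}^{-1}(\mathbf{I})$ directly and spell out why it holds (autoencoder consistency, invertibility of $f_\theta$, and $\mathcal{M}$ being exactly the mask onto $\cM$).
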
 

 \begin{table*}[t]
            \centering\small\caption{Comparison of different watermarking schemes with operations involved in the generation and detection processes. $*$ denotes the use of the noise layer for fine-tuning the network model or extra modules. $\dagger$ denotes the extra module for compensating for or correcting the errors caused by perturbations. $\triangle$ denotes the need for modifications of the reverse process. The ``$\times$ number'' indicates the number of modules used in the generation or detection process.}\label{Table: comparison of operations}
            \begin{tabular}{|p{3.8cm}|p{1.98cm}|p{1.98cm}|p{1.98cm}|p{1.98cm}|}
                \toprule
                Watermarking Methods & $\mathcal{M}$ / $\mathcal{W}$ & $\mathscr{F}^{-1}$ / $\mathscr{F}$ & $f_{\theta}$ / $f^{-1}_\theta$ & $\mathcal{G}_{\phi_g}$ / $\mathcal{E}_{\phi_e}$\\
                \midrule[2pt]\hline
                Tree-Ring (TR)~\cite{wen2023treerings} & $\checkmark$ / $\checkmark$ & $\checkmark$ / $\checkmark$ & $\checkmark$ / $\checkmark$ & $\checkmark$ / $\checkmark$ \\
                RingID~\cite{ci2024ringid} & $\checkmark$ / $\checkmark$ & $\checkmark$ / $\checkmark$ & $\checkmark$ / $\checkmark$ & $\checkmark$ / $\checkmark$ \\
                HSQR/TR~\cite{lee2025semantic} & $\checkmark$ / $\checkmark$ & $\checkmark$ / $\checkmark$ & $\checkmark$ / $\checkmark$ & $\checkmark$ / $\checkmark$ \\
                Gaussian Shading (GS)~\cite{yang2024gaussian} & $\times$ / $\checkmark$ & $\times$ / $\times$ & $\checkmark$ / $\checkmark$ & $\checkmark$ / $\checkmark$ \\
                PRC Watermark (PRCW)~\cite{gunn2024undetectable} & $\times$ / $\checkmark$ & $\times$ / $\times$ & $\checkmark$ / $\checkmark$ & $\checkmark$ / $\checkmark$ \\
                WIND~\cite{arabi2024hidden} & $\times$ / $\checkmark$ & $\times$ / $\times$ & $\checkmark$ / $\checkmark$ & $\checkmark$ / $\checkmark$ \\
                \hline\hline
                TR-SynTag~\cite{fang2025syntag} & $\checkmark$ / $\checkmark$ & $\checkmark$ / $\checkmark$ & $\checkmark$ / $\checkmark$ $(\dagger\times2)$ & $\checkmark$ / $\checkmark$ $(\dagger)$ \\
                GS-SynTag~\cite{fang2025syntag} & $\times$ / $\checkmark$ & $\times$ / $\times$ & $\checkmark$ / $\checkmark$ $(\dagger\times2)$ & $\checkmark$ / $\checkmark$ $(\dagger)$ \\
                TR-CoSDA~\cite{fang2025cosda} & $\checkmark$ / $\checkmark$ & $\checkmark$ / $\checkmark$ & $\checkmark$ / $\checkmark$ ($*,\triangle$) & $\checkmark$ / $\checkmark$ \\
                GS-CoSDA~\cite{fang2025cosda} & $\times$ / $\checkmark$ & $\times$ / $\times$ & $\checkmark$ / $\checkmark$ ($*,\triangle$) & $\checkmark$ / $\checkmark$ \\
                MaXsive~\cite{mao2025max} & $\times$ / $\checkmark$ & $\times$ / $\times$ & $\checkmark$ / $\checkmark$ ($\dagger,\triangle$) & $\checkmark$ / $\checkmark$ \\
                WMAdaptor~\cite{ci2024wmadapter} & $\times$ / $\times$ & $\times$ / $\times$ & $\times$ / $\times$ & $\checkmark$ / $\checkmark$ $(*)$ \\
                LaWa~\cite{rezaei2024lawa} & $\times$ / $\times$ & $\times$ / $\times$ & $\times$ / $\times$ & $\checkmark$ / $\checkmark$ $(*)$ \\
                Stable Signature~\cite{fernandez2023stable} & $\times$ / $\times$ & $\times$ / $\times$ & $\times$ / $\times$ & $\checkmark$ / $\checkmark$ $(*)$ \\
                HiDDeN~\cite{zhu2018hidden} & $\times$ / $\times$ & $\times$ / $\times$ & $\times$ / $\times$ & $\checkmark$ / $\checkmark$ $(*)$ \\
                \bottomrule
            \end{tabular}
        \end{table*}

        The derivation is in Sec.~\ref{Sec: proofs for thm lem} of Appendix.
        For $\hat{\mathbf{w}}^{\prime}$ to be identified as the embedded watermark under a given FPR and corresponding threshold $R$, we need $\|\mathbf{w}-\hat{\mathbf{w}}^{\prime}\| \leq R$, where $r<R$. To clarify the difference between $R$ and $r$, $R$ describes the threshold of a given FPR that encloses the distance between the watermark $\mathbf{w}$ and any perturbed watermark $\hat{\mathbf{w}}^{\prime}$, whereas $r$ is the upper bound of the distance between any two unperturbed watermarks $\mathbf{w}$ and $\mathbf{w}^{\prime}$, and the distance between $\mathbf{w}$ and its perturbed counterpart $\hat{\mathbf{w}}$.

        To connect $R$ and Eq.~(\ref{eq: rev inv total bound}) for analyzing the robustness of watermark $\mathbf{w}$, we first look at the Lipschitz constants of each component used in the reverse and inverse processes. Actually, the Lipschitz constant $L_{\mathcal{M}}$ of masking operation $\mathcal{M}$ is less than $1$. However, from Theorem~\ref{thm: global L bound}, we know that the reverse and inverse processes are expansive mappings, with both $L_{f_\theta}$ and $L_{f^{-1}_\theta}$ being larger than $1$.

        \begin{finding}
            Suppose that an image is attacked by two perturbations, including $\mathcal{T}_{\mathrm{min}}$, which has the smallest Lipschitz constant $L_{\mathcal{T}_{\mathrm{min}}}$ among $\mathcal{S}_{\mathcal{T}}$, and $\mathcal{T}_{\mathrm{max}}$, which has the largest Lipschitz constant $L_{\mathcal{T}_{\mathrm{max}}}>L_{\mathcal{T}_{\mathrm{min}}}$ among $\mathcal{S}_{\mathcal{T}}$. We then can derive from Eq.~(\ref{eq: rev inv total bound}) to have:
            \begin{align}\label{eq: inequality detection bound}
                \|\mathbf{w}-\hat{\mathbf{w}}^{\prime}\| \leq (1+L_{\mathcal{T}_{\mathrm{min}}})L_{\mathbf{F}^{-1}}L_{\mathbf{F}} r
                \leq (1+L_{\mathcal{T}_{\mathrm{max}}})L_{\mathbf{F}^{-1}}L_{\mathbf{F}} r.
            \end{align}
        \end{finding}

        \begin{finding}
            Eq.~(\ref{eq: inequality detection bound}) shows that whether a perturbed image $\mathcal{T}(\mathbf{I}^{\prime})$ can be identified as a watermarked image depends on $\mathcal{M}$, $\mathcal{W}$, and $R$. That is, given a diffusion model, $\mathcal{M}$, $\mathbf{w}$, and $R$, the watermark is robust against any perturbation $\mathcal{T} \in \mathcal{S}_{\mathcal{T}}$ whose Lipschitz constant is at most $\frac{R}{L_{\mathbf{F}^{-1}}L_{\mathbf{F}}r}-1$ such that
        \begin{align}\label{eq: ideal detection threshold}
            \|\mathbf{w}-\hat{\mathbf{w}}^{\prime}\| &\leq (1+L_{\mathcal{T}_{\mathrm{min}}})L_{\mathbf{F}^{-1}}L_{\mathbf{F}} r
            \leq (1+L_{\mathcal{T}})L_{\mathbf{F}^{-1}}L_{\mathbf{F}} r \leq R.
        \end{align}
        \end{finding}
        

        From Eq.~(\ref{eq: ideal detection threshold}), the impact of perturbations depends on the modules used in $\mathbf{F}$ and $\mathbf{F}^{-1}$. In particular, expansive modules employed to extract the estimated watermark degrade watermarking robustness. Constraining these modules to be non-expansive in diffusion models, however, restricts the diversity of the generated samples, revealing an inherent trade-off in latent-based watermarking.

        \begin{remark}\label{remark: expansive map}
            Latent-based watermarking uses expansive modules that amplify perturbations in the pixel space, whereas using non-expansive modules results in generated samples with limited diversity.
        \end{remark}

        The ratio between $R$ and $r$ further limits robustness under a low FPR, which requires a smaller $R$, as discussed in Sec.~\ref{dist shift} below. When $R\leq r$, the detector inevitably classifies a non-perturbed watermarked image as unwatermarked based on the estimated watermark. This behavior highlights a fundamental limitation of latent-based watermarking in practical settings.

        \begin{remark}\label{remark: r & R}
            The factor $r$ is always present due to inevitable errors in diffusion models. When the latent-based watermark is used in real-world applications, $R$ cannot be smaller than $r$.
        \end{remark}

        \textbf{\textit{Design Guideline}}
        Based on our theoretical and empirical analyses, we fundamentally question the long-term viability of latent-based watermarking paradigm for AI-generated content. Our findings indicate that any robust watermarking scheme must navigate three inherent mathematical constraints---all of which latent-based methods structurally violate. 
        
        First, the reverse and inverse processes of diffusion models lack perturbation invariance, rendering their latent representations
        highly sensitive to even minor spatial distortions. Second, the expansive nature of these processes (with Lipschitz constants $L > 1$) mathematically guarantees that any pixel-space attack or numerical error will be
        significantly amplified during latent extraction. Consequently, both injection and detection must bypass such expansive modules. Third, latent-based methods face an
        inherent trade-off: injecting a more robust signal requires shifting the model's sampling distribution, which
        consequently degrades generation quality and
        increases the FID score~\cite{heusel2017gans}. 
        
        In light of these
        fundamental bottlenecks, the continued pursuit of latent-based watermarking appears mathematically counterproductive. A practical alternative is readily available: several existing post-hoc watermarking methods~\cite{10045698,10321686,10663684,10516610,10539606,souvcek2025pixel} already meet these rigorous guidelines. By operating entirely outside the
        sensitive latent space and avoiding expansive diffusion modules, these existing post-hoc schemes naturally bypass the aforementioned vulnerabilities, ensuring high detection reliability while strictly adhering to prescribed fidelity constraints ({\em e.g.,} PSNR or FID).

        We provide a comparison of different watermarking methods that describe all the operations involved in the generation and detection processes in Table~\ref{Table: comparison of operations}. In the upper block of Table~\ref{Table: comparison of operations}, we include the methods that do not train and fine-tune any components, and embed a designed watermark pattern in the latent representation at time step $T$. The lower block indicates
        methods that introduce extra modules, modify the sampling process $f_\theta$, or leverage the \textit{noise layer} to improve the robustness of watermarking. However, none of the methods in Table~\ref{Table: comparison of operations} satisfy the aforementioned guidelines.

    \subsection{Effects of $R$ and Distribution Shift in Latent Space}\label{dist shift}
        We can see that setting $R$ is closely related to Eq.~(\ref{eq: rev inv total bound}) and Eq.~(\ref{eq: ideal detection threshold}), which determines the robustness of the watermarking scheme against perturbations. Hence, we discuss the effect of $R$ and its connection with the False Positive rate (FPR). Suppose we have two datasets: one with watermarked images $\mathcal{D}_w$; the other with unwatermarked images $\mathcal{D}$. According to~\cite{an2024waves}, the detection threshold was set based on the ratio of the number of false positive examples to the number of samples in $\mathcal{D}$.

        Specifically, since unwatermarked images are inverted by $\mathbf{F}^{-1}_{\theta}$ into the latent space, this induces a distance between the estimated watermark of sample from $\mathcal{D}$ and watermark $\mathbf{w}$. As the number of samples in $\mathcal{D}$ increases, the distribution of distances between the unwatermarked latents and ground-truth watermark $\mathbf{w}$ converges to a non-central $\chi$ distribution. Hence, we can set the threshold $R$ such that the ratio of area satisfying $\|\mathbf{w}-\hat{\mathbf{w}}^{\prime}\|\leq R$ to area under this non-central $\chi$ distribution equals the desired FPR. The condition $\|\mathbf{w}-\hat{\mathbf{w}}^{\prime}\|\leq R$ thus serves as the detection criterion.
        
        Given $R$ and based on Eq.~(\ref{max perturb bound}) in Sec.~\ref{Sec: proofs for thm lem} of Appendix, increasing the upper bound $L_{\mathbf{F}}r$ around the image $\mathbf{I}$, within which the bound covers more images and these images can be projected back to $\mathcal{B}_r(\mathbf{w})$, provides a wider area to tolerate perturbations. Thus, according to Eq.~(\ref{max perturb bound}), it is possible to increase the value of $r$ or norm of $\mathbf{w}$ in the latent space, which correspond to relaxing the criterion $\|\mathbf{w}-\hat{\mathbf{w}}\|$ or increasing the strength of the injected $\mathbf{w}$, respectively.

        To reveal the importance of $r$, we vary it from $0$ to $\infty$, allowing us to explore similarity based on the distance between $\mathbf{w}$ and inversed latent representation $\mathbf{w}^{\prime}$, as defined in Eq.~(\ref{max perturb bound}). In fact, setting $r$ is equivalent to defining the region in which latent representations lie within a distance $r$ of $\mathbf{w}$.
        If it is set to $\infty$, every latent representation is trivially considered close to $\mathbf{w}$, which can result in high FPRs. Interestingly, this condition occurs when channels are masked by $\mathcal{M}$. For example, Tree-Ring~\cite{wen2023treerings} and RingID~\cite{ci2024ringid} only consider the difference in one channel while discarding the rest of the channels. Hence, even if the rest of the channels differ significantly, as long as the difference in the selected dimension remains within the constraint, the data is still regarded as being watermarked. Please also refer to the visual explanations and the discussions of $\mathcal{B}_r(\mathbf{w})$ in Sec.~\ref{Sec: Ball constraints} of Appendix. Conversely, if it equals $0$, indicating $r=0$ without allowing any tolerance, which depicts the ``fragility'' scenario discussed in Sec. \ref{Sec: Fragility}.

        On the other hand, by moving $\mathbf{w}$ to another position, where $\|\mathbf{w}\|$ increases greatly while not increasing $r$, this does not enlarge the maximum perturbation bound of $\mathbf{I}$ accordingly, yet corresponds to a larger allowed perturbation in the pixel space. 
        Adding a shift on each latent representation not only moves the normal distribution $\mathcal{N}(\mathbf{0}, \mathbf{I})$ to $\mathcal{N}(\mathbf{\mu}_\mathbf{w}, \mathbf{\Sigma}_\mathbf{w})$ but also moves a set of $\mathbf{y}$'s to an area that induces a larger norm for every $\mathbf{y}_\mathbf{w}\in\mathcal{N}(\mathbf{\mu}_\mathbf{w}, \mathbf{\Sigma}_\mathbf{w})$, which creates a larger ``buffer'' between $\mathcal{N}(\mathbf{0}, \mathbf{I})$ and $\mathcal{N}(\mathbf{\mu}_\mathbf{w}, \mathbf{\Sigma}_\mathbf{w})$ that accommodates perturbed latent representations.
        As for moving to near the normal distribution, which shrinks the ``buffer,''
        there are higher chances that the latent representations are from $\mathcal{N}(\mathbf{0}, \mathbf{I})$. Hence, it is argued that the robustness of latent-based watermarking comes from shifting the normal distribution $\mathcal{N}(\mathbf{0}, \mathbf{I})$ to $\mathcal{N}(\mathbf{\mu}_\mathbf{w}, \mathbf{\Sigma}_\mathbf{w})$, as shown in Fig.~\ref{distribution shift} in Sec.~\ref{sim of dist shift} of Appendix. We also provide an alternative explanation of the detection mechanism from the perspective of forgery and removal attacks in Sec.~\ref{lens of f r attack} of Appendix.

\section{Experiments}\label{exps}
    In Sec.~\ref{rethink latent wm}, our theoretical analysis shows that the latent space of diffusion models is not invariant to perturbations in the pixel space. We provide Eqs.~(\ref{eq: inequality detection bound}) and~(\ref{eq: ideal detection threshold}) in Sec.~\ref{max bound} to describe distortions in the latent space stemming from perturbations in the pixel space.
    To examine our theoretical findings, we provide empirical results using different perturbations and parameter settings on existing watermarking schemes. Our goal is to reveal why latent-based watermarking methods require auxiliary modules to withstand perturbations and identify the components that limit their robustness. Therefore, we selected some of the methods in the upper block of Table~\ref{Table: comparison of operations} as our evaluation targets. The implementation details are first described in Sec. \ref{implements}. In Secs.~\ref{Sec: ordered perturbations on w-hatw'},~\ref{Sec: Mask and wm effects}, and~\ref{Sec: Lip effects}, we show that Eq.~(\ref{eq: ideal detection threshold}) provides explanations regarding the effects of perturbations and Lipschitz constant $L_{\mathcal{T}}$, mask $\mathcal{M}$, and watermark pattern $\mathbf{w}$ on the distance $\|\mathbf{w}-\hat{\mathbf{w}}^{\prime}\|$. Also, we can observe the roles of $r$ and $R$ mentioned in Secs.~\ref{max bound} and~\ref{dist shift}. 

    \subsection{Implementation Details}\label{implements}
        All experiments were conducted in Docker containers, each equipped with an Intel\textsuperscript{\textregistered} Xeon\textsuperscript{\textregistered} Gold 6154 CPU and a single NVIDIA Tesla V100-SXM2-32GB GPU. The implementations used the Diffusers~\cite{von-platen-etal-2022-diffusers} package from Hugging Face\footnote{\url{https://huggingface.co/}}, along with the codebases from Tree-Ring~\cite{wen2023treerings}, RingID~\cite{ci2024ringid}, PRCW~\cite{gunn2024undetectable}, and HSTR/QR~\cite{lee2025semantic}. We adopted Stable Diffusion v2-1~\cite{rombach2021high} from Diffusers. The prompt dataset is the stable-diffusion-prompts from Gustavosta\footnote{\url{https://huggingface.co/datasets/Gustavosta/Stable-Diffusion-Prompts}}. The image distortions were implemented with Pillow\footnote{\url{https://pillow.readthedocs.io/en/stable/}}, torchvision~\cite{torchvision2016} , and Stirmark~\cite{petitcolas1998attacks}.

        To investigate how the strength of attacks affects the detection, each attack was conducted with five parameter settings. Additionally, we included two types of regeneration from WAVES~\cite{an2024waves}: Regen-VAE and Regen-DiffPure. See Sec.~\ref{Sec: exp settings} of Appendix for detailed descriptions. Due to limited space, we present five perturbations, each with three different parameter settings. Please refer to Secs.~\ref{sec: full figure exp},~\ref{Sec: w norm increase on RID HSTR}, and~\ref{sec: full lip exp} of Appendix for complete experiments.

        For experiment settings, we set the ring radius as $0\sim10$ for Tree-Ring~\cite{wen2023treerings} and $3\sim14$ for RingID~\cite{ci2024ringid}. The reverse process used $50$ steps to generate watermarked and unwatermarked images with guidance scales set to $7.5$. We followed the settings in~\cite{wen2023treerings,ci2024ringid,lee2025semantic,gunn2024undetectable} and sampled 1,000 pairs of these images to compute the detection thresholds of 1e-2 in terms of FPR. Before watermark detection, we utilized the DDIM inversion~\cite{dhariwal2021diffusion, song2020denoising, wen2023treerings} to execute the inverse process with the same $50$ steps from images to latent representations with guidance scales set to $1.0$, which followed the same settings in~\cite{huang2024robin, wen2023treerings}.

        \begin{figure}[t]
            \begin{subfigure}[c]{0.185\textwidth}
                \centering\subfloat{\centering\makebox[10pt][l]{\raisebox{7pt}{\rotatebox[origin=l]{90}{\tiny Jitter}}}
                \includegraphics[width=\figWidth\linewidth]{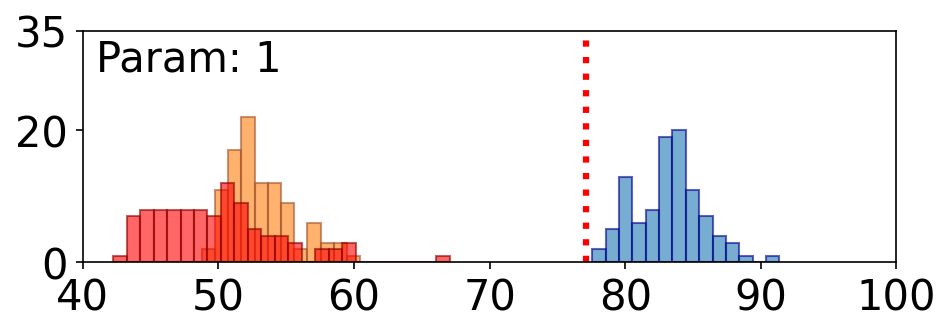}
                \includegraphics[width=\figWidth\linewidth]{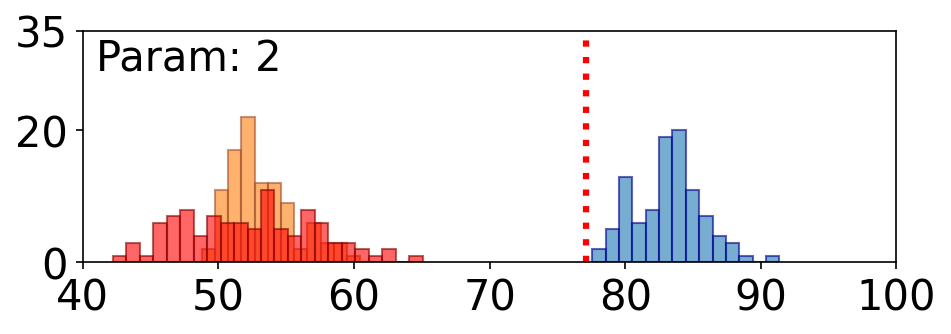}
                \includegraphics[width=\figWidth\linewidth]{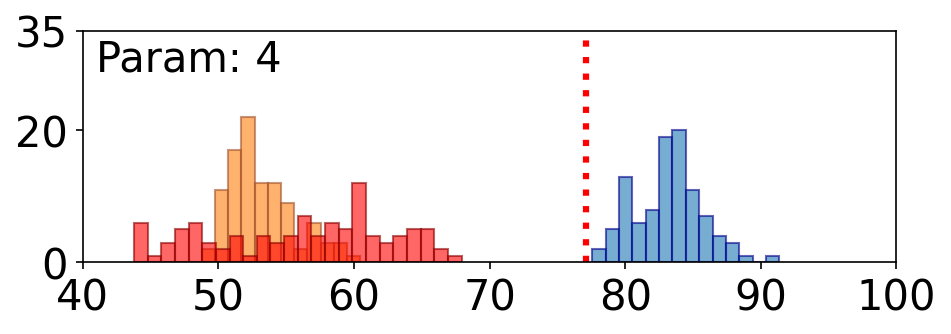}
                \includegraphics[width=\figWidth\linewidth]{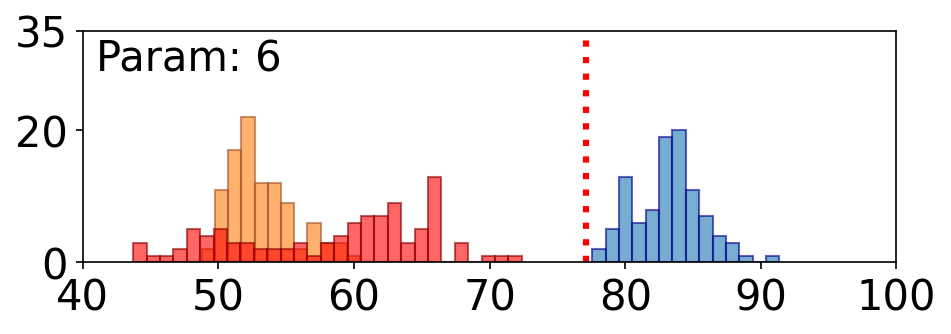}
                \includegraphics[width=\figWidth\linewidth]{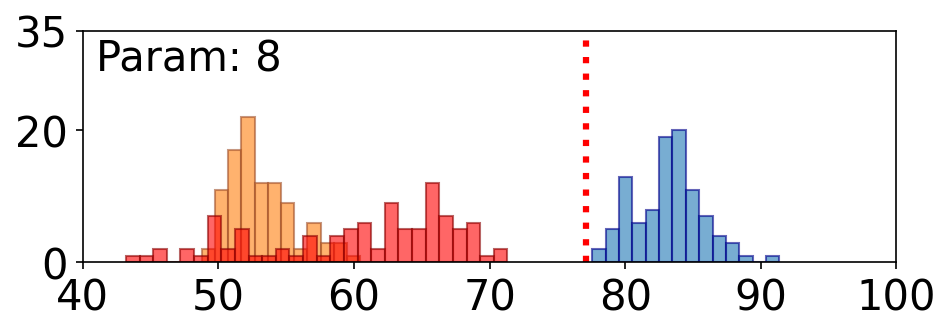}}

                \subfloat{\centering\makebox[10pt][l]{\raisebox{7pt}{\rotatebox[origin=l]{90}{\tiny JPEG}}}
                \includegraphics[width=\figWidth\linewidth]{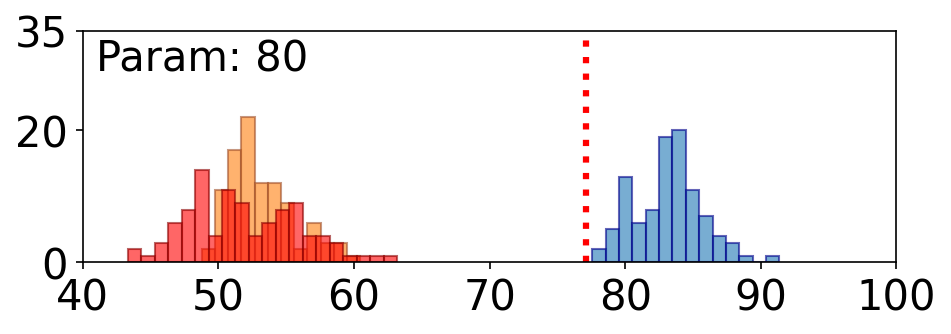}
                \includegraphics[width=\figWidth\linewidth]{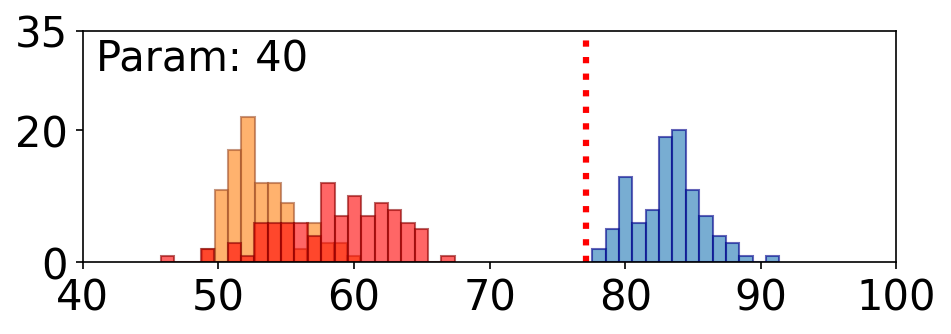}
                \includegraphics[width=\figWidth\linewidth]{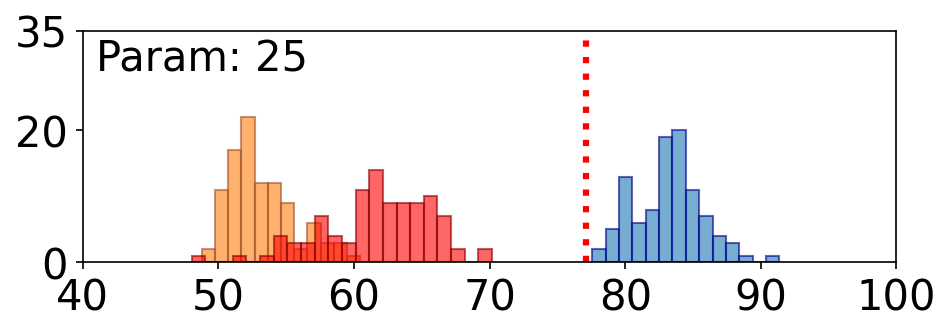}
                \includegraphics[width=\figWidth\linewidth]{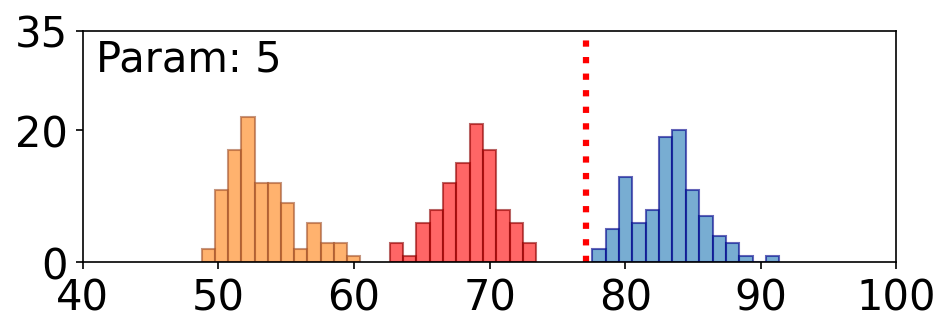}
                \includegraphics[width=\figWidth\linewidth]{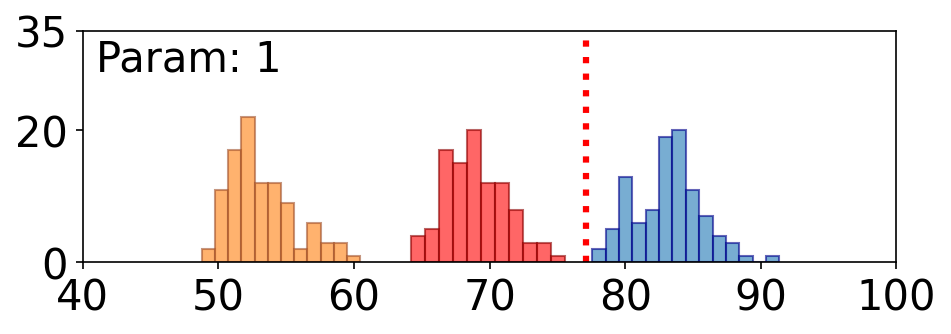}}

                \subfloat{\centering\makebox[10pt][l]{\raisebox{3pt}{\rotatebox[origin=l]{90}{\tiny Rotation}}}
                \includegraphics[width=\figWidth\linewidth]{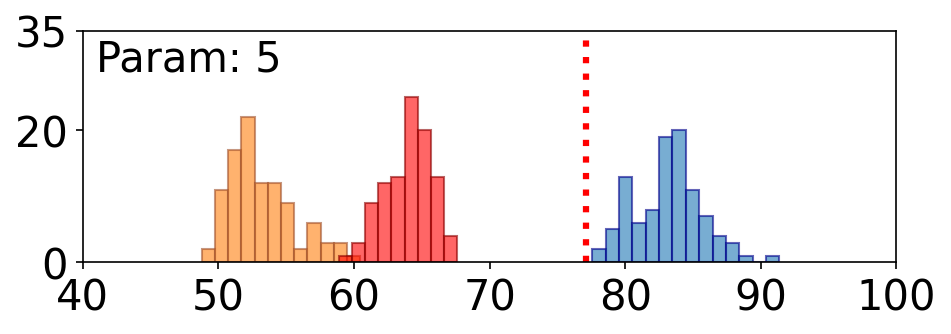}
                \includegraphics[width=\figWidth\linewidth]{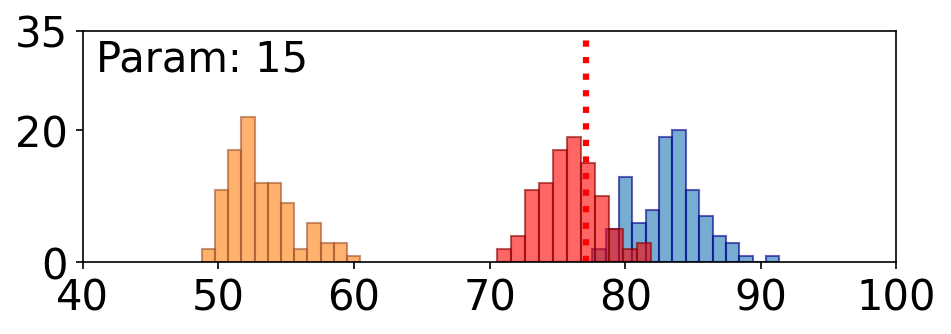}
                \includegraphics[width=\figWidth\linewidth]{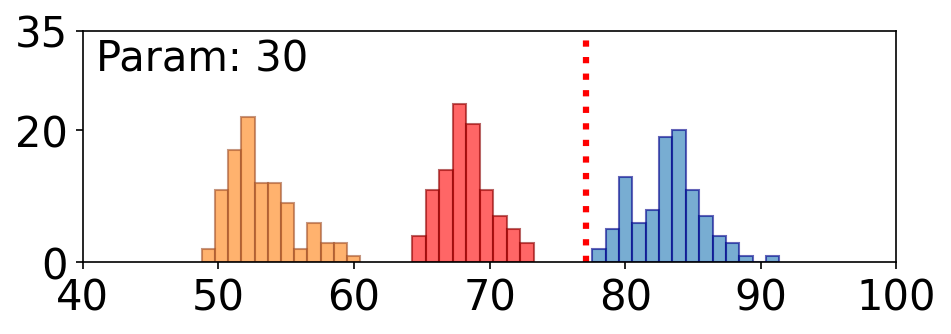}
                \includegraphics[width=\figWidth\linewidth]{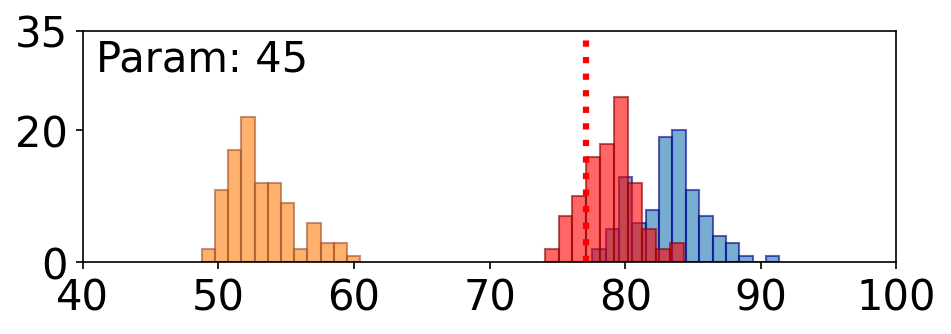}
                \includegraphics[width=\figWidth\linewidth]{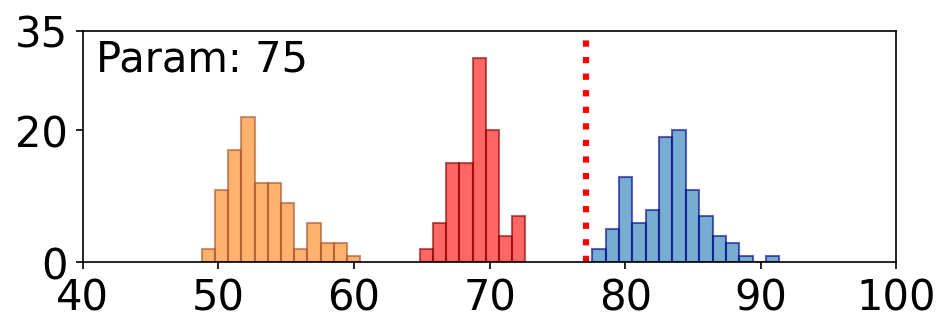}}

                \subfloat{\centering\makebox[10pt][l]{\raisebox{3pt}{\rotatebox[origin=l]{90}{\tiny Cropping}}}
                \includegraphics[width=\figWidth\linewidth]{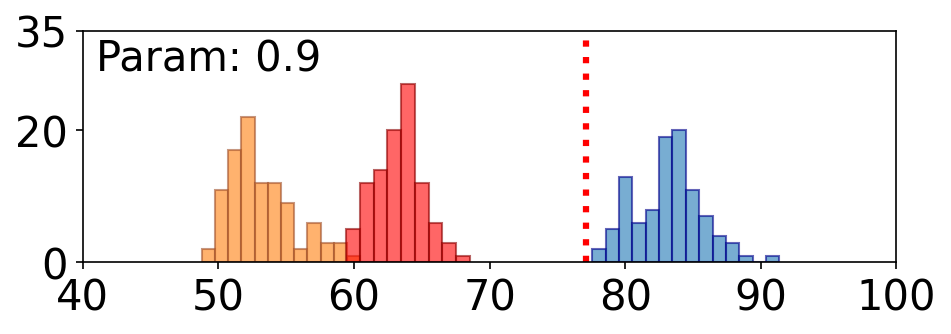}
                \includegraphics[width=\figWidth\linewidth]{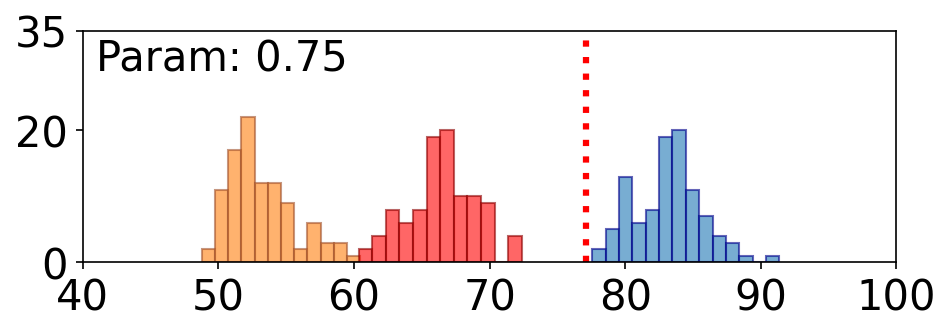}
                \includegraphics[width=\figWidth\linewidth]{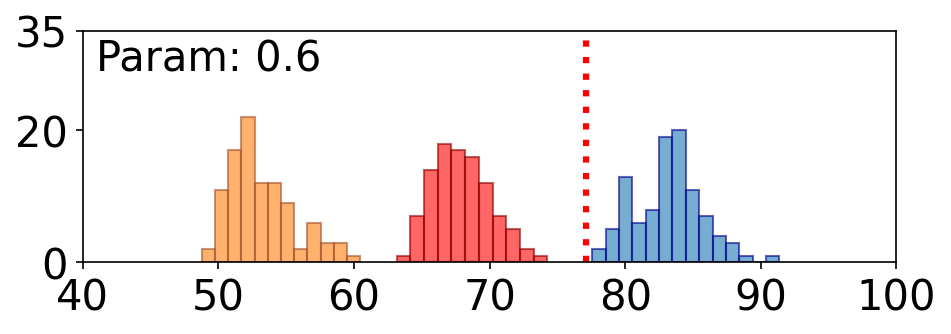}
                \includegraphics[width=\figWidth\linewidth]{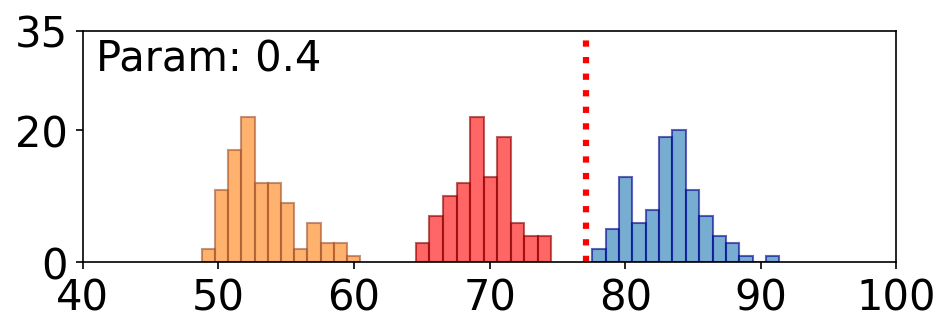}
                \includegraphics[width=\figWidth\linewidth]{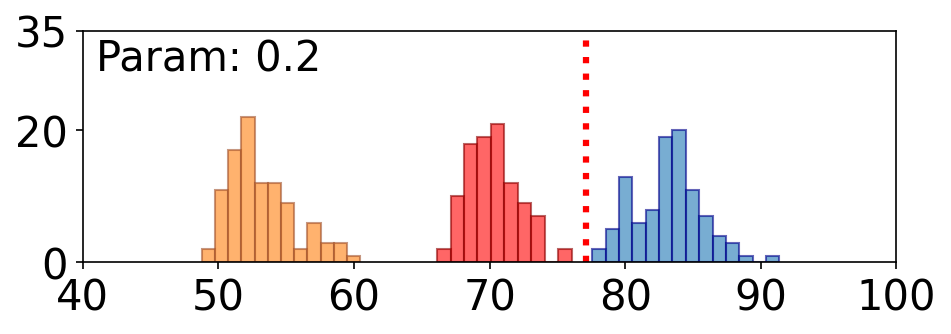}}

                \subfloat{\centering\makebox[10pt][l]{\raisebox{0pt}{\rotatebox[origin=l]{90}{\tiny Regen-VAE}}}
                \includegraphics[width=\figWidth\linewidth]{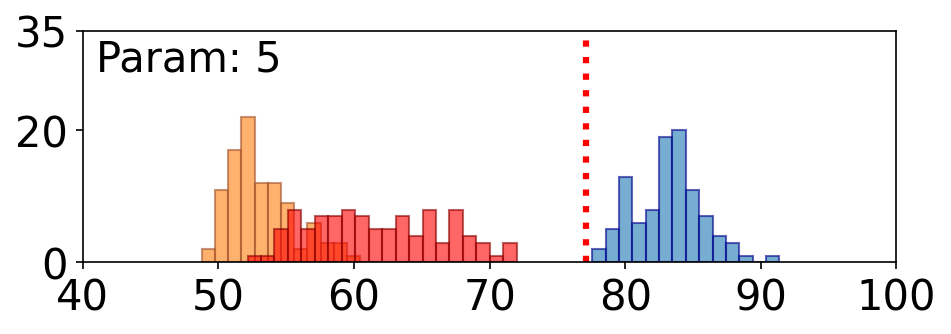}
                \includegraphics[width=\figWidth\linewidth]{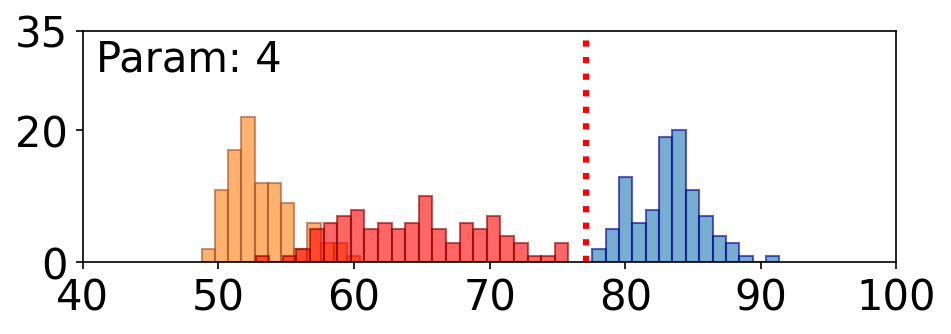}
                \includegraphics[width=\figWidth\linewidth]{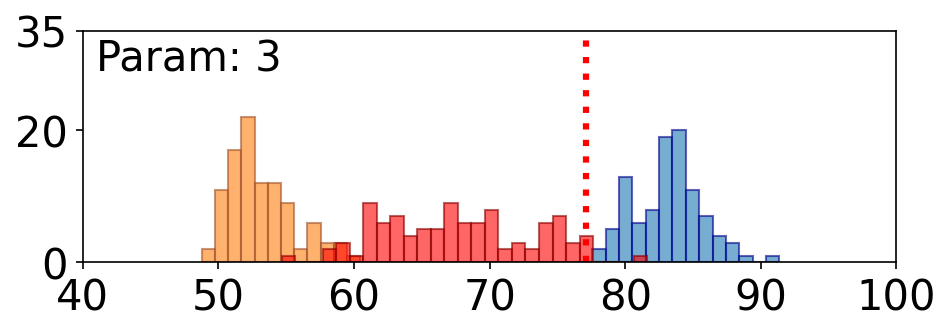}
                \includegraphics[width=\figWidth\linewidth]{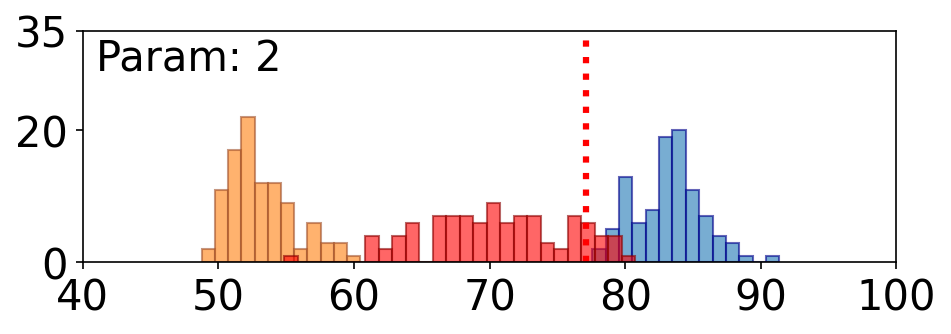}
                \includegraphics[width=\figWidth\linewidth]{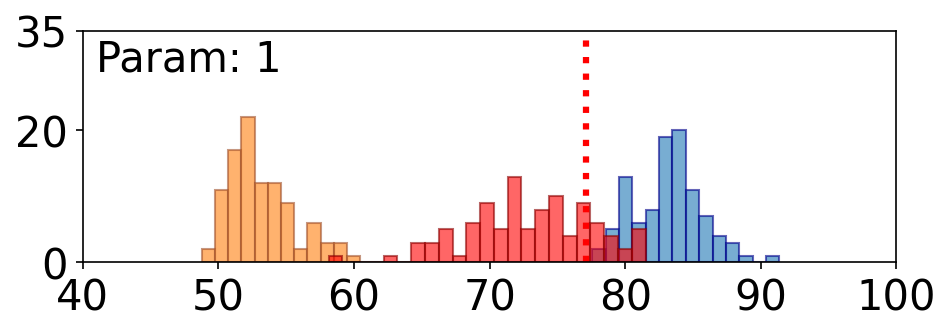}}
            \end{subfigure}
            \caption{Effects of different perturbations on $\|\mathbf{w}-\hat{\mathbf{w}}^{\prime}\|$. From left to right, the perturbations were set with different parameters (Param). Watermark: Tree-Ring. Thresholds: 77.12 (1e-2 FPR).}\label{fig: TR attack shifting subfig}
        \end{figure}

    \subsection{Effects of perturbations on detection distance}\label{Sec: ordered perturbations on w-hatw'}
        In this experiment, we analyzed the implications of Eq.~(\ref{eq: ideal detection threshold}) in that the maximum perturbed distance $\|\mathbf{w}-\hat{\mathbf{w}}^{\prime}\|$ can be ordered by the magnitudes of applied perturbations. We used Tree-Ring~\cite{wen2023treerings} and HSQR~\cite{lee2025semantic} as representative examples, with the corresponding results presented in Figs.~\ref{fig: TR attack shifting subfig} and~\ref{fig: HSQR attack shifting subfig}. For PRCW~\cite{gunn2024undetectable}, see Figs.~\ref{fig: PRCW decoding 1e-2fpr attack shifting},~\ref{fig: PRCW detection 1e-2fpr attack shifting},~\ref{fig: PRCW decoding 1e-5fpr attack shifting}, and~\ref{fig: PRCW detection 1e-5fpr attack shifting} in Sec.~\ref{sec: full figure exp} of Appendix.
        
        In these figures, the orange distribution denotes the distance between the watermarked latent and ground-truth watermark, the blue distribution corresponds to the distance between the unwatermarked latent and ground-truth watermark, and the red distribution represents the distances between the perturbed watermarked latent and ground-truth watermark. The red dotted lines indicate the detection threshold $R$ at FPRs of 1e-2. Ideally, the orange distribution should be concentrated at $0$. However, numerical errors, different guidance scales, stochastic sampling effects, and value clipping introduced during the reverse and inverse processes cause the estimated watermark to deviate from $\mathbf{w}$, resulting in a nonzero shift of $\|\mathbf{w}-\hat{\mathbf{w}}^{\prime}\|$. These deviations, captured by the term $r$, are explicitly accounted for in our derivation in Sec.~\ref{max bound}. We also present experiments of these deviations on different watermarking schemes in Table~\ref{Table: numerical error} and Fig.~\ref{fig: numerical error} in Sec.~\ref{sec: full figure exp} of Appendix. It shows that as the number of time steps $t$ increases, the mean value and maximum of a set of $\|\mathbf{w}-\hat{\mathbf{w}}^{\prime}\|$ are slightly reduced. This provides a remedy to reduce the numerical errors by the number of time steps $t$ so that the orange distribution shifts towards $0$ and leaves more distance for unknown perturbations.

        It can be seen from Figs.~\ref{fig: TR attack shifting subfig} and~\ref{fig: HSQR attack shifting subfig} that increasing perturbation strength leads to progressively larger shifts of the red distributions. If the stricter threshold is adjusted from the red dotted line to the left of the red dotted line, several perturbations are observed to push the red distributions beyond the stricter detection thresholds. According to Eq.~(\ref{eq: ideal detection threshold}), a lower FPR setting, corresponding to a stricter threshold, excludes more perturbations than a looser threshold, thereby reducing the watermark robustness. However, in practice, a watermarking scheme should be robust to perturbations even under a lower FPR setting.

        \begin{figure}[t]
            \begin{subfigure}[c]{0.185\textwidth}
                \centering\subfloat{\centering\makebox[10pt][l]{\raisebox{5pt}{\rotatebox[origin=l]{90}{\tiny Jitter}}}
                \includegraphics[width=\figWidth\linewidth]{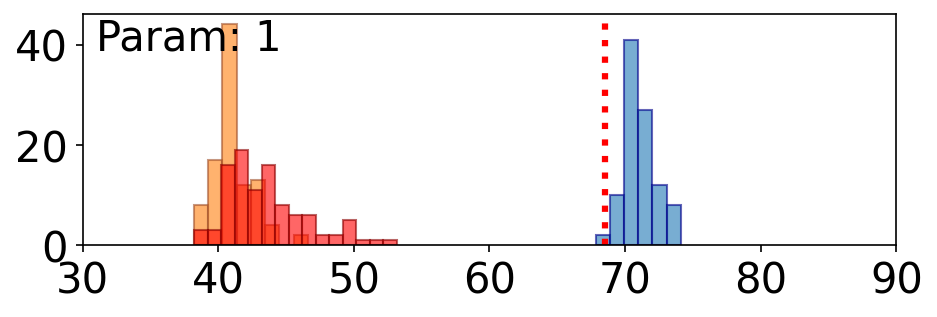}
                \includegraphics[width=\figWidth\linewidth]{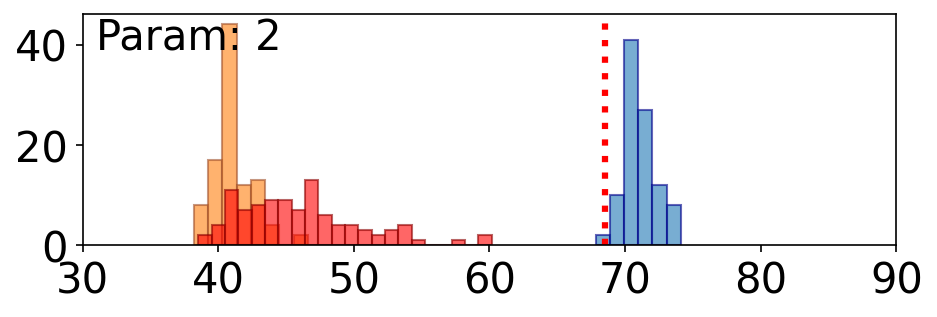}
                \includegraphics[width=\figWidth\linewidth]{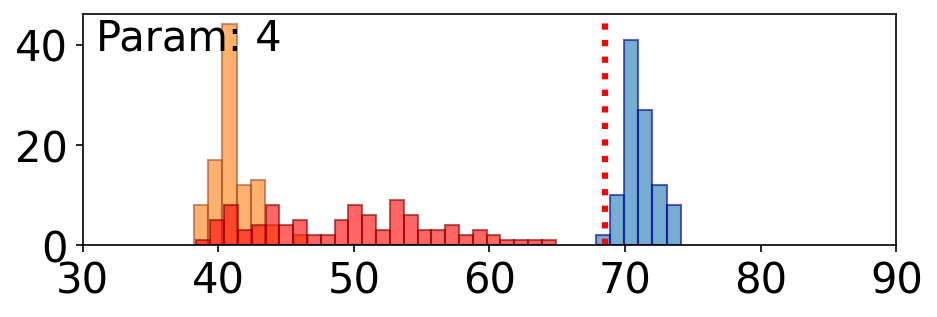}
                \includegraphics[width=\figWidth\linewidth]{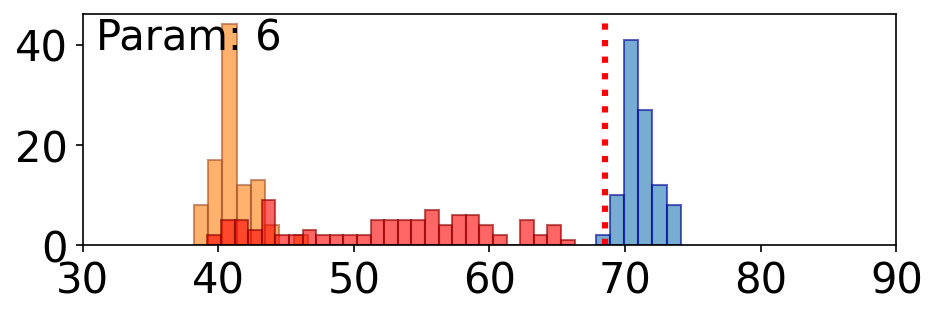}
                \includegraphics[width=\figWidth\linewidth]{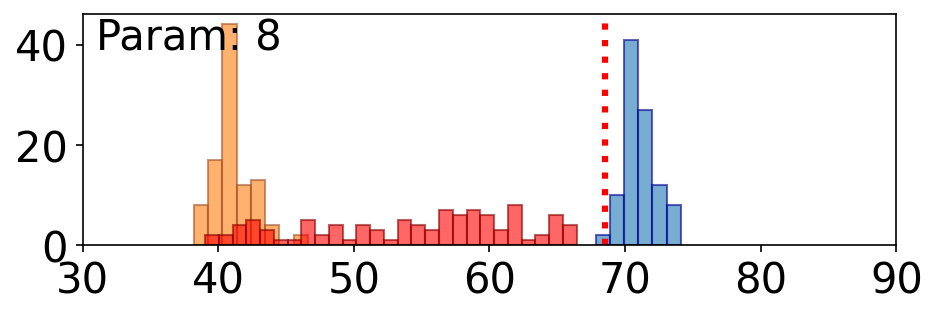}}
    
                \subfloat{\centering\makebox[10pt][l]{\raisebox{5pt}{\rotatebox[origin=l]{90}{\tiny JPEG}}}
                \includegraphics[width=\figWidth\linewidth]{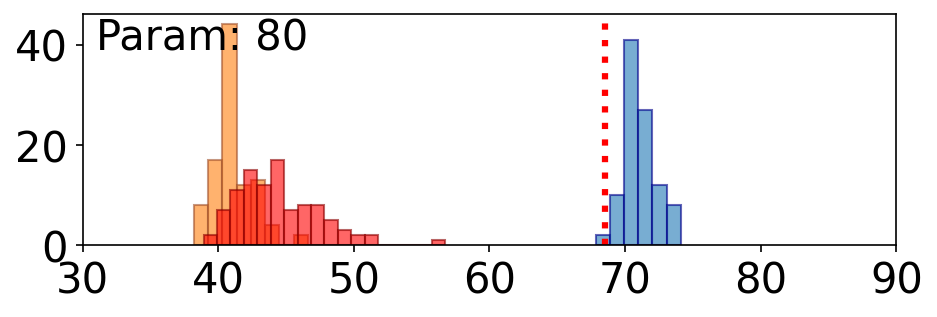}
                \includegraphics[width=\figWidth\linewidth]{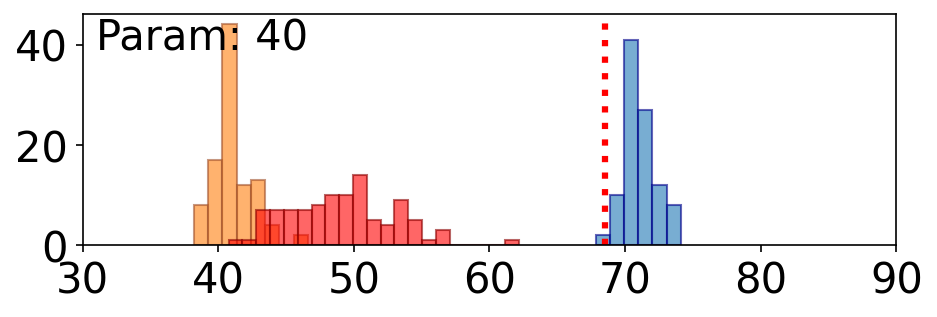}
                \includegraphics[width=\figWidth\linewidth]{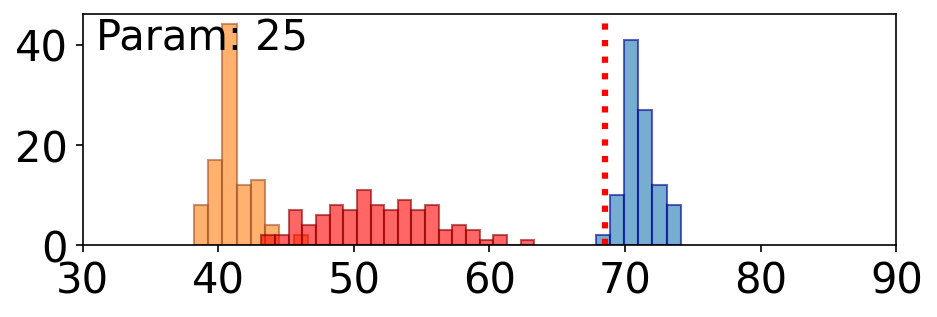}
                \includegraphics[width=\figWidth\linewidth]{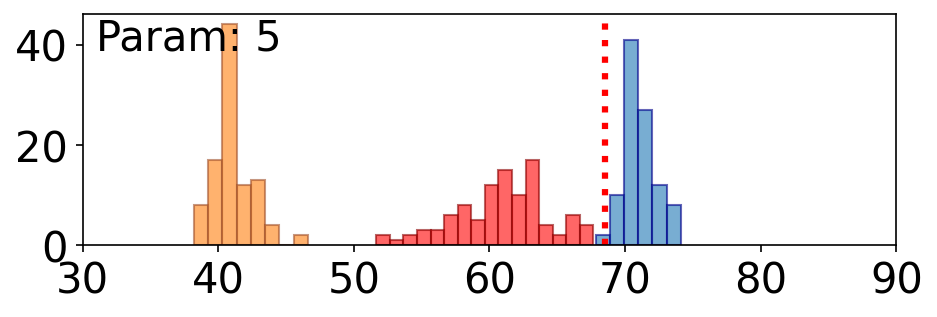}
                \includegraphics[width=\figWidth\linewidth]{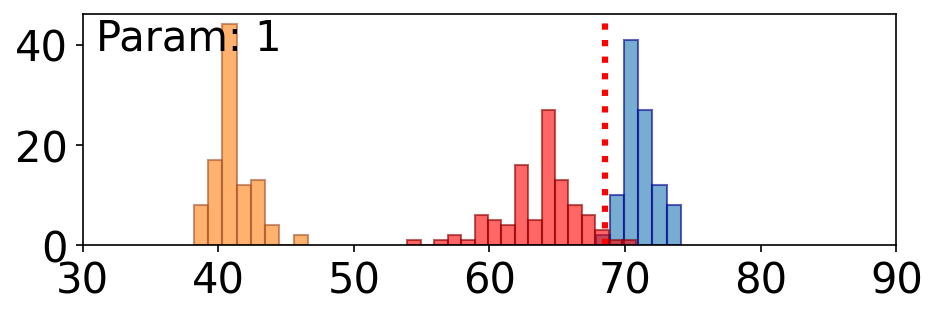}}
    
                \subfloat{\centering\makebox[10pt][l]{\raisebox{3pt}{\rotatebox[origin=l]{90}{\tiny Rotation}}}
                \includegraphics[width=\figWidth\linewidth]{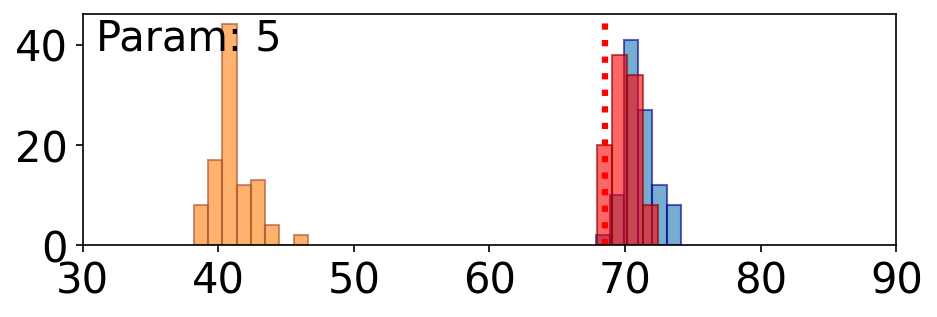}
                \includegraphics[width=\figWidth\linewidth]{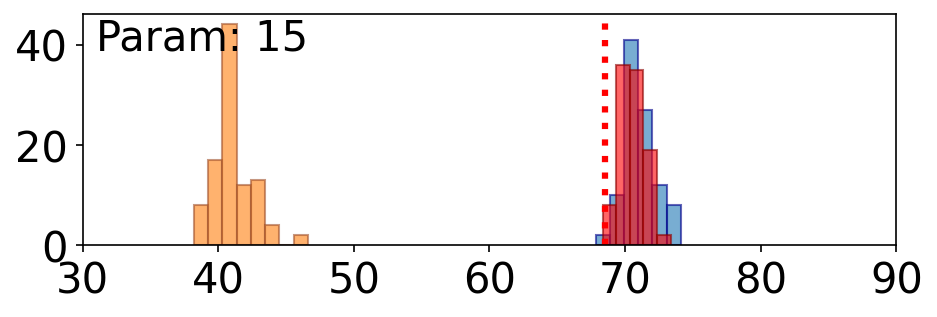}
                \includegraphics[width=\figWidth\linewidth]{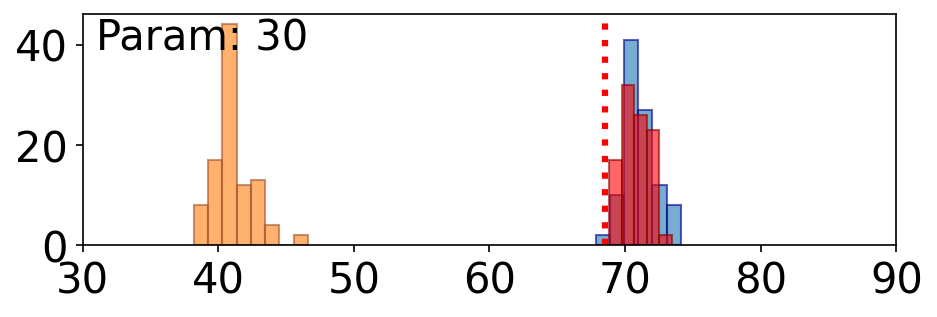}
                \includegraphics[width=\figWidth\linewidth]{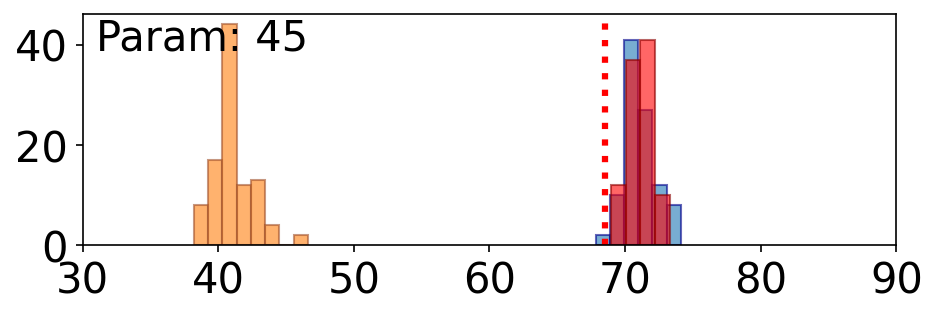}
                \includegraphics[width=\figWidth\linewidth]{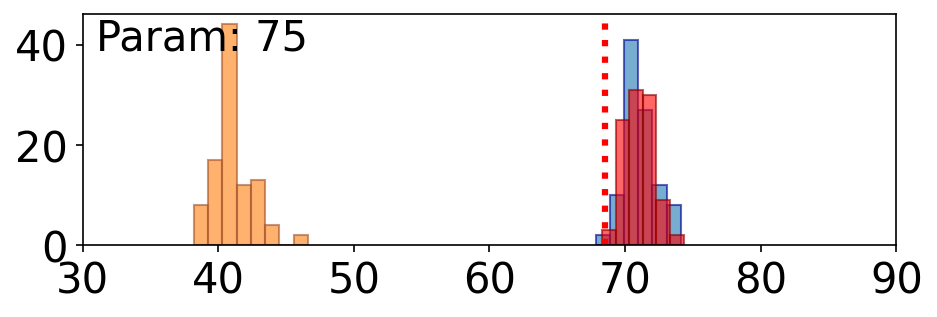}}
    
                \subfloat{\centering\makebox[10pt][l]{\raisebox{3pt}{\rotatebox[origin=l]{90}{\tiny Cropping}}}
                \includegraphics[width=\figWidth\linewidth]{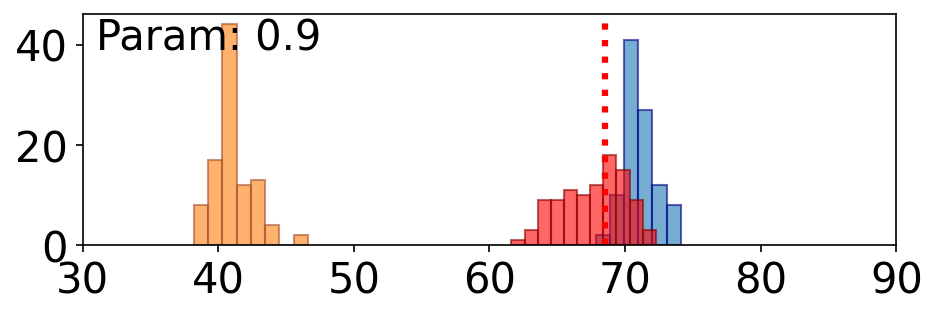}
                \includegraphics[width=\figWidth\linewidth]{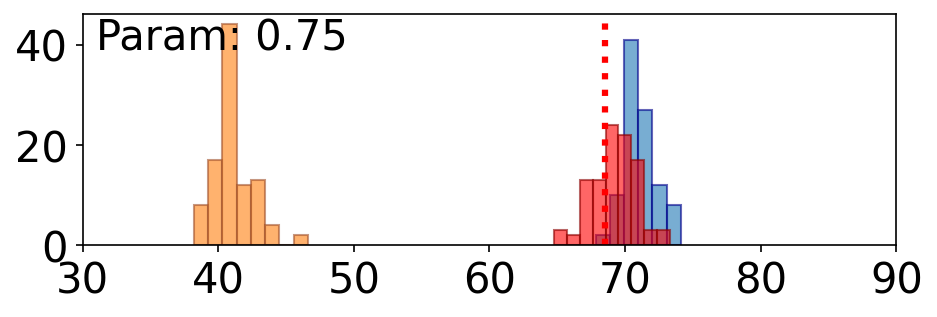}
                \includegraphics[width=\figWidth\linewidth]{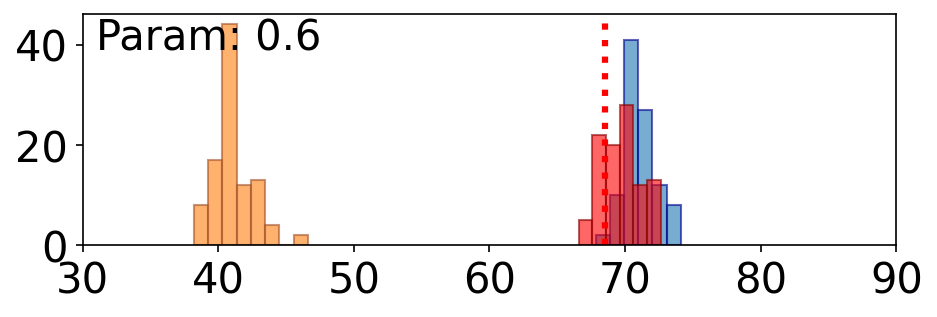}
                \includegraphics[width=\figWidth\linewidth]{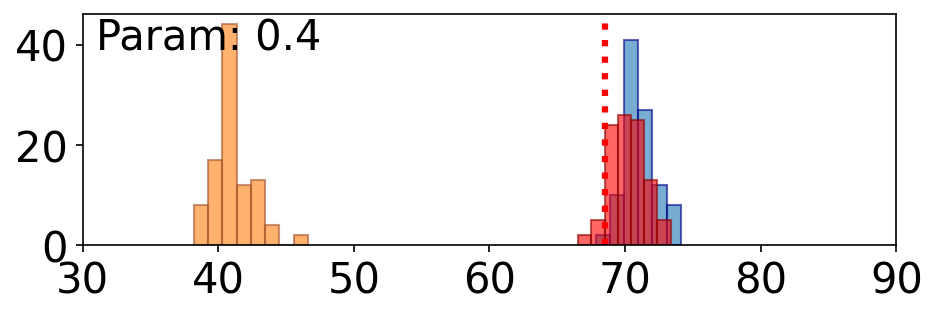}
                \includegraphics[width=\figWidth\linewidth]{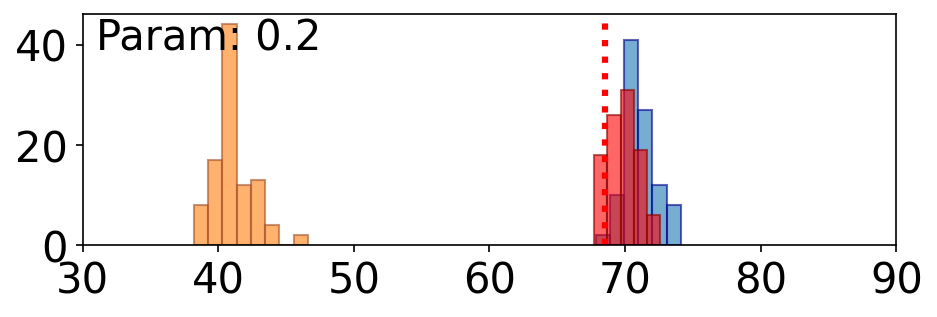}}

                \subfloat{\centering\makebox[10pt][l]{\raisebox{0pt}{\rotatebox[origin=l]{90}{\tiny Regen-VAE}}}
                \includegraphics[width=\figWidth\linewidth]{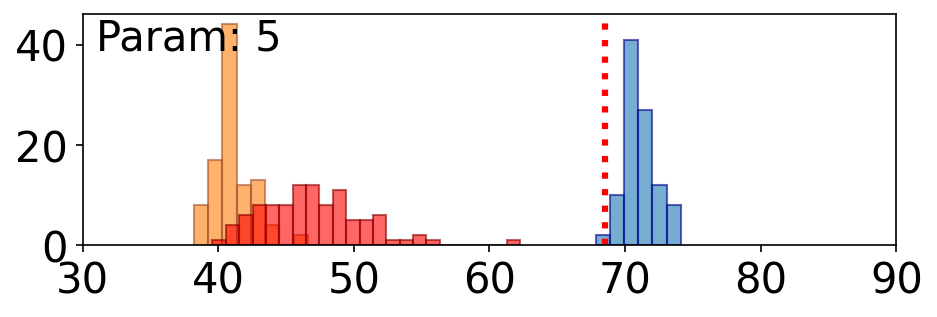}
                \includegraphics[width=\figWidth\linewidth]{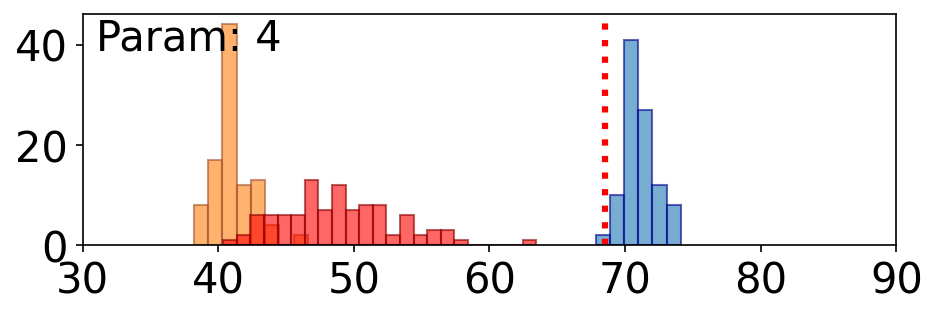}
                \includegraphics[width=\figWidth\linewidth]{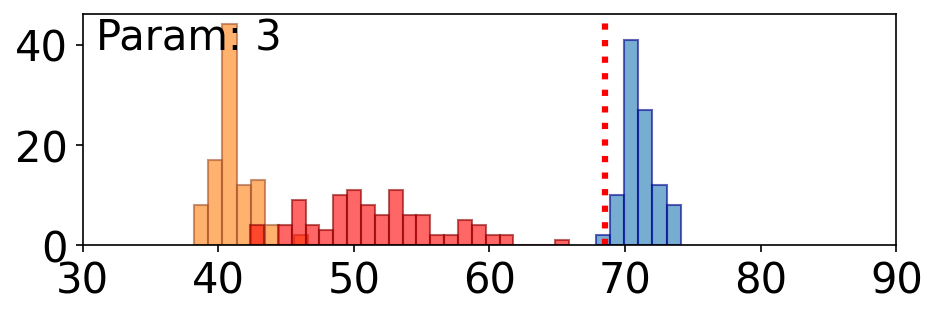}
                \includegraphics[width=\figWidth\linewidth]{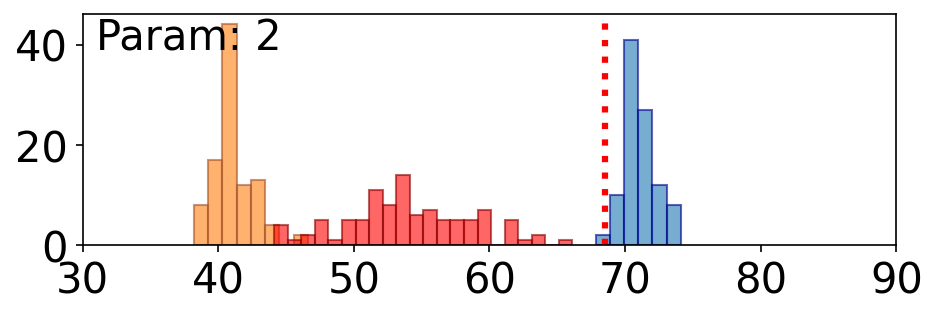}
                \includegraphics[width=\figWidth\linewidth]{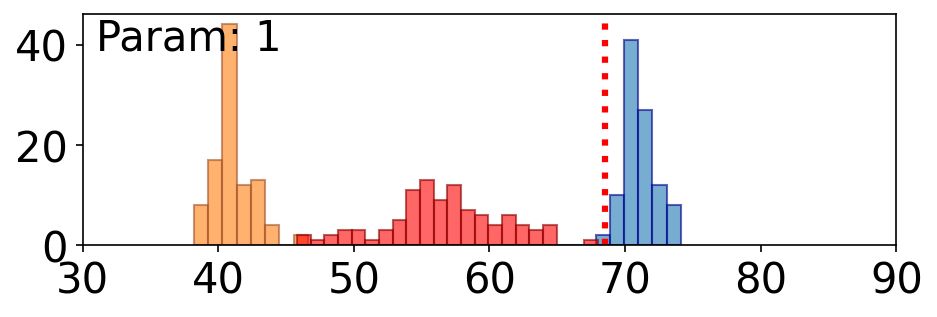}}
            \end{subfigure}
            \caption{Effects of different perturbations on $\|\mathbf{w}-\hat{\mathbf{w}}^{\prime}\|$. From left to right, the perturbations were set with different parameters (Param). Watermark: HSQR. Thresholds: 68.52 (1e-2 FPR).}\label{fig: HSQR attack shifting subfig}
        \end{figure}

    \subsection{Effects of Masking $\mathcal{M}$ and Watermark $\mathbf{w}$}\label{Sec: Mask and wm effects}
        We can observe from Figs.~\ref{fig: TR attack shifting subfig} and~\ref{fig: HSQR attack shifting subfig} that HSQR~\cite{lee2025semantic} is vulnerable to rotation but Tree-Ring~\cite{wen2023treerings} is not. The difference comes from the design of mask $\mathcal{M}$ and watermark $\mathbf{w}$. According to HSQR~\cite{lee2025semantic}, it utilizes a square mask and injects a QR-code-like pattern as the watermark. We hypothesize that the vulnerability comes from the desynchronization of the extracted watermark. However, even if the mask and pattern are circular and ring-like in Tree-Ring~\cite{wen2023treerings}, we further verified Theorems~\ref{thm:RST invariance} and~\ref{thm: non-perturb invariant} in Sec.~\ref{Sec: Fragility}. In Fig.~\ref{fig: TR attack shifting subfig}, the perturbation was set to rotate an image by 5 degrees, but the red distribution shifts from the orange distribution. Hence, we can conclude that theoretically and empirically, the diffusion process, unlike the Fourier and Fourier–Mellin transforms, is not rotation-invariant. Moreover, according to these experiments, the diffusion process is not invariant to perturbation.
        In Sec.~\ref{Sec: w norm increase on RID HSTR} of Appendix, according to our reasoning about $\|\mathbf{w}\|$ and the corresponding ``buffer'' mentioned in Sec.~\ref{dist shift}, we verify such an effect of moving a distribution toward the one with a larger norm by increasing the strength of injected watermarks.

        \begin{figure}[h]
            \centering
            \includegraphics[width=1.0\linewidth]{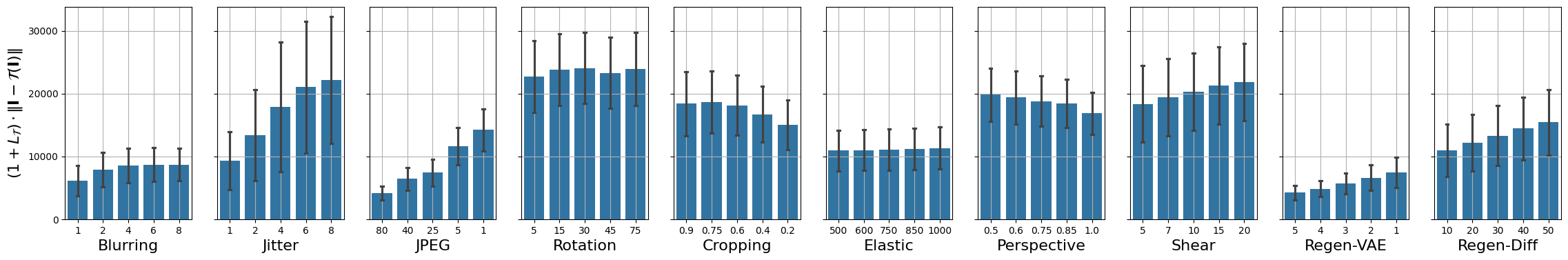}
            \caption{Experimental thresholds for different perturbations on $\|\mathbf{w}-\hat{\mathbf{w}}^{\prime}\|$, each with different parameters on x-axis. The y-axis represents the potential threat that $\hat{\mathbf{w}}'$ is getting far from $\mathbf{w}$, which corresponds to Eq.~(\ref{eq: ideal detection threshold}). Watermarking method: Tree-Ring.}\label{fig: experimental detection threshold}
        \end{figure}

    \subsection{Effects of Lipschitz constant $L_{\mathcal{T}}$}\label{Sec: Lip effects}
        As demonstrated in Eq.~(\ref{eq: rev inv total bound}) and Eq.~(\ref{eq: ideal detection threshold}), the distance $\|\mathbf{w}-\hat{\mathbf{w}}'\|$ is upper-bounded by a term proportional to $(1+L_\mathcal{T})r$ for a fixed $\mathbf{F}$, provided that the condition $\|\mathbf{I}-\mathcal{T}(\mathbf{I})\|\le L_\mathbf{F}r$ is satisfied. This leads us to investigate whether the size of $(1+L_\mathcal{T})\|\mathbf{I}-\mathcal{T}(\mathbf{I})\|$ is positively correlated with the separation of $\mathbf{w}$ from $\hat{\mathbf{w}}'$. The result for Tree-Ring is presented in Fig.~\ref{fig: experimental detection threshold} and those for other watermarking methods are shown in Figs.~\ref{fig: experimental detection threshold (ringid 64)} and~\ref{fig: experimental detection threshold (HSQR)}. Essentially, these patterns in these figures appear highly similar because the value $(1+L_\mathcal{T})\|\mathbf{I}-\mathcal{T}(\mathbf{I})\|$ is independent of the watermark pattern. In comparison with Fig.~\ref{fig: TR attack shifting subfig}, we observe that lower values on the $y$-axis in Fig.~\ref{fig: experimental detection threshold} correspond to smaller perturbation shifts, such as Blurring and high-quality factors of JPEG. Conversely, higher $y$-axis values typically indicate greater shifts, such as Cropping and certain Rotation cases. Interestingly, Jitter exhibits large standard deviations, suggesting that the shift varies significantly across different images. This effect corresponds to the dispersed distribution of the red bars in Jitter of Figs.~\ref{fig: TR attack shifting subfig} and~\ref{fig: HSQR attack shifting subfig}. For thresholds computed with different watermarking schemes, please refer to Sec.~\ref{sec: full lip exp} of Appendix.

\section{Conclusion and Limitations}\label{conclusion}
    In this paper, we establish that latent-based watermarking methods lack invariance to perturbations. Through both theoretical analysis and empirical evaluation, we show that their performance is strongly influenced by the components used in the diffusion reverse and inverse processes. Furthermore, our theoretical and experimental findings demonstrate that existing latent-based watermarking schemes, utilizing DDIM or other expansive modules, are fundamentally limited in practical settings, particularly when operating under low FPR constraints. Taken together, these results highlight fundamental limitations of current latent-based watermarking approaches and suggest that the prevailing design paradigm may be insufficient to achieve robust watermarking in practice.


    Our aim is not to hinder the development of latent-based watermarking methods, but to provide guidance for future research. We acknowledge that our analysis relies on assumptions about the MMSE denoiser and the invertibility of DDIM, which may not always hold in practice. Despite these limitations, the proposed framework offers a useful analytical perspective for identifying potential weaknesses in future watermarking designs.




\newpage
\clearpage

{
    \small
    \bibliographystyle{ieeenat_fullname}
    \bibliography{refs}
}


\newpage
\appendix

\section*{Appendix}
\section{Background}\label{sec: background knowledge}
    We present the background knowledge relevant to our research.

    \subsection{Rotation, Scale, and Translation Invariance}
        Geometric attacks (GAs), as one of the primary kinds of watermarking attacks~\cite{voloshynovskiy2001attack}, are greatly ignored nowadays in the evaluation of the robustness of generative watermarking. GAs aims to cause synchronization errors to disable watermark detection without removing embedded watermarks or degrading the quality of watermarked images. Rotation, scale, and translation are common global geometric distortions, which are included in Stirmark~\cite{petitcolas2000watermarking,petitcolas1998attacks}. In earlier studies,~\cite{ruanaidh1998rotation} presented a Fourier—Mellin-based approach to embed watermarks in resisting a combination of rotation and scaling transformations, whereas~\cite{lu2006media} proposed robust feature-based image watermarking with resistance to Stirmark benchmark.

    \subsection{DDIM Inversion}\label{sec:DDIM Inversion}
        Diffusion models~\cite{sohl2015deep} generate synthetic images through the reverse process, which progressively denoises the initial noise sampled from the standard Gaussian distribution. Since this reverse process is inherently not deterministic, it is virtually impossible to retrieve the initial noise that generates a given synthetic image. However, the introduction of DDIM~\cite{song2020denoising} renders this possible. \citet{song2020denoising} suggested that we can ``reverse'' the reverse process to obtain the initial noise for downstream applications that require it~\cite{ci2024ringid,mokady2023null,wang2023dire,wen2023treerings}. Note that in~\cite{song2020denoising}, this ``inverse process'' was only mentioned, without any explicit formulation. The first explicit formulation of the inverse process appeared in~\cite{dhariwal2021diffusion}, although it was not yet called the inverse process. Recently, in the literature on watermarking on AIGCs, the term ``inverse process'' was first introduced in Tree-Ring~\cite{wen2023treerings}. Since then, numerous methods exploiting the inverse process have been proposed~\cite{ci2024ringid,huang2024robin,wen2023treerings,yang2024gaussian,zhang2024attack,zhu2026towards,11462859,rebuffi2026learning,gao2026slice}.

\section{The Lipschitz bounds of DDIM Reverse and Inverse Process\\ (Prerequisite of Sec. \ref{Sec: proofs for thm lem})}\label{Derivation of MMSE-AMP and DDIM and Lipschitz}
In this section, we provide a derivation that connects VAMP and the MMSE denoiser to the DDIM reverse process, and prove that such a process is invertible, {\em a.k.a} the inverse process.
\subsection{Theoretical Analysis: Spectral Stability of Diffusion via MMSE-AMP}\label{Sec: Analysis}
In this section, we derive the spectral properties of the diffusion process by establishing an equivalence between the Denoising Diffusion Implicit Model (DDIM) and Vector Approximate Message Passing (VAMP). Our key insight is that DDIM acts as a \textit{normalized} iterative denoiser, where the normalization constant induces an expansive regime essential for generation. By treating the diffusion step as a regularized MMSE estimation update, we derive rigorous spectral bounds for both the generative (reverse) and encoding (inverse) processes.

The DDIM sampling process generates a sequence $\mathbf{x}_T \,(i.e.,\,\mathbf{y}), \mathbf{x}_{T-1}, \dots, \mathbf{x}_0$ via the update rule:
\begin{align}\label{eq:ddim_standard}
    \mathbf{x}_{t-1} =& \sqrt{\bar{\alpha}_{t-1}} \underbrace{\left( \frac{\mathbf{x}_t - \sqrt{1-\bar{\alpha}_t}\epsilon_\theta(\mathbf{x}_t)}{\sqrt{\bar{\alpha}_t}} \right)}_{\hat{\mathbf{x}}_0(\mathbf{x}_t)} + \sqrt{1-\bar{\alpha}_{t-1}} \epsilon_\theta(\mathbf{x}_t),
\end{align}
where $\bar{\alpha}_t:=\prod_{s=0}^t\alpha_s$, $\{\alpha_t\}_{t=0}^T\subset(0,1]$ is the noise schedule, and $\epsilon_\theta(\cdot)$ is the learned denoiser parameterized by $\theta$, and $\hat{\mathbf{x}}_0(\mathbf{x}_t)$ is the DDIM prediction at $t$.

\subsubsection{Connection: DDIM as Normalized AMP}
To apply the VAMP theory, we must align the signal models. Standard VAMP assumes a signal-plus-noise model $\mathbf{y} = \mathbf{x}_0 + \xi$, $\xi\sim\cN(\vzero,\sigma^2\rm Id)$. In contrast, the diffusion state $\mathbf{x}_t$ scales the signal by $\sqrt{\alpha_t}$. 
We define the \textbf{Normalized AMP State} $\mathbf{u}_t$ as:
\begin{align*}
    \mathbf{u}_t = \frac{\mathbf{x}_t}{\sqrt{\bar{\alpha}_t}} = \mathbf{x}_0 + \sqrt{\frac{1-\bar{\alpha}_t}{\bar{\alpha}_t}}\epsilon.
\end{align*}
This transforms $\mathbf{x}_t$ into the standard form expected by an MMSE estimator. The DDIM prediction $\hat{\mathbf{x}}_0(\mathbf{x}_t)$ is equivalent to applying a standard MMSE denoiser $D_\theta(\cdot)$ to this normalized state:
\begin{align}\label{eq: DDIM predict x0}
    \hat{\mathbf{x}}_0(\mathbf{x}_t) = D_\theta(\mathbf{u}_t) = D_\theta\left( \frac{\mathbf{x}_t}{\sqrt{\bar{\alpha}_t}} \right).
\end{align}

To connect $\epsilon_\theta$ in DDIM and $D_\theta$ in VAMP, we have:
\begin{align}\label{eq: noise prediction as denoiser}
    \epsilon_\theta(\mathbf{z}) = \frac{\mathbf{z} - \sqrt{\bar{\alpha}_t}D_\theta\left(\frac{\mathbf{z}}{\sqrt{\bar{\alpha}_t}}\right)}{\sqrt{1-\bar{\alpha}_t}},
\end{align}
where $\mathbf{z}$ is the noisy representation of each time step $t$ in DDIM.

\subsubsection{The Jacobian Analysis}
We first reformulate the DDIM update rule in Eq.~(\ref{eq:ddim_standard}) to isolate the denoising operator by employing Eq.~(\ref{eq: noise prediction as denoiser}) to yield the \textbf{MMSE-AMP Affine Form} with
$\mathbf{z}=\mathbf{x}_t$.
The DDIM update, the reverse step, from $t \to t-1$ is given by the affine form:
\begin{align}\label{eq: update_rule}
    \mathbf{x}_{t-1} = \mu_t \mathbf{x}_t + \gamma_t \hat{\mathbf{x}}_0(\mathbf{x}_t) = \mu_t \mathbf{x}_t + \gamma_t D_\theta\left( \frac{\mathbf{x}_t}{\sqrt{\bar{\alpha}_t}} \right),
\end{align}
where the coefficients are determined by the noise schedule:
\begin{align*}
    \mu_t &= \frac{\sqrt{1-\bar{\alpha}_{t-1}}}{\sqrt{1-\bar{\alpha}_t}}, \quad \text{(Noise Carryover)} \\
    \gamma_t &= \sqrt{\bar{\alpha}_{t-1}} - \mu_t \sqrt{\bar{\alpha}_t}. \quad \text{(Signal Injection)}
\end{align*}

To determine the stability of this update, we compute the Jacobian matrix $J_t = \frac{\partial \mathbf{x}_{t-1}}{\partial \mathbf{x}_t}$. Applying the chain rule to Eq. (\ref{eq: update_rule}) introduces the critical scaling factor $1/\sqrt{\bar{\alpha}_t}$:
\begin{align}\label{eq: Jacob of DDIM step t}
    J_t = \mu_t \mathrm{Id} + \left.\frac{\gamma_t}{\sqrt{\bar{\alpha}_t}}\nabla_\mathbf{u} D_\theta(\mathbf{u})\right\vert_{\mathbf{u}=\frac{\mathbf{x}_t}{\sqrt{\bar{\alpha}_t}}}.
\end{align}


\subsection{Derivation of Spectral Bounds}\label{Sec: derive Spectral Bounds}
We apply the Fundamental Property of MMSE Estimators used in VAMP theory: the MMSE denoiser is contractive, i.e., its eigenvalues are within $[0, 1]$. Thus, we can have the following proposition.
\begin{proposition}[Spectral Bounds of MMSE Denoiser]\label{prop:spectral bound of MMSE}
Assuming the denoiser $D_\theta(\cdot)$ approximates the optimal MMSE estimator under Gaussian noise, the eigenvalues of its Jacobian matrix satisfy $\lambda(\nabla D_\theta) \subset [0, 1]$.
\end{proposition}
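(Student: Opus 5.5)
The plan is to use the classical Tweedie--Miyasawa identity, which ties the MMSE denoiser to the score of the noisy marginal and makes its Jacobian an explicit covariance. Work in the normalized VAMP signal model $\mathbf{u}=\mathbf{x}_0+\sigma\xi$ with $\xi\sim\cN(\vzero,\rm Id)$ and $\sigma^2=(1-\bar{\alpha}_t)/\bar{\alpha}_t$. Let $p_\sigma$ denote the density of $\mathbf{u}$, i.e.\ the prior of $\mathbf{x}_0$ convolved with the Gaussian. Under the standing assumption that $D_\theta$ equals the optimal estimator $D(\mathbf{u})=\mathbb{E}[\mathbf{x}_0\mid\mathbf{u}]$, the identity gives
\begin{align*}
D(\mathbf{u})=\mathbf{u}+\sigma^2\nabla_\mathbf{u}\log p_\sigma(\mathbf{u}).
\end{align*}
I will verify this by differentiating $p_\sigma(\mathbf{u})=\int \phi_\sigma(\mathbf{u}-\mathbf{x})\,dP(\mathbf{x})$ under the integral sign.

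The next step is to differentiate once more. This yields $\nabla D(\mathbf{u})=\mathrm{Id}+\sigma^2\nabla^2\log p_\sigma(\mathbf{u})$, and this matrix is symmetric, so its eigenvalues are real. A second differentiation under the integral sign then gives the key identity
\begin{align*}
\nabla D(\mathbf{u})=\frac{1}{\sigma^2}\,\mathrm{Cov}[\mathbf{x}_0\mid\mathbf{u}].
\end{align*}
To obtain it, write the posterior as $p(\mathbf{x}\mid\mathbf{u})\propto\exp\big(\langle\mathbf{u},\mathbf{x}\rangle/\sigma^2-\|\mathbf{x}\|^2/(2\sigma^2)\big)\,dP(\mathbf{x})$. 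This is an exponential family in the natural parameter $\mathbf{u}/\sigma^2$. Its log-partition function has gradient $\mathbb{E}[\mathbf{x}_0\mid\mathbf{u}]$ and Hessian $\mathrm{Cov}[\mathbf{x}_0\mid\mathbf{u}]$ with respect to that parameter, and the chain rule supplies the factor $1/\sigma^2$. A covariance matrix is positive semidefinite, which gives the lower bound $\lambda(\nabla D_\theta)\ge0$ immediately.

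For the upper bound $\lambda\le1$, write $\mathrm{Id}-\nabla D=\mathrm{Id}-\sigma^{-2}\mathrm{Cov}[\mathbf{x}_0\mid\mathbf{u}]=-\sigma^2\nabla^2\log p_\sigma$. This means I need $p_\sigma$ to be log-concave, or equivalently the posterior covariance must not exceed the noise covariance $\sigma^2\mathrm{Id}$. This is the main obstacle, because it fails for general priors: for a two-point mixture, the posterior variance near the decision boundary can be as large as $\|\mathbf{x}\|^2$, which can exceed $\sigma^2$. I will therefore handle it as follows. First I prove the bound exactly for a Gaussian prior $\mathcal{N}(\mu,\Sigma)$, where $\nabla D=\Sigma(\Sigma+\sigma^2\mathrm{Id})^{-1}$ has eigenvalues $\lambda_i/(\lambda_i+\sigma^2)\in[0,1)$. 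I then extend it to any log-concave prior, where the Brascamp--Lieb inequality gives $\mathrm{Cov}[\mathbf{x}_0\mid\mathbf{u}]\preceq\sigma^2\mathrm{Id}$ because the posterior is log-concave with potential Hessian $\succeq\sigma^{-2}\mathrm{Id}$. Finally, I state the proposition under the VAMP-style assumption, in which the denoiser behaves like the scalar/Gaussian-matched MMSE estimator used in VAMP state evolution, where the divergence $\langle\nabla D\rangle=\mathrm{mmse}/\sigma^2\in[0,1]$ holds. I expect to make this assumption explicit in the proof.
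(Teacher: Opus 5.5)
You follow the paper's route: Tweedie's formula, the identity $\nabla D=\sigma^{-2}\mathrm{Cov}[\mathbf{x}_0\mid\mathbf{u}]$, positive semidefiniteness for the lower bound, and $\nabla^2\log p_\sigma\preceq0$ for the upper bound. The paper gets the covariance identity by differentiating $\log p$ twice under the integral (Lemma~\ref{lemma: 2nd order formula}). Your log-partition argument gives the same identity more cleanly, and it works for an arbitrary prior measure, discrete ones included.

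The substantive difference is the upper bound. The paper's proof just asserts that $p(\mathbf{u})$ is log-concave, with no hypothesis on the prior, and your two-point example shows this is false. Take the prior $\tfrac12(\delta_{\mathbf{a}}+\delta_{-\mathbf{a}})$. At any $\mathbf{u}\perp\mathbf{a}$ the posterior is uniform on $\{\pm\mathbf{a}\}$, so $\mathrm{Cov}[\mathbf{x}_0\mid\mathbf{u}]=\mathbf{a}\mathbf{a}^\top$ and $\lambda_{\max}(\nabla D)=\|\mathbf{a}\|^2/\sigma^2$. This exceeds $1$ whenever $\sigma<\|\mathbf{a}\|$, which is exactly the low-noise steps where $\sigma^2=(1-\bar{\alpha}_t)/\bar{\alpha}_t$ is small. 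So the proposition as stated is not true. Your restricted version is a correct statement with a correct proof: a Gaussian prior by direct computation, and a log-concave prior via Brascamp--Lieb. Without smoothness assumptions you can instead use Pr\'ekopa's theorem that Gaussian smoothing preserves log-concavity, which is precisely the justification the paper's assertion lacks.

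The step that does not work is your final ``VAMP-style'' fallback. The divergence $\langle\nabla D\rangle$ is the normalized trace, i.e.\ the mean eigenvalue. The identity $\langle\nabla D\rangle=\mathrm{mmse}/\sigma^2$ holds only after averaging over $\mathbf{u}$ (in state evolution, through concentration over coordinates); pointwise the quantity is $\mathrm{tr}\,\mathrm{Cov}[\mathbf{x}_0\mid\mathbf{u}]/(n\sigma^2)$. Moreover, the averaged bound holds for \emph{every} prior, because $\hat{\mathbf{x}}=\mathbf{u}$ already has mean-squared error $\sigma^2$. That includes your two-point prior, which violates the pointwise bound. So the averaged bound says nothing about $\lambda_{\max}(\nabla D)$ at a given $\mathbf{u}$ and cannot recover $\lambda(\nabla D_\theta)\subset[0,1]$. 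Even a separable VAMP denoiser has diagonal Jacobian entries $\mathrm{Var}(x_i\mid u_i)/\sigma^2$, which exceed $1$ for a bimodal scalar prior.

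Drop that step and state the hypothesis explicitly: $\mathrm{Cov}[\mathbf{x}_0\mid\mathbf{u}]\preceq\sigma^2\mathrm{Id}$ for all $\mathbf{u}$, which log-concavity of the prior guarantees. It is also worth recording which half of the bound the rest of the paper uses. The lower bound, which you prove for every prior, is all that Theorem~\ref{thm:Global inverse of rev} and the inversion bound Eq.~(\ref{eq: step t inv L constant}) need. The generation bound Eq.~(\ref{eq: step t rev L constant}) uses $\lambda_{\max}(\nabla D_\theta)\le1$ and therefore inherits the log-concavity requirement.
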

\begin{proof}
    Our goal is to show that $\nabla D_\theta$ is positive semi-definite, and that $0\preceq\nabla D_\theta\preceq\mathrm{Id}$. Given $\mathbf{u}\sim\mathcal{N}(\mathbf{x},\s^2\mathrm{Id})$, the denoiser $D=D_\theta$ is defined as posterior expectation conditioned on $\mathbf{u}$:
    \begin{align*}
        D(\mathbf{u})
        =\mathbb{E}[\mathbf{x}|\mathbf{u}]
        =\int \mathbf{x} p(\mathbf{x}|\mathbf{u})\,d\mathbf{x}.
    \end{align*}
    Applying Tweedie's formula,
    \begin{align*}
        D(\mathbf{u})
        =\mathbf{u}+\sigma^2\nabla\log p(\mathbf{u}),
    \end{align*}
    where $p(\mathbf{u}):=\int p(\mathbf{u}|\mathbf{x})p(\mathbf{x})\,d\mathbf{x}$. So the Jacobian matrix is
    \begin{align}\label{eq: Jacobian form}
        \nabla D(\mathbf{u})=\mathrm{Id}+\sigma^2\nabla^2\log p(\mathbf{u}).
    \end{align}
    Using the formula (will be proved in Lemma~\ref{lemma: 2nd order formula}):
    \begin{align*}
        \nabla^2\log p(\mathbf{u})
        =\frac{1}{\sigma^4}\mathrm{Cov}(\mathbf{x}|\mathbf{u})-\frac{1}{\sigma^2}\mathrm{Id},
    \end{align*}
    where $\mathrm{Cov}(\mathbf{x}|\mathbf{u}):=\mathbb{E}[\mathbf{x}\mathbf{x}^\top|\mathbf{u}]-\mathbb{E}[\mathbf{x}|\mathbf{u}]\mathbb{E}[\mathbf{x}|\mathbf{u}]^\top$, and taking it into Eq.~(\ref{eq: Jacobian form}) gives
    \begin{align*}
        \nabla D(\mathbf{u})
        =\mathrm{Id}+\sigma^2\left(\frac{1}{\sigma^4}\mathrm{Cov}(\mathbf{x}|\mathbf{u})-\frac{1}{\sigma^2}\mathrm{Id}\right)
        =\frac{1}{\sigma^2}\mathrm{Cov}(\mathbf{x}|\mathbf{u}).
    \end{align*}
    This immediately shows that $\nabla D\succeq0$ since covariance matrices are positive semi-definite. To show that $\nabla D\preceq\mathrm{Id}$, we observe that, $p(\mathbf{u})$ is log-concave, which means $\nabla^2\log p(\mathbf{u})\preceq0$. Thus using Eq.~(\ref{eq: Jacobian form}) again, we obtain $\nabla D\preceq\mathrm{Id}$.
\end{proof}

\begin{lemma}\label{lemma: 2nd order formula}
    $\nabla^2\log p(\mathbf{u})
        =\frac{1}{\sigma^4}\mathrm{Cov}(\mathbf{x}|\mathbf{u})-\frac{1}{\sigma^2}\mathrm{Id}$.
\end{lemma}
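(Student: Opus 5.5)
The plan is a direct computation: write $p(\mathbf{u})=\int \phi_\sigma(\mathbf{u}-\mathbf{x})p(\mathbf{x})\,d\mathbf{x}$, where $\phi_\sigma$ is the Gaussian density of $\mathcal{N}(\vzero,\sigma^2\mathrm{Id})$, and differentiate twice under the integral sign. The two derivatives of the Gaussian kernel are
\begin{align*}
\nabla_\mathbf{u}\phi_\sigma(\mathbf{u}-\mathbf{x})&=-\frac{\mathbf{u}-\mathbf{x}}{\sigma^2}\,\phi_\sigma(\mathbf{u}-\mathbf{x}),\\
\nabla^2_\mathbf{u}\phi_\sigma(\mathbf{u}-\mathbf{x})&=\Big(\frac{(\mathbf{u}-\mathbf{x})(\mathbf{u}-\mathbf{x})^\top}{\sigma^4}-\frac{1}{\sigma^2}\mathrm{Id}\Big)\phi_\sigma(\mathbf{u}-\mathbf{x}).
\end{align*}
Dividing by $p(\mathbf{u})$ and using Bayes' rule $p(\mathbf{x}|\mathbf{u})=\phi_\sigma(\mathbf{u}-\mathbf{x})p(\mathbf{x})/p(\mathbf{u})$ turns both quantities into posterior expectations:
\begin{align*}
\frac{\nabla p(\mathbf{u})}{p(\mathbf{u})}&=-\frac{1}{\sigma^2}\mathbb{E}[\mathbf{u}-\mathbf{x}|\mathbf{u}],\\
\frac{\nabla^2 p(\mathbf{u})}{p(\mathbf{u})}&=\frac{1}{\sigma^4}\mathbb{E}[(\mathbf{u}-\mathbf{x})(\mathbf{u}-\mathbf{x})^\top|\mathbf{u}]-\frac{1}{\sigma^2}\mathrm{Id}.
\end{align*}
The first identity is also Tweedie's formula already used in the proof of Proposition~\ref{prop:spectral bound of MMSE}.

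Next, apply the identity
\begin{align*}
\nabla^2\log p=\frac{\nabla^2 p}{p}-\frac{\nabla p}{p}\Big(\frac{\nabla p}{p}\Big)^\top.
\end{align*}
Substituting the two expressions gives
\begin{align*}
\nabla^2\log p(\mathbf{u})=\frac{1}{\sigma^4}\Big(\mathbb{E}[\mathbf{v}\mathbf{v}^\top|\mathbf{u}]-\mathbb{E}[\mathbf{v}|\mathbf{u}]\mathbb{E}[\mathbf{v}|\mathbf{u}]^\top\Big)-\frac{1}{\sigma^2}\mathrm{Id},
\qquad \mathbf{v}:=\mathbf{u}-\mathbf{x}.
\end{align*}
The bracketed term is $\mathrm{Cov}(\mathbf{v}|\mathbf{u})$. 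Conditionally on $\mathbf{u}$, the vector $\mathbf{u}$ is a constant shift, so $\mathrm{Cov}(\mathbf{u}-\mathbf{x}|\mathbf{u})=\mathrm{Cov}(\mathbf{x}|\mathbf{u})$. This yields the claimed formula.

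The main obstacle is justifying the two interchanges of differentiation and integration. First I would note that $p(\mathbf{u})>0$ for every $\mathbf{u}$, since the Gaussian kernel is strictly positive. Then I would bound $\|\mathbf{u}-\mathbf{x}\|^k\phi_\sigma(\mathbf{u}-\mathbf{x})$, for $k\le2$, uniformly by a constant on a neighborhood of $\mathbf{u}$. Because $p(\mathbf{x})$ integrates to $1$, dominated convergence then justifies differentiating under the integral for any prior, with no moment assumptions on $\mathbf{x}$. I would state this briefly, and the remaining steps are routine algebra.
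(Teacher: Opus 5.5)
Your proposal is correct and follows the paper's proof essentially step for step. Both arguments differentiate the Gaussian convolution $p(\mathbf{u})=\int p(\mathbf{u}|\mathbf{x})p(\mathbf{x})\,d\mathbf{x}$ under the integral, turn $\nabla p/p$ and $\nabla^2 p/p$ into posterior moments via Bayes' rule, apply $\nabla^2\log p=\nabla^2 p/p-(\nabla p/p)(\nabla p/p)^\top$, and finish with $\mathrm{Cov}(\mathbf{u}-\mathbf{x}|\mathbf{u})=\mathrm{Cov}(\mathbf{x}|\mathbf{u})$. Your dominated-convergence justification, using the boundedness of $\|\mathbf{v}\|^k\phi_\sigma(\mathbf{v})$ for $k\le 2$ together with $p(\mathbf{u})>0$, is a valid addition that the paper leaves implicit.
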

\begin{proof}
    Note that $p(\mathbf{u}|\mathbf{x})=\mathcal{N}(\mathbf{x},\s^2\mathrm{Id})$, so we have the useful formula, the derivative of p.d.f.~of Gaussian distribution:
    \begin{align}\label{eq: Gaussian distribution derivative}
        \nabla_\mathbf{u}p(\mathbf{u}|\mathbf{x})
        =-\frac{1}{\sigma^2}(\mathbf{u}-\mathbf{x})p(\mathbf{u}|\mathbf{x}).
    \end{align}
    We begin with computing the Jacobian. From the chain rule, $\nabla\log p(\mathbf{u})=\frac{\nabla p(\mathbf{u})}{p(\mathbf{u})}$, and calculate the numerator:
    \begin{align}
        \nabla p(\mathbf{u})
        &=\nabla_\mathbf{u}\int p(\mathbf{u}|\mathbf{x})p(\mathbf{x})\,d\mathbf{x}\nonumber\\
        &=\int\nabla_\mathbf{u}p(\mathbf{u}|\mathbf{x})p(\mathbf{x})\,d\mathbf{x}\nonumber\\
        &=-\frac{1}{\s^2}\int(\mathbf{u}-\mathbf{x})p(\mathbf{u}|\mathbf{x})p(\mathbf{x})\,d\mathbf{x},
        \label{eq: numerator}
    \end{align}
    where the last equality follows by Eq.~(\ref{eq: Gaussian distribution derivative}). Thus, applying Bayes' formula,
    \begin{align}
        \nabla\log p(\mathbf{u})
        &=-\frac{1}{p(\mathbf{u})\s^2}\int(\mathbf{u}-\mathbf{x})p(\mathbf{u}|\mathbf{x})p(\mathbf{x})\,d\mathbf{x}\nonumber\\
        &=-\frac{1}{\s^2}\int(\mathbf{u}-\mathbf{x})\frac{p(\mathbf{u}|\mathbf{x})p(\mathbf{x})}{p(\mathbf{u})}\,d\mathbf{x}\nonumber\\
        &=-\frac{1}{\s^2}\int(\mathbf{u}-\mathbf{x})p(\mathbf{x}|\mathbf{u})\,d\mathbf{x}\nonumber\\
        &=-\frac{1}{\s^2}\mathbb{E}[\mathbf{u}-\mathbf{x}|\mathbf{u}].
        \label{eq: 1st order result}
    \end{align}
    Next, we compute the Hessian:
    \begin{align}
        \nabla^2\log p(\mathbf{u})
        =\nabla\left(\frac{\nabla p(\mathbf{u})}{p(\mathbf{u})}\right)
        &=\frac{p(\mathbf{u})\nabla^2p(\mathbf{u})-\nabla p(\mathbf{u})(\nabla p(\mathbf{u}))^\top}{p(\mathbf{u})^2}\nonumber\\
        &=\frac{\nabla^2p(\mathbf{u})}{p(\mathbf{u})}-\left(\frac{\nabla p(\mathbf{u})}{p(\mathbf{u})}\right)\left(\frac{\nabla p(\mathbf{u})}{p(\mathbf{u})}\right)^\top\nonumber\\
        &=\frac{\nabla^2p(\mathbf{u})}{p(\mathbf{u})}-\frac{1}{\s^4}\mathbb{E}[\mathbf{u}-\mathbf{x}|\mathbf{u}]\mathbb{E}[\mathbf{u}-\mathbf{x}|\mathbf{u}]^\top,
        \label{eq: 2nd order result (part one)}
    \end{align}
    where the second term of the last equality follows by Eq.~(\ref{eq: 1st order result}). To deal with the first term of Eq.~(\ref{eq: 2nd order result (part one)}), we need to calculate $\nabla^2p(\mathbf{u})$. From Eqs.~(\ref{eq: Gaussian distribution derivative}) and~(\ref{eq: numerator}),
    \begin{align*}
        \nabla^2p(\mathbf{u})
        &=-\frac{1}{\s^2}\int\nabla_\mathbf{u}\left[(\mathbf{u}-\mathbf{x})p(\mathbf{u}|\mathbf{x})\right]p(\mathbf{x})\,d\mathbf{x}\\
        &=-\frac{1}{\s^2}\int\left[\mathrm{Id}\cdot p(\mathbf{u}|\mathbf{x})+(\mathbf{u}-\mathbf{x})(\nabla_\mathbf{u}p(\mathbf{u}|\mathbf{x}))^\top\right]p(\mathbf{x})\,d\mathbf{x}\\
        &=-\frac{1}{\s^2}\int\left[\mathrm{Id}\cdot p(\mathbf{u}|\mathbf{x})-\frac{1}{\s^2}(\mathbf{u}-\mathbf{x})(\mathbf{u}-\mathbf{x})^\top p(\mathbf{u}|\mathbf{x})\right]p(\mathbf{x})\,d\mathbf{x}\\
        &=-\frac{1}{\s^2}\mathrm{Id}\int p(\mathbf{u}|\mathbf{x})p(\mathbf{x})\,d\mathbf{x}
        +\frac{1}{\s^4}\int(\mathbf{u}-\mathbf{x})(\mathbf{u}-\mathbf{x})^\top p(\mathbf{u}|\mathbf{x})p(\mathbf{x})\,d\mathbf{x}\\
        &=-\frac{1}{\s^2}\mathrm{Id}\cdot p(\mathbf{u})+\frac{1}{\s^4}\int(\mathbf{u}-\mathbf{x})(\mathbf{u}-\mathbf{x})^\top p(\mathbf{u}|\mathbf{x})p(\mathbf{x})\,d\mathbf{x}.
    \end{align*}
    Therefore, using Bayes’ formula,
    \begin{align}
        \frac{\nabla^2p(\mathbf{u})}{p(\mathbf{u})}
        &=-\frac{1}{\s^2}\mathrm{Id}+\frac{1}{\s^4}\int(\mathbf{u}-\mathbf{x})(\mathbf{u}-\mathbf{x})^\top p(\mathbf{x}|\mathbf{u})\,d\mathbf{x}\nonumber\\
        &=-\frac{1}{\s^2}\mathrm{Id}+\frac{1}{\s^4}\mathbb{E}[(\mathbf{u}-\mathbf{x})(\mathbf{u}-\mathbf{x})^\top|\mathbf{u}].
        \label{eq: 2nd order (part 2)}
    \end{align}
    Finally, taking Eq.~(\ref{eq: 2nd order (part 2)}) into Eq.~(\ref{eq: 2nd order result (part one)}) gives
    \begin{align*}
        \nabla^2\log p(\mathbf{u})
        &=\frac{1}{\s^4}\left(\mathbb{E}[(\mathbf{u}-\mathbf{x})(\mathbf{u}-\mathbf{x})^\top|\mathbf{u}]-\mathbb{E}[\mathbf{u}-\mathbf{x}|\mathbf{u}]\mathbb{E}[\mathbf{u}-\mathbf{x}|\mathbf{u}]^\top\right)-\frac{1}{\s^2}\mathrm{Id}\\
        &=\frac{1}{\s^4}\mathrm{Cov}(\mathbf{u}-\mathbf{x}|\mathbf{u})-\frac{1}{\s^2}\mathrm{Id}\\
        &=\frac{1}{\s^4}\mathrm{Cov}(\mathbf{x}|\mathbf{u})-\frac{1}{\s^2}\mathrm{Id},
    \end{align*}
    so the Hessian is obtained. This completes the proof.
\end{proof}

Using this property, the spectrum of the DDIM Jacobian matrix $J_t$ in Eq.~(\ref{eq: Jacob of DDIM step t}) is bounded by the linear combination of the identity and the denoiser spectrum. The spectrum of the reverse step is thus:
\begin{align}\label{eq: spectrum of Jacob}
    \lambda(J_t) = \left\{\mu_t + \frac{\gamma_t}{\sqrt{\bar{\alpha}_t}} \lambda:\lambda\in\lambda(\nabla D_\theta)\right\}.
\end{align}

Also, we can verify whether our derived sampling step is invertible. First, since noise variance never hits zero during intermediate steps, $\mu_t > 0$. Also, for time step $t$ to $t-1$, $\bar{\alpha}_{t-1} > \bar{\alpha}_t$, which implies $\gamma_t > 0$. For an MMSE denoiser $D_\theta(\cdot)$, its Jacobian matrix is positive semi-definite since its eigenvalues are within $[0, 1]$. Therefore, the eigenvalues of Eq.~(\ref{eq: Jacob of DDIM step t}), {\em a.k.a} Eq.~(\ref{eq: spectrum of Jacob}), are strictly positive. As a result, Eq.~(\ref{eq: update_rule}) is non-singular.

\subsubsection{Generative Expansion (Reverse Process)}
The maximum expansion is the supremum over the spectrum $L_{\text{gen}}^{(t)} = \sup_{\lambda_D \in [0,1]} \left( \mu_t + \frac{\gamma_t}{\sqrt{\bar{\alpha}_t}} \lambda_D \right)$. The Lipschitz constant of the generation step is the maximum singular value of $J_t$. This occurs along the signal direction, where the denoiser preserves structure ($\lambda_{\mathrm{max}}(\nabla D_\theta) = 1$). From Eq.~(\ref{eq: spectrum of Jacob}), we can obtain:
\begin{align}\label{eq: step t rev L constant}
    L_{\text{gen}}^{(t)} &\leq \mu_t + \frac{\gamma_t}{\sqrt{\bar{\alpha}_t}} = \mu_t + \frac{1}{\sqrt{\bar{\alpha}_t}} \left( \sqrt{\bar{\alpha}_{t-1}} - \mu_t \sqrt{\bar{\alpha}_t} \right) \nonumber \\
    &= \mu_t + \frac{\sqrt{\bar{\alpha}_{t-1}}}{\sqrt{\bar{\alpha}_t}} - \mu_t = \frac{\sqrt{\bar{\alpha}_{t-1}}}{\sqrt{\bar{\alpha}_t}}.
\end{align}
\subsubsection{Inversion Singularity}
For the inverse process $x_{t-1} \to x_t$, we ask ``where does the reverse process contract the input the most?'' The maximum expansion of the inverse process is dictated by the infimum of the forward step $L_{\text{inv}}^{(t)} = \left( \inf_{\lambda_D \in [0,1]} \left( \mu_t + \frac{\gamma_t}{\sqrt{\bar{\alpha}_t}} \lambda_D \right) \right)^{-1}$. During the reverse process, the denoiser suppresses the noisy signal the most ($\lambda_{\mathrm{min}}(\nabla D_\theta) = 0$). Hence, the stability is governed by the noise subspace, where in this regime, the Jacobian matrix in Eq.~(\ref{eq: Jacob of DDIM step t}) simplifies to $J_t = \mu_t \mathrm{Id}$. The inverse Lipschitz constant is therefore $1/\mu_t$:
\begin{equation}\label{eq: step t inv L constant}
    L_{\text{inv}}^{(t)} \leq \frac{1}{\mu_t} = \frac{\sqrt{1-\bar{\alpha}_t}}{\sqrt{1-\bar{\alpha}_{t-1}}}.
\end{equation}

\section{Proofs of Theorems and Lemmas}\label{Sec: proofs for thm lem}
In the following, we provide detailed proofs of Theorem~\ref{thm:Global inverse of rev}, Theorem~\ref{thm:RST invariance}, Theorem~\ref{thm: non-perturb invariant}, Theorem~\ref{thm: global L bound}, 
and Theorem~\ref{thm: condition for success detection}.
\begin{proof}[\textbf{Proof of Theorem~\ref{thm:Global inverse of rev}}]
In our specific MMSE-VAMP framework, the mapping naturally satisfies the "stronger conditions" required for global bijectivity. Specifically, the Jacobian of our update step is not merely non-singular; it is strictly positive definite globally. We prove with the following reasoning:

First, by Tweedie's formula, the optimal MMSE denoiser under Gaussian noise is $D_\th(\mathbf{x}) = \mathbf{x} + \sigma^2 \nabla_{\mathbf{x}} \log p(\mathbf{x})$. Because it is the gradient of a scalar potential function, its Jacobian $\nabla D_\th(\mathbf{x})$ is symmetric. Furthermore, standard MMSE properties dictate that the eigenvalues of $\nabla D_\th(\mathbf{x})$ are bounded in $[0, 1]$ everywhere in the domain. Therefore, $\nabla D_\th(\mathbf{x})$ is a positive semi-definite (PSD) matrix globally.

Recall our derived Jacobian for a single DDIM step ($\mathbf{x}_t \to \mathbf{x}_{t-1}$):
$$J_t = \mu_t \mathbf{I} + \frac{\gamma_t}{\sqrt{\bar{\alpha}_t}} \nabla D_\th(\mathbf{u}_t)$$
Because $\mu_t > 0$ strictly (based on the diffusion noise schedule), $\gamma_t > 0$, and $\nabla D_\th \succeq 0$ globally, the entire Jacobian matrix $J_t$ is strictly positive definite everywhere in $\mathbb{R}^d$. Specifically, its minimum eigenvalue is bounded globally by $\mu_t > 0$.

Finally, according to the global inverse function theorem (and specifically the Gale-Nikaido theorem for mappings with positive principal minors, or standard results for strongly monotone operators), a continuously differentiable mapping $F: \mathbb{R}^d \to \mathbb{R}^d$ whose Jacobian is everywhere strictly positive definite is a global diffeomorphism (a smooth bijection). Because every individual step $t \to t-1$ is a global bijection, the full $T$-step composition is also globally invertible.
\end{proof}

\begin{proof}[\textbf{Proof of Theorem~\ref{thm:RST invariance}}]
We use proof by contradiction. Suppose both $\hat{\mathbf{x}}^{RST}_0$ and $\hat{\mathbf{x}}_0$ have the same latent representation, which means $\mathbf{y}^{RST}=\mathbf{y}$, given the invertible $f_\theta$ and $\mathbf{I}^{RST}=\mathscr{RST}(\mathbf{I})$, we can derive:
\begin{align*}
    \hat{\mathbf{x}}^{RST}_0 =\mathcal{E}_{\phi_e}(\mathbf{I}^{RST})= f_\th(\mathbf{y}^{RST}),
\end{align*}
which contradicts $\hat{\mathbf{x}}^{RST}_0\neq\hat{\mathbf{x}}_0$. 
Actually, the above statement is true only when $\mathscr{RST}=\rm Id$, that is to say, there are no $\mathscr{RST}$ perturbations on $\mathbf{I}$.

Moreover, with the invertible $f_\theta$, we have:
\begin{align*}
    \mathbf{y} &= f^{-1}_\theta(\hat{\mathbf{x}}_0), \\
    \mathbf{y}^{RST} &= f^{-1}_\theta(\hat{\mathbf{x}}^{RST}_0).
\end{align*}
If the assumption $\mathbf{y}^{RST}=\mathbf{y}$ is imposed, we have:
\begin{align}\label{inverse contradiction}
    f^{-1}_\theta(\hat{\mathbf{x}}^{RST}_0)=f^{-1}_\theta(\hat{\mathbf{x}}_0),
\end{align}
which
contradicts the definition of inverse function and $\mathscr{RST}\neq \rm Id$ since $\hat{\mathbf{x}}^{RST}_0=\hat{\mathbf{x}}_0$ and is true only when $\mathscr{RST}=\rm Id$, which means no perturbation is imposed on $\mathbf{I}$.
As a result, we can conclude that if $\hat{\mathbf{x}}^{RST}_0\neq\hat{\mathbf{x}}_0$ (\textit{i.e.,} $\mathscr{RST}\neq \rm Id$), then $\mathbf{y}^{RST}\neq \mathbf{y}$.

Now, given $\mathbf{I}^{RST}=\mathscr{RST}(\mathbf{I})$ and $\hat{\mathbf{x}}^{RST}_0\neq\hat{\mathbf{x}}_0$, we can derive by combining the DDIM inversion $f^{-1}_\theta$ with $\sF$ and $\sF_\sM$ in the RST-invariant watermarking framework~\cite{ruanaidh1998rotation} as:
\begin{align}
    \mathcal{I}_1 &= [\mathscr{F_M} \circ \mathscr{F}]f^{-1}_\theta(\hat{\mathbf{x}}^{RST}_0) = [\mathscr{F_M} \circ \mathscr{F}]\mathbf{y}^{RST}, \label{eq:fmf 1} \\
    \mathcal{I}_2 &= [\mathscr{F_M} \circ \mathscr{F}]f^{-1}_\theta(\hat{\mathbf{x}}_0) = [\mathscr{F_M} \circ \mathscr{F}]\mathbf{y}. \label{eq:fmf 2}
\end{align}
where $\mathcal{I}_1$ and $\mathcal{I}_2$ are outputs of $\mathscr{F_M} \circ \mathscr{F}$ operations, respectively.
To demonstrate that $\mathcal{I}_1\ne\mathcal{I}_2$, we leverage the inherent properties of DDIM inversion—and generative models more broadly. Specifically, even within the framework of a Probability Flow ODE, due to the expansive property of $f_\theta^{-1}$ ({\em i.e.}, large $L_{f_\theta^{-1}}$), the resulting latent representations $\mathbf{y}$ and $\mathbf{y}^{RST}$ exhibit amplitude inconsistency. By treating the inconsistency as a random variable (see proof of Theorem~\ref{thm: non-perturb invariant}), it follows that $\mathcal{I}_1\ne\mathcal{I}_2$ holds almost surely.
In conclusion, when the DDIM inversion process operates with an RST-invariant framework, the combined process, {\em a.k.a} DDIM inversion and RST-invariant framework, loses RST invariance.
\end{proof}

\begin{proof}[\textbf{Proof of Theorem~\ref{thm: non-perturb invariant}}]
In this proof, we will continue the discussion of the proof of Theorem~\ref{thm:RST invariance}, focusing on proving the expansion property of DDIM inverse process. Let $\mathbf{x}_t,\mathbf{y}_t$ be the trajectories of two distinct data points under the deterministic DDIM process, and let $\Delta_t:=\mathbf{x}_t - \mathbf{y}_t$ denote the difference vector between the two particles at time $t$. Using the linearized approximation of the DDIM trajectory:
\begin{align*}
\mathbf{x}_t \approx \sqrt{\bar{\alpha}_t} \mathbf{x}_0 + \sqrt{1-\bar{\alpha}_t} \epsilon,
\end{align*}
For two points starting at $\mathbf{x}_0$ and $\mathbf{y}_0$ with random initial difference $\Delta_0=\mathbf{x}_0 - \mathbf{y}_0$ ({\em i.e.}, any perturbation), their inverted states at time $t$ relate to the initial states as (with high probability):
\begin{align}\label{eq:thm3.3-1}
\|\Delta_t\|\ge\frac{\sqrt{1-\bar{\alpha}_t}}{\sqrt{1-\alpha_0}}\|\Delta_0\|.
\end{align}
To show Eq.~(\ref{eq:thm3.3-1}), we follow the rule of DDIM inversion, for each step,
\begin{align}
\mathbf{x}_t&=\sqrt{\bar{\alpha}_t}\hat{\mathbf{x}}_0+\sqrt{1-\bar{\alpha}_t}\epsilon\nonumber\\
&=\sqrt{\bar{\alpha}_t}\hat{\mathbf{x}}_0+\sqrt{1-\bar{\alpha}_t}\frac{\mathbf{x}_{t-1}-\sqrt{\bar{\alpha}_{t-1}}\hat{\mathbf{x}}_0}{\sqrt{1-\bar{\alpha}_{t-1}}}\nonumber\\
&=\sqrt{\frac{1-\bar{\alpha}_t}{1-\bar{\alpha}_{t-1}}}\mathbf{x}_{t-1}+\left(\sqrt{\bar{\alpha}_t}-\frac{\sqrt{(1-\bar{\alpha}_t)\bar{\alpha}_{t-1}}}{\sqrt{1-\bar{\alpha}_{t-1}}}\right)\hat{\mathbf{x}}_0,\nonumber
\end{align}
where $\hat{\mathbf{x}}_0=\hat{\mathbf{x}}_0(\mathbf{x}_{t-1})$ is the predicted $\mathbf{x}_0$ given $\mathbf{x}_{t-1}$. It follows that
\begin{align}\label{eq:Jacobian of inversion for each step}
\frac{\partial\mathbf{x}_t}{\partial\mathbf{x}_{t-1}}
&=\sqrt{\frac{1-\bar{\alpha}_t}{1-\bar{\alpha}_{t-1}}}\mathrm{Id}
+\left(\sqrt{\bar{\alpha}_t}-\frac{\sqrt{(1-\bar{\alpha}_t)\bar{\alpha}_{t-1}}}{\sqrt{1-\bar{\alpha}_{t-1}}}\right)\frac{\partial\hat{\mathbf{x}}_0}{\partial\mathbf{x}_{t-1}}\nonumber\\
&=\underbrace{\sqrt{\frac{1-\bar{\alpha}_t}{1-\bar{\alpha}_{t-1}}}}_{=:\eta_t}\mathrm{Id}
+\underbrace{\left(\sqrt{\frac{\bar{\alpha}_t}{\bar{\alpha}_{t-1}}}-\sqrt{\frac{1-\bar{\alpha}_t}{1-\bar{\alpha}_{t-1}}}\right)}_{=:-\kappa_t}\left.\nabla D_\theta(\mathbf{u}_t)\right\vert_{\mathbf{u}_t=\mathbf{x}_{t-1}/\sqrt{\bar{\a}_{t-1}}},\\
&=\eta_t\left(\mathrm{Id}-\frac{\kappa_t}{\eta_t}\nabla D_\theta(\mathbf{u}_t)\right),\nonumber
\end{align}
where Eq.~(\ref{eq:Jacobian of inversion for each step}) follows from Eq.~(\ref{eq: DDIM predict x0}), and $\eta_t,\kappa_t>0$ denote the corresponding coefficients (see Fig.~\ref{fig:etas kappas} for numerically computing $\eta_t,\kappa_t$, and $\kappa_t/\eta_t$). We may multiply all the terms together and gives the Jacobian:
\begin{align*}
J_{0\to t}:=\frac{\partial\mathbf{x}_t}{\partial\mathbf{x}_0}
=\prod_{s=1}^t\frac{\partial\mathbf{x}_s}{\partial\mathbf{x}_{s-1}}
&=\left(\prod_{s=1}^t\eta_s\right)\prod_{s=1}^t\left(\mathrm{Id}-\frac{\kappa_s}{\eta_s}\nabla D_\theta(\mathbf{u}_s)\right)\\
&=\frac{\sqrt{1-\bar{\alpha}_t}}{\sqrt{1-\alpha_0}}\left(\mathrm{Id}+\sum_{m=1}^tR_{0\to t}^{(m)}\right),
\end{align*}
where
\begin{align*}
R_{0\to t}^{(m)}
:=(-1)^m\sum_{1\le i_1<i_2<\cdots<i_m\le t}\prod_{j=1}^m\frac{\kappa_{i_j}}{\eta_{i_j}}\nabla D_\theta(\mathbf{u}_{i_j}).
\end{align*}
Since $\Delta_0$ is a high-dimensional vector, and note that $0\preceq\nabla D_\theta(\mathbf{u}_{i_j})\preceq\mathrm{Id}$ (Proposition~\ref{prop:spectral bound of MMSE}), the vector $R_{0\to t}^{(m)}\Delta_0$, for each $m$, is perpendicular to $\Delta_0$ with high probability. Therefore,
\begin{align*}
    \|\Delta_t\|
    =\|J_{0\to t}\Delta_0\|
    =\frac{\sqrt{1-\bar{\alpha}_t}}{\sqrt{1-\alpha_0}}\left\|\Delta_0+\sum_{m=1}^tR_{0\to t}^{(m)}\Delta_0\right\|
    \ge\frac{\sqrt{1-\bar{\alpha}_t}}{\sqrt{1-\alpha_0}}\|\Delta_0\|
\end{align*}
with high probability.

The inverse process maps data from a low-variance manifold ($\alpha_0 \approx 1$) to a high-variance Gaussian sphere ($\alpha_t \to 0$). The scaling factor $\sqrt{1-\bar{\alpha}_t}/\sqrt{1-\alpha_0}$ acts as an \textbf{amplifier} for any $\Delta_0$. Consequently, the distance between two distinct particles is significantly magnified as $t$ increases.
\end{proof}

\begin{figure}[htbp] 
  \centering 
  \includegraphics[width=0.6
  \textwidth]{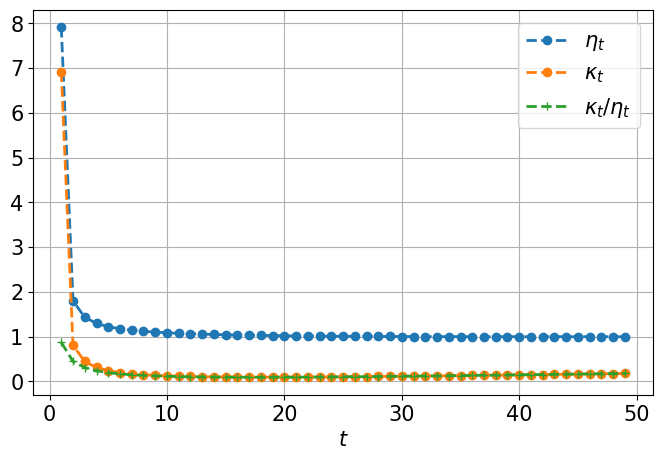} 
  \caption{Coefficients $\eta_t,\kappa_t,\kappa_t/\eta_t$ of all time steps, under the linear noise schedule.}
  \label{fig:etas kappas}
\end{figure}

\begin{proof}[\textbf{Proof of Theorem~\ref{thm: global L bound}}]
Since $\bar{\alpha}_{t-1} > \bar{\alpha}_t$, $L_{\text{gen}}^{(t)} > 1$. Eq.~(\ref{eq: step t rev L constant}) proves that generation is locally expansive. The total Lipschitz constant for the full trajectory telescopes perfectly:
\begin{align*}
    L_{f_\theta} \le \prod_{t=1}^T \frac{\sqrt{\bar{\alpha}_{t-1}}}{\sqrt{\bar{\alpha}_t}} = \frac{\sqrt{\alpha_0}}{\sqrt{\bar{\alpha}_T}}.
\end{align*}
Similarly, the global bound for inversion telescopes can be derived using Eq.~(\ref{eq: step t inv L constant}) as:
\begin{align*}
    L_{f^{-1}_\theta} \le \prod_{t=1}^T \frac{\sqrt{1-\bar{\alpha}_t}}{\sqrt{1-\bar{\alpha}_{t-1}}} = \frac{\sqrt{1-\bar{\alpha}_T}}{\sqrt{1-\alpha_0}}.
\end{align*}
\end{proof}

Before proving Theorem~\ref{thm: condition for success detection}, we need two lemmas.
\begin{lemma}\label{lem: max bound}
    Given $L_{\mathcal{G}}$, $L_{f_\theta}$, $L_{\mathcal{W}}$, $L_{\mathscr{F}^{-1}}$, $\mathbf{y}\sim\cN(\vzero,\rm Id)$, $\mathbf{w}$, and $r>0$, we denote the Lipschitz constant of $\mathbf{F}$ in Eq.~(\ref{eq: detection space to Image space}) as $L_{\mathbf{F}}=L_{\mathcal{G}_{\phi_g}}L_{f_\theta}L_{\mathscr{F}^{-1}}L_{\mathcal{W}}$. Furthermore, given $\mathbf{w}^{\prime} \in \mathcal{B}_{r}(\mathbf{w})$, we can obtain the maximum perturbation bound in the pixel space with $\mathbf{I}=\mathbf{F}(\mathbf{y},\mathbf{w})$ and $\mathbf{I}^{\prime}=\mathbf{F}(\mathbf{y},\mathbf{w}^{\prime})$ as:
    \begin{align}\label{max perturb bound}
        \|\mathbf{I}-\mathbf{I}^{\prime}\|\leq L_{\mathbf{F}}r.
    \end{align}
\end{lemma}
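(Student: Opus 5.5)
The plan is to bound $\|\mathbf{I}-\mathbf{I}'\|$ by viewing $\mathbf{F}(\mathbf{y},\cdot)$, for fixed $\mathbf{y}$, as a composition of maps whose Lipschitz constants are already available, and then telescoping. Fix $\mathbf{y}\sim\cN(\vzero,\rm Id)$. By Eq.~(\ref{eq: detection space to Image space}),
\begin{align*}
\mathbf{I}=\mathcal{G}_{\phi_g}\bigl(f_\theta(\mathscr{F}^{-1}(\mathcal{W}(\mathbf{y},\mathbf{w})))\bigr),\qquad
\mathbf{I}'=\mathcal{G}_{\phi_g}\bigl(f_\theta(\mathscr{F}^{-1}(\mathcal{W}(\mathbf{y},\mathbf{w}')))\bigr).
\end{align*}

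We peel off the outer maps one at a time, each time using the definition of the Lipschitz constant. Applying this to $\mathcal{G}_{\phi_g}$, then to $f_\theta$ (whose constant $L_{f_\theta}$ is controlled by Theorem~\ref{thm: global L bound}), and then to $\mathscr{F}^{-1}$ gives
\begin{align*}
\|\mathbf{I}-\mathbf{I}'\|\le L_{\mathcal{G}_{\phi_g}}L_{f_\theta}L_{\mathscr{F}^{-1}}\,\|\mathcal{W}(\mathbf{y},\mathbf{w})-\mathcal{W}(\mathbf{y},\mathbf{w}')\|.
\end{align*}

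The remaining step, and the only one needing care, is the injection operator. With $\mathbf{y}$ fixed, the common term $\mathscr{F}(\mathbf{y})\odot\vone_{\cM^c}$ cancels, leaving
\begin{align*}
\mathcal{W}(\mathbf{y},\mathbf{w})-\mathcal{W}(\mathbf{y},\mathbf{w}')=\mathbf{w}-\mathbf{w}'.
\end{align*}
So $\mathcal{W}$ is actually $1$-Lipschitz in its watermark argument. The paper's convention $L_{\mathcal{W}}=\sqrt{HW}\ge1$ is therefore a valid, if looser, upper bound. Since $\mathbf{w}'\in\mathcal{B}_r(\mathbf{w})$, we get $\|\mathbf{w}-\mathbf{w}'\|\le r$, and hence
\begin{align*}
\|\mathcal{W}(\mathbf{y},\mathbf{w})-\mathcal{W}(\mathbf{y},\mathbf{w}')\|\le L_{\mathcal{W}}\,r.
\end{align*}
Combining this with the previous display yields Eq.~(\ref{max perturb bound}) with $L_{\mathbf{F}}=L_{\mathcal{G}_{\phi_g}}L_{f_\theta}L_{\mathscr{F}^{-1}}L_{\mathcal{W}}$.

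The main obstacle is being consistent about norms, since $\mathscr{F}$ maps into $\mathbb{C}^n$ and the injection acts there. I would interpret every constant with respect to the $\ell_p$ norm fixed in Sec.~\ref{notation}, extended to complex vectors via moduli. With unnormalized DFT conventions, $L_{\mathscr{F}^{-1}}$ and $L_{\mathscr{F}}$ then pick up factors of $\sqrt{HW}$. For this reason I would state only that each constant is some finite Lipschitz bound, and not fix their values. I would also note that $\mathbf{y}$ is held the same in $\mathbf{I}$ and $\mathbf{I}'$; this is what makes the cancellation in $\mathcal{W}$ exact, and the randomness of $\mathbf{y}$ plays no role in the bound.
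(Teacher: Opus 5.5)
Your proposal is correct and follows essentially the same route as the paper: chain the Lipschitz constants of $\mathcal{G}_{\phi_g}$, $f_\theta$, $\mathscr{F}^{-1}$ and $\mathcal{W}$ to get $\|\mathbf{I}-\mathbf{I}'\|\le L_{\mathbf{F}}\|\mathbf{w}-\mathbf{w}'\|$, then use $\|\mathbf{w}-\mathbf{w}'\|\le r$ from the ball. Your extra observation that, with $\mathbf{y}$ fixed, $\mathcal{W}(\mathbf{y},\cdot)$ is exactly $1$-Lipschitz (so the paper's $L_{\mathcal{W}}=\sqrt{HW}$ is a valid but loose upper bound) is a correct sharpening that the paper does not make.
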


Specifically, the inequality in Eq.~(\ref{max perturb bound}) can be interpreted as the maximum discrepancy in the pixel space, given that the perturbed watermark $\mathbf{w}^{\prime}\in\mathcal{B}_r(\mathbf{w})$ in the latent space.

\begin{proof}[Proof of Lemma~\ref{lem: max bound}]
According to the results in Theorem~\ref{thm: global L bound}, given the Lipschitz constants of $\mathcal{W}$, $\mathcal{M}$, and $\mathcal{G}_{\phi_g}$, and Eq.~(\ref{eq: detection space to Image space}) and $L_{\mathcal{W}}=\sqrt{HW}$, we can have the following property:
\begin{align}\label{eq: gen bound from w to I}
    &\|\mathbf{I}-\mathbf{I}^{\prime}\|=\|\mathbf{F}(\mathbf{y}, \mathbf{w})-\mathbf{F}(\mathbf{y}, \mathbf{w}^{\prime})\|\leq L_{\mathbf{F}}\|\mathbf{w}-\mathbf{w}^{\prime}\|, \nonumber\\
    &L_{\mathbf{F}}=L_{\mathcal{G}_{\phi_g}}L_{f_\theta}L_{\mathscr{F}^{-1}}L_{\mathcal{W}},
\end{align}
and, with a radius $r$ in the latent space at point $\mathbf{w}$, we define a ball centered at $\mathbf{w}$ as $\mathcal{B}=\{\mathbf{w}^{\prime}:\| \mathbf{w}^{\prime} - \mathbf{w} \| \leq r \}$. Hence, the Eq.~(\ref{eq: gen bound from w to I}) can be simplified into:
\begin{align}
    \|\mathbf{I}-\mathbf{I}^{\prime}\|\leq L_{\mathbf{F}}r, 
\end{align}
\end{proof}

\begin{lemma} \label{lem: inv max bound}
    Given the Lipschitz constants $L_{\mathcal{E}_{\phi_e}}$, $L_{f^{-1}_\theta}$, $L_\mathscr{F}$, and $L_{\mathcal{M}}$, we define the Lipschitz constant of the watermark estimation of an image as $L_{\mathbf{F}^{-1}}=L_{\mathcal{E}_{\phi_e}}L_{f^{-1}_\theta}L_\mathscr{F}L_\mathcal{M}$. With Eq.~(\ref{eq: wm detection space}), the ground-truth watermark $\mathbf{w}$, and a distorted image $\mathcal{T}(\mathbf{I})$, for all $\mathbf{I}=\mathbf{F}(\mathbf{y}, \mathbf{w})$ and $\mathbf{y}\sim\cN(\vzero,\rm Id)$, the maximum perturbation that affects the estimated watermark $\hat{\mathbf{w}}$ can be derived as:
    \begin{align}
        \|\mathbf{w}-\hat{\mathbf{w}}\|\leq \|\mathbf{w}-\bar{\mathbf{w}}\|+L_{\mathbf{F}^{-1}}\|\mathbf{I}-\mathcal{T}(\mathbf{I})\|, \label{practical inv max perturb bound}
    \end{align}
    where $\bar{\mathbf{w}} = \mathbf{F}^{-1}(\mathbf{I}) = \mathcal{M}(\mathscr{F}(f^{-1}_\theta(\mathcal{E}_{\phi_e}(\mathbf{I})))$ indicates the estimated watermark from image $\mathbf{I}$. If $\mathbf{F}^{-1}\mathbf{F}=\mathrm{Id}$, Eq.~(\ref{practical inv max perturb bound}) can be reduced into
    \begin{align}
        \|\mathbf{w}-\hat{\mathbf{w}}\|\leq L_{\mathbf{F}^{-1}}\|\mathbf{I}-\mathcal{T}(\mathbf{I})\|. \label{inv max perturb bound}
    \end{align}
\end{lemma}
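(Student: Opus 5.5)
The plan is to add and subtract the watermark estimated from the clean image, and then control the perturbation term by Lipschitz composition through the pipeline in Eq.~(\ref{eq: wm detection space}). Write $\hat{\mathbf{w}}=\mathbf{F}^{-1}(\mathcal{T}(\mathbf{I}))$ and $\bar{\mathbf{w}}=\mathbf{F}^{-1}(\mathbf{I})$. The triangle inequality gives
\begin{align*}
\|\mathbf{w}-\hat{\mathbf{w}}\|\le\|\mathbf{w}-\bar{\mathbf{w}}\|+\|\bar{\mathbf{w}}-\hat{\mathbf{w}}\|.
\end{align*}
The first term is kept as is. It is the deviation $r$-type quantity caused by numerical error, guidance asymmetry and clipping, which the paper deliberately does not assume to vanish.

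For the second term, I will note that $\mathbf{F}^{-1}=\mathcal{M}\circ\mathscr{F}\circ f^{-1}_\theta\circ\mathcal{E}_{\phi_e}$ is a composition of four maps, each Lipschitz with the constants stated before Theorem~\ref{thm: condition for success detection}. The bound on $L_{f^{-1}_\theta}$ is the one from Theorem~\ref{thm: global L bound}, and $f^{-1}_\theta$ is well defined by Theorem~\ref{thm:Global inverse of rev}. I will then peel off one layer at a time:
\begin{align*}
\|\bar{\mathbf{w}}-\hat{\mathbf{w}}\|
&\le L_{\mathcal{M}}\|\mathscr{F}(f^{-1}_\theta(\mathcal{E}_{\phi_e}(\mathbf{I})))-\mathscr{F}(f^{-1}_\theta(\mathcal{E}_{\phi_e}(\mathcal{T}(\mathbf{I}))))\|\\
&\le L_{\mathcal{M}}L_{\mathscr{F}}L_{f^{-1}_\theta}L_{\mathcal{E}_{\phi_e}}\|\mathbf{I}-\mathcal{T}(\mathbf{I})\|
=L_{\mathbf{F}^{-1}}\|\mathbf{I}-\mathcal{T}(\mathbf{I})\|.
\end{align*}
Combining the two terms yields Eq.~(\ref{practical inv max perturb bound}).

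For the reduced form, I will use the identity $\mathbf{F}^{-1}\mathbf{F}=\mathrm{Id}$ in the sense that the detection pipeline exactly recovers the injected watermark. Then $\bar{\mathbf{w}}=\mathbf{F}^{-1}(\mathbf{F}(\mathbf{y},\mathbf{w}))=\mathbf{w}$. In practice this rests on three facts: $\mathcal{E}_{\phi_e}\circ\mathcal{G}_{\phi_g}=\mathrm{Id}$ as assumed in the Notation, $f^{-1}_\theta\circ f_\theta=\mathrm{Id}$ from Theorem~\ref{thm:Global inverse of rev}, and $\mathscr{F}\circ\mathscr{F}^{-1}=\mathrm{Id}$. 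Together these give $\mathcal{M}(\mathbf{y}_\mathbf{w})=\mathcal{M}(\mathscr{F}(\mathbf{y})\odot\vone_{\cM^c}+\mathbf{w})=\mathbf{w}$, provided $\mathcal{M}$ is read as restriction to $\cM$. So the first term vanishes and Eq.~(\ref{inv max perturb bound}) follows.

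The main obstacle is a domain mismatch. The chain of Lipschitz constants must be taken in compatible norms: $\mathscr{F}$ maps into $\mathbb{C}^n$, and the $\ell_p$ norm changes across spaces. I would handle this by fixing $p$ throughout and absorbing any norm-equivalence factors into $L_{\mathscr{F}}$ and $L_{\mathcal{M}}$. I would also check that $\mathcal{M}$ restricted to $\cM$ returns exactly $\mathbf{w}$, since $\mathscr{F}(\mathbf{y})\odot\vone_{\cM^c}$ is supported off $\cM$.
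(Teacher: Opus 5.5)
Your proposal is correct and follows the paper's proof essentially step for step: the triangle inequality through $\bar{\mathbf{w}}=\mathbf{F}^{-1}(\mathbf{I})$, the composed Lipschitz bound $\|\bar{\mathbf{w}}-\hat{\mathbf{w}}\|\le L_{\mathcal{E}_{\phi_e}}L_{f^{-1}_\theta}L_{\mathscr{F}}L_{\mathcal{M}}\|\mathbf{I}-\mathcal{T}(\mathbf{I})\|$, and $\bar{\mathbf{w}}=\mathbf{w}$ under $\mathbf{F}^{-1}\mathbf{F}=\mathrm{Id}$. Your explicit unwinding of $\mathbf{F}^{-1}(\mathbf{F}(\mathbf{y},\mathbf{w}))=\mathcal{M}(\mathbf{y}_\mathbf{w})=\mathbf{w}$, using $\mathcal{E}_{\phi_e}\circ\mathcal{G}_{\phi_g}$, $f^{-1}_\theta\circ f_\theta$, $\mathscr{F}\circ\mathscr{F}^{-1}$, and the masking, spells out what the paper asserts in one line.
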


\begin{proof}[Proof of Lemma~\ref{lem: inv max bound}]

According to the results in Theorem~\ref{thm: global L bound}, given the Lipschitz constants $L_{\mathcal{E}_{\phi_e}}$, $L_{f^{-1}_\theta}$, $L_\mathscr{F}$, and $L_{\mathcal{M}}$, and Eq.~(\ref{eq: wm detection space}), we can have the following property:
\begin{align*}
    &\|\bar{\mathbf{w}}-\hat{\mathbf{w}}\|=\|\mathbf{F}^{-1}(\mathbf{I})-\mathbf{F}^{-1}(\mathcal{T}(\mathbf{I}))\|\leq L_{\mathbf{F}^{-1}}\|\mathbf{I}-\mathcal{T}(\mathbf{I})\|, \\
    &L_{\mathbf{F}^{-1}}=L_{\mathcal{E}_{\phi_e}}L_{f^{-1}_\theta}L_\mathscr{F}L_\mathcal{M}.
\end{align*}
Using the triangular inequality, we have the result
\begin{align}\label{eq: practical inv L bound}
    \|\mathbf{w}-\hat{\mathbf{w}}\|\leq \|\mathbf{w}-\bar{\mathbf{w}}\|+\|\bar{\mathbf{w}}-\hat{\mathbf{w}}\| \leq \|\mathbf{w}-\bar{\mathbf{w}}\|+L_{\mathbf{F}^{-1}}\|\mathbf{I}-\mathcal{T}(\mathbf{I})\|.
\end{align}
Given the inverse process $f^{-1}_\theta$ and the encoder $\mathcal{E}_{\phi_e}$, $\|\mathbf{w}-\bar{\mathbf{w}}\|=0$ since $\mathbf{F}^{-1}\mathbf{F}=\mathrm{Id}$, hence, Eq.~(\ref{eq: practical inv L bound}) can be reduced into Eq.~(\ref{inv max perturb bound}).
\end{proof}

\begin{proof}[\textbf{Proof of Theorem~\ref{thm: condition for success detection}}]
With the above lemmas, given the Lipschitz constant $L_{\mathcal{T}}$ and any two watermarked images $\mathbf{I}$ and $\mathbf{I}^{\prime}$, we have
\begin{align}\label{eq: perturb L ineq}
    \|\mathcal{T}(\mathbf{I})-\mathcal{T}(\mathbf{I}^{\prime})\|\leq L_{\mathcal{T}}\|\mathbf{I}-\mathbf{I}^{\prime}\|.
\end{align}
Using the triangle inequality and Eq.~(\ref{eq: perturb L ineq}), we have
\begin{align}\label{eq: I-T(I') tri upper bound}
    \|\mathbf{I}-\mathcal{T}(\mathbf{I}^{\prime})\|&\leq\|\mathbf{I}-\mathcal{T}(\mathbf{I})\| + \|\mathcal{T}(\mathbf{I})-\mathcal{T}(\mathbf{I}^{\prime})\| \leq \|\mathbf{I}-\mathcal{T}(\mathbf{I})\|+L_{\mathcal{T}}\|\mathbf{I}-\mathbf{I}^{\prime}\|.
\end{align}
Given Lemma~\ref{lem: max bound}, we have
\begin{align}\label{eq: w upper bound by triangle ineq}
    \|\mathbf{I}-\mathcal{T}(\mathbf{I}^{\prime})\|\leq \|\mathbf{I}-\mathcal{T}(\mathbf{I})\|+L_{\mathcal{T}}\|\mathbf{I}-\mathbf{I}^{\prime}\| \leq \|\mathbf{I}-\mathcal{T}(\mathbf{I})\|+L_{\mathcal{T}}L_{\mathbf{F}}r.
\end{align}
Next, given Lemma~\ref{lem: max bound}, if all watermarked images are resistant to the perturbation $\mathcal{T}$ on themselves, the following must be satisfied with Eq.~(\ref{max perturb bound}):
\begin{align}
    \forall \mathbf{I},\|\mathbf{I}-\mathcal{T}(\mathbf{I})\|\leq L_{\mathbf{F}}r. \label{eq: I-T(I) upper bound}
\end{align}
Hence, from Eqs.~(\ref{eq: w upper bound by triangle ineq}) and~(\ref{eq: I-T(I) upper bound}), we get
\begin{align}\label{eq: I-T(I') upper bound}
    \|\mathbf{I}-\mathcal{T}(\mathbf{I}^{\prime})\|\leq L_{\mathbf{F}}r+L_{\mathcal{T}}L_{\mathbf{F}}r=(1+L_{\mathcal{T}})L_{\mathbf{F}}r.
\end{align}
Define the following
\begin{align*}
    \bar{\mathbf{w}} = \mathbf{F}^{-1}(\mathbf{I}),\quad
    \hat{\mathbf{w}}^{\prime} = \mathbf{F}^{-1}(\mathcal{T}(\mathbf{I}^{\prime})),
\end{align*}
by the definition of Lipschitz constant $L_{\mathbf{F}^{-1}}$, we derive
\begin{align}\label{eq: barw - hatw' upper bound}
    \|\bar{\mathbf{w}}-\hat{\mathbf{w}}^{\prime}\|=\|\mathbf{F}^{-1}(\mathbf{I})-\mathbf{F}^{-1}(\mathcal{T}(\mathbf{I}^{\prime}))\|\leq L_{\mathbf{F}^{-1}} \|\mathbf{I}-\mathcal{T}(\mathbf{I}^{\prime})\|.
\end{align}
Using the triangle inequality, we turn Eq.~(\ref{eq: barw - hatw' upper bound}) into
\begin{align}\label{eq: w-hatw' bound}
    \|\mathbf{w}-\hat{\mathbf{w}}^{\prime}\|\leq\|\mathbf{w}-\bar{\mathbf{w}}\|+\|\bar{\mathbf{w}}-\hat{\mathbf{w}}^{\prime}\|\leq \|\mathbf{w}-\bar{\mathbf{w}}\|+L_{\mathbf{F}^{-1}}\|\mathbf{I}-\mathcal{T}(\mathbf{I}^{\prime})\|.
\end{align}
Given $f^{-1}_\theta$ and $\mathcal{E}_{\phi_e}$, $\|\mathbf{w}-\bar{\mathbf{w}}\|=0$ since $\mathbf{F}^{-1}\mathbf{F}=\mathrm{Id}$, hence, Eq.~(\ref{eq: w-hatw' bound}) can be reduced into
\begin{align}\label{eq: w-hatw' upper bound}
    \|\mathbf{w}-\hat{\mathbf{w}}^{\prime}\|\leq L_{\mathbf{F}^{-1}}\|\mathbf{I}-\mathcal{T}(\mathbf{I}^{\prime})\|.
\end{align}
Combining Eqs.~(\ref{eq: I-T(I') upper bound}) and~(\ref{eq: w-hatw' upper bound}), we finally have Eq.~(\ref{eq: rev inv total bound}).
\begin{align*}
    \|\mathbf{w}-\hat{\mathbf{w}}^{\prime}\|\leq L_{\mathbf{F}^{-1}}(1+L_{\mathcal{T}})L_{\mathbf{F}}r
\end{align*}
\end{proof}

\section{$\mathcal{B}$ defined by Tolerance Constraints}\label{Sec: Ball constraints}

\begin{figure*}[ht]
    \centering
        \begin{subfigure}[b]
        {0.32\textwidth}
        \centering
            \includegraphics[height=0.49927\textwidth]{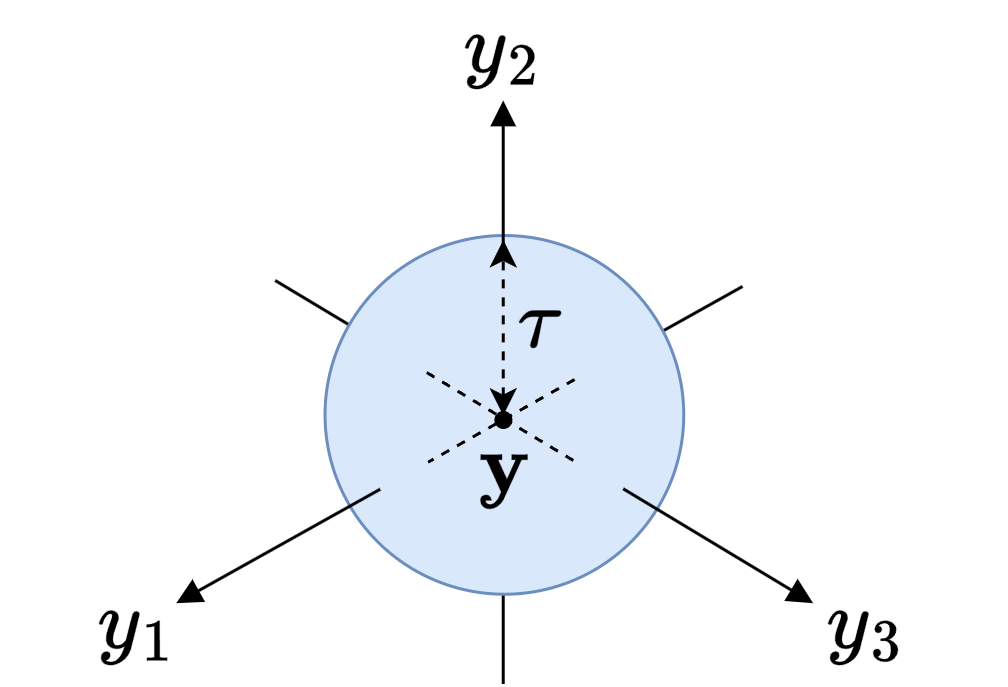}
            \caption{}
            \label{tolerance normal}
        \end{subfigure}
        \begin{subfigure}[b]
        {0.32\textwidth}
        \centering
            \includegraphics[height=0.75\textwidth]{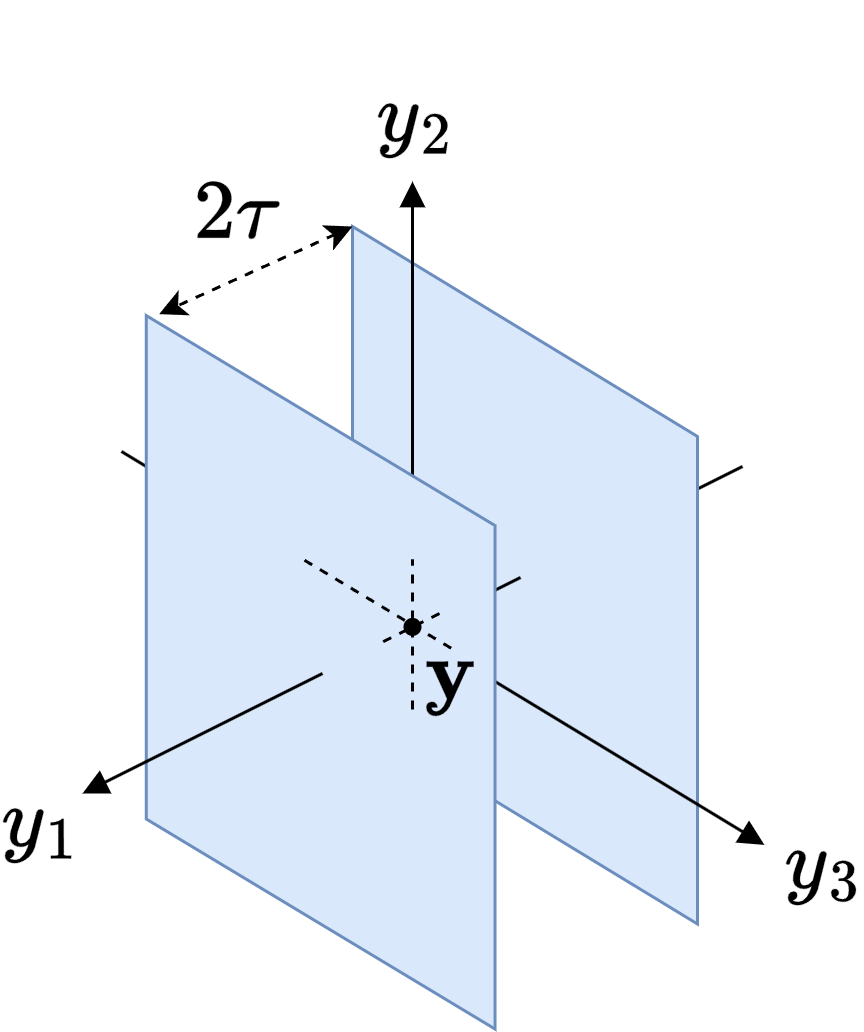}
            \caption{}
            \label{tolerance treering}
        \end{subfigure}
        \begin{subfigure}[b]
        {0.32\textwidth}
        \centering
            \includegraphics[height=0.75\textwidth]{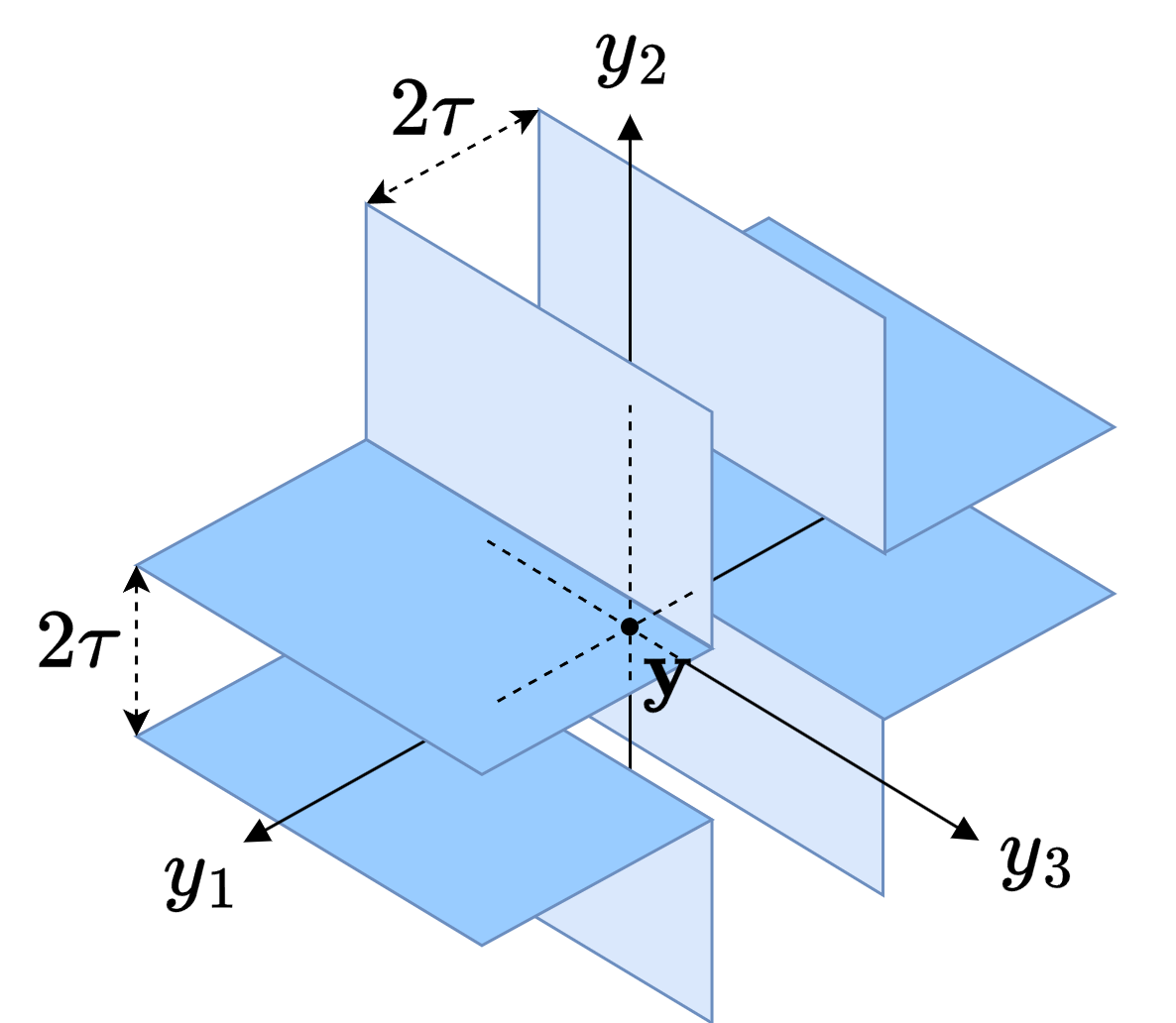}
            \caption{}
            \label{tolerance ringid}
        \end{subfigure}
        \caption{
        Visual explanation of $\mathcal{B}$. (a) The tolerance constraint is described in Sec.~\ref{max bound}. (b) The constraint only considers the $y_1$-axis (corresponding to watermark and detection on the Channel 3 in Tree-Ring~\cite{wen2023treerings}), leaving other dimensions unbounded. As long as the $\ell_1$ distance between any points and $\mathbf{y}$ (centre) on the $y_1$-axis is below a certain threshold $\t$ (corresponding to falling within the two planes in the figure), it is considered to be in $\mathcal{B}$. (c) This constraint considers the $y_1$- and $y_2$-axes, respectively (corresponding to the Channels 0 and 3 in RingID~\cite{ci2024ringid}), leaving the $y_3$-axes unbounded. The constraint requires calculating the $\ell_1$ distance between any point and $\mathbf{y}$ on $y_1$- and $y_2$-axes, respectively, in a dimension-wise manner, and then takes the minimum of these two values. A point is considered to fall in $\mathcal{B}$ if this minimum value is below a certain threshold $\t$ (corresponding to falling within the cross-shaped region in the figure).}
        \label{tolerance visualization}
\end{figure*}

In this section, we give an intuitive explanation of the high FPR under different tolerance constraints as described and illustrated in Fig.~\ref{tolerance visualization}. 
Given a tolerance $\t$ and the point $\mathbf{y}=(y_1,y_2,y_3) \in \RR^3$, we define the following subspaces
\begin{align}
    \mathcal{B}_a&=\{\mathbf{y}^{\prime}|\| \mathbf{y}^{\prime} - \mathbf{y} \|_2 \leq \t, \mathbf{y}^{\prime} \in \RR^3 \}, \nonumber \\
    \mathcal{B}_b&=\{\mathbf{y}^{\prime}|\| y^{\prime}_1 - y_1 \|_1 \leq \t, \mathbf{y}^{\prime} \in \RR^3 \},  \nonumber \\
    \mathcal{B}_{b^\prime}&=\{\mathbf{y}^{\prime}|\| y^{\prime}_2 - y_2 \|_1 \leq \t, \mathbf{y}^{\prime} \in \RR^3 \},  \nonumber \\
    \mathcal{B}_c&=\{\mathbf{y}^{\prime}|\min(\| y^{\prime}_1 - y_1 \|_1,\| y^{\prime}_2 - y_2 \|_1) \leq \t, \mathbf{y}^{\prime} \in \RR^3 \}. \nonumber
\end{align}
where $\mathcal{B}_a$, $\mathcal{B}_b$, and $\mathcal{B}_c$ correspond to Fig.~\ref{tolerance visualization}(a), (b), and (c), respectively. We can easily observe that three different constraints can cover different volumes $vol(\cdot)$ in a three-dimensional space, denoted as $vol(\mathcal{B}_a)$, $vol(\mathcal{B}_b)$, and $vol(\mathcal{B}_c)$. First, $\mathcal{B}_a \subset \mathcal{B}_b, \mathcal{B}_{b^\prime}, \mathcal{B}_c$. Second, $vol(\mathcal{B}_c)$ can be seen as the union of two subspaces, $\mathcal{B}_b$ and $\mathcal{B}_{b^\prime}$. We could conclude that
\begin{align}
    \mathcal{B}_a \subset \mathcal{B}_b \subset \mathcal{B}_c.
\end{align}
Accordingly, it is easier for a data point $\mathbf{y}^\prime$ to be included in $\mathcal{B}_c$, while it is harder to be included in $\mathcal{B}_a$ since $\mathcal{B}_c$ covers the most volume. More importantly, to perturb/attack a data point in/out of subspaces $\mathcal{B}_b$ and $\mathcal{B}_c$, the effort can be made by moving the point along the bounded axes, respectively. This results in a harder attack success rate for $\mathcal{B}_a$ since all axes should be considered. Hence, we can conclude that, under the same tolerance $\tau$, the constraint leads to different structures of $\mathcal{B}$, resulting in looser or tighter criteria to determine whether a point $\mathbf{y}^\prime$ lies within $\mathcal{B}$.

\section{Simulations of Distribution Shift}\label{sim of dist shift}
To demonstrate that relocation of $\mathbf{y}_\mathbf{w}$ caused by watermarks moves the latent distributions away from those of normally generated images with the corresponding latent representations $\mathbf{y}$ close to $\mathbf{0}$, 
we examine this effect in this section
for the simulation of Gaussian distribution in Fig.~\ref{distribution shift}(a) and the corresponding $\chi$ distribution in Fig.~\ref{distribution shift}(b) under different shifts in mean $\mathbf{\mu}$. This gives a larger tolerance for perturbing the images and leads to 
better robustness of watermarking in the generative process, despite the possible high FPRs. 
\begin{figure}[ht]
    \centering
        \begin{subfigure}[b]
        {0.4\textwidth}
            \includegraphics[height=0.8\textwidth]{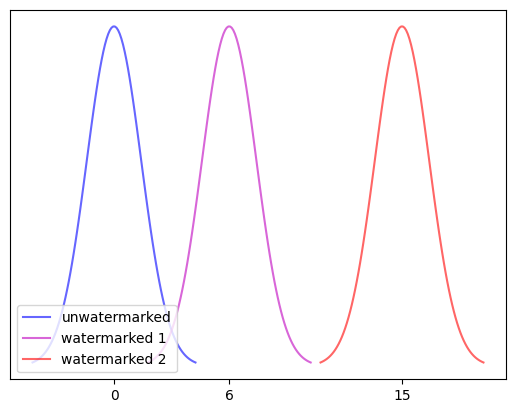}
            \caption{}
            \label{normal shift}
        \end{subfigure}
        \begin{subfigure}[b]
        {0.4\textwidth}
            \includegraphics[height=0.8\textwidth]{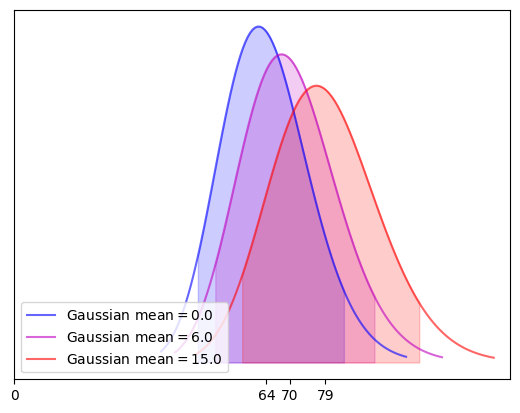}
            \caption{}
            \label{chi square}
        \end{subfigure}
        \caption{(a) Distribution Shift in latent space. The norm of watermarked images will shift away from $\mathbf{0}$. By injecting a larger-norm watermark in latent representation, it is easier to distinguish whether an image is from a normal distribution. Also, the allowed perturbation will be larger since the latent representations of perturbed images will fall within the gap between the unwatermarked and watermarked latents. However, they are still away from the unwatermarked latent. (b) The distribution of the $\ell_2$-norm of samples from each normal distribution. These distributions follow the $\chi$ distribution. The shaded areas represent the 0.3 and 99.7 percentiles for each distribution.}
        \label{distribution shift}
\end{figure}

\section{Analysis of the Detection Distance through the Lens of Forgery and Removal Attacks}\label{lens of f r attack}

\begin{figure*}[ht]
\centering
    \begin{subfigure}[b]{0.4\textwidth}
        \centering
        \includegraphics[width=0.9\textwidth]{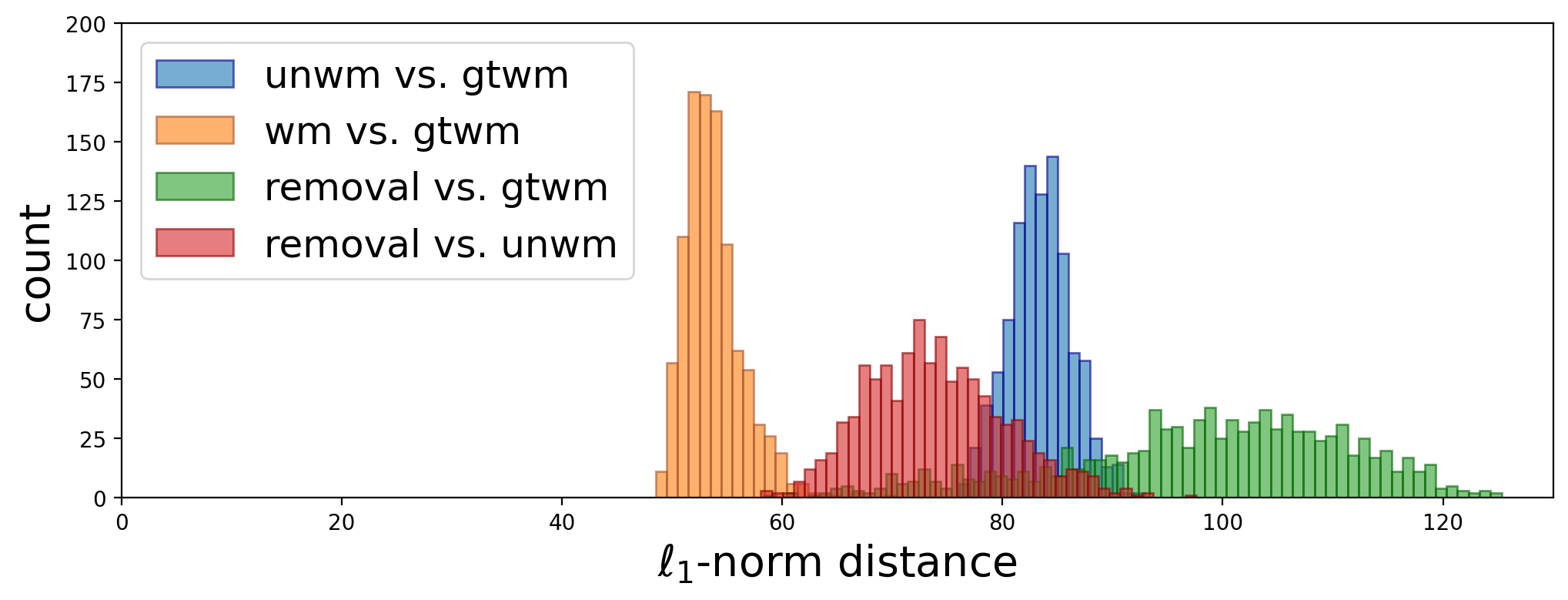}
        \caption{Watermark: Tree-Ring}
        \label{hist distance 1}
    \end{subfigure}
    \begin{subfigure}[b]{0.4\textwidth}
        \centering
        \includegraphics[width=0.9\textwidth]{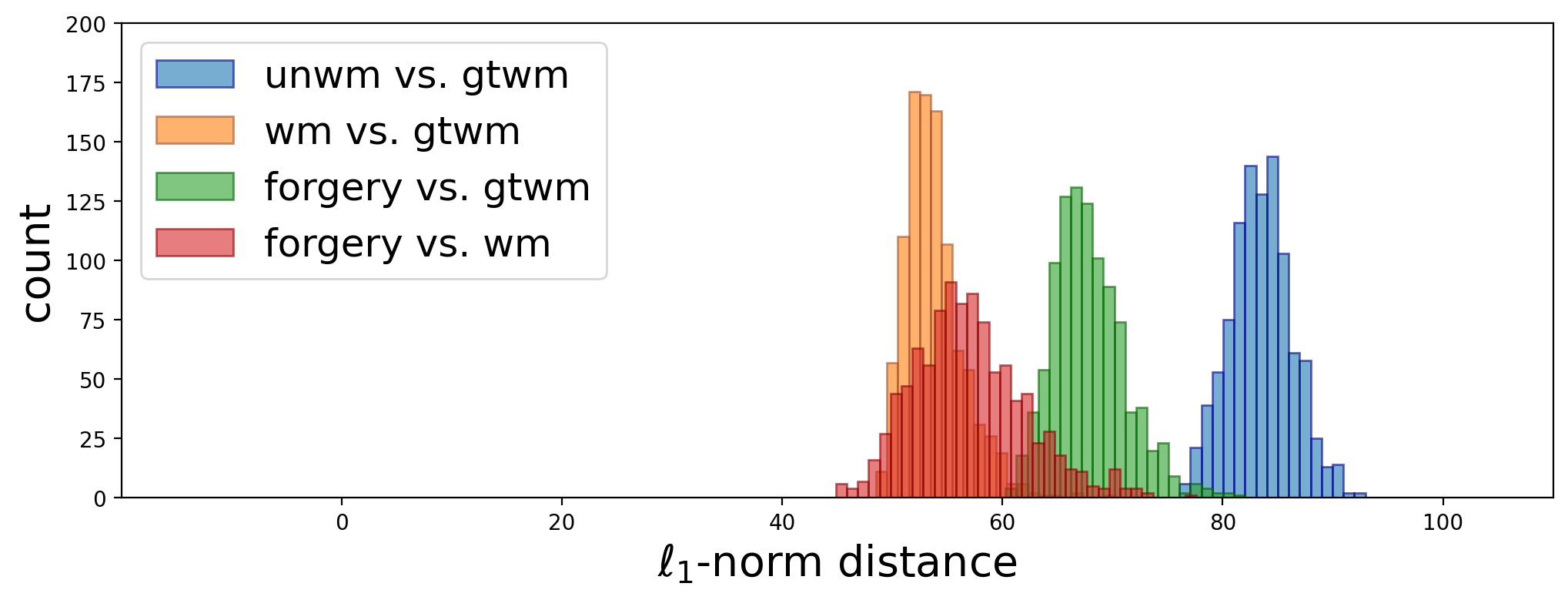}
        \caption{Watermark: Tree-Ring}
        \label{hist distance 2}
    \end{subfigure}
    \begin{subfigure}[b]{0.4\textwidth}
        \centering
        \includegraphics[width=0.9\textwidth]{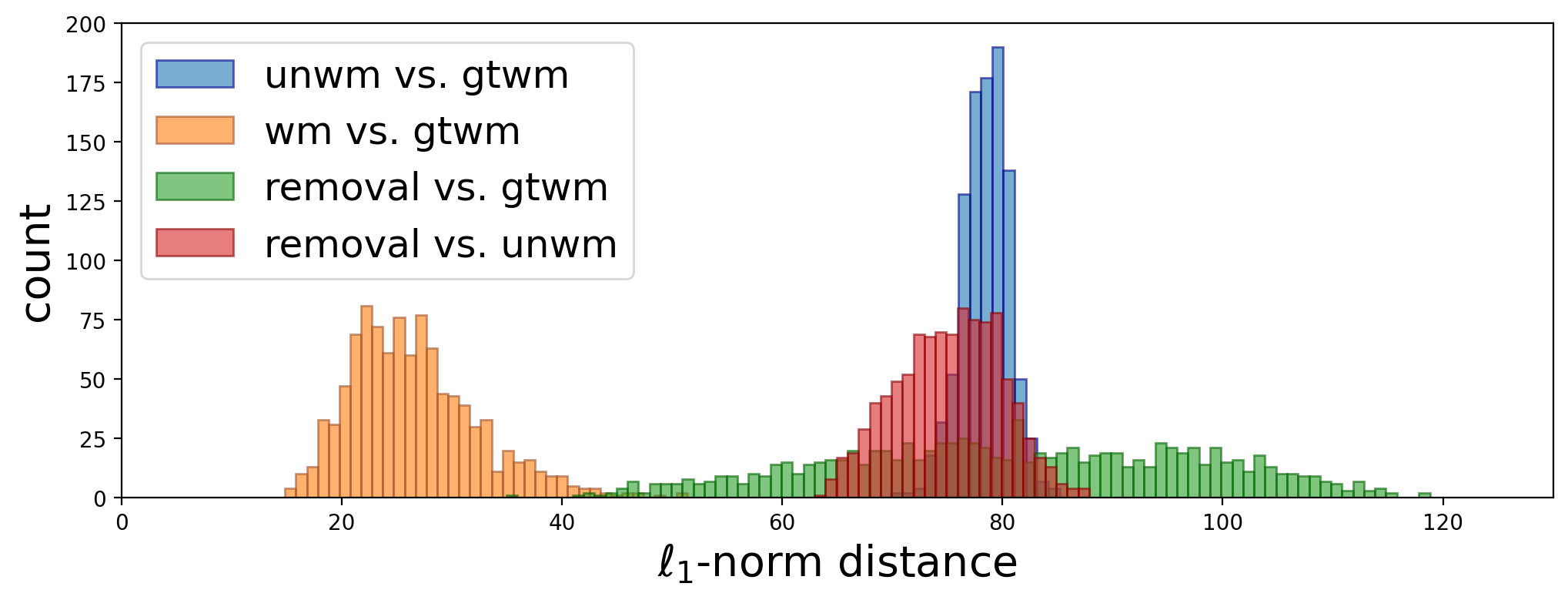}
        \caption{Watermark: RingID}
        \label{hist distance 3}
    \end{subfigure}
    \begin{subfigure}[b]{0.4\textwidth}
        \centering
        \includegraphics[width=0.9\textwidth]{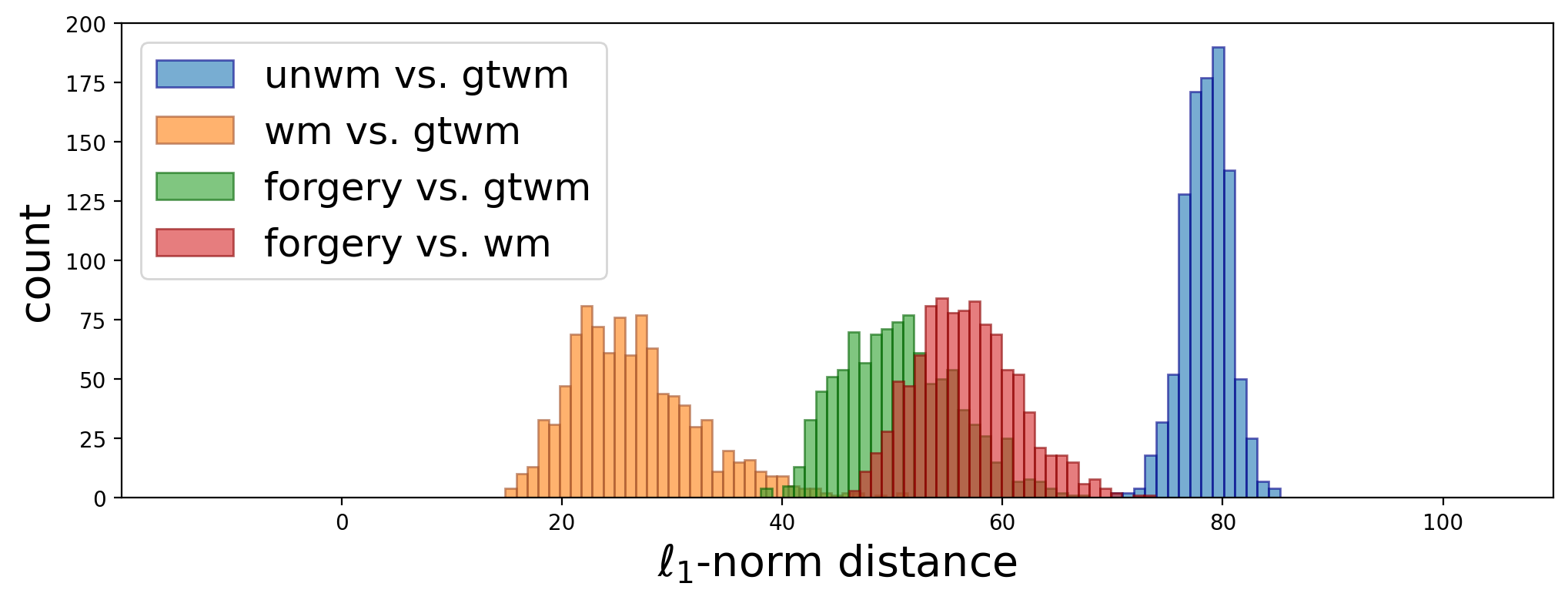}
        \caption{Watermark: RingID}
        \label{hist distance 4}
    \end{subfigure}
    \caption{Histograms of distance. When comparing the latent of the attacked images to samples from the normal distribution ({\em e.g.}, unwatermarked latent), we find that the attacked latent still maintains a certain distance from the unwatermarked latent, especially for the removal attack case. Here, ``gtwm'' means ``ground-truth watermark''; ``wm'' means ``watermarked images''; and ``unwm'' means ``unwatermarked images''. (Best viewed in a color display)}
    \label{remove and forgery}
\end{figure*}

To provide another perspective, we have looked closely at the forgery and removal attacks on images to examine the effect of distribution shifts in latent representations in Fig.~\ref{remove and forgery}. 
The interpretation of the x-axis of Fig.~\ref{remove and forgery} should not be confused with that of Fig.~\ref{distribution shift} in that the former describes the distance between the estimated watermark and the ground-truth watermark while 
the latter describes the mean value $\mathbf{\mu}_\mathbf{w}$ of watermarked distribution. 
Please also note that the interpretation of Fig.~\ref{remove and forgery} should consider each histogram's reference point and distance together, and that, in standard practice, we compute the distance between the inversed test latent and the ground truth watermark to decide if it is watermarked.

Specifically, in Fig.~\ref{remove and forgery}, we present the histograms of distance between two watermarks from two latent-based watermarking methods, Tree-Ring~\cite{wen2023treerings} and RingID~\cite{ci2024ringid}, under the forgery and removal attacks~\cite{yang2024can}. 
We use the blue, orange, and green histograms to represent the distances between the latent representation of ground truth watermark and those of (1) unwatermarked images, (2) watermarked images, and (3) forgery/removal attacked images, respectively. 
As for the red histogram, it represents the distance between the latent representations of unwatermarked images and those of forgery/removal attacked images.

We can observe from Fig.~\ref{remove and forgery}(a) and Fig.~\ref{remove and forgery}(c) that to remove a watermark successfully, the ideal way is to push the watermarked images away from the ground truth watermark distribution, {\em i.e.,} away from data that contain watermarks, which corresponds to the ``green'' histogram so that data being removed no longer stay in the ball. For forgery attack in Fig.~\ref{remove and forgery}(b) and Fig.~\ref{remove and forgery}(d), the attack adds an estimated watermark on other possibly unwatermarked images, which means that the attacker pushes unwatermarked image toward the ground truth watermark distribution that corresponds to the ``green'' histogram, indicating those data belong to the ball and hence are close to the watermark distribution. 

Interestingly, the “red” histogram establishes that the attacked images (via forgery or removal) do not remain closer to the desired distribution (watermarked or unwatermarked images). This suggests key weaknesses in such methods: they allow excessive tolerance, exhibit small distribution shifts, and leak significant information (as shown in~\cite{yang2024can}), enabling attackers to estimate the watermark. Consequently, these methods are vulnerable to forgery and removal attacks. 
As long as the perturbations move data into or out of the target region, the attack needs not to be precisely aligned with—or directly oppose—the watermark direction. Unlike traditional non-learning-based watermarking techniques, which resist such attacks while preserving image quality, these generative watermarking methods trade robustness for flexibility, making the watermarks easier to be compromised.

\section{Experimental Settings}\label{Sec: exp settings}
In this section, we give a detailed description of the settings of perturbations. The perturbations and the settings of each perturbation is as follows:
\begin{enumerate}
    \item[--] Blurring: blur with the kernel size set to $1,2,4,6,8$.
    \item[--] Color Jitter: increase the brightness of the image by setting the factor to $1,2,4,6,8$.
    \item[--] JPEG: compress the image with JPEG quality score set to $1,5,25,40,80$.
    \item[--] Rotation: rotate the image $5^{\circ}, 15^{\circ}, 30^{\circ}, 45^{\circ}, 75^{\circ}$ clock-wise.
    \item[--] Cropping: crop $10\%,25\%,40\%,60\%,80\%$ of the image ({\em i.e.}, remaining factor $0.9,0.75,0.6,0.4,0.2$, respectively).
    \item[--] Elastic deformation: produces a see-through-water-like effect on the image, with $(\alpha,\sigma)=(500,50),(600,60),(750,75),(850,85),(1000,100)$.
    \item[--] Random perspective: perform random perspective on the image, with distortion scale set to $0.5,0.6,0.75,0.85,1.0$.
    \item[--] Shear: shear the image with the parameter set to $5,7,10,15,20$.
    \item[--] Regen-VAE: regenerate the image with VAE with the quality level set to $1,2,3,4,5$.
    \item[--] Regen-DiffPure: regenerate the image with Stable Diffusion v1-4 with the number of noise and denoise step set to $10,20,30,40,50$.
\end{enumerate}
For all cases of Tree-Ring, RingID, HSQR, HSTR, and PRCW, 100 watermarked images were processed through ten types of distortions and subsequently fed into the model for detection. 

\section{Complete Experiments of Tree-Ring, HSQR, and PRCW}\label{sec: full figure exp}
We present the experiments related to the effects of perturbations on Tree-Ring, HSQR, and PRCW. Each watermarking scheme is attacked by seven perturbations with five different parameters to control the strength of the perturbation. Please refer to Figs.~\ref{fig: TR attack shifting},~\ref{fig: HSQR attack shifting},~\ref{fig: PRCW decoding 1e-2fpr attack shifting},~\ref{fig: PRCW detection 1e-2fpr attack shifting},~\ref{fig: PRCW decoding 1e-5fpr attack shifting}, and~\ref{fig: PRCW detection 1e-5fpr attack shifting}.

Following the official implementation of the PRCW method~\cite{gunn2024undetectable}, first we initialize a key pair $(\texttt{encode\_key},\texttt{decode\_key})$, and subsequently generate 100 watermarked images using \texttt{encode\_key}. We then apply seven types of image distortions listed in Sec.~\ref{Sec: exp settings} of Appendix, each evaluated across five distinct intensity levels. Subsequently, decoding and detection are performed on the distorted images and their corresponding unattacked counterparts, using the \texttt{decode\_key}. The detection is determined by two binary criteria: (i) Decoding component: a bit-wise comparison between the \texttt{recovered\_bits} (from the input image) and the \texttt{test\_bit} (extracted by \texttt{decode\_key}), which requires an exact match to be considered valid, and (ii) Detection component: a threshold-based detection result. A final detection is deemed successful if either of these two criteria is satisfied.

To facilitate the visualization of the distance between perturbed watermarked images and their ground-truth watermarks, we employ the Hamming distance $\|\cdot\|_H$ for the decoding component. For the detection component, we utilize the following discriminant from Algorithm 3 of PRCW~\cite{gunn2024undetectable}:
\begin{align}
\label{eq: PRCW Alog 3 L 5}
\sum_{\mathbf{w}\in\mathbf{P}}\log\left(\frac{1+\hat{s}_\mathbf{w}}{2}\right)-\sqrt{C\log(1/F)}+\frac{1}{2}\sum_{\mathbf{w}\in\mathbf{P}}\log\left(\frac{1-\hat{s}_\mathbf{w}^2}{4}\right)\,
\left\{\begin{array}{lll}
    \ge0 & \Rightarrow & \mathrm{True},\\
    <0 & \Rightarrow & \mathrm{False}.
\end{array}\right.
\end{align}
The corresponding results are illustrated in Figs.~\ref{fig: PRCW decoding 1e-2fpr attack shifting} and~\ref{fig: PRCW detection 1e-2fpr attack shifting}, respectively, under 1e-2 FPR. For 1e-5 FPR, please refer to Figs.~\ref{fig: PRCW decoding 1e-5fpr attack shifting} and~\ref{fig: PRCW detection 1e-5fpr attack shifting}, where blue dotted lines indicate the detection threshold.

We also present the numerical errors in Table~\ref{Table: numerical error}. For visualizing the effects of numerical errors in DDIM inversion on $\|\mathbf{w}-\hat{\mathbf{w}}^{\prime}\|$, we use histograms to demonstrate the shifting of $\|\mathbf{w}-\hat{\mathbf{w}}^{\prime}\|$ in Fig.~\ref{fig: numerical error}.

\begin{figure}[t]
        \begin{subfigure}[c]{0.185\textwidth}
            \centering
            \subfloat{\centering
            \makebox[10pt][l]{\raisebox{5pt}{\rotatebox[origin=l]{90}{\tiny Blurring}}}
            \includegraphics[width=1.03\linewidth]{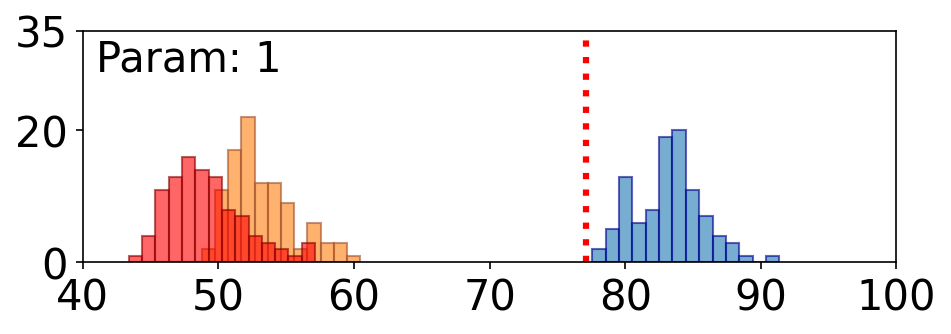}
            \includegraphics[width=1.03\linewidth]{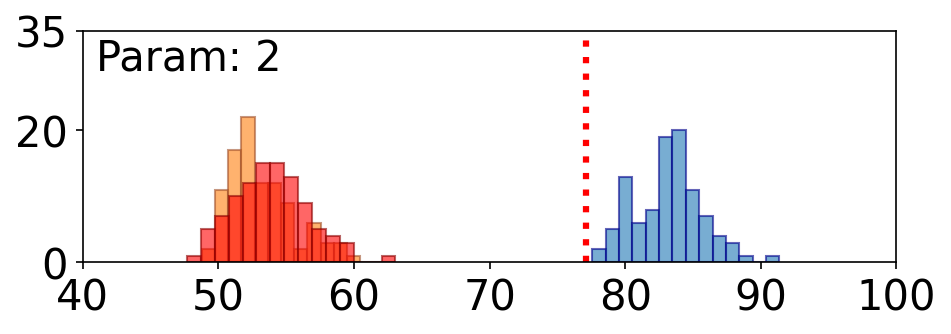}
            \includegraphics[width=1.03\linewidth]{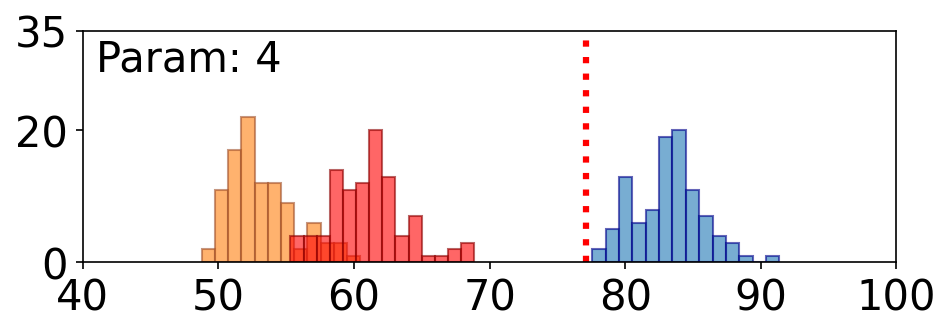}
            \includegraphics[width=1.03\linewidth]{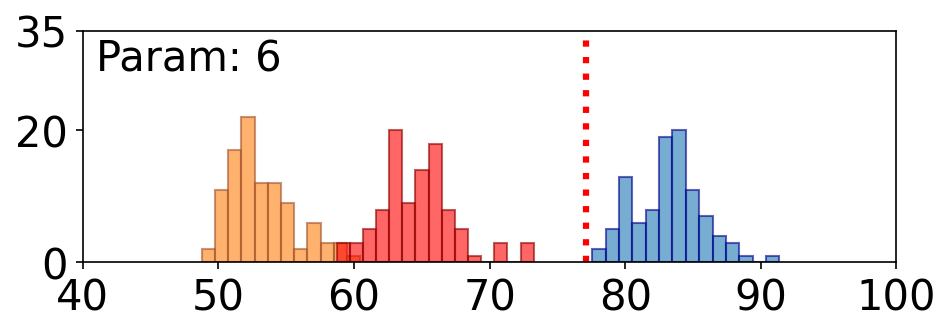}
            \includegraphics[width=1.03\linewidth]{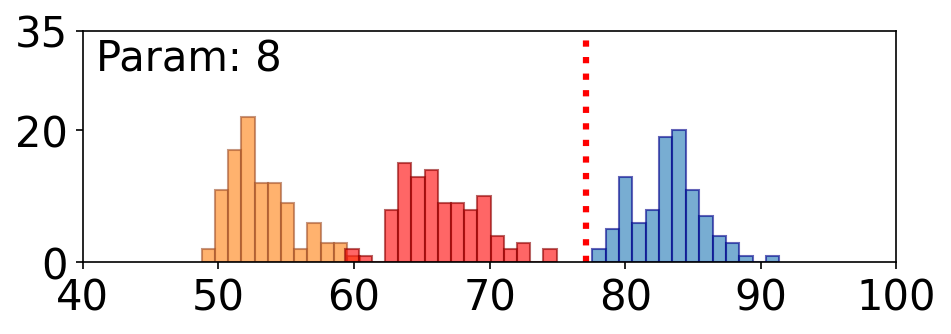}}
    
            \subfloat{\centering
            \makebox[10pt][l]{\raisebox{7pt}{\rotatebox[origin=l]{90}{\tiny Jitter}}}
            \includegraphics[width=1.03\linewidth]{figs/figs/treering/hist_dist_treering_color_jitter_1.png}
            \includegraphics[width=1.03\linewidth]{figs/figs/treering/hist_dist_treering_color_jitter_2.png}
            \includegraphics[width=1.03\linewidth]{figs/figs/treering/hist_dist_treering_color_jitter_4.png}
            \includegraphics[width=1.03\linewidth]{figs/figs/treering/hist_dist_treering_color_jitter_6.png}
            \includegraphics[width=1.03\linewidth]{figs/figs/treering/hist_dist_treering_color_jitter_8.png}}
    
            \subfloat{\centering
            \makebox[10pt][l]{\raisebox{7pt}{\rotatebox[origin=l]{90}{\tiny JPEG}}}
            \includegraphics[width=1.03\linewidth]{figs/figs/treering/hist_dist_treering_jpeg_80.png}
            \includegraphics[width=1.03\linewidth]{figs/figs/treering/hist_dist_treering_jpeg_40.png}
            \includegraphics[width=1.03\linewidth]{figs/figs/treering/hist_dist_treering_jpeg_25.png}
            \includegraphics[width=1.03\linewidth]{figs/figs/treering/hist_dist_treering_jpeg_5.png}
            \includegraphics[width=1.03\linewidth]{figs/figs/treering/hist_dist_treering_jpeg_1.png}}
    
            \subfloat{\centering
            \makebox[10pt][l]{\raisebox{3pt}{\rotatebox[origin=l]{90}{\tiny Rotation}}}
            \includegraphics[width=1.03\linewidth]{figs/figs/treering/hist_dist_treering_rotation_5.png}
            \includegraphics[width=1.03\linewidth]{figs/figs/treering/hist_dist_treering_rotation_15.png}
            \includegraphics[width=1.03\linewidth]{figs/figs/treering/hist_dist_treering_rotation_30.png}
            \includegraphics[width=1.03\linewidth]{figs/figs/treering/hist_dist_treering_rotation_45.png}
            \includegraphics[width=1.03\linewidth]{figs/figs/treering/hist_dist_treering_rotation_75.png}}
    
            \subfloat{\centering
            \makebox[10pt][l]{\raisebox{3pt}{\rotatebox[origin=l]{90}{\tiny Cropping}}}
            \includegraphics[width=1.03\linewidth]{figs/figs/treering/hist_dist_treering_cropping_0.9.png}
            \includegraphics[width=1.03\linewidth]{figs/figs/treering/hist_dist_treering_cropping_0.75.png}
            \includegraphics[width=1.03\linewidth]{figs/figs/treering/hist_dist_treering_cropping_0.6.png}
            \includegraphics[width=1.03\linewidth]{figs/figs/treering/hist_dist_treering_cropping_0.4.png}
            \includegraphics[width=1.03\linewidth]{figs/figs/treering/hist_dist_treering_cropping_0.2.png}}
    
            \subfloat{\centering
            \makebox[10pt][l]{\raisebox{3pt}{\rotatebox[origin=l]{90}{\tiny Elastic}}}
            \includegraphics[width=1.03\linewidth]{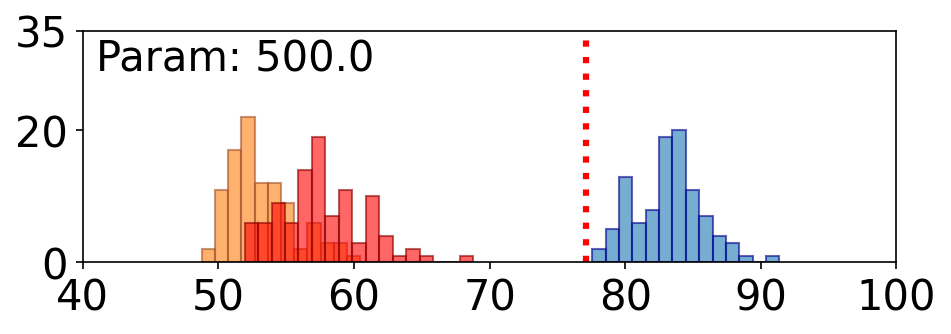}
            \includegraphics[width=1.03\linewidth]{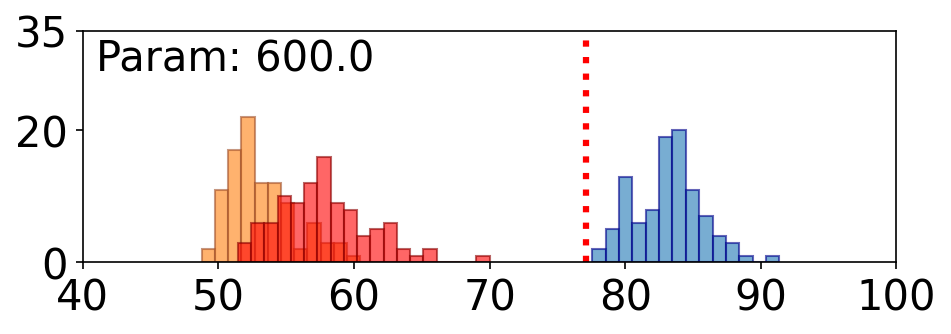}
            \includegraphics[width=1.03\linewidth]{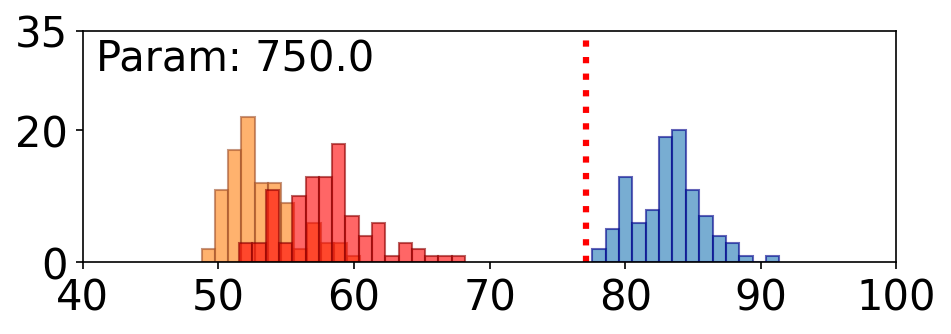}
            \includegraphics[width=1.03\linewidth]{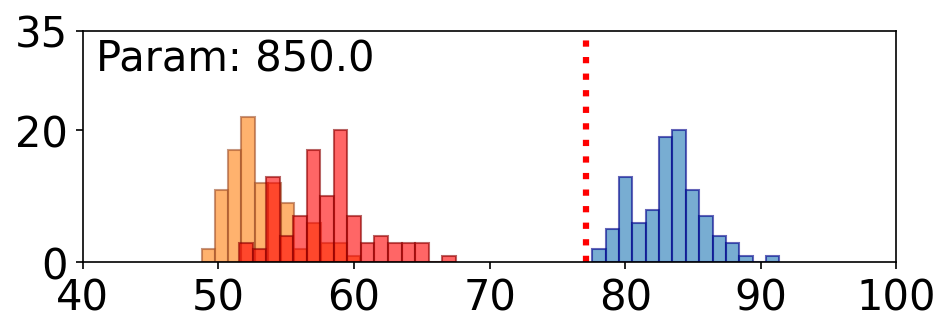}
            \includegraphics[width=1.03\linewidth]{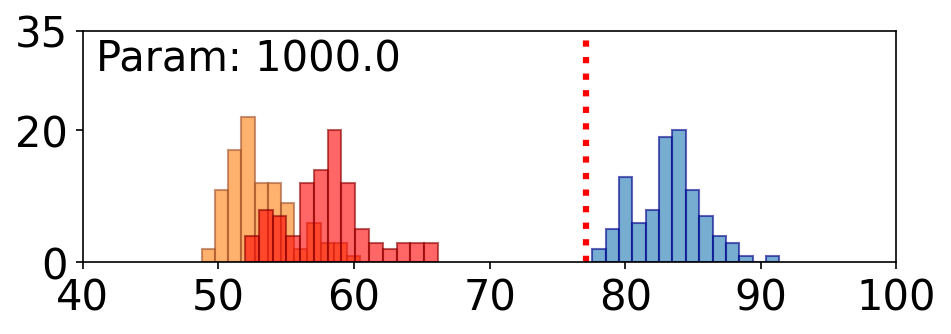}}
    
            \subfloat{\centering
            \makebox[10pt][l]{\raisebox{3pt}{\rotatebox[origin=l]{90}{\scalebox{.4}{Perspective}}}}
            \includegraphics[width=1.03\linewidth]{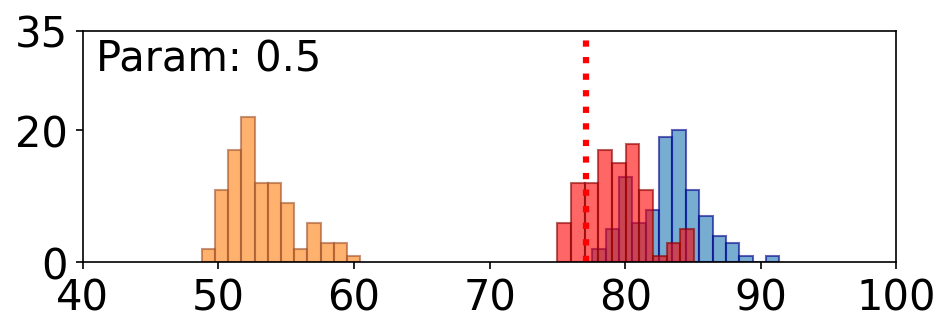}
            \includegraphics[width=1.03\linewidth]{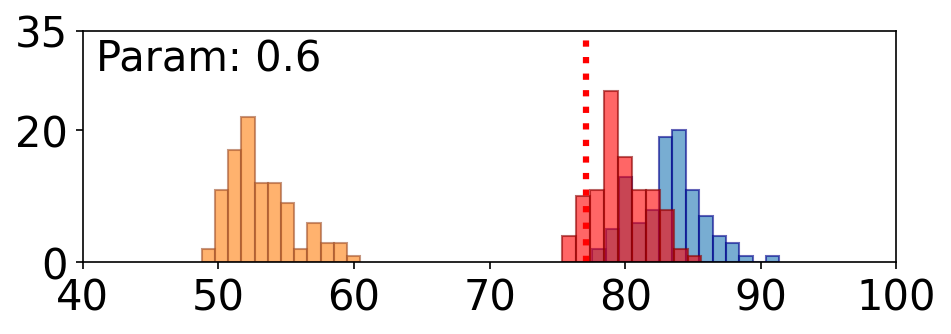}
            \includegraphics[width=1.03\linewidth]{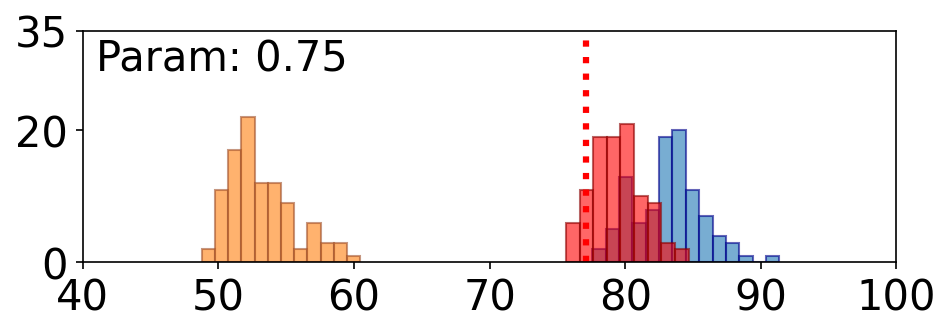}
            \includegraphics[width=1.03\linewidth]{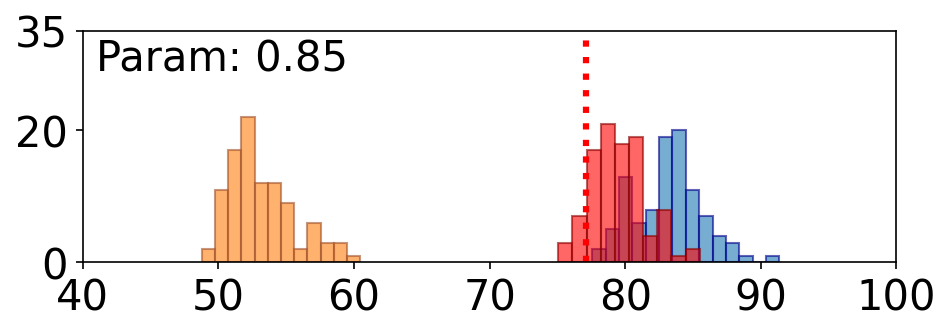}
            \includegraphics[width=1.03\linewidth]{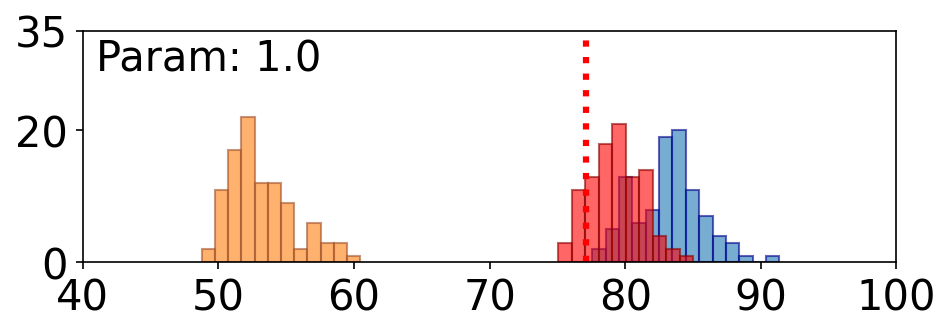}}
    
            \subfloat{\centering
            \makebox[10pt][l]{\raisebox{3pt}{\rotatebox[origin=l]{90}{\tiny Shear}}}
            \includegraphics[width=1.03\linewidth]{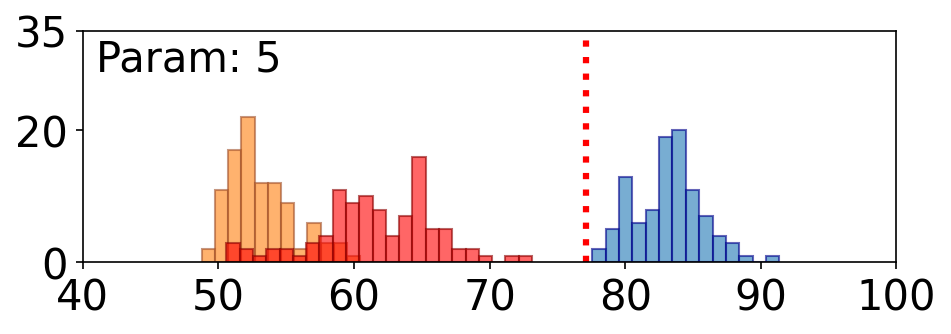}
            \includegraphics[width=1.03\linewidth]{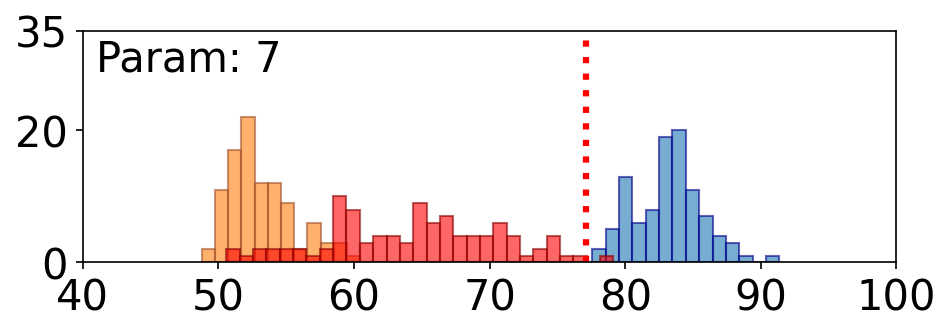}
            \includegraphics[width=1.03\linewidth]{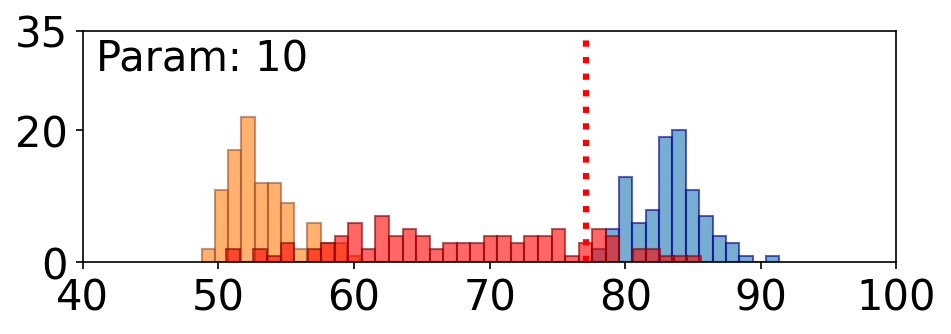}
            \includegraphics[width=1.03\linewidth]{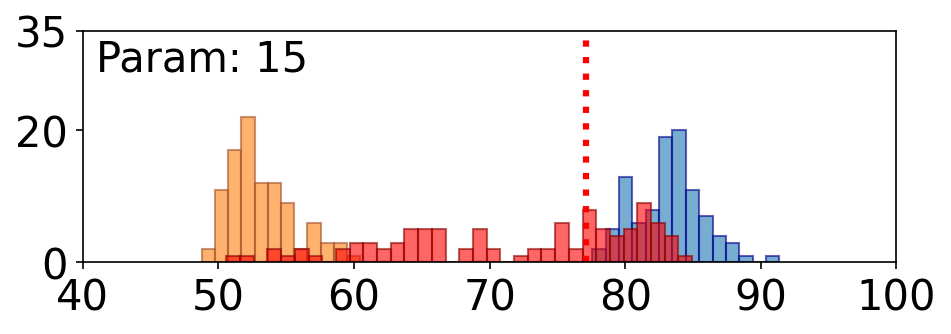}
            \includegraphics[width=1.03\linewidth]{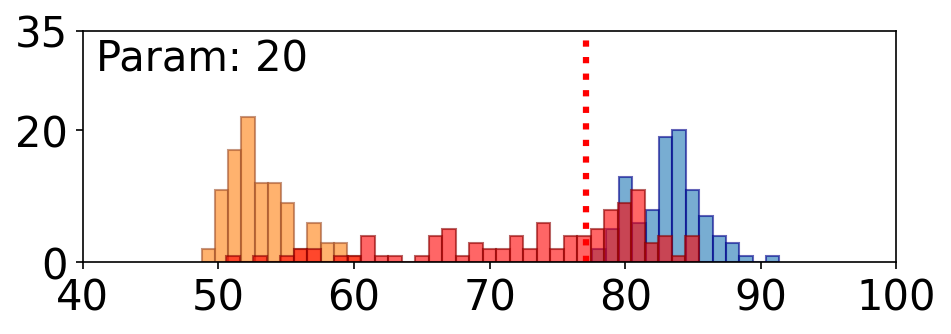}}

            \subfloat{\centering
            \makebox[10pt][l]{\raisebox{1pt}{\rotatebox[origin=l]{90}{\scalebox{.4}{Regen-VAE}}}}
            \includegraphics[width=1.03\linewidth]{figs/figs/treering/hist_dist_treering_regen_vae_5.png}
            \includegraphics[width=1.03\linewidth]{figs/figs/treering/hist_dist_treering_regen_vae_4.png}
            \includegraphics[width=1.03\linewidth]{figs/figs/treering/hist_dist_treering_regen_vae_3.png}
            \includegraphics[width=1.03\linewidth]{figs/figs/treering/hist_dist_treering_regen_vae_2.png}
            \includegraphics[width=1.03\linewidth]{figs/figs/treering/hist_dist_treering_regen_vae_1.png}}
    
            \subfloat{\centering
            \makebox[10pt][l]{\raisebox{0pt}{\rotatebox[origin=l]{90}{\scalebox{.4}{Regen-Diff}}}}
            \includegraphics[width=1.03\linewidth]{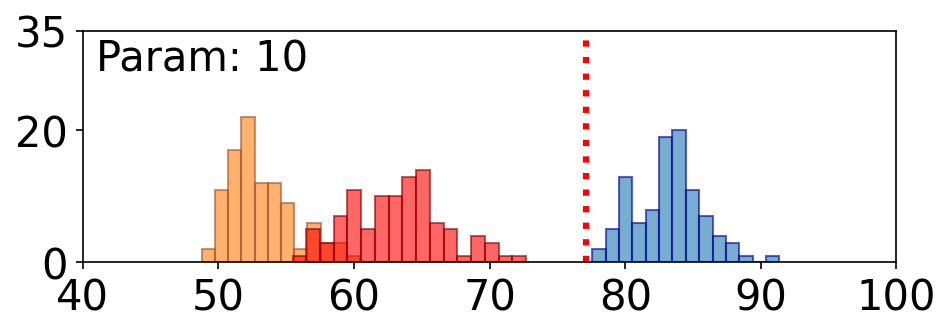}
            \includegraphics[width=1.03\linewidth]{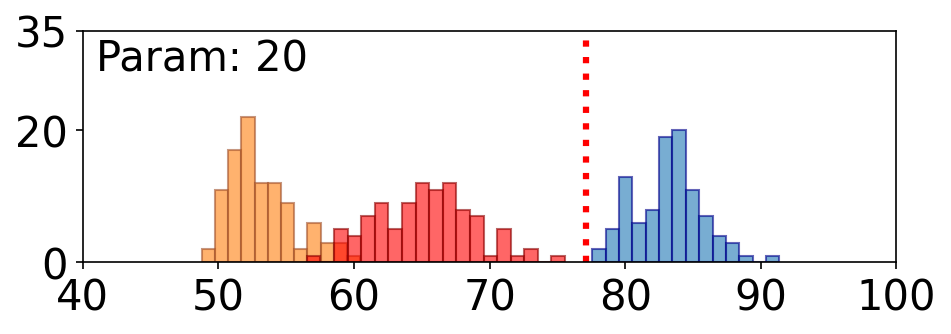}
            \includegraphics[width=1.03\linewidth]{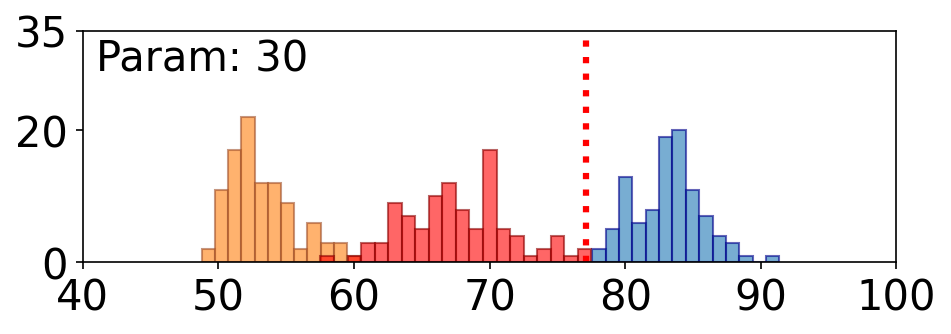}
            \includegraphics[width=1.03\linewidth]{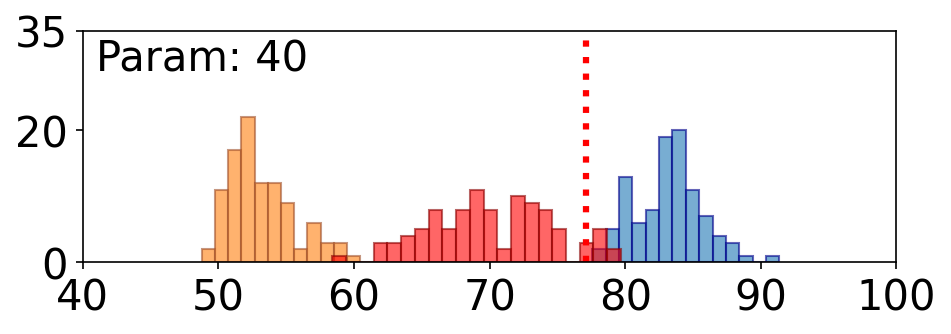}
            \includegraphics[width=1.03\linewidth]{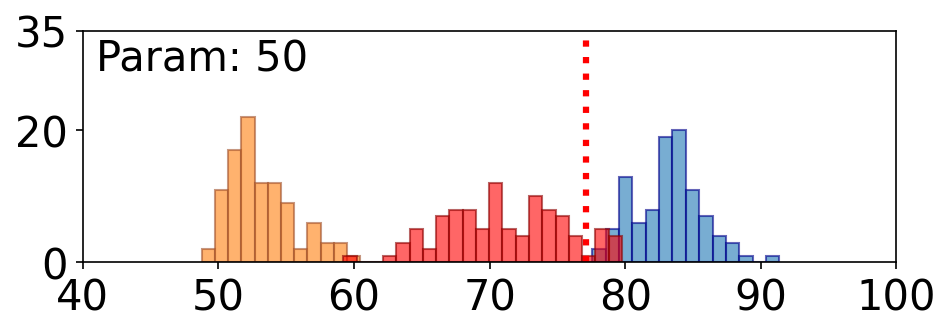}}
        \end{subfigure}
        
    \caption{The effects of different perturbations on $\|\mathbf{w}-\hat{\mathbf{w}}^{\prime}\|$. From left to right, the perturbations are set with different parameters (Param). 
    Watermark: Tree-Ring. Thresholds: 77.12 (1e-2 FPR).}
    \label{fig: TR attack shifting}
\end{figure}

\begin{figure}[t]
        \begin{subfigure}[c]{0.185\textwidth}
            \centering
            \subfloat{\centering
            \makebox[10pt][l]{\raisebox{5pt}{\rotatebox[origin=l]{90}{\tiny Blurring}}}
            \includegraphics[width=1.03\linewidth]{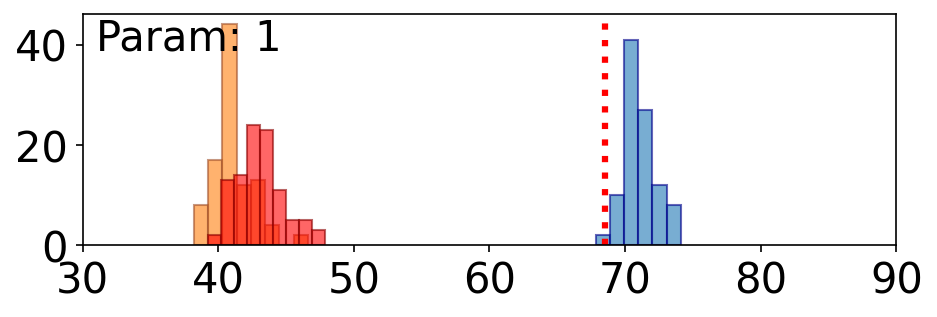}
            \includegraphics[width=1.03\linewidth]{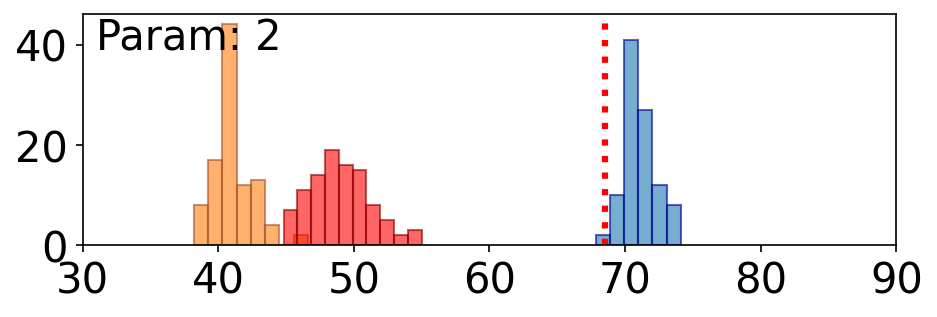}
            \includegraphics[width=1.03\linewidth]{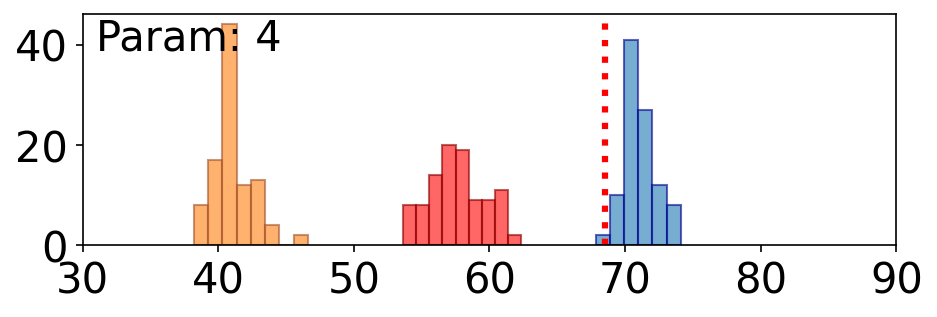}
            \includegraphics[width=1.03\linewidth]{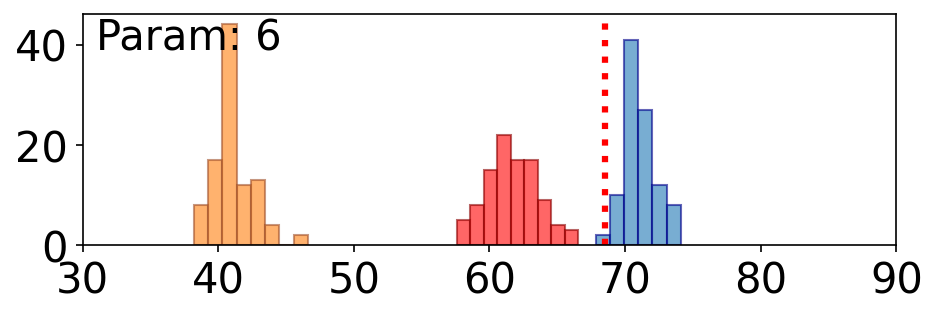}
            \includegraphics[width=1.03\linewidth]{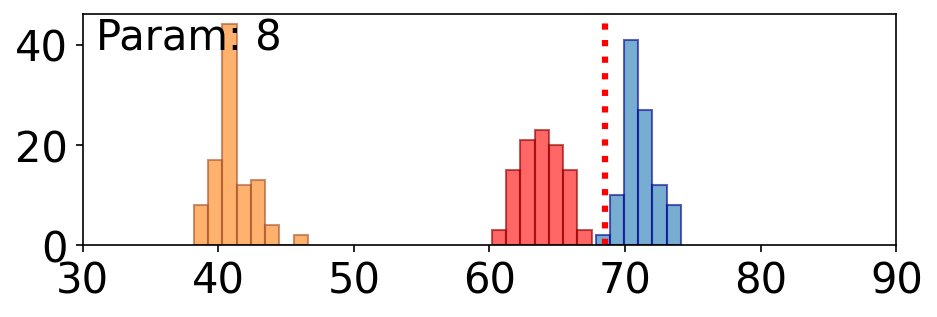}}
    
            \subfloat{\centering
            \makebox[10pt][l]{\raisebox{7pt}{\rotatebox[origin=l]{90}{\tiny Jitter}}}
            \includegraphics[width=1.03\linewidth]{figs/figs/hsqr/hist_dist_HSQR_color_jitter_1.png}
            \includegraphics[width=1.03\linewidth]{figs/figs/hsqr/hist_dist_HSQR_color_jitter_2.png}
            \includegraphics[width=1.03\linewidth]{figs/figs/hsqr/hist_dist_HSQR_color_jitter_4.png}
            \includegraphics[width=1.03\linewidth]{figs/figs/hsqr/hist_dist_HSQR_color_jitter_6.png}
            \includegraphics[width=1.03\linewidth]{figs/figs/hsqr/hist_dist_HSQR_color_jitter_8.png}}
    
            \subfloat{\centering
            \makebox[10pt][l]{\raisebox{7pt}{\rotatebox[origin=l]{90}{\tiny JPEG}}}
            \includegraphics[width=1.03\linewidth]{figs/figs/hsqr/hist_dist_HSQR_jpeg_80.png}
            \includegraphics[width=1.03\linewidth]{figs/figs/hsqr/hist_dist_HSQR_jpeg_40.png}
            \includegraphics[width=1.03\linewidth]{figs/figs/hsqr/hist_dist_HSQR_jpeg_25.png}
            \includegraphics[width=1.03\linewidth]{figs/figs/hsqr/hist_dist_HSQR_jpeg_5.png}
            \includegraphics[width=1.03\linewidth]{figs/figs/hsqr/hist_dist_HSQR_jpeg_1.png}}
    
            \subfloat{\centering
            \makebox[10pt][l]{\raisebox{3pt}{\rotatebox[origin=l]{90}{\tiny Rotation}}}
            \includegraphics[width=1.03\linewidth]{figs/figs/hsqr/hist_dist_HSQR_rotation_5.png}
            \includegraphics[width=1.03\linewidth]{figs/figs/hsqr/hist_dist_HSQR_rotation_15.png}
            \includegraphics[width=1.03\linewidth]{figs/figs/hsqr/hist_dist_HSQR_rotation_30.png}
            \includegraphics[width=1.03\linewidth]{figs/figs/hsqr/hist_dist_HSQR_rotation_45.png}
            \includegraphics[width=1.03\linewidth]{figs/figs/hsqr/hist_dist_HSQR_rotation_75.png}}
    
            \subfloat{\centering
            \makebox[10pt][l]{\raisebox{3pt}{\rotatebox[origin=l]{90}{\tiny Cropping}}}
            \includegraphics[width=1.03\linewidth]{figs/figs/hsqr/hist_dist_HSQR_cropping_0.9.png}
            \includegraphics[width=1.03\linewidth]{figs/figs/hsqr/hist_dist_HSQR_cropping_0.75.png}
            \includegraphics[width=1.03\linewidth]{figs/figs/hsqr/hist_dist_HSQR_cropping_0.6.png}
            \includegraphics[width=1.03\linewidth]{figs/figs/hsqr/hist_dist_HSQR_cropping_0.4.png}
            \includegraphics[width=1.03\linewidth]{figs/figs/hsqr/hist_dist_HSQR_cropping_0.2.png}}
    
            \subfloat{\centering
            \makebox[10pt][l]{\raisebox{3pt}{\rotatebox[origin=l]{90}{\tiny Elastic}}}
            \includegraphics[width=1.03\linewidth]{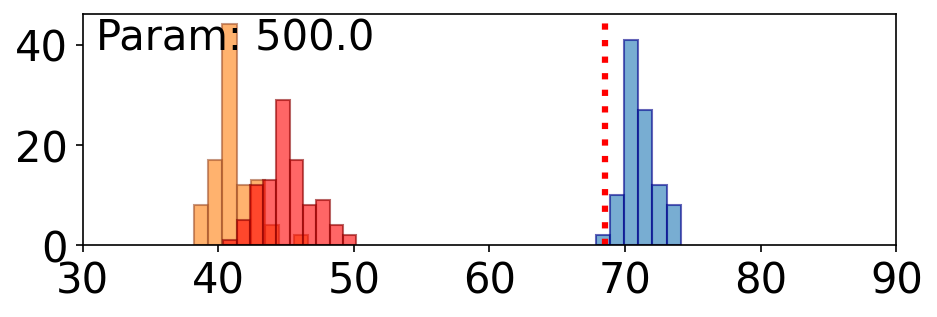}
            \includegraphics[width=1.03\linewidth]{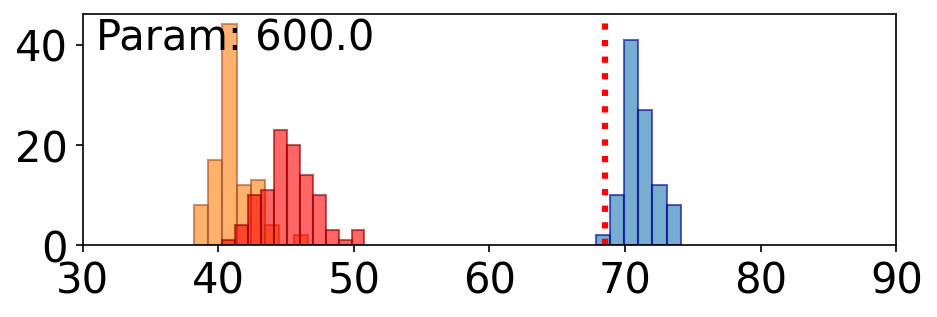}
            \includegraphics[width=1.03\linewidth]{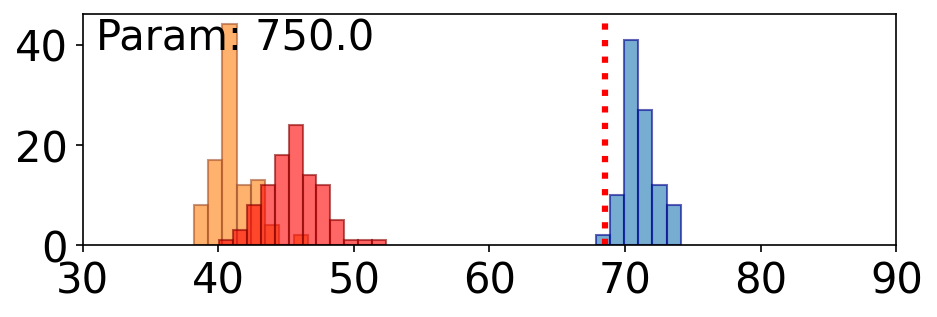}
            \includegraphics[width=1.03\linewidth]{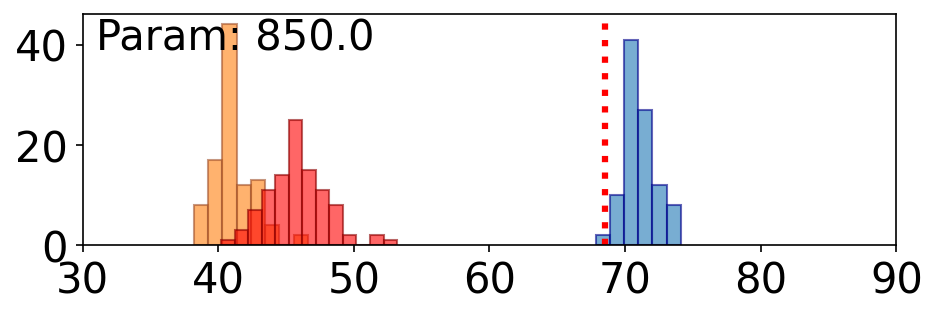}
            \includegraphics[width=1.03\linewidth]{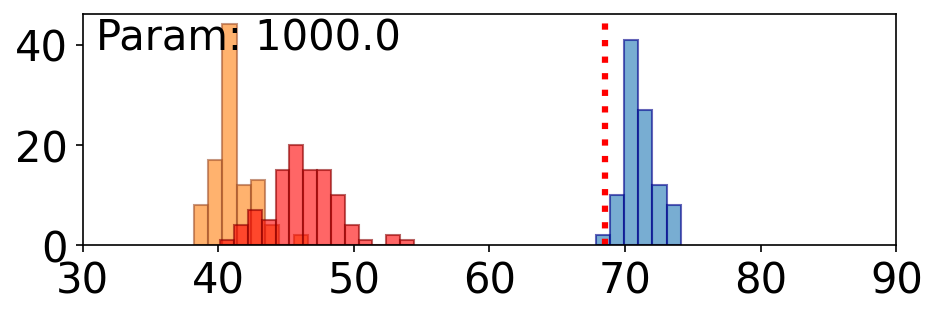}}
    
            \subfloat{\centering
            \makebox[10pt][l]{\raisebox{3pt}{\rotatebox[origin=l]{90}{\scalebox{.4}{Perspective}}}}
            \includegraphics[width=1.03\linewidth]{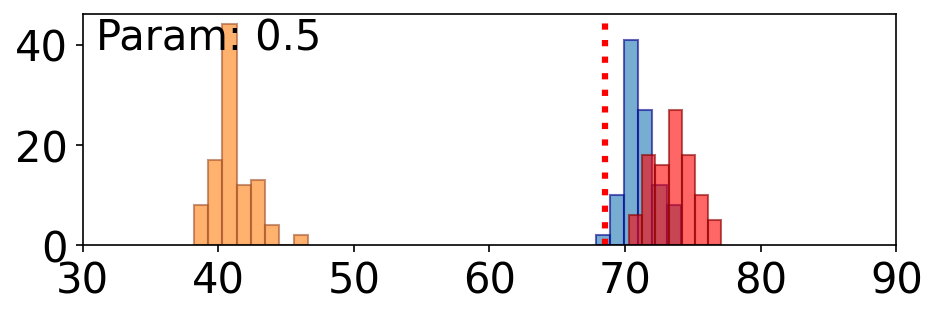}
            \includegraphics[width=1.03\linewidth]{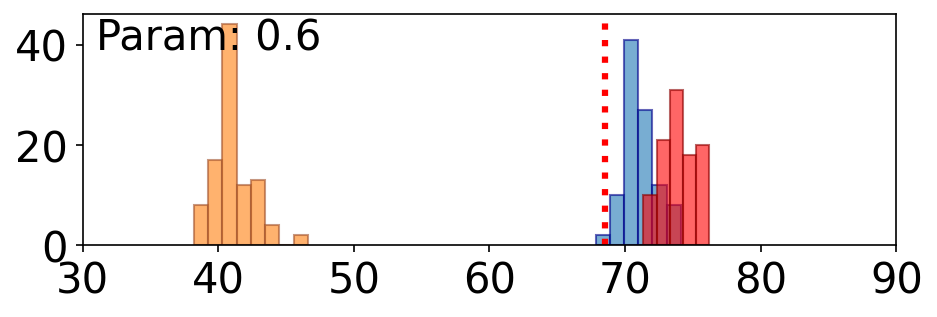}
            \includegraphics[width=1.03\linewidth]{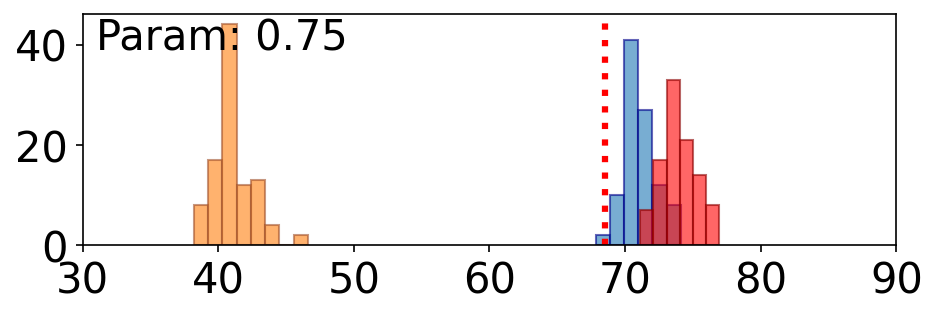}
            \includegraphics[width=1.03\linewidth]{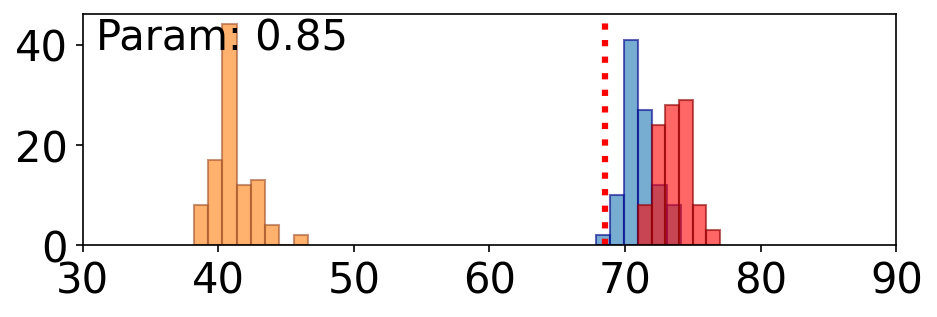}
            \includegraphics[width=1.03\linewidth]{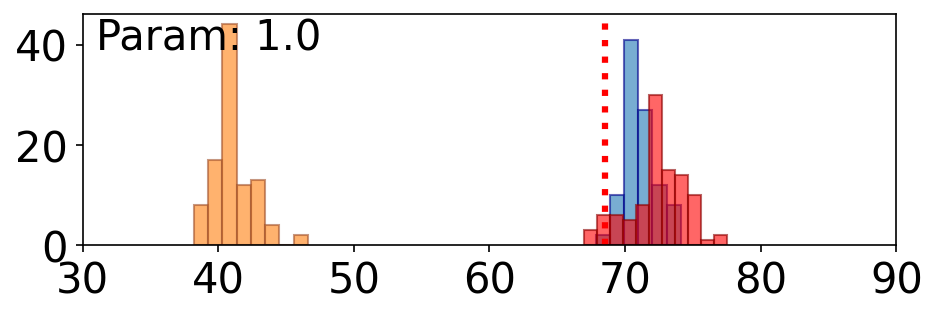}}
    
            \subfloat{\centering
            \makebox[10pt][l]{\raisebox{3pt}{\rotatebox[origin=l]{90}{\tiny Shear}}}
            \includegraphics[width=1.03\linewidth]{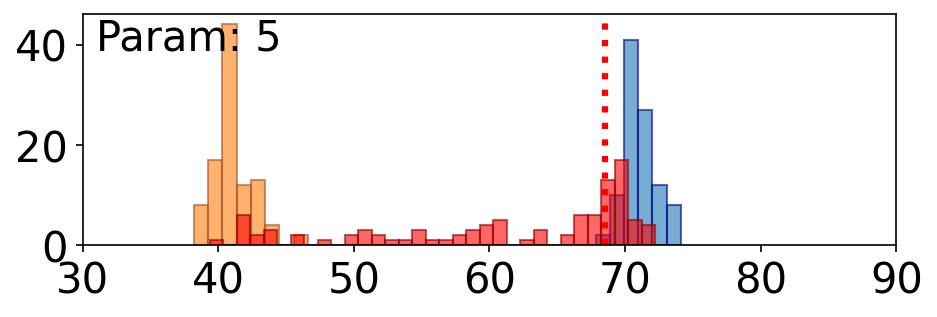}
            \includegraphics[width=1.03\linewidth]{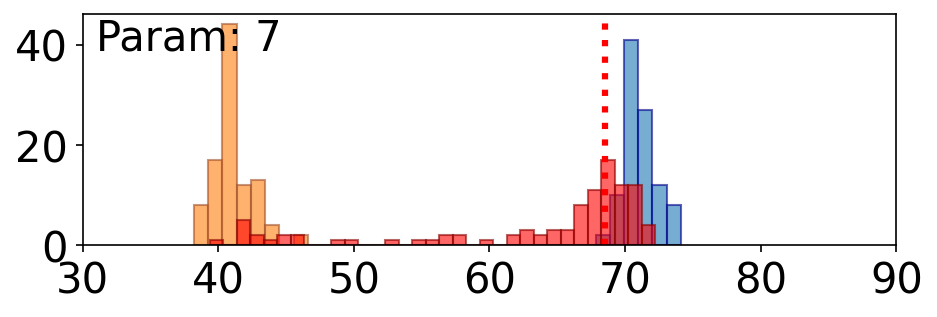}
            \includegraphics[width=1.03\linewidth]{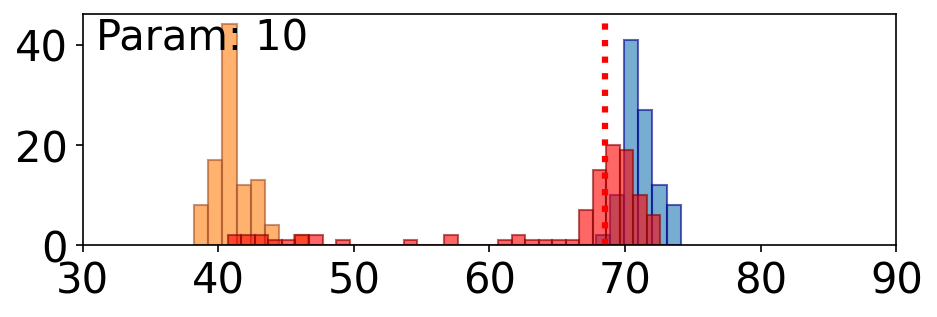}
            \includegraphics[width=1.03\linewidth]{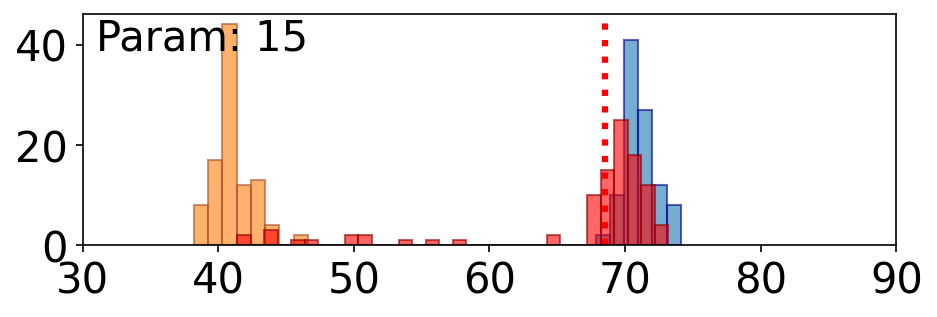}
            \includegraphics[width=1.03\linewidth]{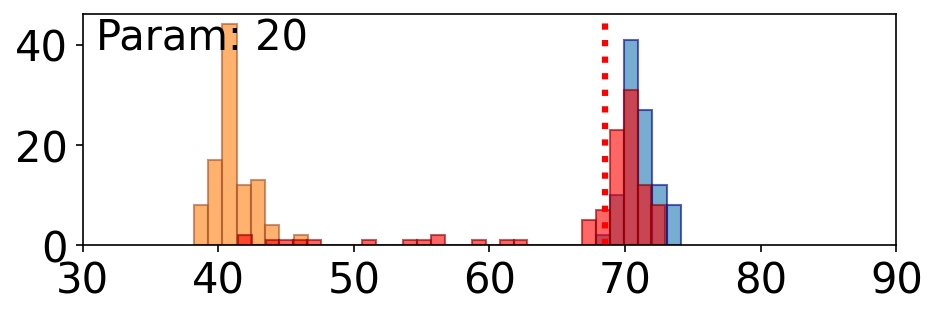}}

            \subfloat{\centering
            \makebox[10pt][l]{\raisebox{1pt}{\rotatebox[origin=l]{90}{\scalebox{.4}{Regen-VAE}}}}
            \includegraphics[width=1.03\linewidth]{figs/figs/hsqr/hist_dist_HSQR_regen_vae_5.png}
            \includegraphics[width=1.03\linewidth]{figs/figs/hsqr/hist_dist_HSQR_regen_vae_4.png}
            \includegraphics[width=1.03\linewidth]{figs/figs/hsqr/hist_dist_HSQR_regen_vae_3.png}
            \includegraphics[width=1.03\linewidth]{figs/figs/hsqr/hist_dist_HSQR_regen_vae_2.png}
            \includegraphics[width=1.03\linewidth]{figs/figs/hsqr/hist_dist_HSQR_regen_vae_1.png}}
    
            \subfloat{\centering
            \makebox[10pt][l]{\raisebox{0pt}{\rotatebox[origin=l]{90}{\scalebox{.4}{Regen-Diff}}}}
            \includegraphics[width=1.03\linewidth]{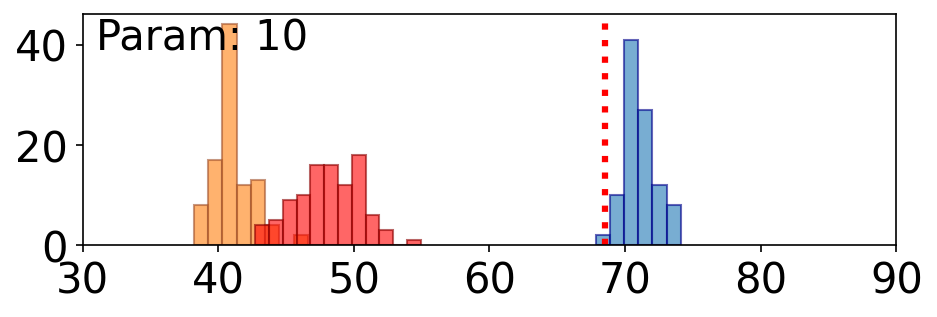}
            \includegraphics[width=1.03\linewidth]{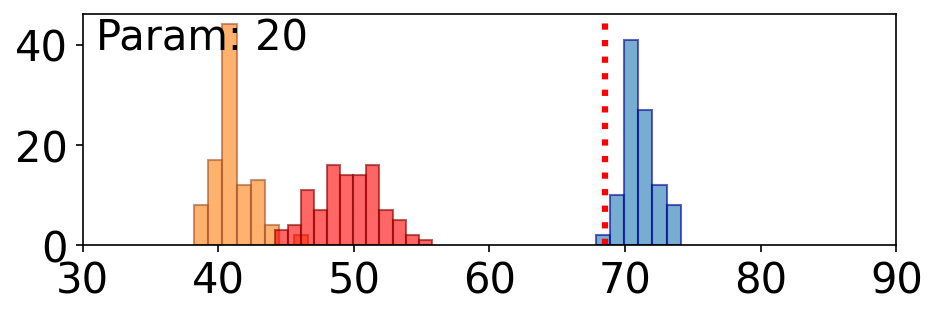}
            \includegraphics[width=1.03\linewidth]{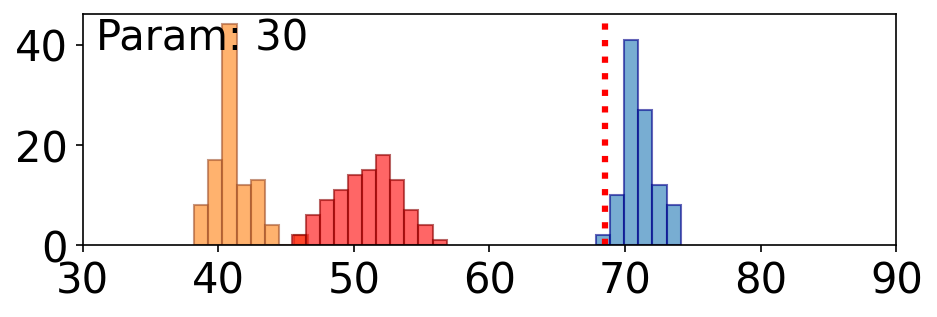}
            \includegraphics[width=1.03\linewidth]{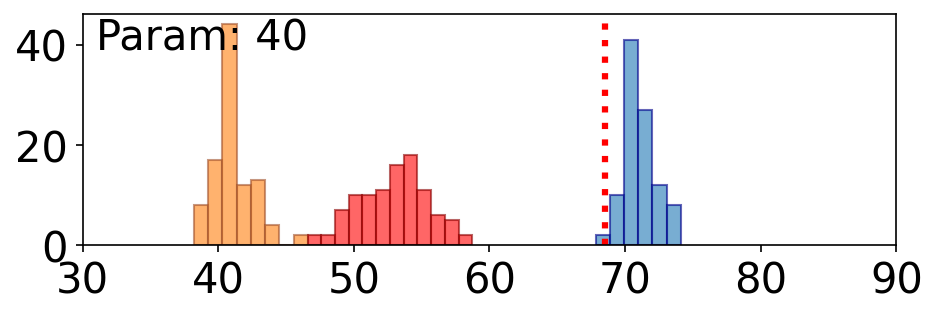}
            \includegraphics[width=1.03\linewidth]{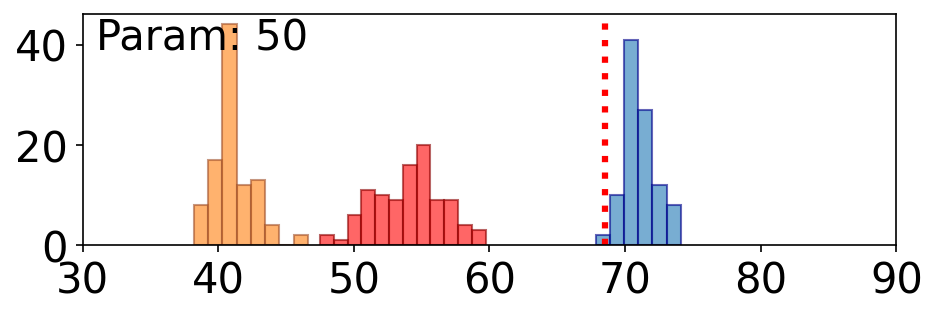}}
        \end{subfigure}
        
    \caption{The effects of different perturbations on $\|\mathbf{w}-\hat{\mathbf{w}}^{\prime}\|$. From left to right, the perturbations are set with different parameters (Param). 
    Watermark: HSQR. Thresholds: 68.52 (1e-2 FPR).}
    \label{fig: HSQR attack shifting}
\end{figure}

\begin{figure}[t]
        \begin{subfigure}[c]{0.185\textwidth}
            \centering
            \subfloat{\centering
            \makebox[10pt][l]{\raisebox{5pt}{\rotatebox[origin=l]{90}{\tiny Blurring}}}
            \includegraphics[width=1.03\linewidth]{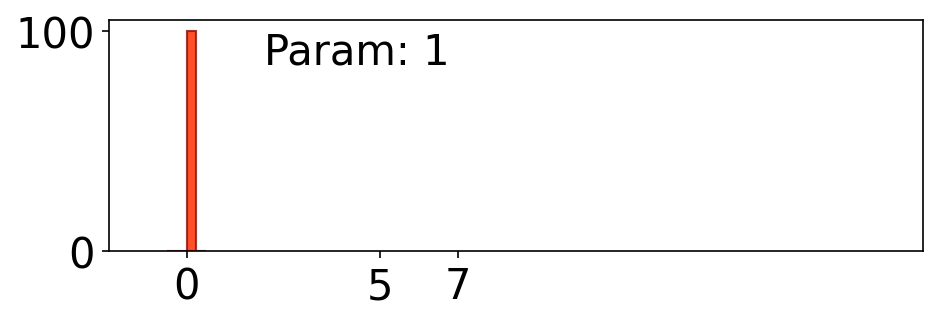}
            \includegraphics[width=1.03\linewidth]{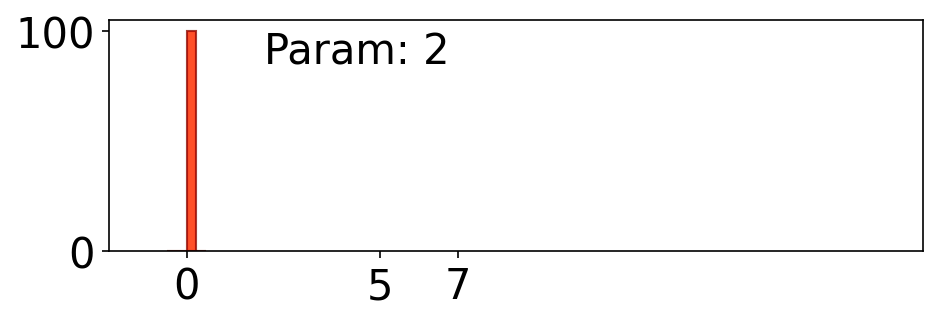}
            \includegraphics[width=1.03\linewidth]{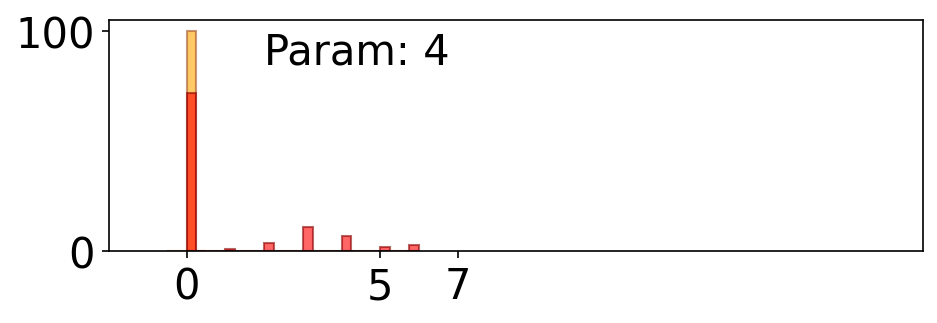}
            \includegraphics[width=1.03\linewidth]{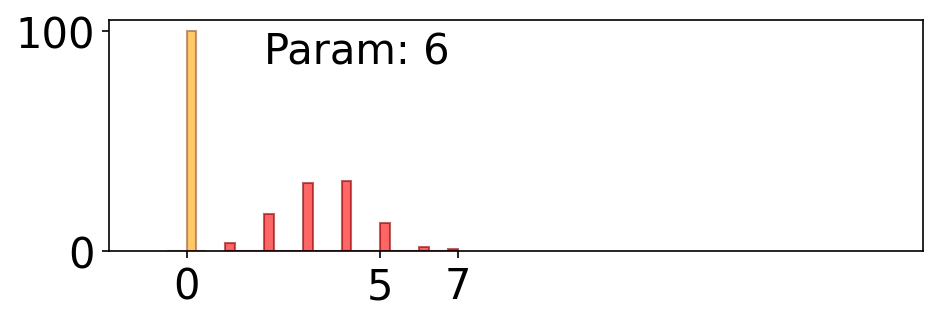}
            \includegraphics[width=1.03\linewidth]{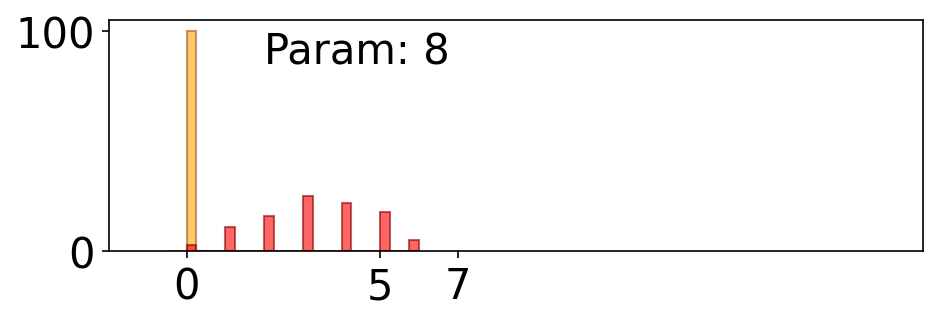}}
    
            \subfloat{\centering
            \makebox[10pt][l]{\raisebox{7pt}{\rotatebox[origin=l]{90}{\tiny Jitter}}}
            \includegraphics[width=1.03\linewidth]{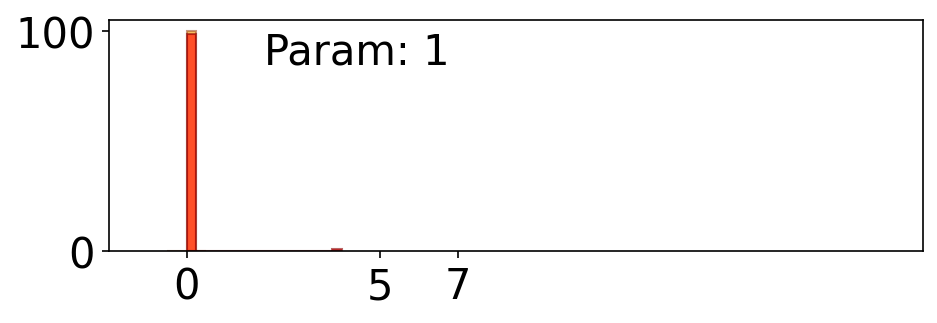}
            \includegraphics[width=1.03\linewidth]{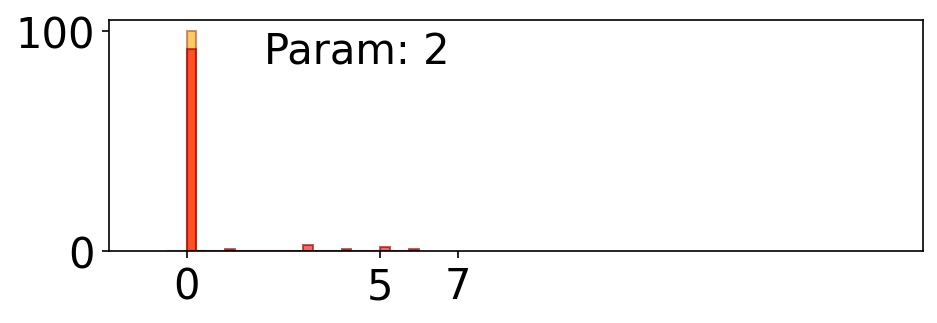}
            \includegraphics[width=1.03\linewidth]{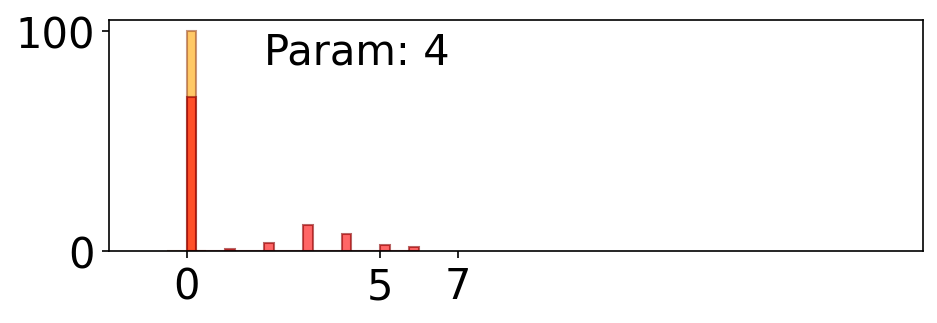}
            \includegraphics[width=1.03\linewidth]{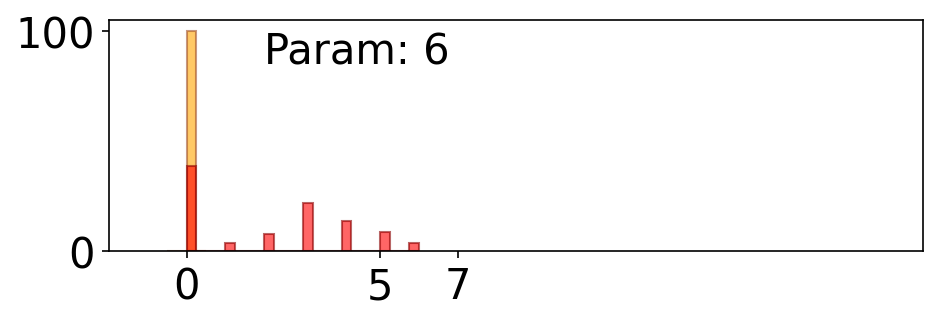}
            \includegraphics[width=1.03\linewidth]{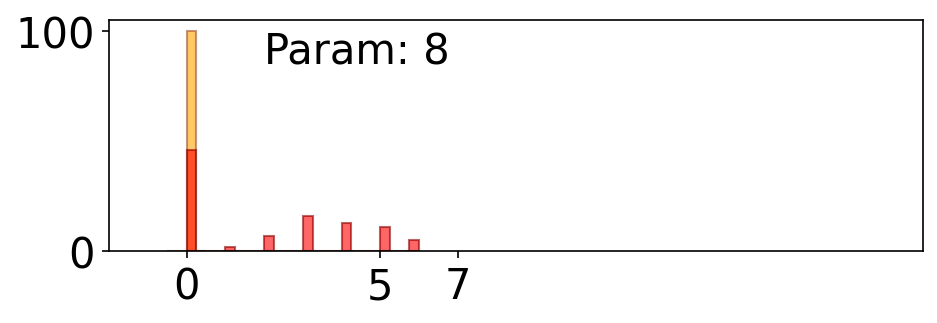}}
    
            \subfloat{\centering
            \makebox[10pt][l]{\raisebox{7pt}{\rotatebox[origin=l]{90}{\tiny JPEG}}}
            \includegraphics[width=1.03\linewidth]{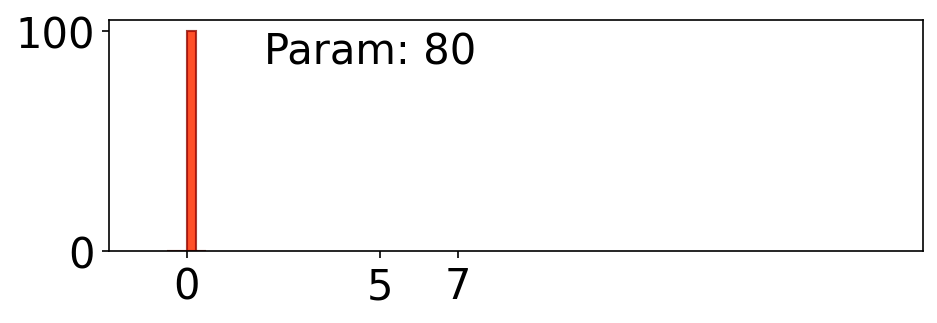}
            \includegraphics[width=1.03\linewidth]{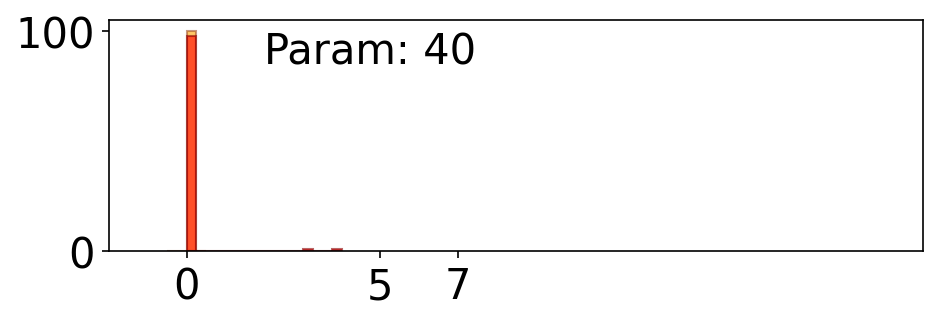}
            \includegraphics[width=1.03\linewidth]{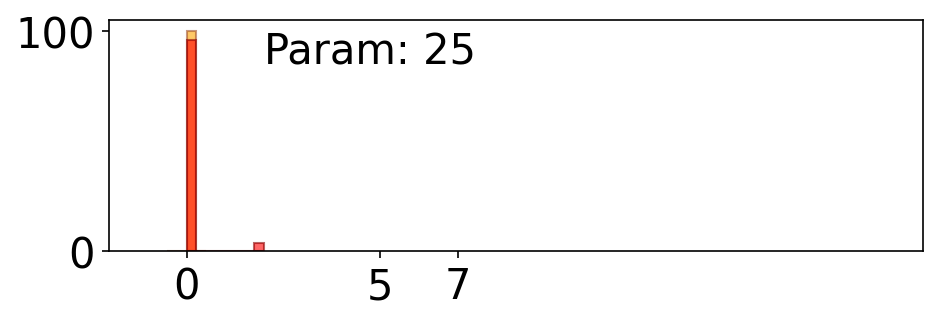}
            \includegraphics[width=1.03\linewidth]{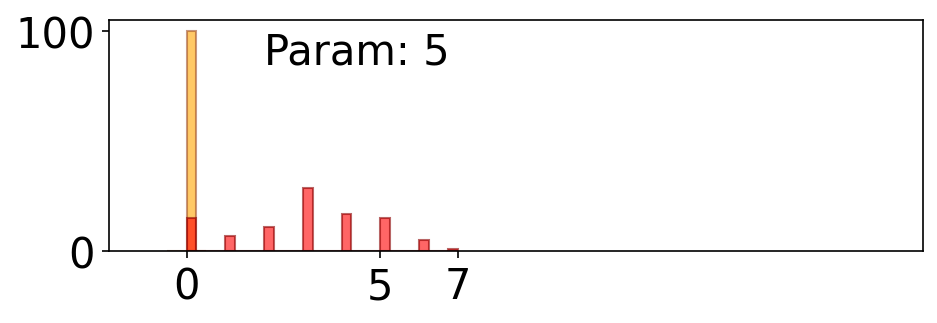}
            \includegraphics[width=1.03\linewidth]{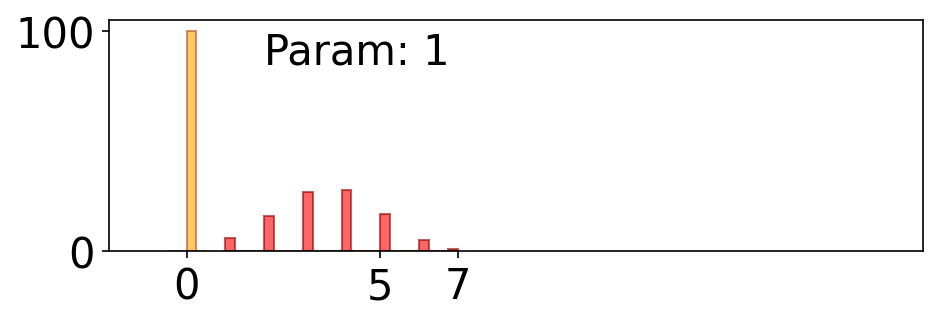}}
    
            \subfloat{\centering
            \makebox[10pt][l]{\raisebox{3pt}{\rotatebox[origin=l]{90}{\tiny Rotation}}}
            \includegraphics[width=1.03\linewidth]{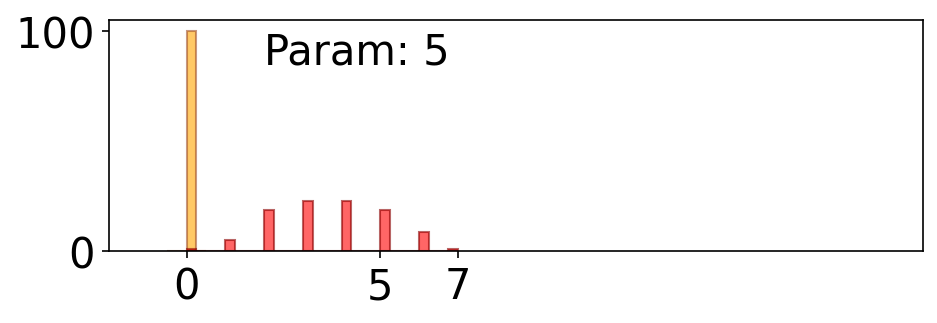}
            \includegraphics[width=1.03\linewidth]{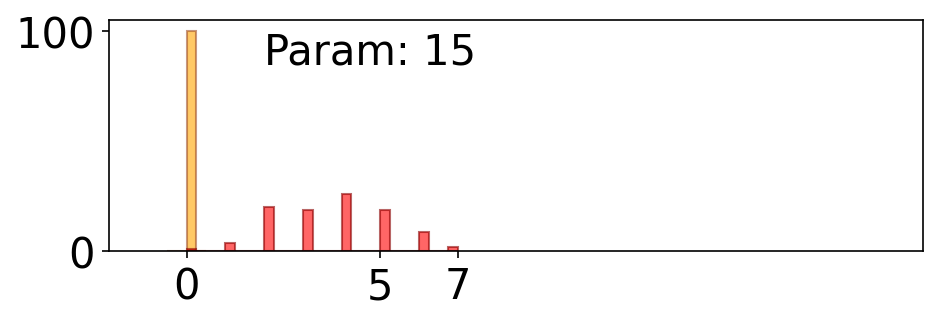}
            \includegraphics[width=1.03\linewidth]{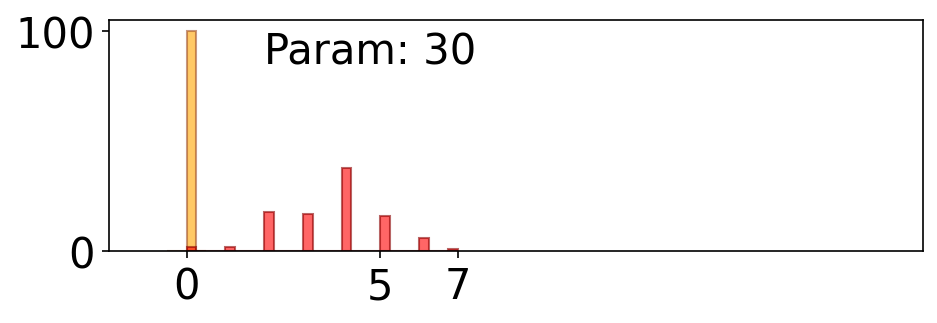}
            \includegraphics[width=1.03\linewidth]{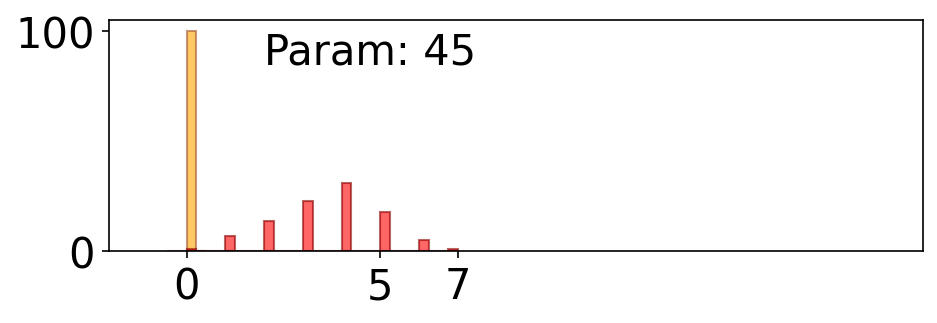}
            \includegraphics[width=1.03\linewidth]{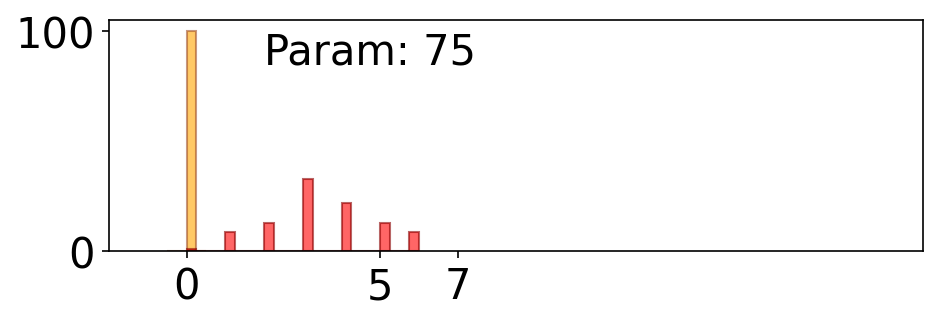}}
    
            \subfloat{\centering
            \makebox[10pt][l]{\raisebox{3pt}{\rotatebox[origin=l]{90}{\tiny Cropping}}}
            \includegraphics[width=1.03\linewidth]{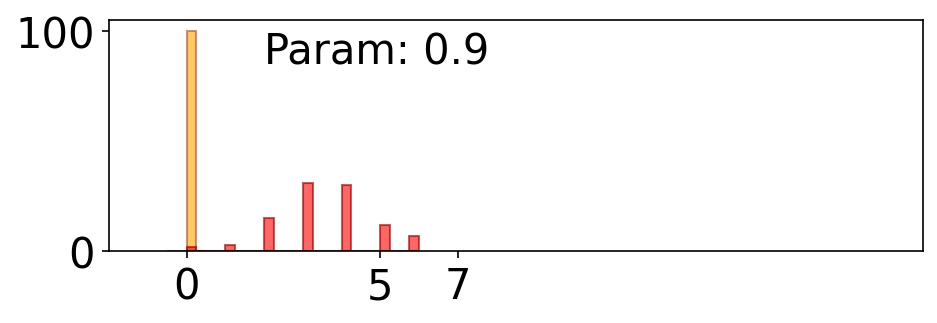}
            \includegraphics[width=1.03\linewidth]{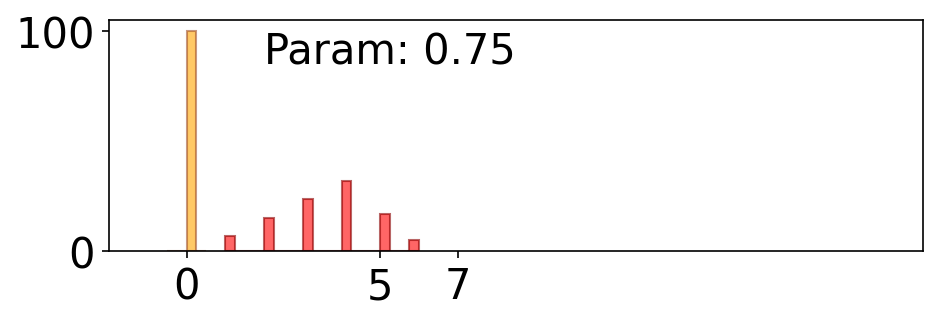}
            \includegraphics[width=1.03\linewidth]{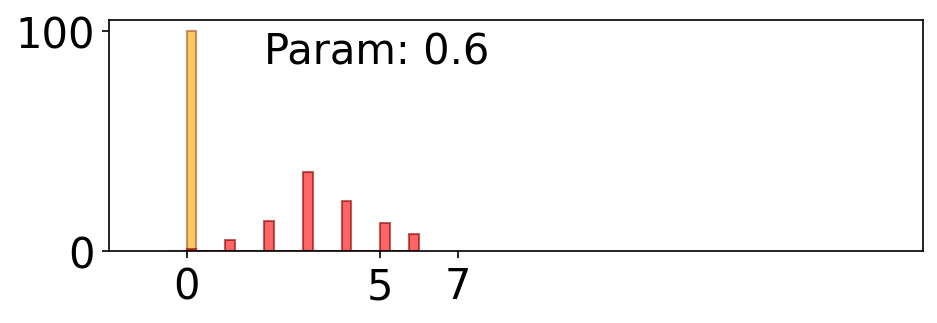}
            \includegraphics[width=1.03\linewidth]{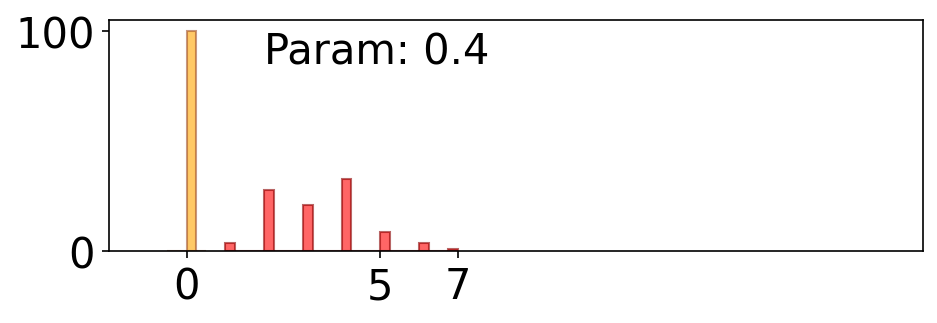}
            \includegraphics[width=1.03\linewidth]{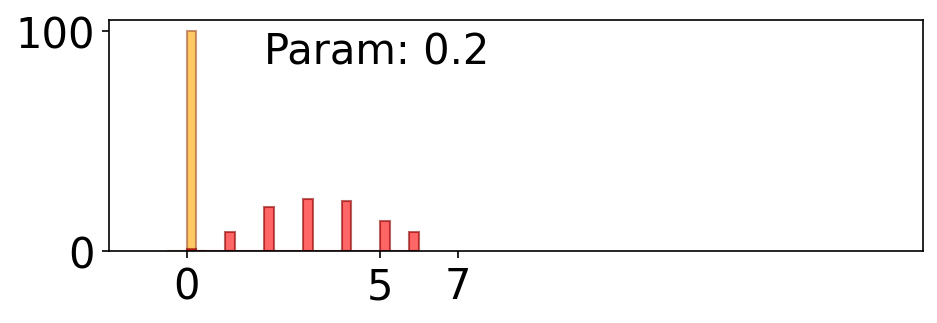}}
    
            \subfloat{\centering
            \makebox[10pt][l]{\raisebox{3pt}{\rotatebox[origin=l]{90}{\tiny Elastic}}}
            \includegraphics[width=1.03\linewidth]{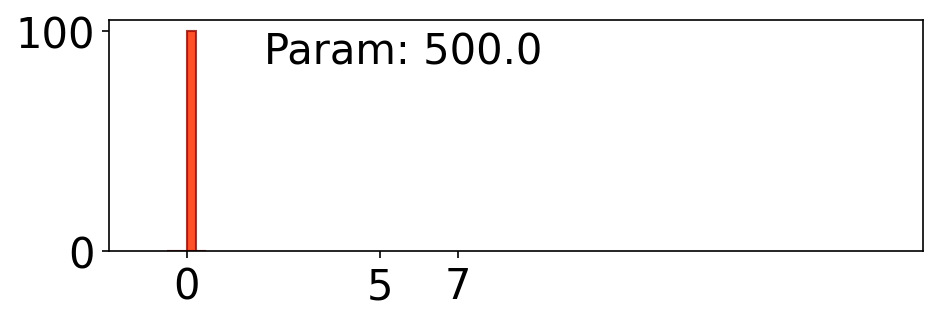}
            \includegraphics[width=1.03\linewidth]{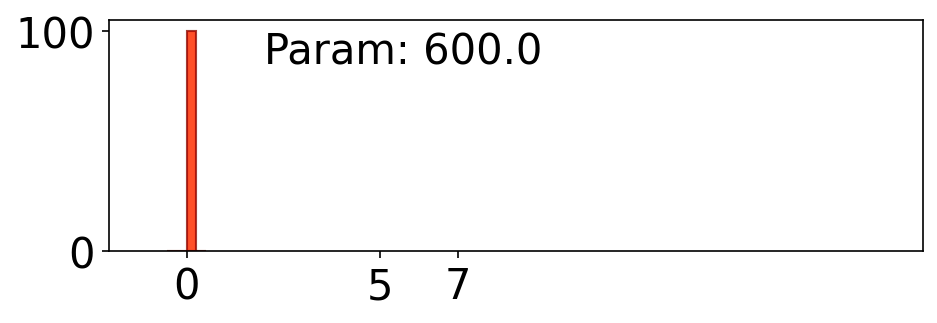}
            \includegraphics[width=1.03\linewidth]{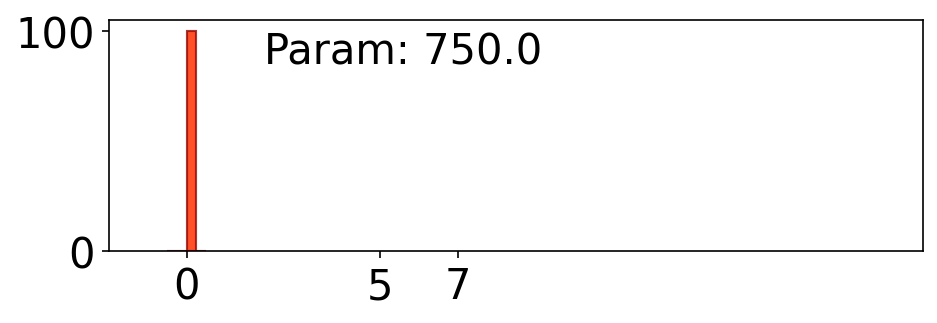}
            \includegraphics[width=1.03\linewidth]{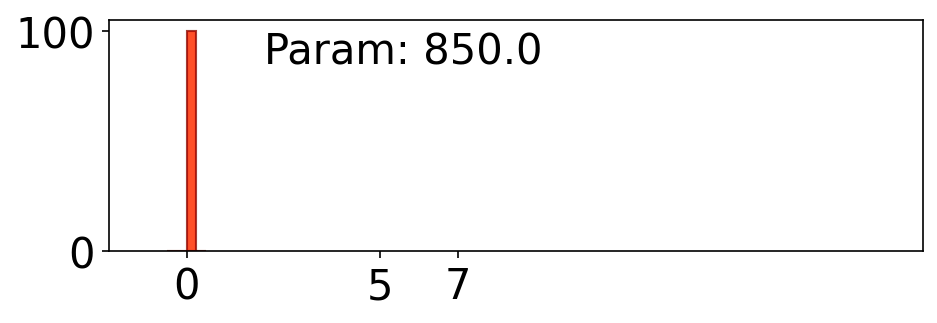}
            \includegraphics[width=1.03\linewidth]{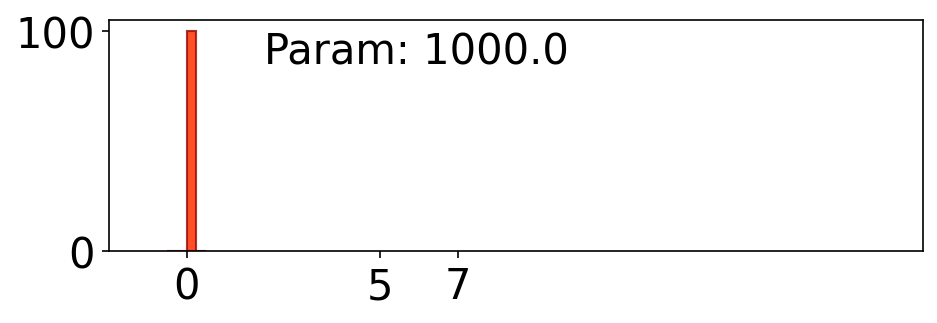}}
    
            \subfloat{\centering
            \makebox[10pt][l]{\raisebox{3pt}{\rotatebox[origin=l]{90}{\scalebox{.4}{Perspective}}}}
            \includegraphics[width=1.03\linewidth]{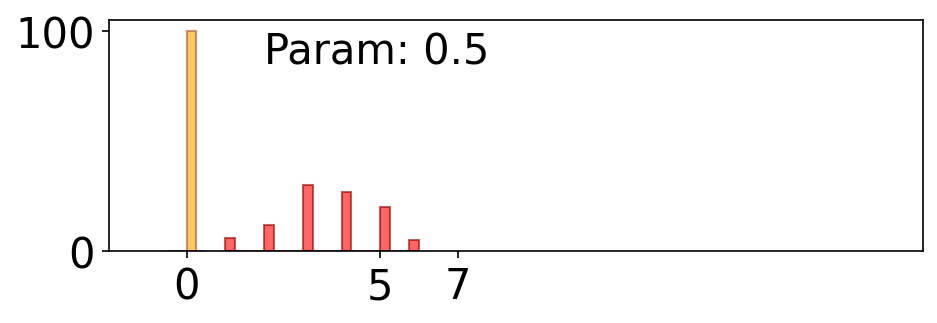}
            \includegraphics[width=1.03\linewidth]{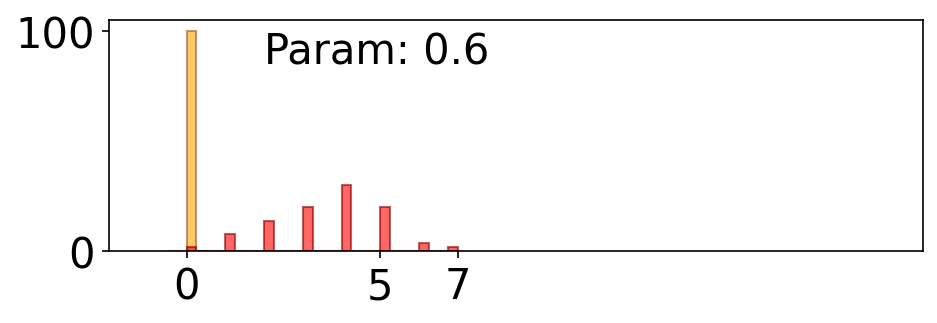}
            \includegraphics[width=1.03\linewidth]{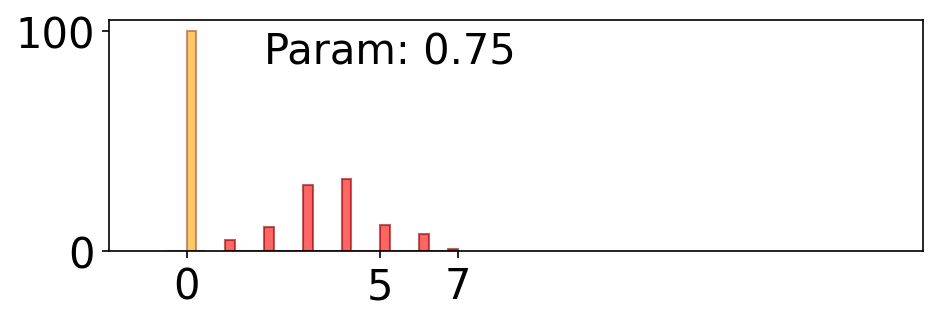}
            \includegraphics[width=1.03\linewidth]{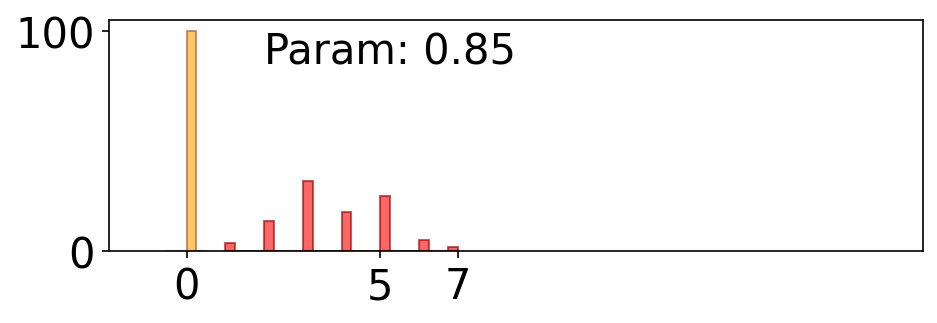}
            \includegraphics[width=1.03\linewidth]{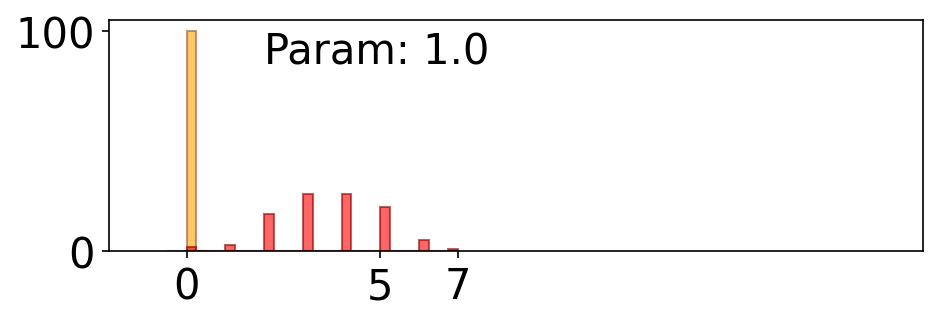}}
    
            \subfloat{\centering
            \makebox[10pt][l]{\raisebox{3pt}{\rotatebox[origin=l]{90}{\tiny Shear}}}
            \includegraphics[width=1.03\linewidth]{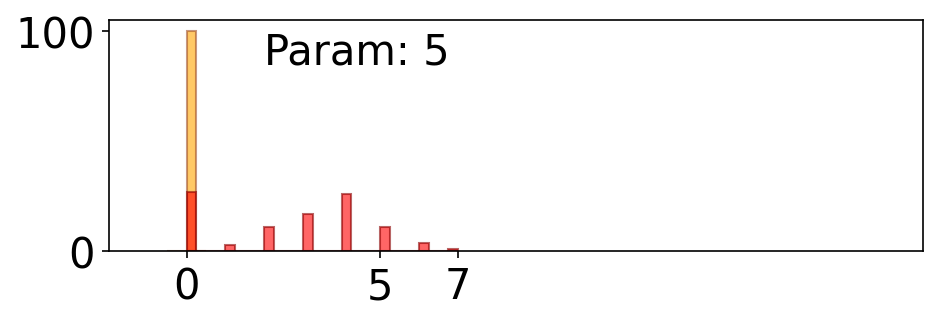}
            \includegraphics[width=1.03\linewidth]{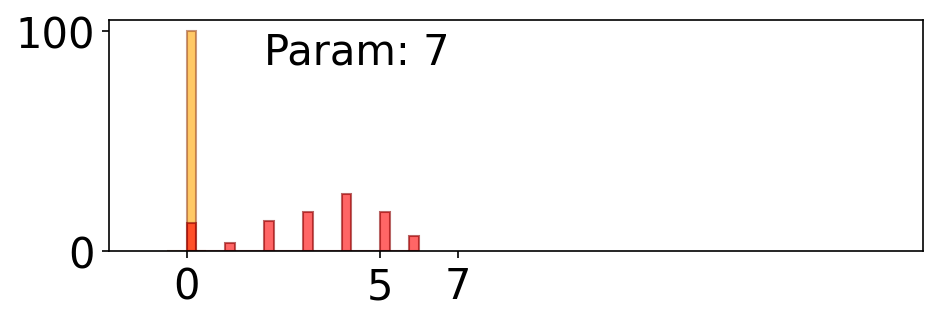}
            \includegraphics[width=1.03\linewidth]{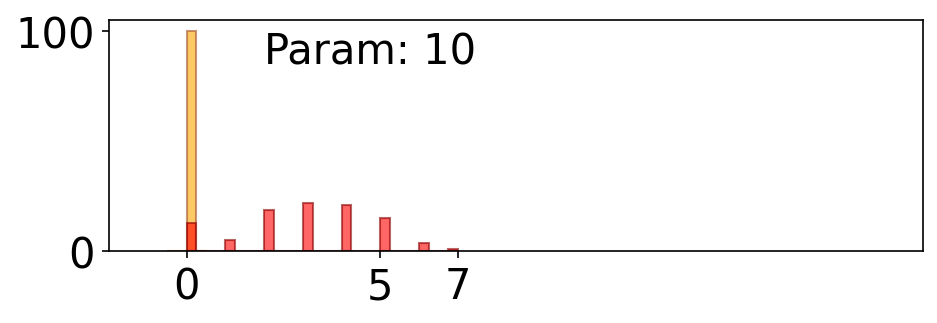}
            \includegraphics[width=1.03\linewidth]{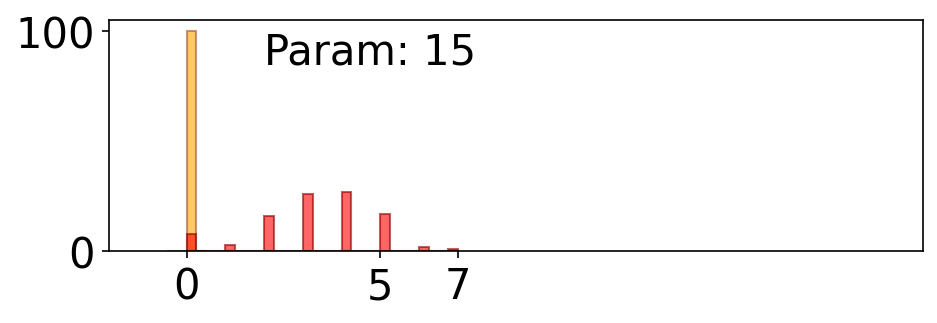}
            \includegraphics[width=1.03\linewidth]{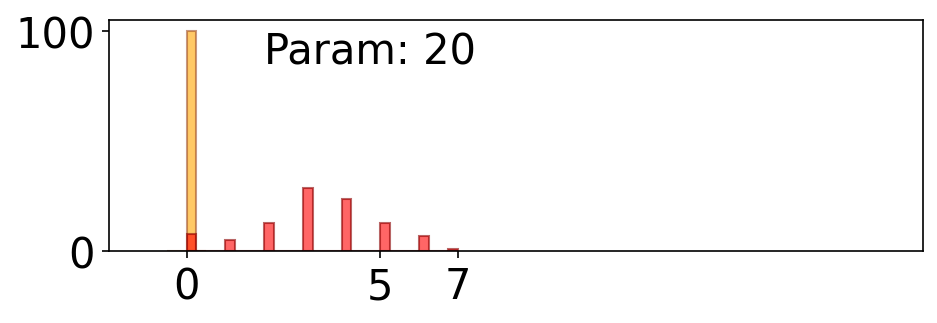}}

            \subfloat{\centering
            \makebox[10pt][l]{\raisebox{1pt}{\rotatebox[origin=l]{90}{\scalebox{.4}{Regen-VAE}}}}
            \includegraphics[width=1.03\linewidth]{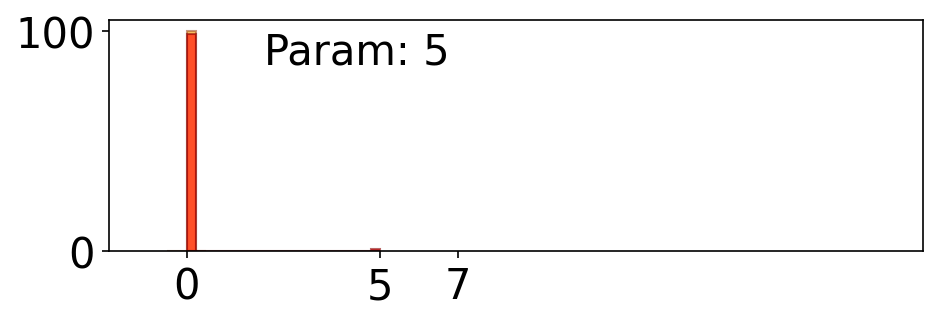}
            \includegraphics[width=1.03\linewidth]{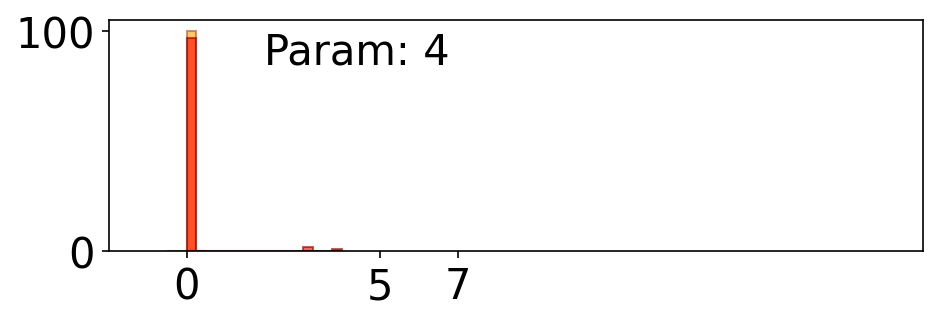}
            \includegraphics[width=1.03\linewidth]{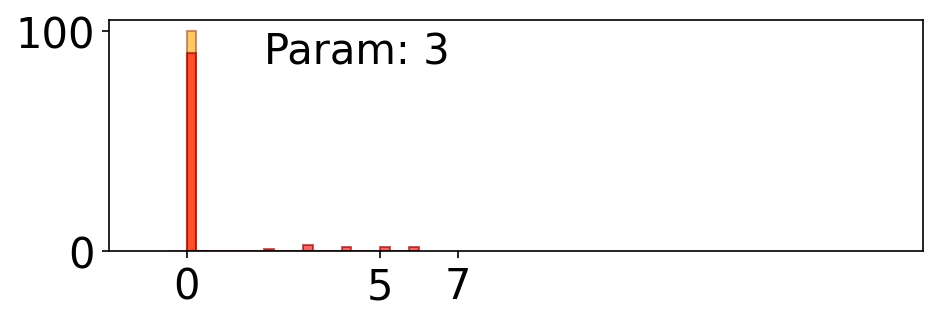}
            \includegraphics[width=1.03\linewidth]{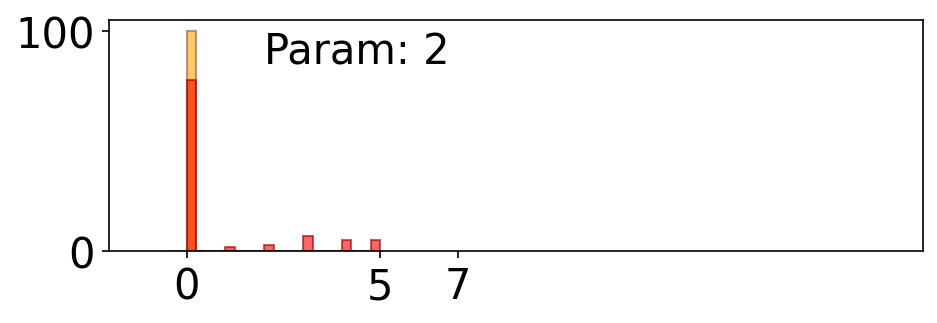}
            \includegraphics[width=1.03\linewidth]{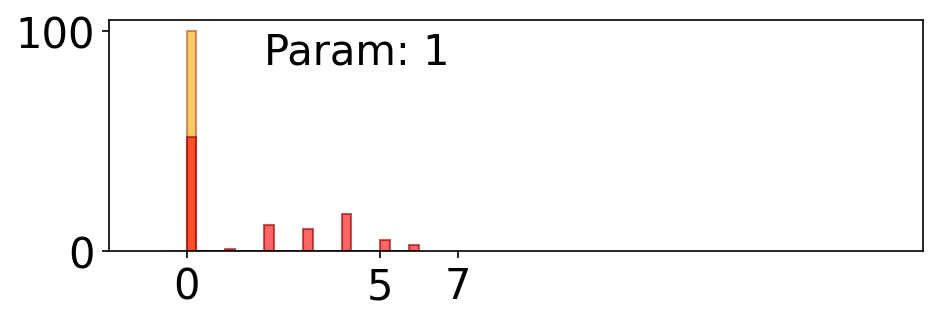}}
    
            \subfloat{\centering
            \makebox[10pt][l]{\raisebox{0pt}{\rotatebox[origin=l]{90}{\scalebox{.4}{Regen-Diff}}}}
            \includegraphics[width=1.03\linewidth]{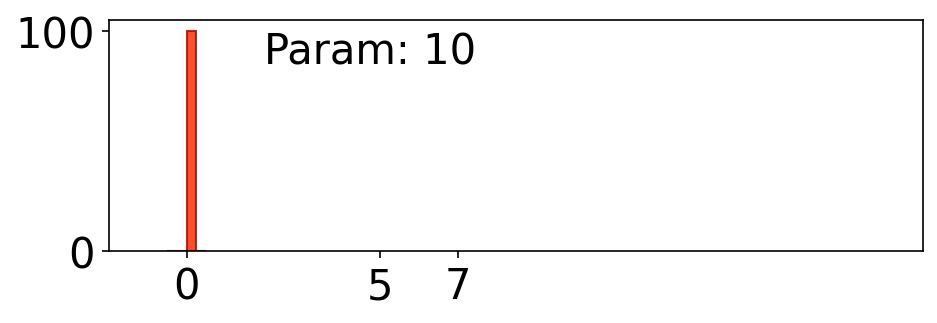}
            \includegraphics[width=1.03\linewidth]{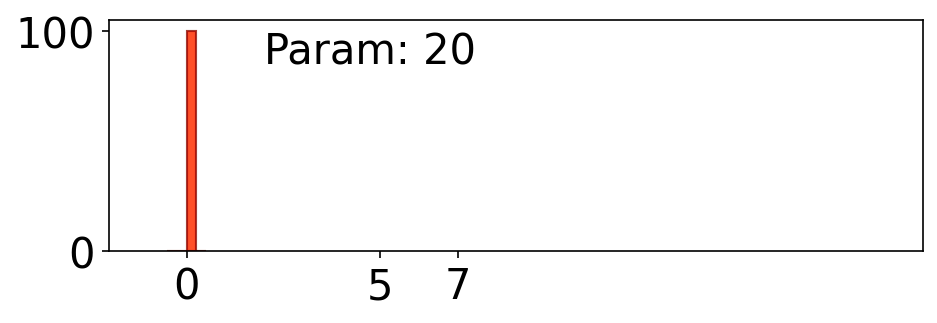}
            \includegraphics[width=1.03\linewidth]{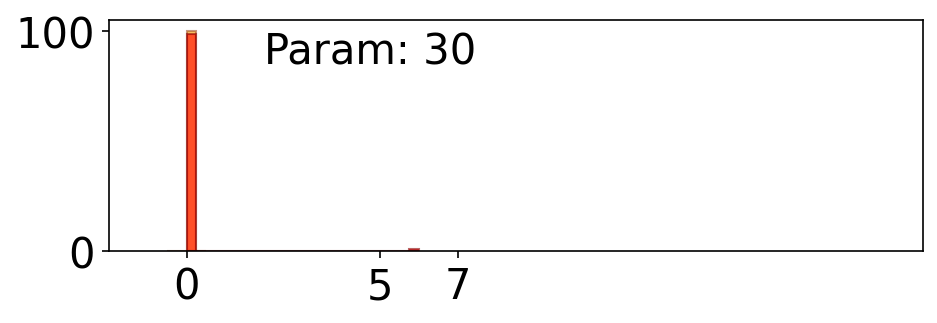}
            \includegraphics[width=1.03\linewidth]{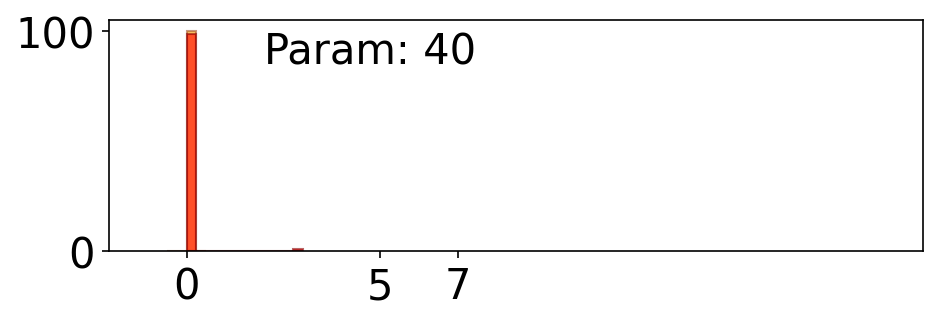}
            \includegraphics[width=1.03\linewidth]{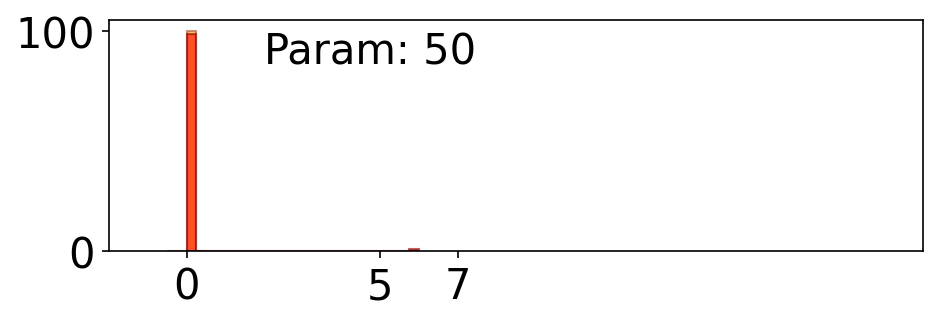}}
        \end{subfigure}
        
    \caption{The effects of different perturbations on $\|\mathbf{w}-\hat{\mathbf{w}}^{\prime}\|_H$. From left to right, the perturbations are set with different parameters (Param). 
    Watermark: PRCW decoding component (1e-2 FPR). In this case, the number of bits is $\left\lceil\log_2(1/10^{-2})\right\rceil=7$.}
    \label{fig: PRCW decoding 1e-2fpr attack shifting}
\end{figure}

\begin{figure}[t]
        \begin{subfigure}[c]{0.185\textwidth}
            \centering
            \subfloat{\centering
            \makebox[10pt][l]{\raisebox{5pt}{\rotatebox[origin=l]{90}{\tiny Blurring}}}
            \includegraphics[width=1.03\linewidth]{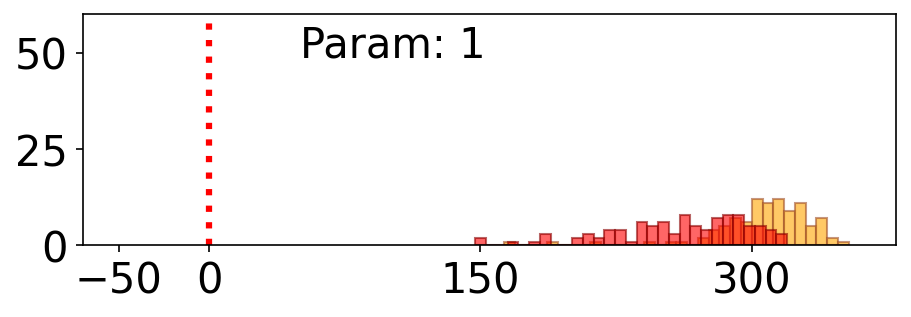}
            \includegraphics[width=1.03\linewidth]{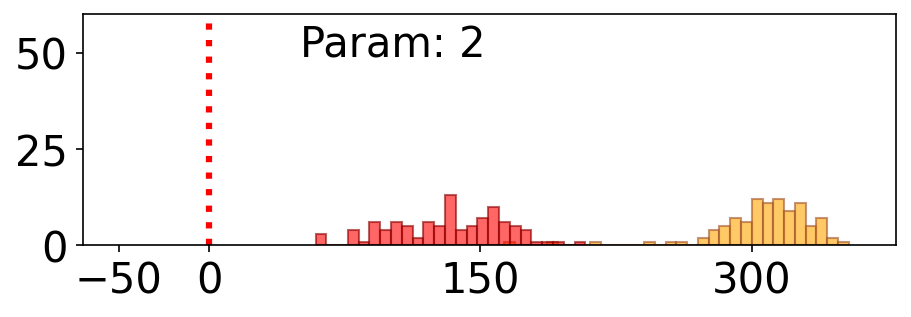}
            \includegraphics[width=1.03\linewidth]{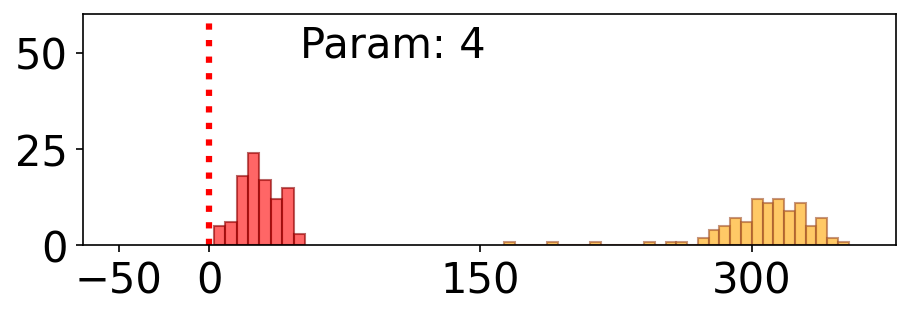}
            \includegraphics[width=1.03\linewidth]{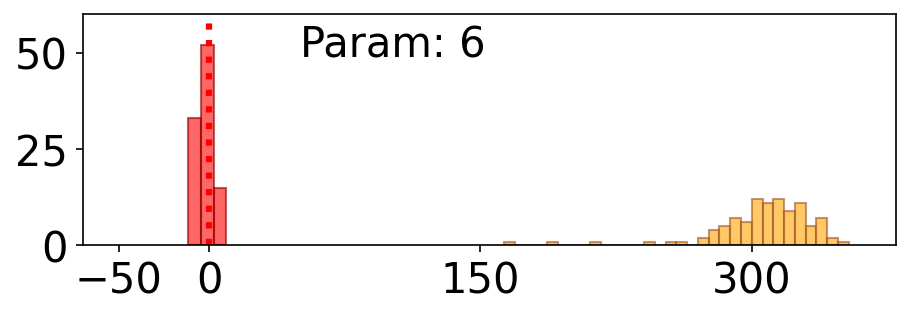}
            \includegraphics[width=1.03\linewidth]{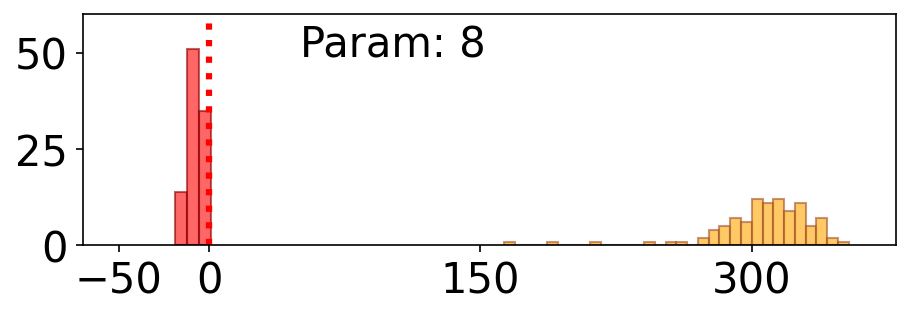}}
    
            \subfloat{\centering
            \makebox[10pt][l]{\raisebox{7pt}{\rotatebox[origin=l]{90}{\tiny Jitter}}}
            \includegraphics[width=1.03\linewidth]{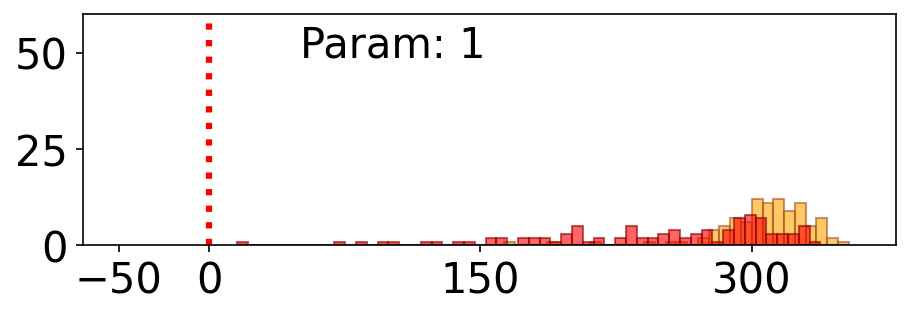}
            \includegraphics[width=1.03\linewidth]{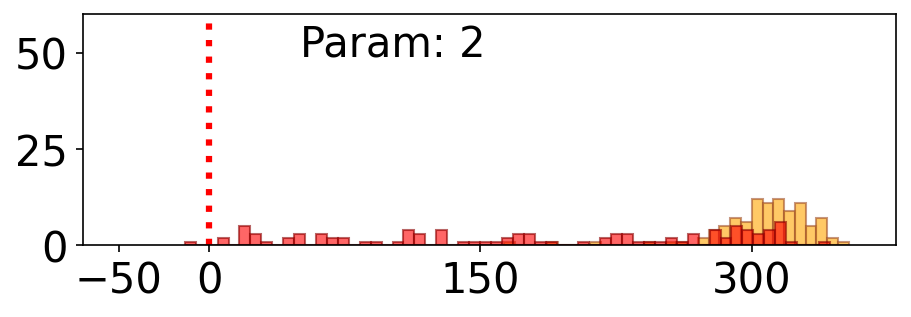}
            \includegraphics[width=1.03\linewidth]{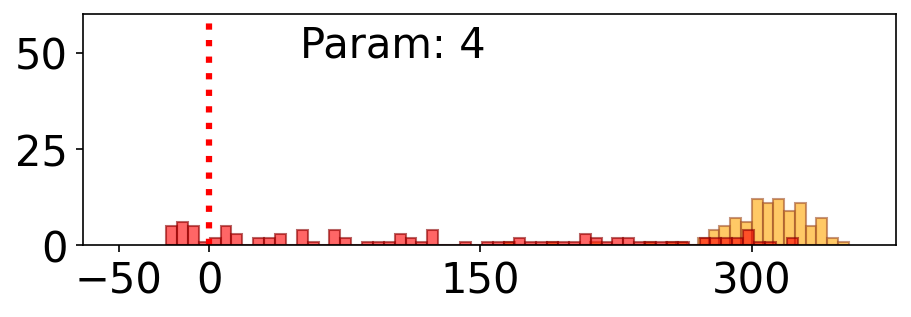}
            \includegraphics[width=1.03\linewidth]{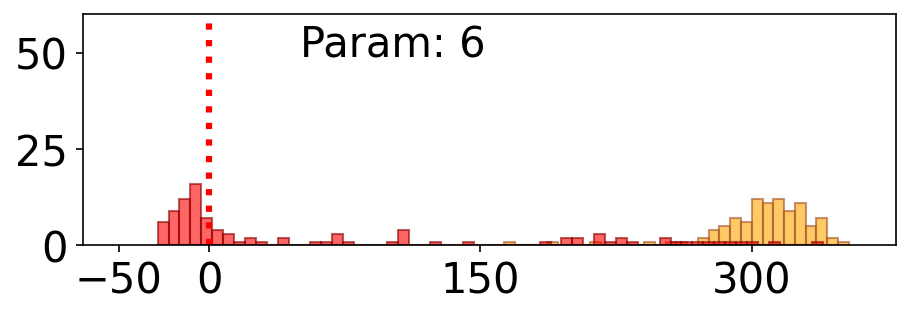}
            \includegraphics[width=1.03\linewidth]{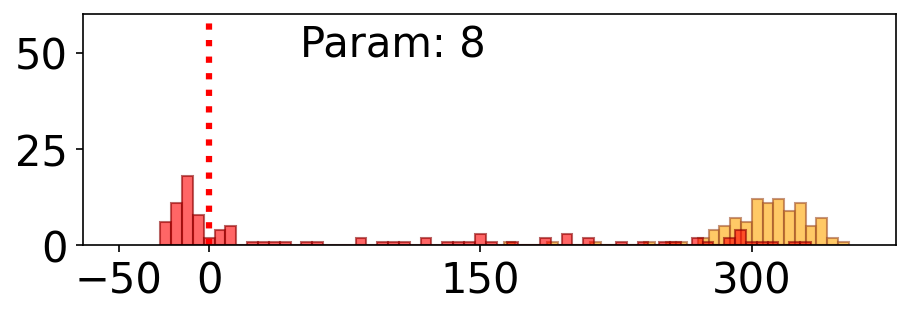}}
    
            \subfloat{\centering
            \makebox[10pt][l]{\raisebox{7pt}{\rotatebox[origin=l]{90}{\tiny JPEG}}}
            \includegraphics[width=1.03\linewidth]{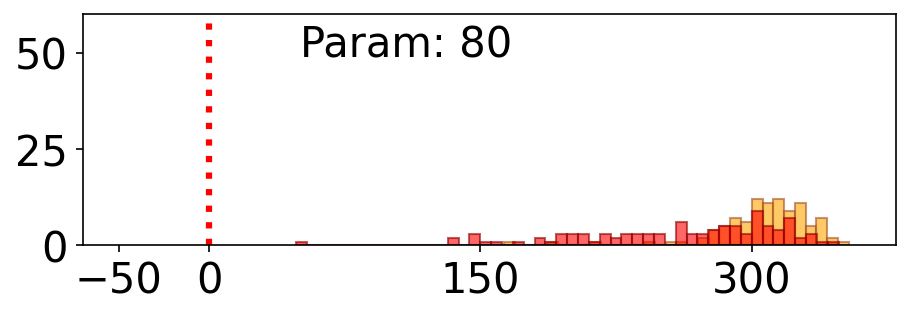}
            \includegraphics[width=1.03\linewidth]{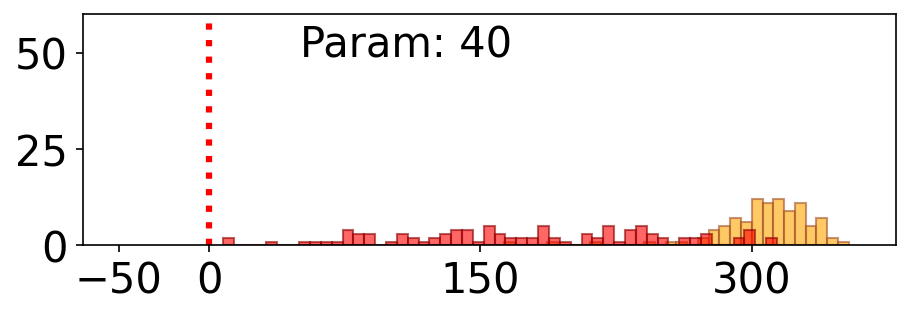}
            \includegraphics[width=1.03\linewidth]{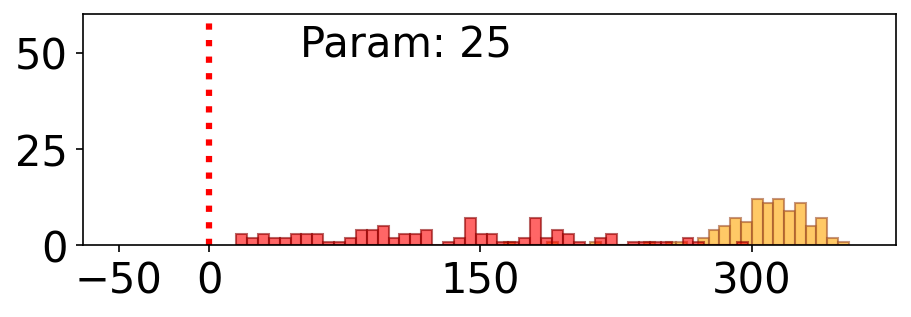}
            \includegraphics[width=1.03\linewidth]{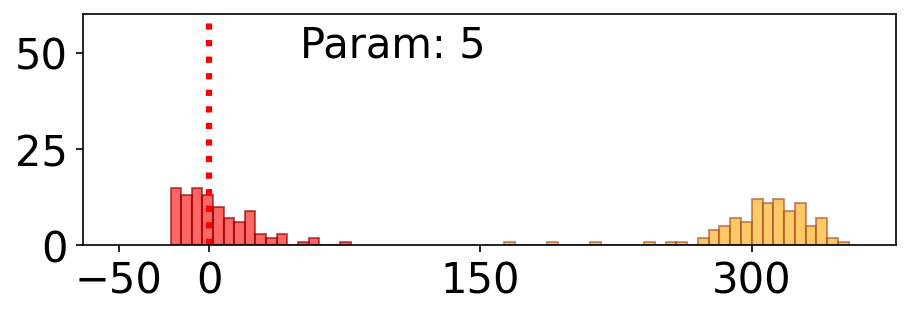}
            \includegraphics[width=1.03\linewidth]{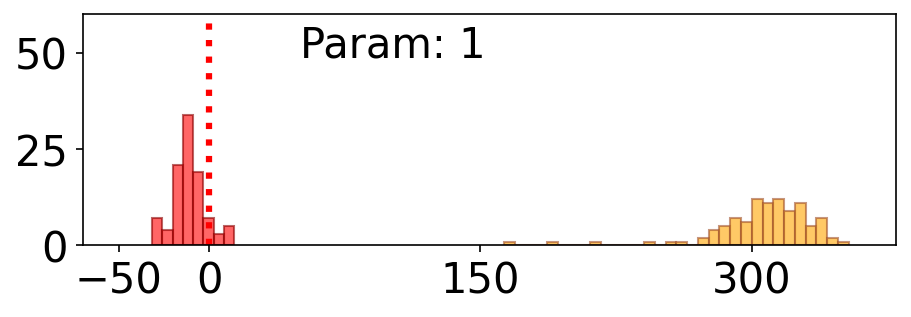}}
    
            \subfloat{\centering
            \makebox[10pt][l]{\raisebox{3pt}{\rotatebox[origin=l]{90}{\tiny Rotation}}}
            \includegraphics[width=1.03\linewidth]{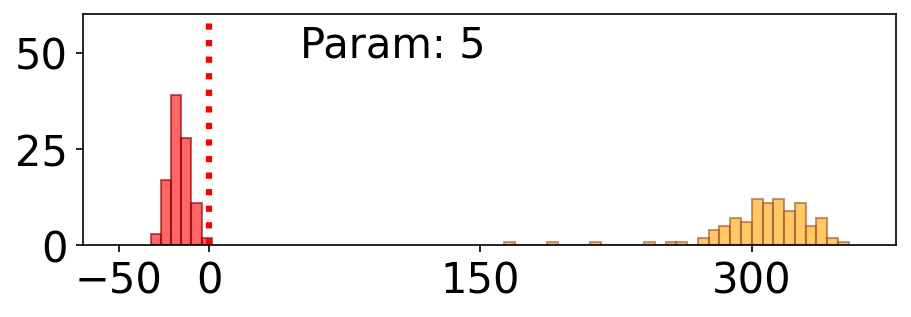}
            \includegraphics[width=1.03\linewidth]{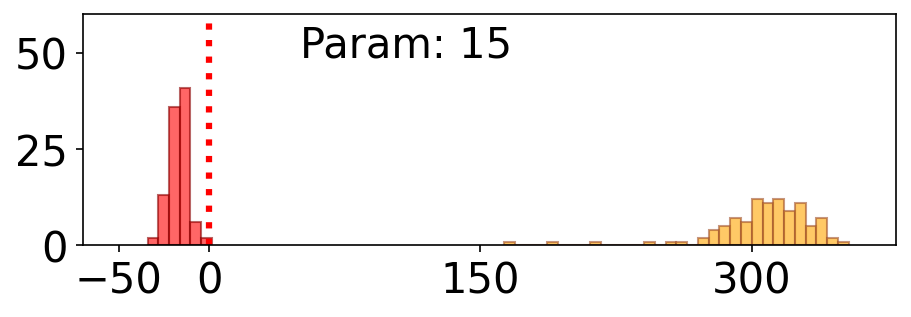}
            \includegraphics[width=1.03\linewidth]{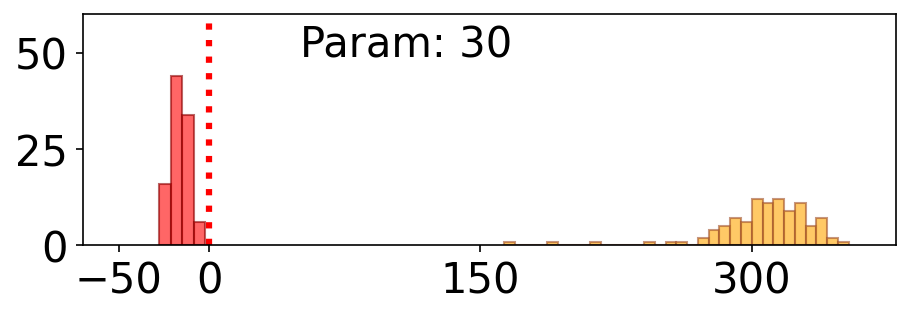}
            \includegraphics[width=1.03\linewidth]{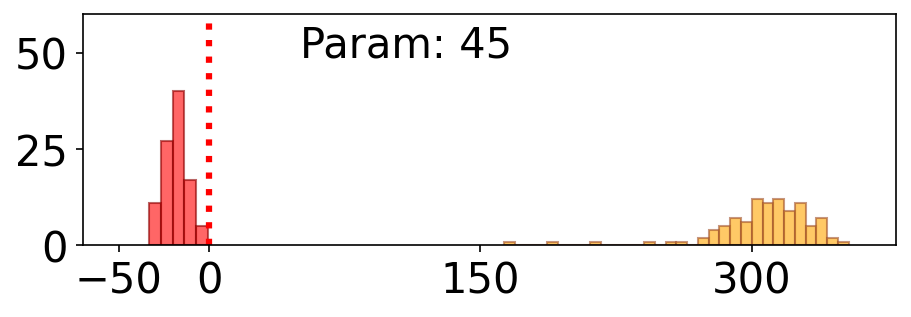}
            \includegraphics[width=1.03\linewidth]{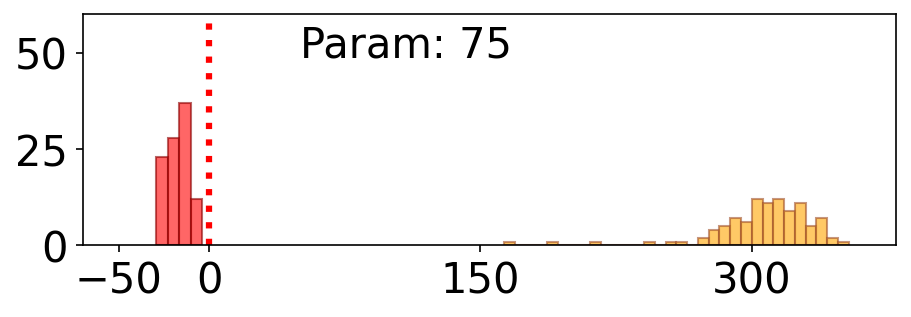}}
    
            \subfloat{\centering
            \makebox[10pt][l]{\raisebox{3pt}{\rotatebox[origin=l]{90}{\tiny Cropping}}}
            \includegraphics[width=1.03\linewidth]{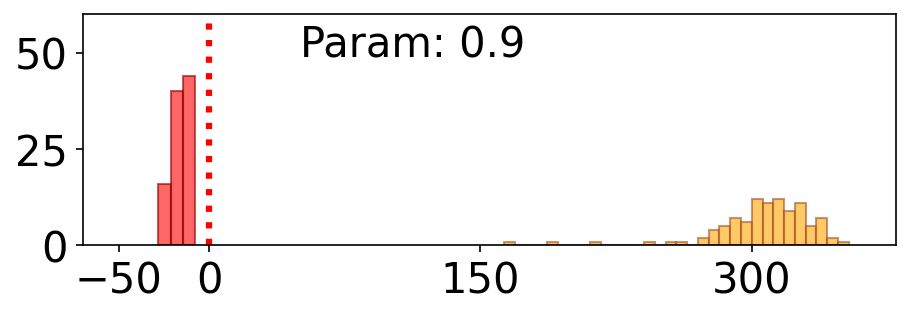}
            \includegraphics[width=1.03\linewidth]{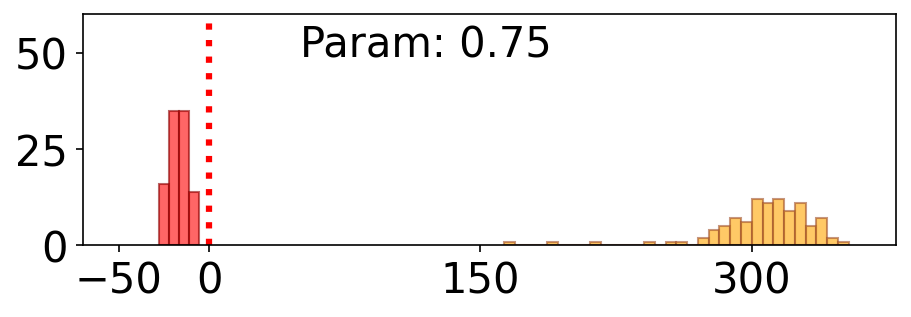}
            \includegraphics[width=1.03\linewidth]{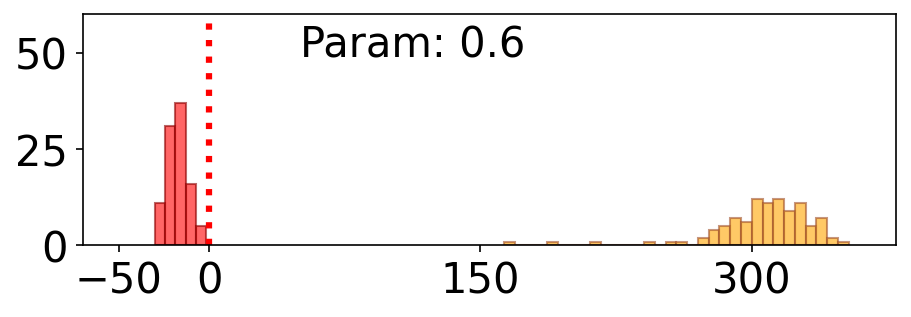}
            \includegraphics[width=1.03\linewidth]{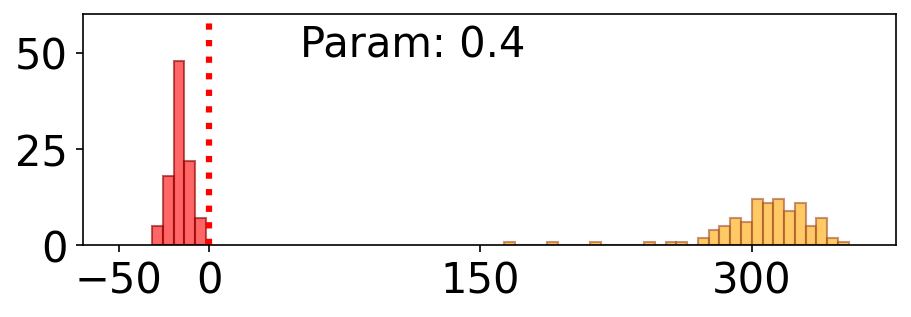}
            \includegraphics[width=1.03\linewidth]{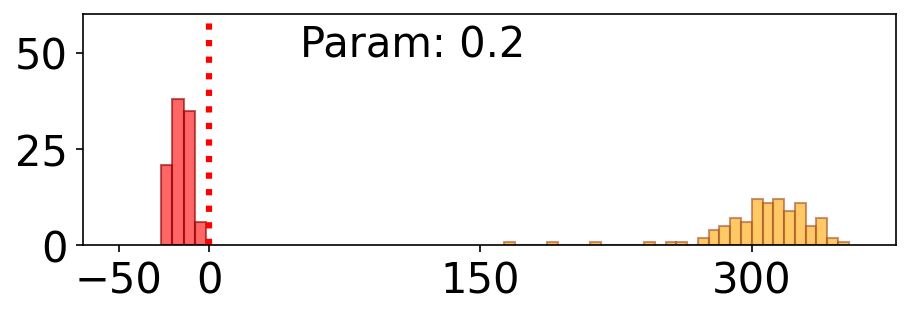}}
    
            \subfloat{\centering
            \makebox[10pt][l]{\raisebox{3pt}{\rotatebox[origin=l]{90}{\tiny Elastic}}}
            \includegraphics[width=1.03\linewidth]{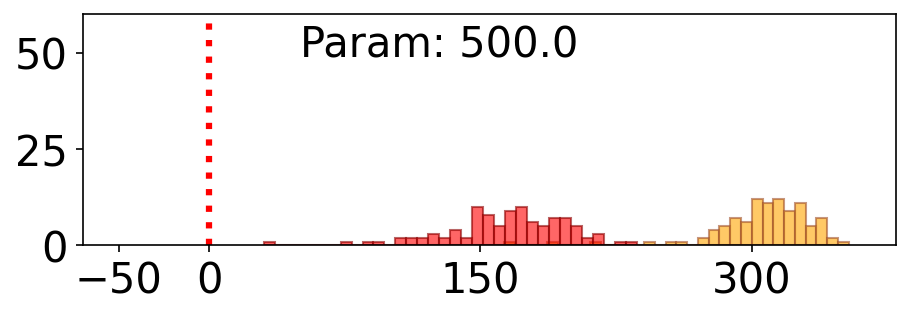}
            \includegraphics[width=1.03\linewidth]{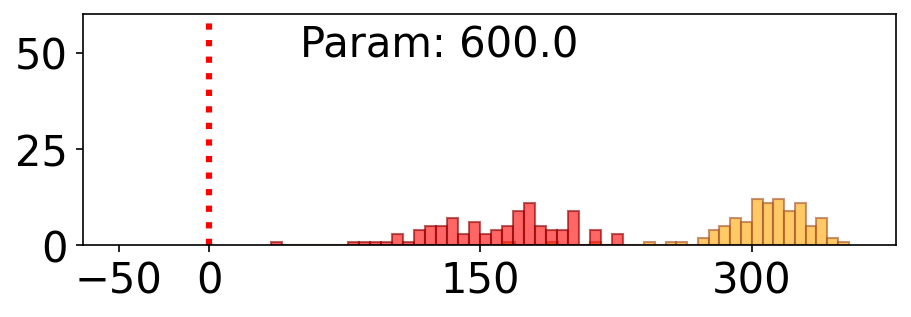}
            \includegraphics[width=1.03\linewidth]{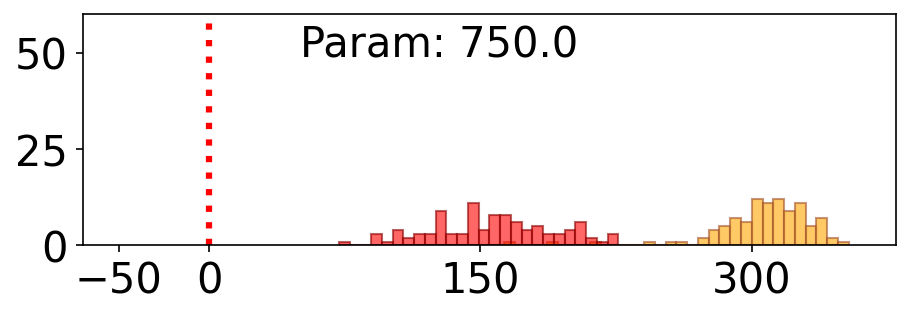}
            \includegraphics[width=1.03\linewidth]{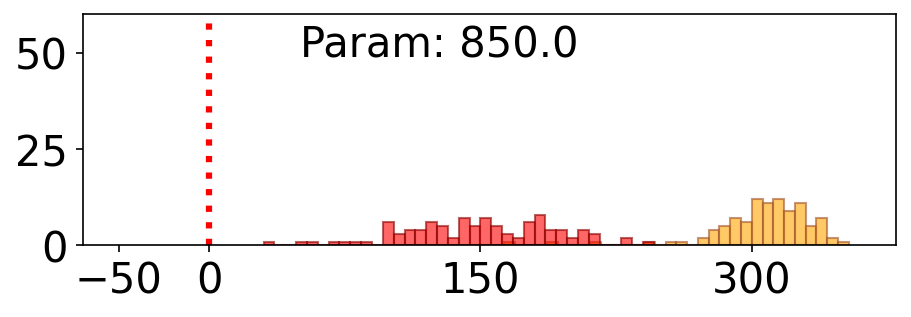}
            \includegraphics[width=1.03\linewidth]{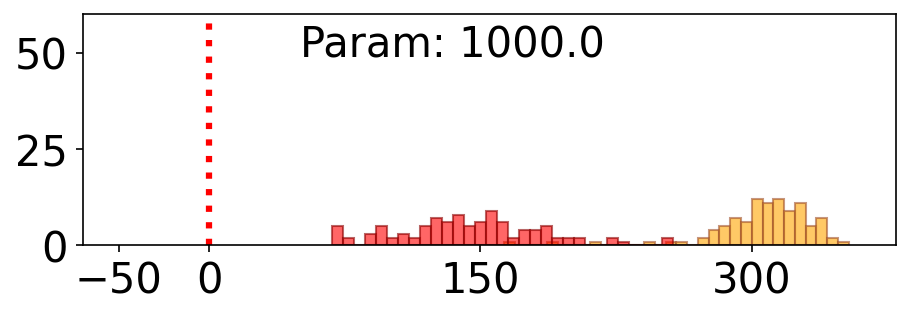}}
    
            \subfloat{\centering
            \makebox[10pt][l]{\raisebox{3pt}{\rotatebox[origin=l]{90}{\scalebox{.4}{Perspective}}}}
            \includegraphics[width=1.03\linewidth]{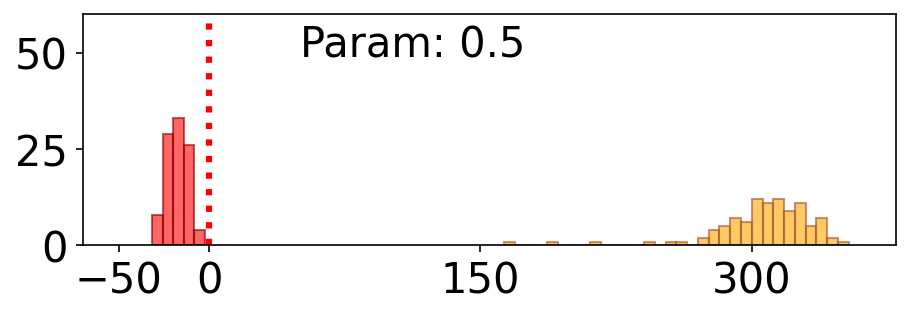}
            \includegraphics[width=1.03\linewidth]{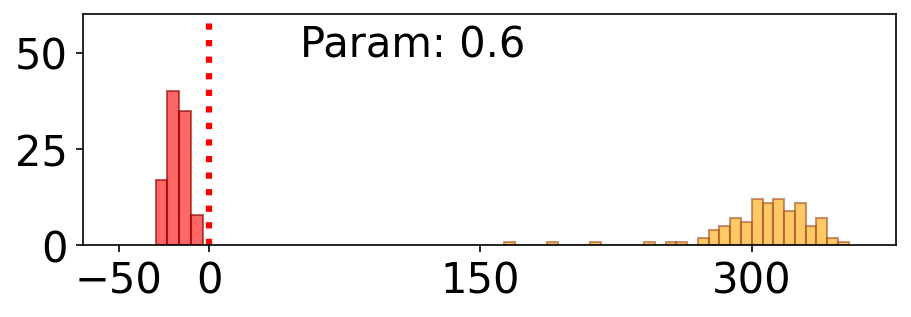}
            \includegraphics[width=1.03\linewidth]{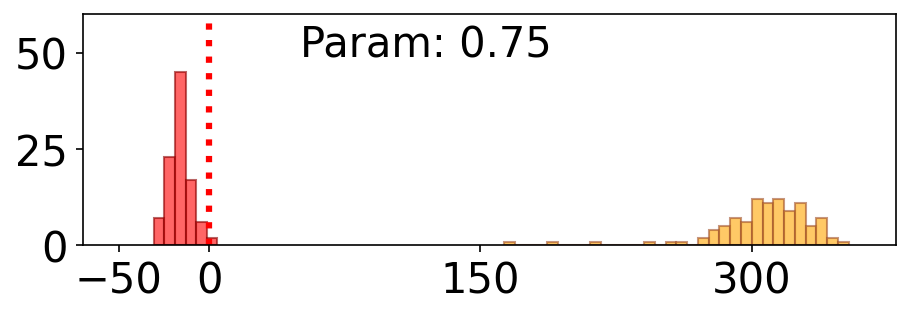}
            \includegraphics[width=1.03\linewidth]{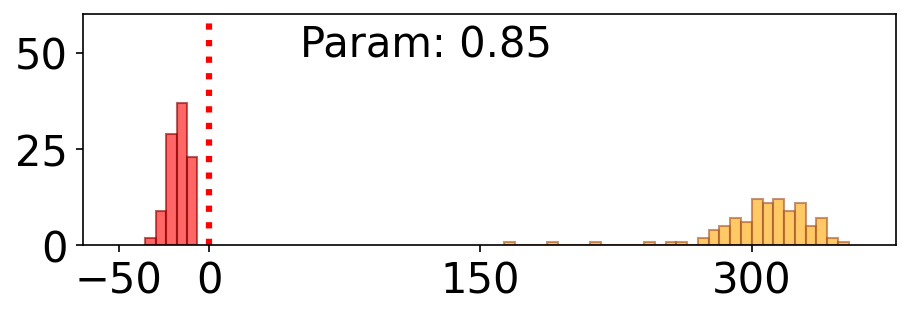}
            \includegraphics[width=1.03\linewidth]{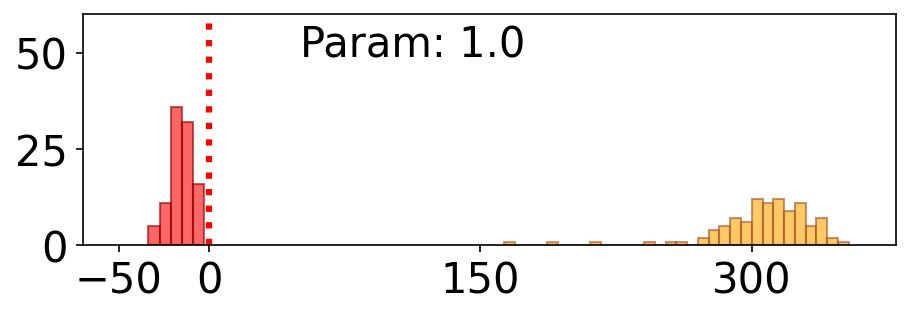}}
    
            \subfloat{\centering
            \makebox[10pt][l]{\raisebox{3pt}{\rotatebox[origin=l]{90}{\tiny Shear}}}
            \includegraphics[width=1.03\linewidth]{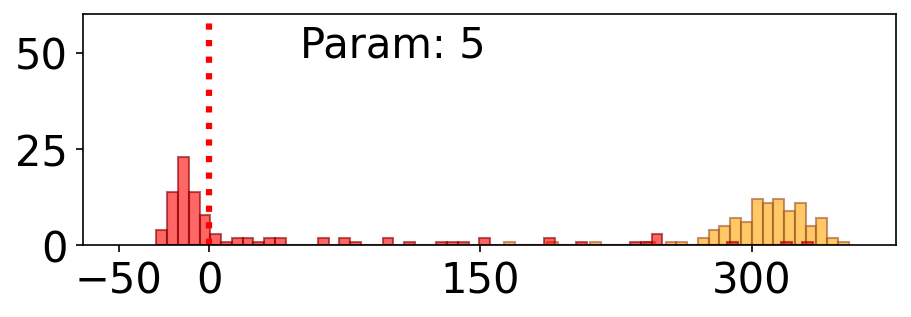}
            \includegraphics[width=1.03\linewidth]{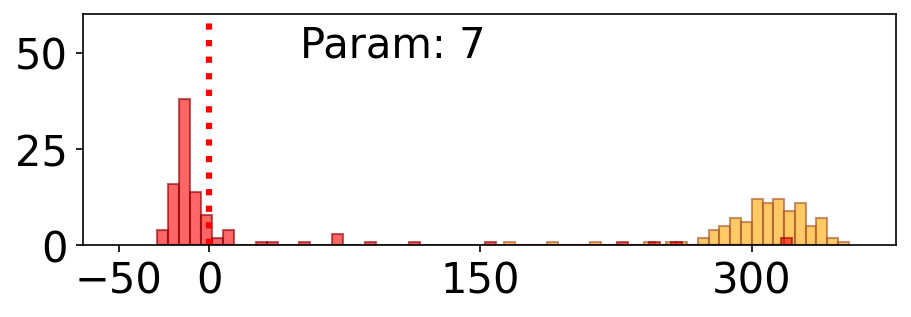}
            \includegraphics[width=1.03\linewidth]{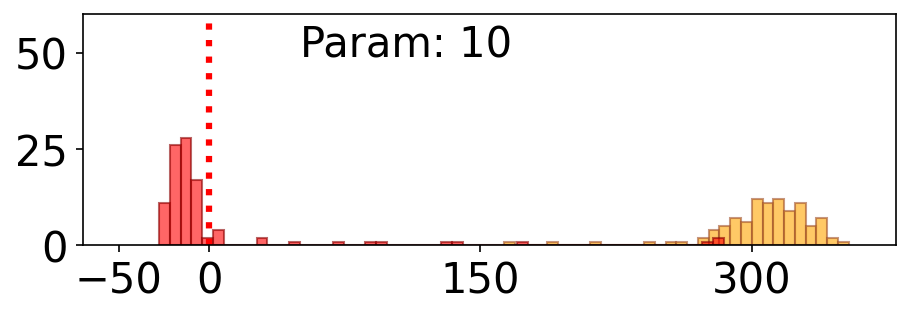}
            \includegraphics[width=1.03\linewidth]{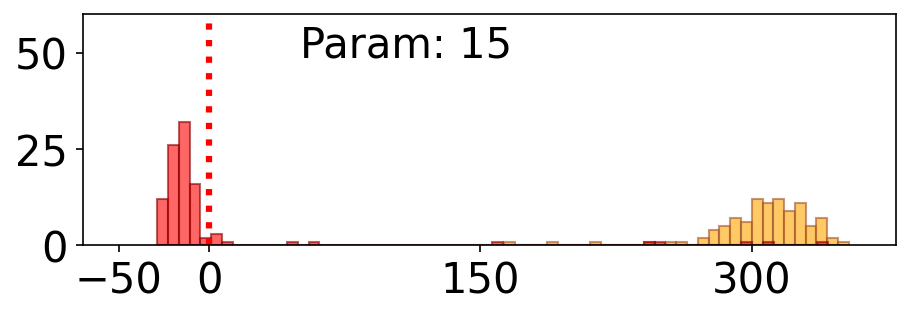}
            \includegraphics[width=1.03\linewidth]{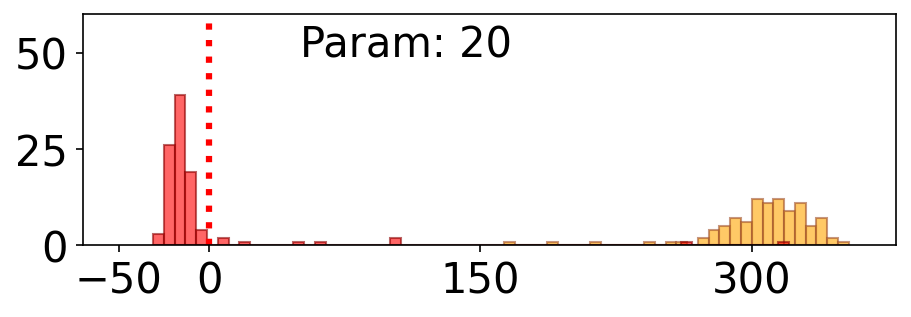}}

            \subfloat{\centering
            \makebox[10pt][l]{\raisebox{1pt}{\rotatebox[origin=l]{90}{\scalebox{.4}{Regen-VAE}}}}
            \includegraphics[width=1.03\linewidth]{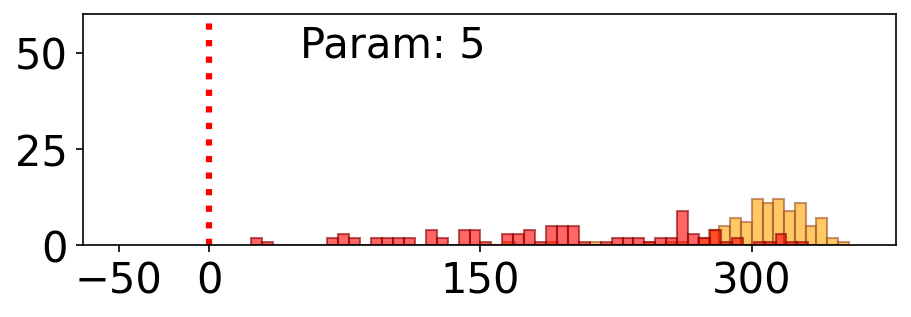}
            \includegraphics[width=1.03\linewidth]{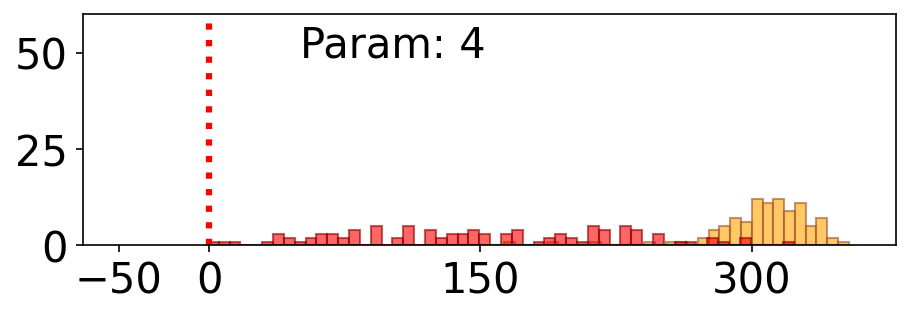}
            \includegraphics[width=1.03\linewidth]{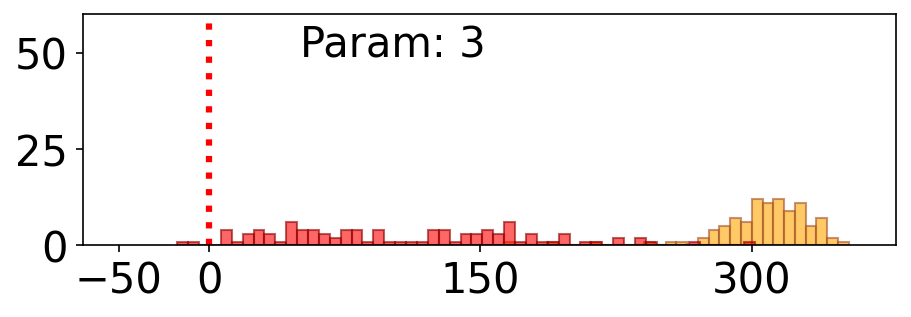}
            \includegraphics[width=1.03\linewidth]{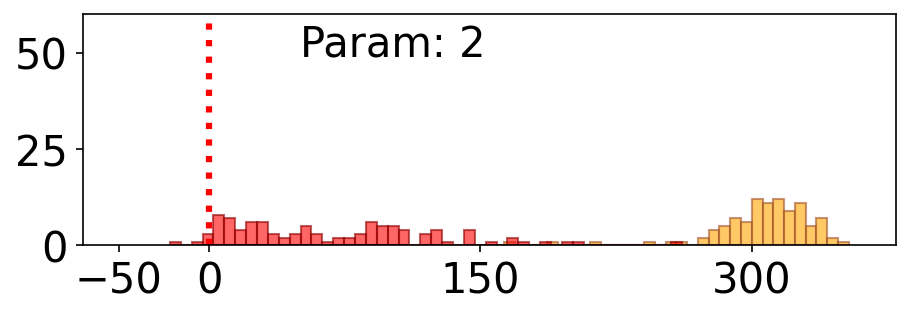}
            \includegraphics[width=1.03\linewidth]{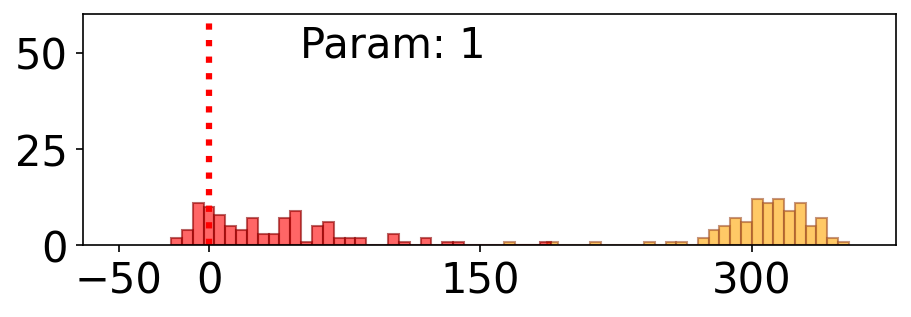}}
    
            \subfloat{\centering
            \makebox[10pt][l]{\raisebox{0pt}{\rotatebox[origin=l]{90}{\scalebox{.4}{Regen-Diff}}}}
            \includegraphics[width=1.03\linewidth]{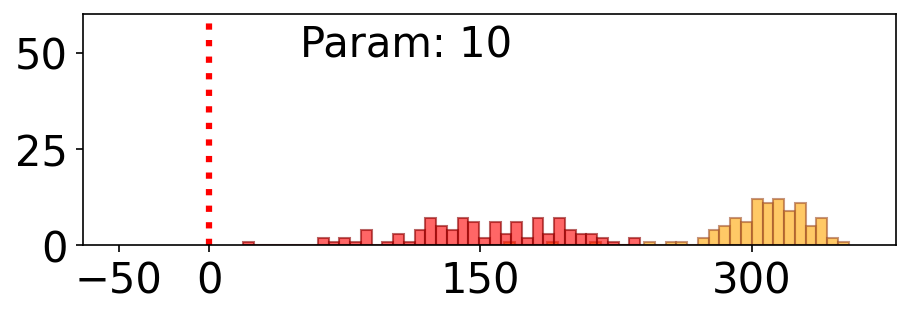}
            \includegraphics[width=1.03\linewidth]{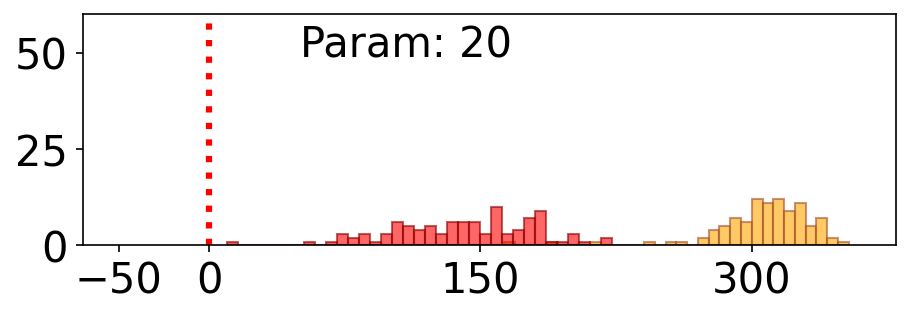}
            \includegraphics[width=1.03\linewidth]{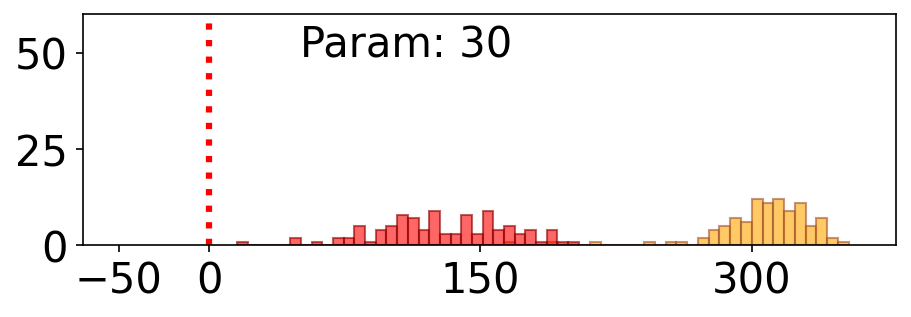}
            \includegraphics[width=1.03\linewidth]{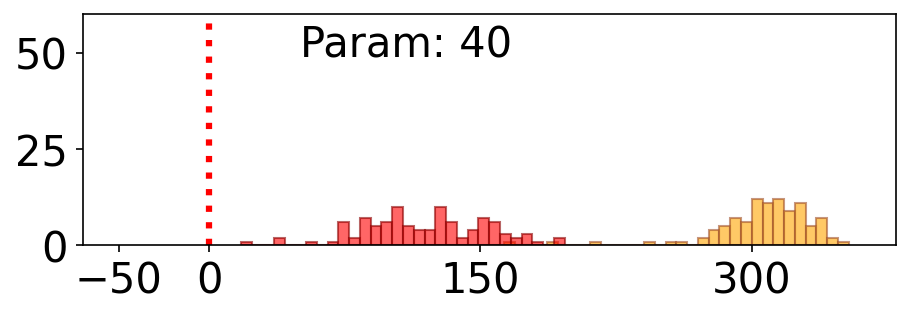}
            \includegraphics[width=1.03\linewidth]{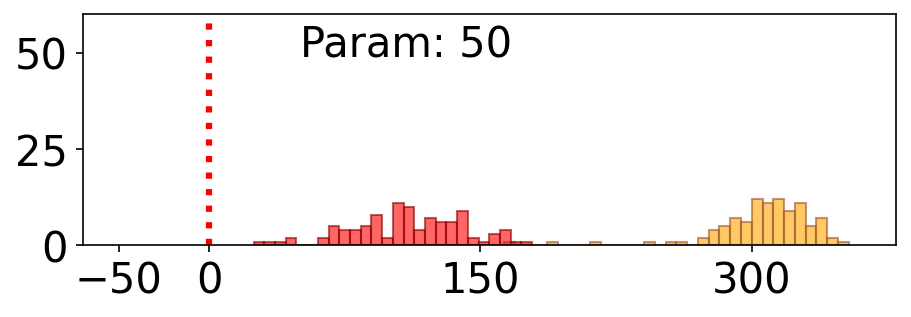}}
        \end{subfigure}
        
    \caption{The effects of different perturbations by Eq.~(\ref{eq: PRCW Alog 3 L 5}). From left to right, the perturbations are set with different parameters (Param). 
    Watermark: PRCW detection component (1e-2 FPR).}
    \label{fig: PRCW detection 1e-2fpr attack shifting}
\end{figure}

\begin{figure}[t]
        \begin{subfigure}[c]{0.185\textwidth}
            \centering
            \subfloat{\centering
            \makebox[10pt][l]{\raisebox{5pt}{\rotatebox[origin=l]{90}{\tiny Blurring}}}
            \includegraphics[width=1.03\linewidth]{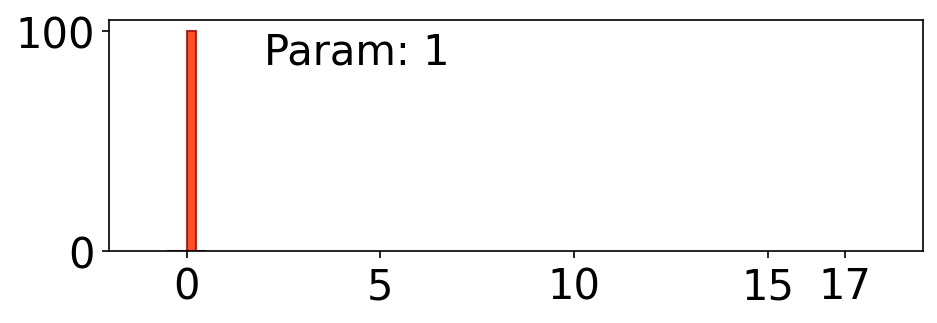}
            \includegraphics[width=1.03\linewidth]{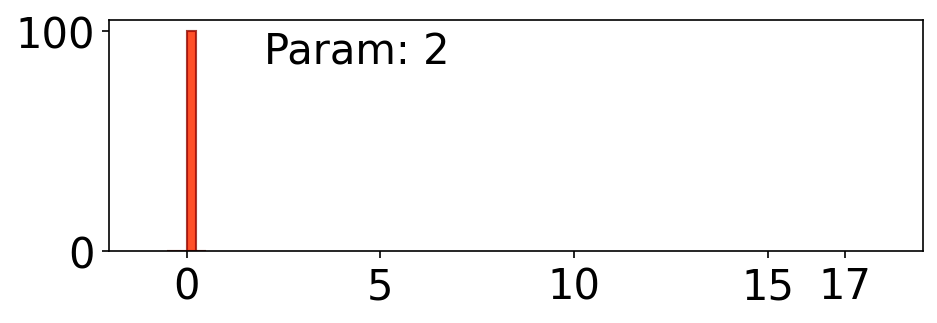}
            \includegraphics[width=1.03\linewidth]{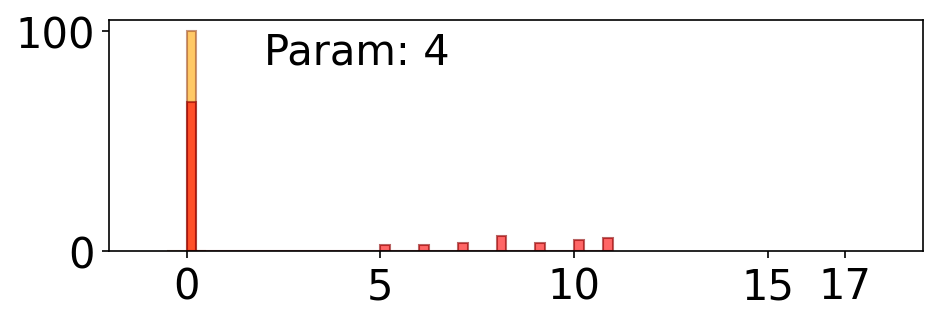}
            \includegraphics[width=1.03\linewidth]{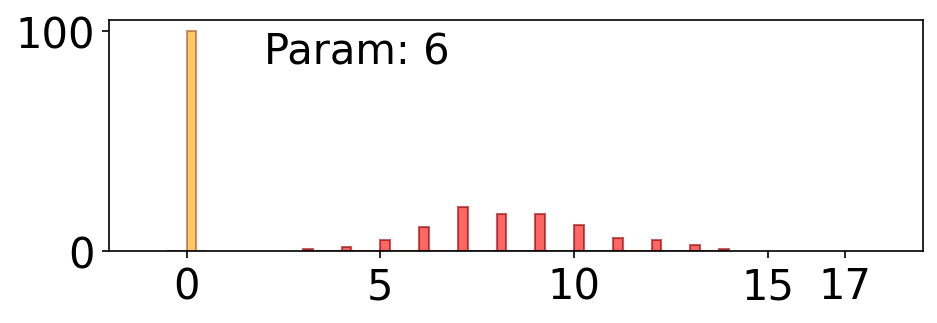}
            \includegraphics[width=1.03\linewidth]{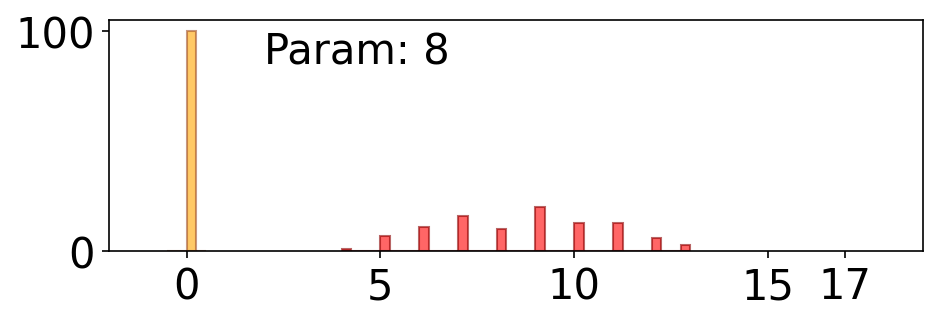}}
    
            \subfloat{\centering
            \makebox[10pt][l]{\raisebox{7pt}{\rotatebox[origin=l]{90}{\tiny Jitter}}}
            \includegraphics[width=1.03\linewidth]{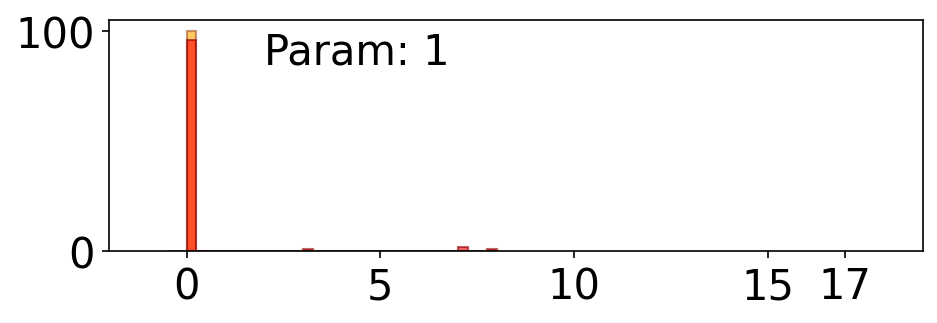}
            \includegraphics[width=1.03\linewidth]{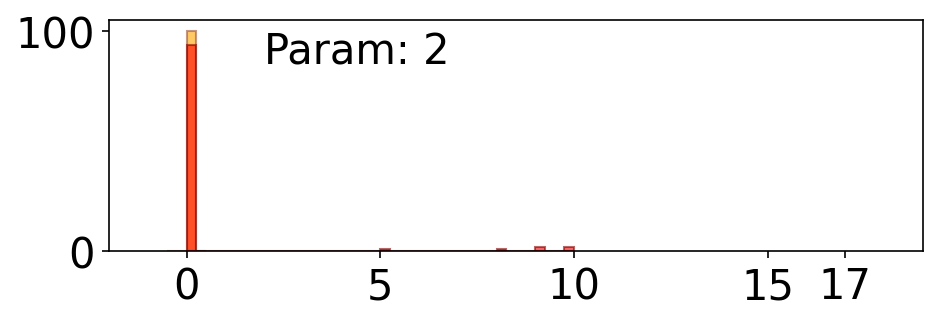}
            \includegraphics[width=1.03\linewidth]{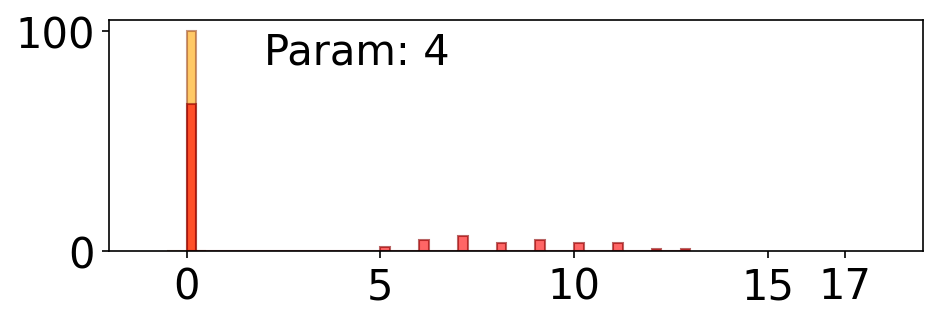}
            \includegraphics[width=1.03\linewidth]{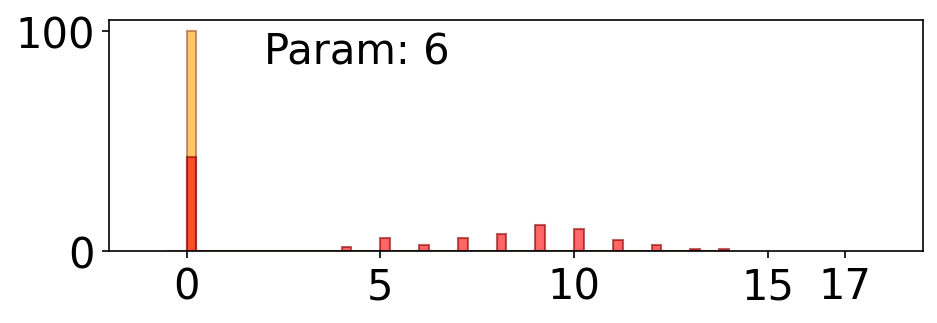}
            \includegraphics[width=1.03\linewidth]{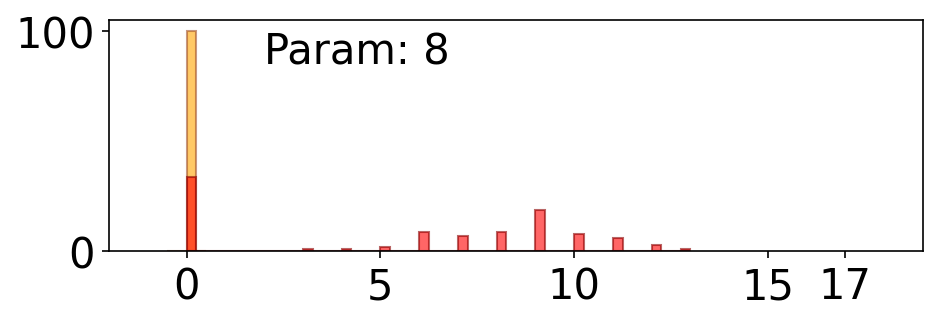}}
    
            \subfloat{\centering
            \makebox[10pt][l]{\raisebox{7pt}{\rotatebox[origin=l]{90}{\tiny JPEG}}}
            \includegraphics[width=1.03\linewidth]{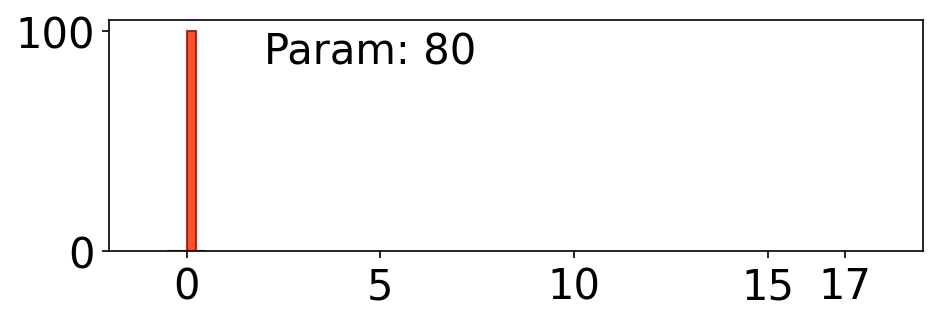}
            \includegraphics[width=1.03\linewidth]{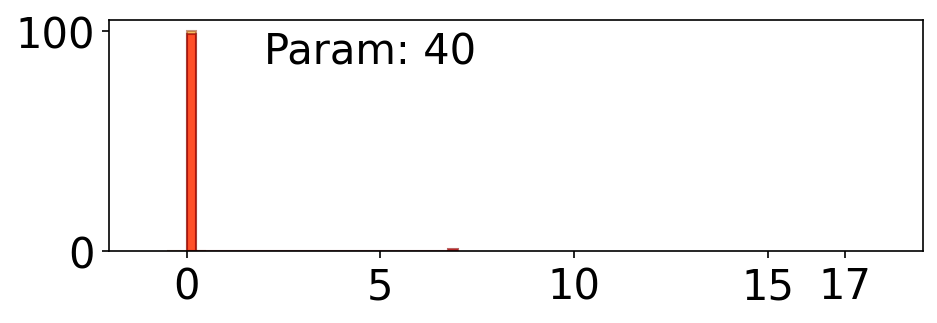}
            \includegraphics[width=1.03\linewidth]{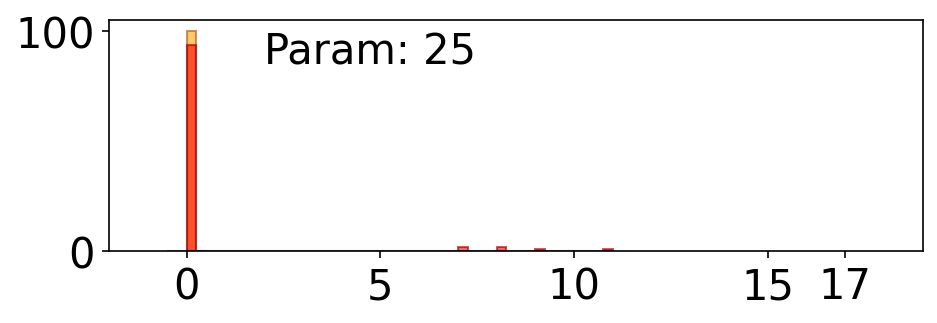}
            \includegraphics[width=1.03\linewidth]{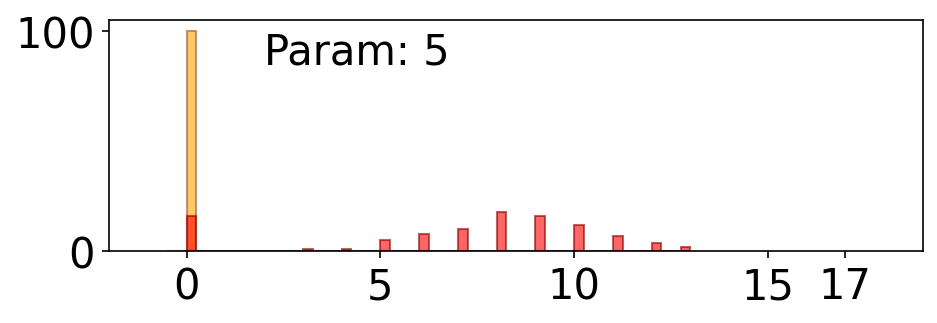}
            \includegraphics[width=1.03\linewidth]{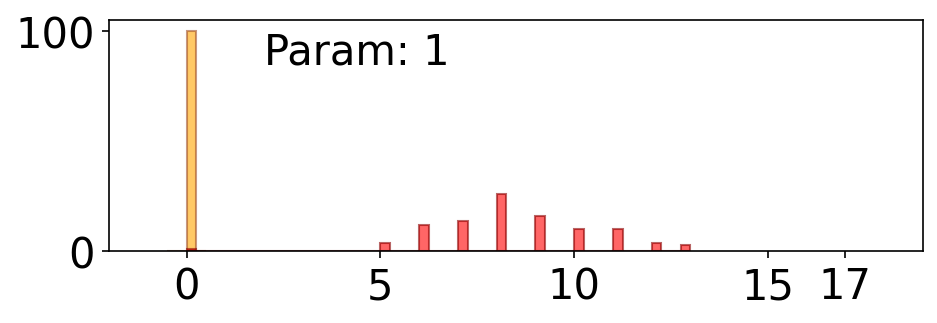}}
    
            \subfloat{\centering
            \makebox[10pt][l]{\raisebox{3pt}{\rotatebox[origin=l]{90}{\tiny Rotation}}}
            \includegraphics[width=1.03\linewidth]{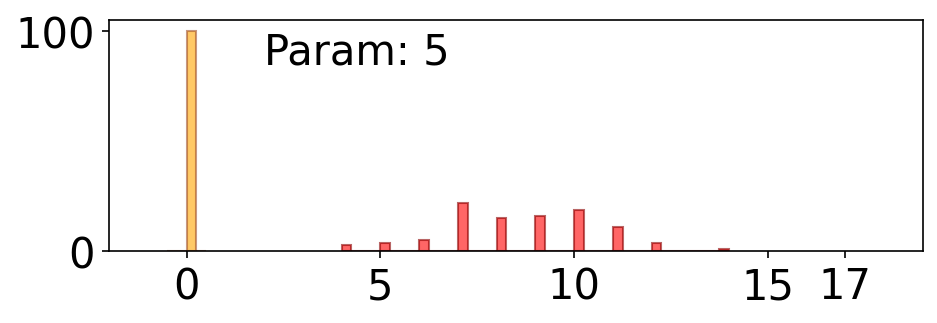}
            \includegraphics[width=1.03\linewidth]{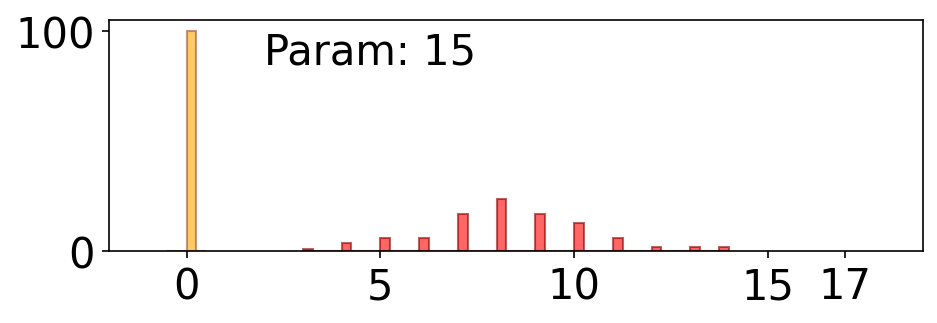}
            \includegraphics[width=1.03\linewidth]{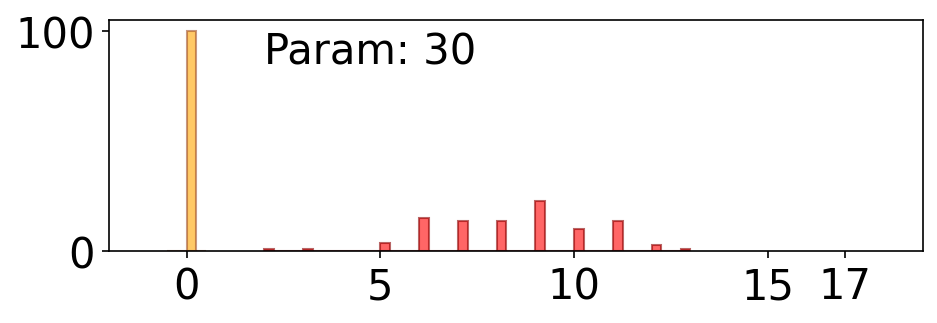}
            \includegraphics[width=1.03\linewidth]{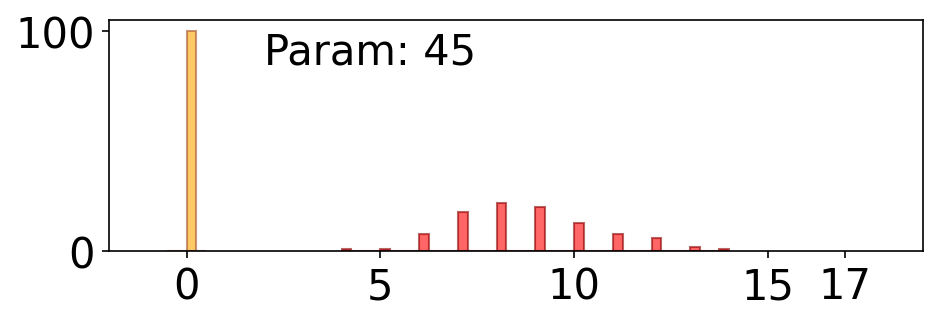}
            \includegraphics[width=1.03\linewidth]{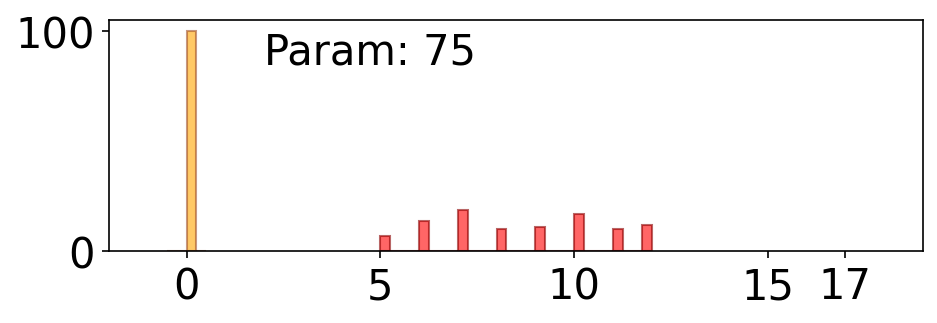}}
    
            \subfloat{\centering
            \makebox[10pt][l]{\raisebox{3pt}{\rotatebox[origin=l]{90}{\tiny Cropping}}}
            \includegraphics[width=1.03\linewidth]{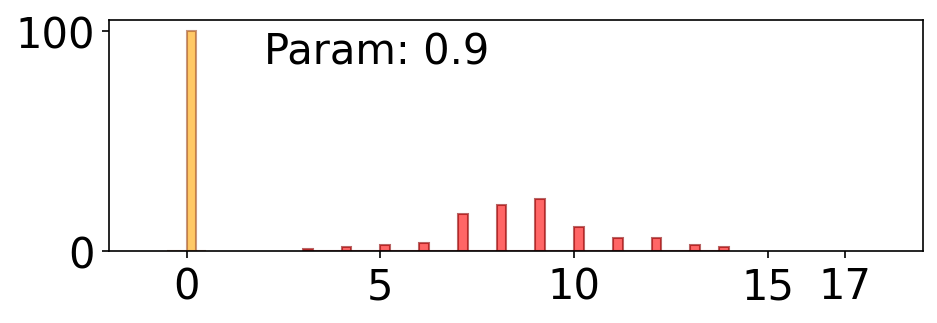}
            \includegraphics[width=1.03\linewidth]{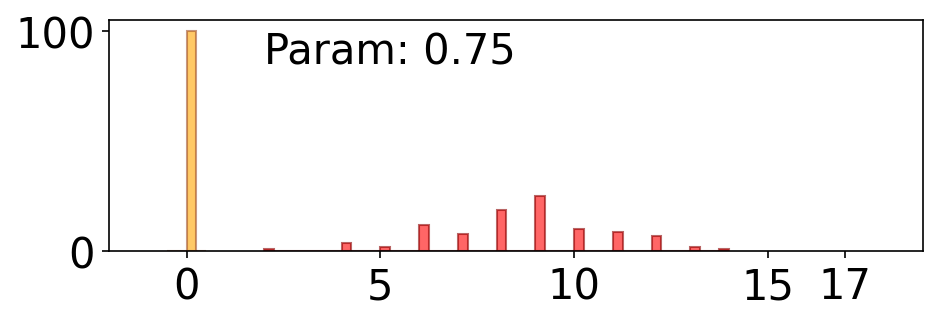}
            \includegraphics[width=1.03\linewidth]{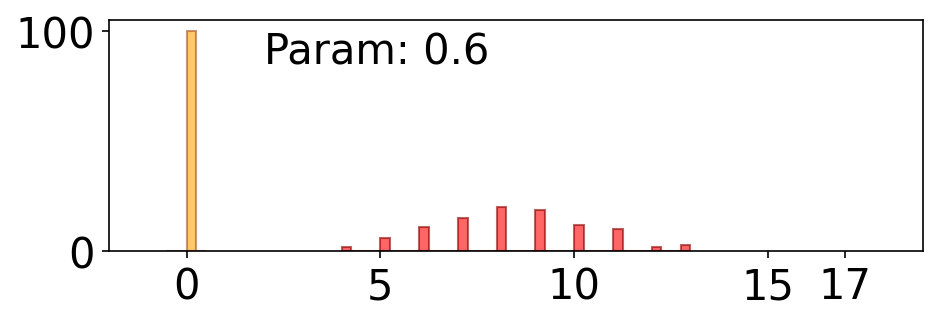}
            \includegraphics[width=1.03\linewidth]{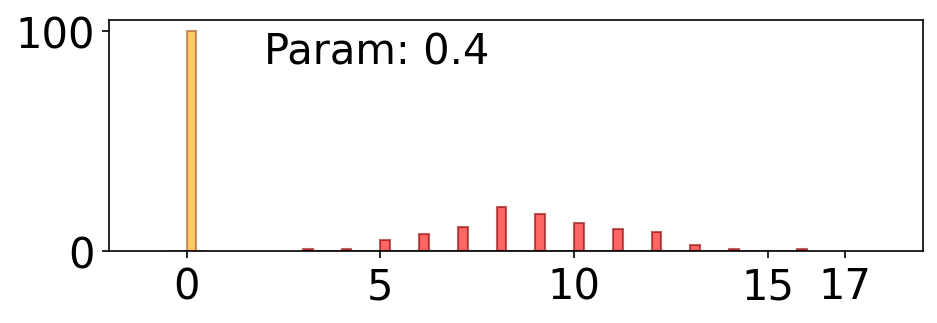}
            \includegraphics[width=1.03\linewidth]{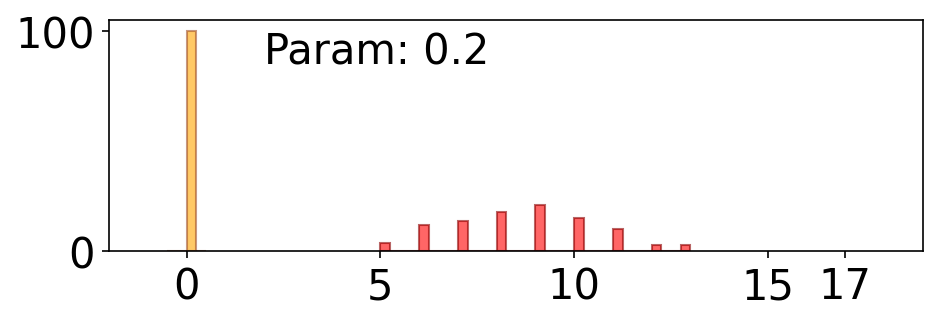}}
    
            \subfloat{\centering
            \makebox[10pt][l]{\raisebox{3pt}{\rotatebox[origin=l]{90}{\tiny Elastic}}}
            \includegraphics[width=1.03\linewidth]{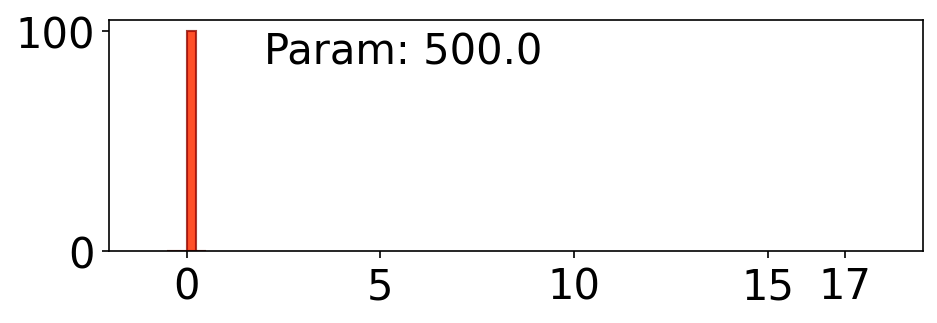}
            \includegraphics[width=1.03\linewidth]{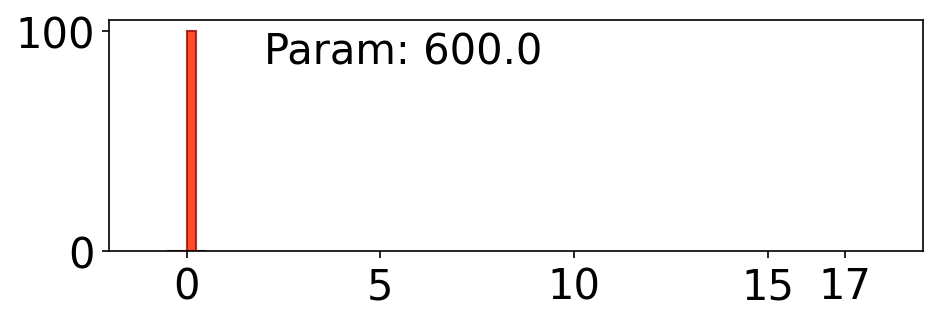}
            \includegraphics[width=1.03\linewidth]{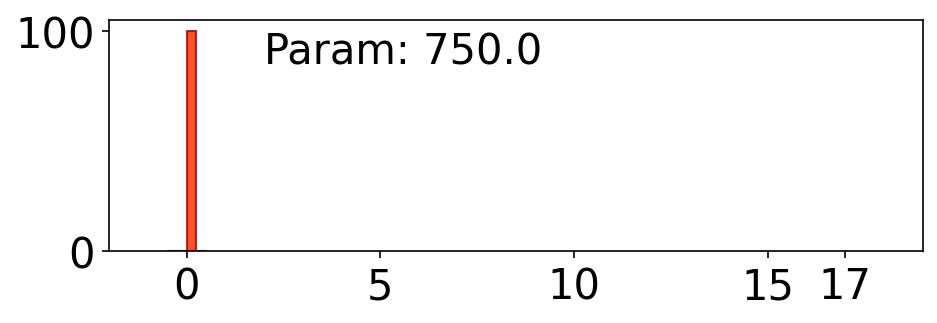}
            \includegraphics[width=1.03\linewidth]{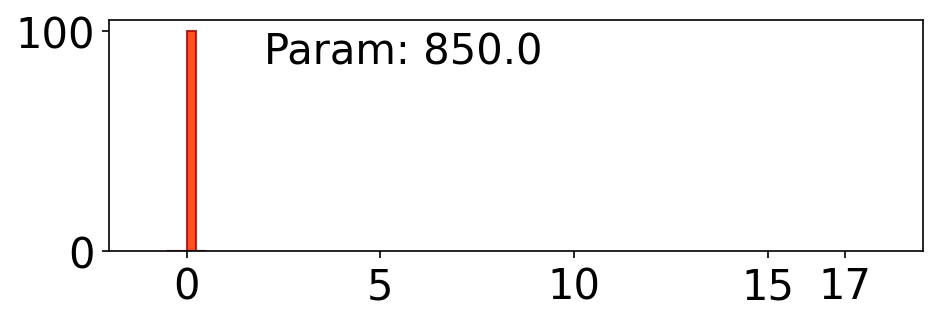}
            \includegraphics[width=1.03\linewidth]{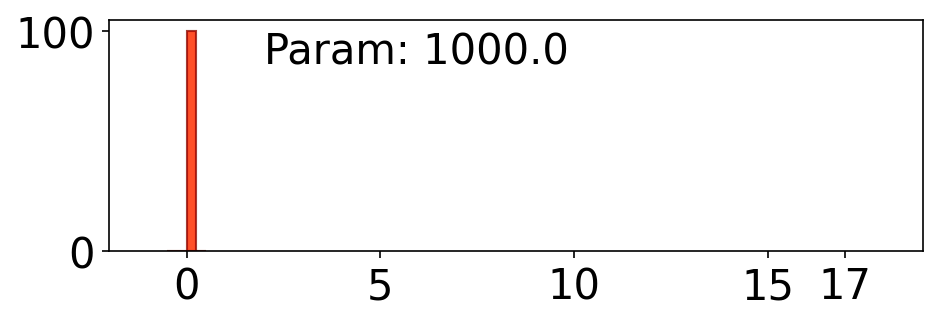}}
    
            \subfloat{\centering
            \makebox[10pt][l]{\raisebox{3pt}{\rotatebox[origin=l]{90}{\scalebox{.4}{Perspective}}}}
            \includegraphics[width=1.03\linewidth]{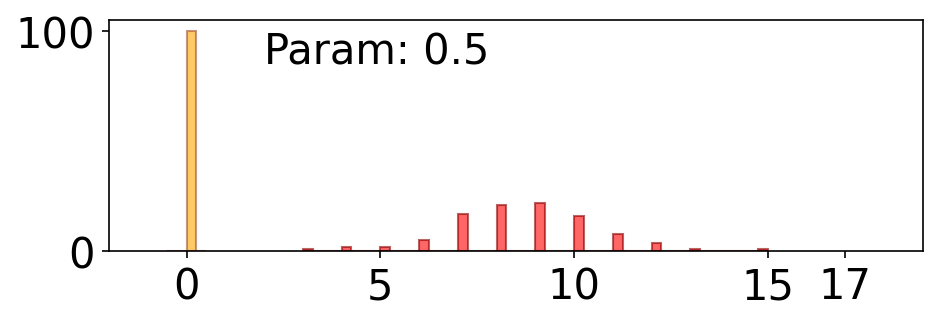}
            \includegraphics[width=1.03\linewidth]{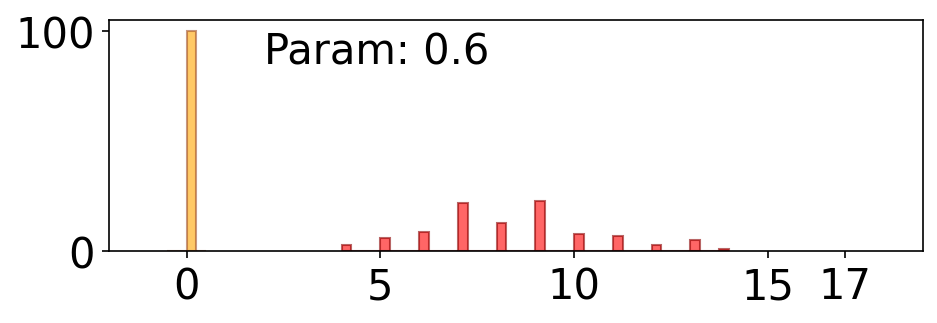}
            \includegraphics[width=1.03\linewidth]{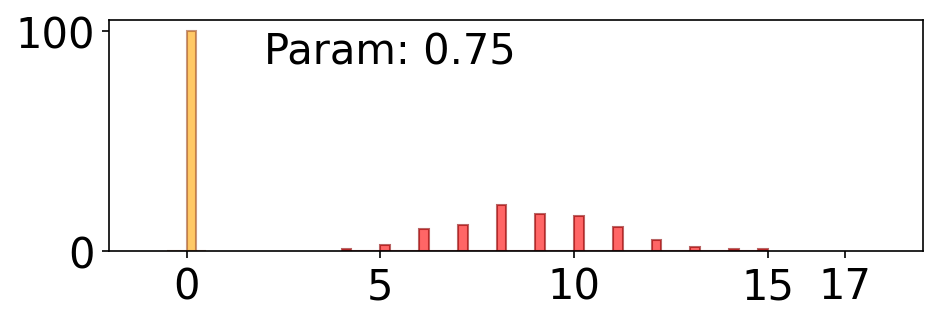}
            \includegraphics[width=1.03\linewidth]{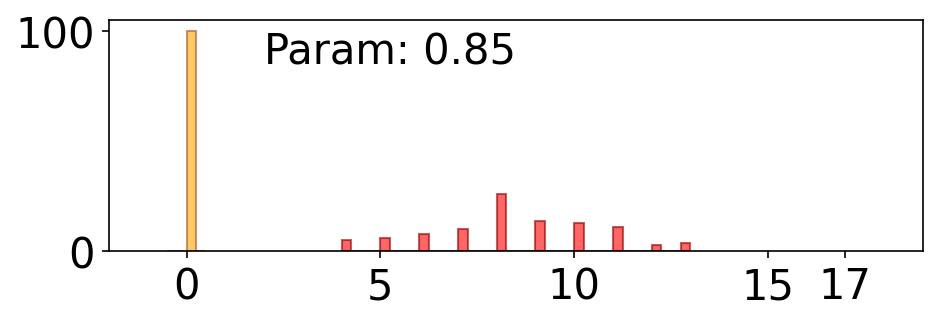}
            \includegraphics[width=1.03\linewidth]{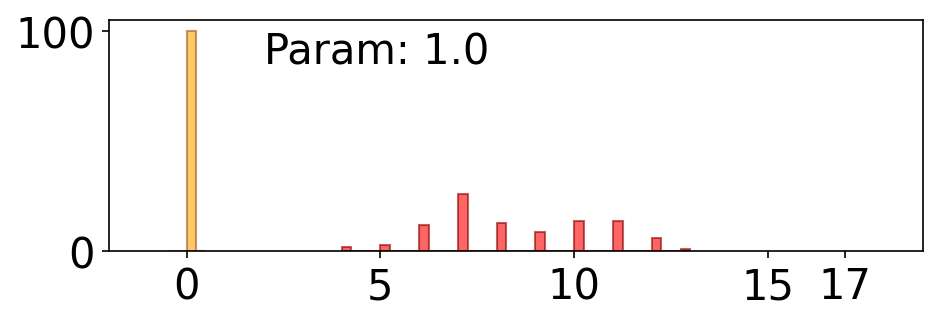}}
    
            \subfloat{\centering
            \makebox[10pt][l]{\raisebox{3pt}{\rotatebox[origin=l]{90}{\tiny Shear}}}
            \includegraphics[width=1.03\linewidth]{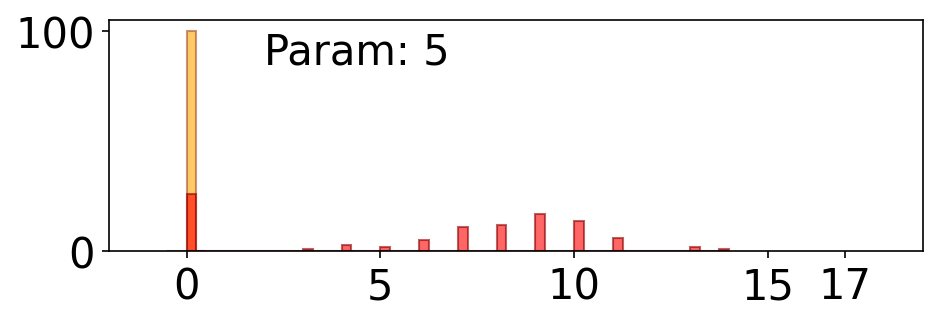}
            \includegraphics[width=1.03\linewidth]{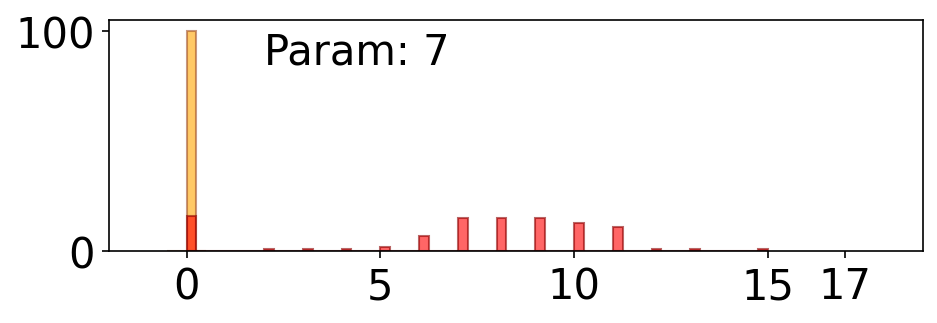}
            \includegraphics[width=1.03\linewidth]{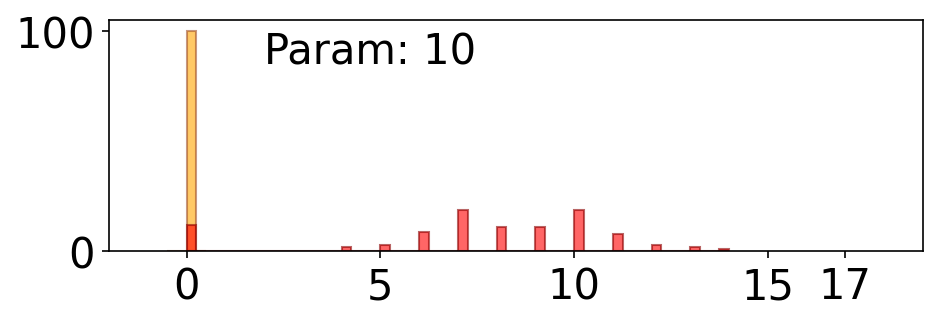}
            \includegraphics[width=1.03\linewidth]{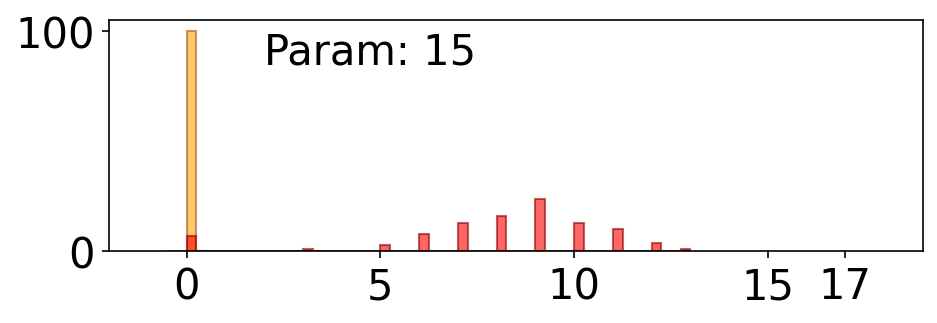}
            \includegraphics[width=1.03\linewidth]{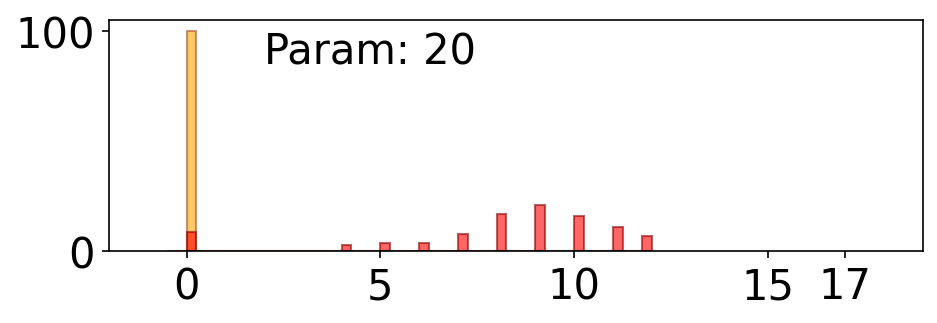}}

            \subfloat{\centering
            \makebox[10pt][l]{\raisebox{1pt}{\rotatebox[origin=l]{90}{\scalebox{.4}{Regen-VAE}}}}
            \includegraphics[width=1.03\linewidth]{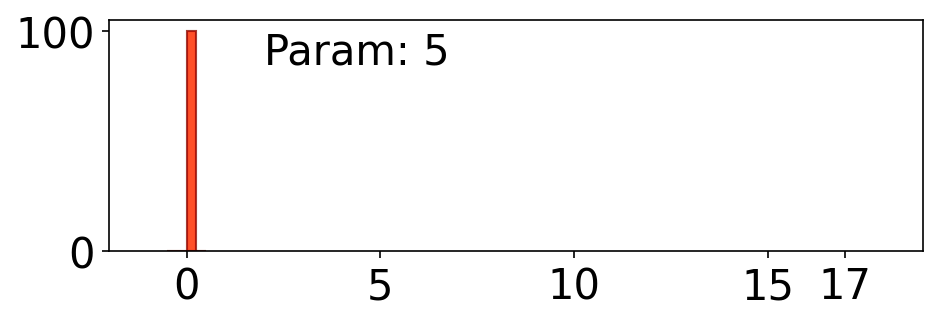}
            \includegraphics[width=1.03\linewidth]{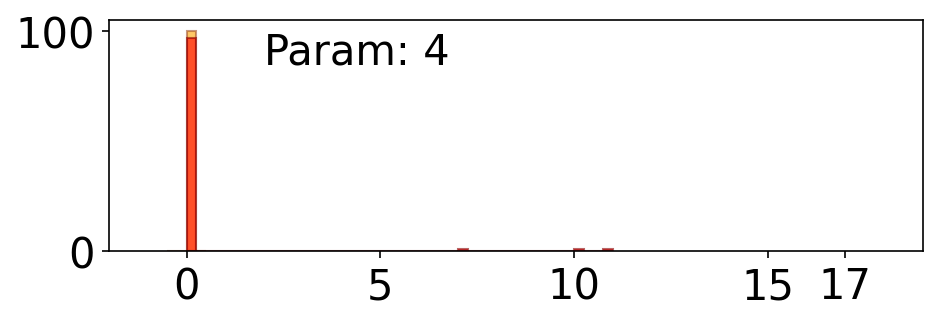}
            \includegraphics[width=1.03\linewidth]{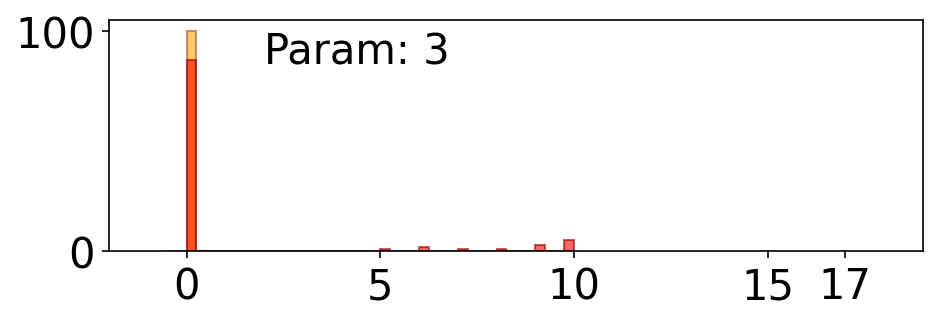}
            \includegraphics[width=1.03\linewidth]{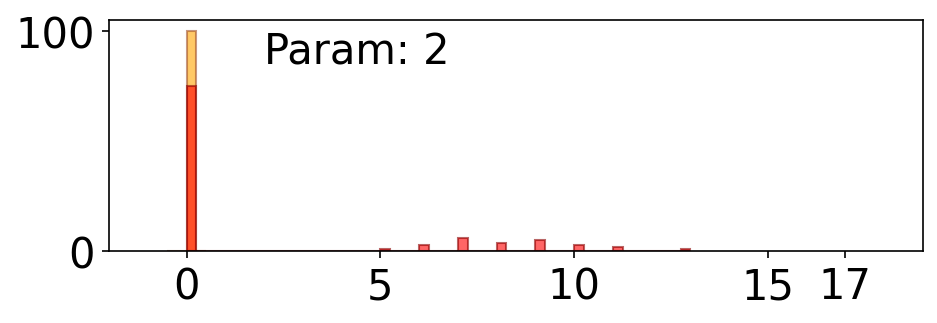}
            \includegraphics[width=1.03\linewidth]{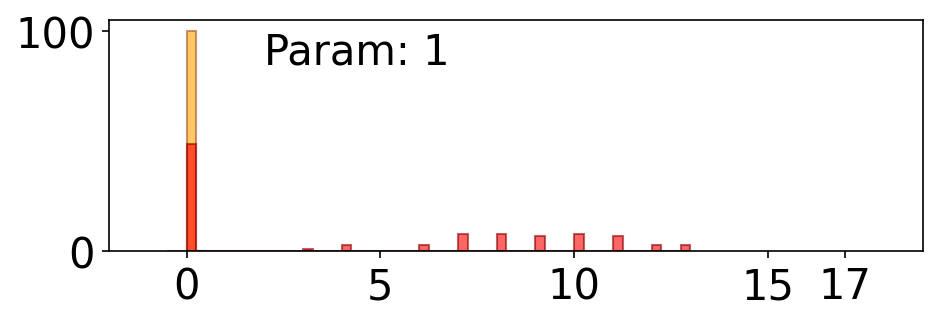}}
    
            \subfloat{\centering
            \makebox[10pt][l]{\raisebox{0pt}{\rotatebox[origin=l]{90}{\scalebox{.4}{Regen-Diff}}}}
            \includegraphics[width=1.03\linewidth]{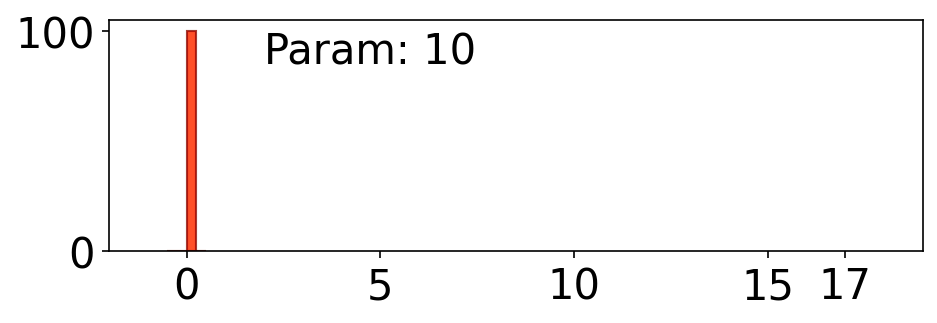}
            \includegraphics[width=1.03\linewidth]{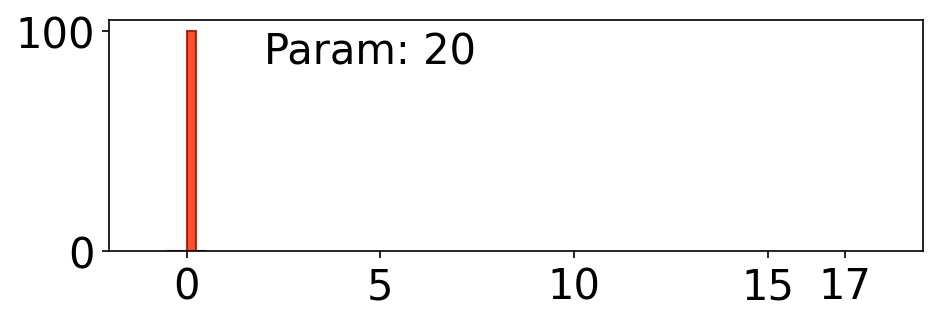}
            \includegraphics[width=1.03\linewidth]{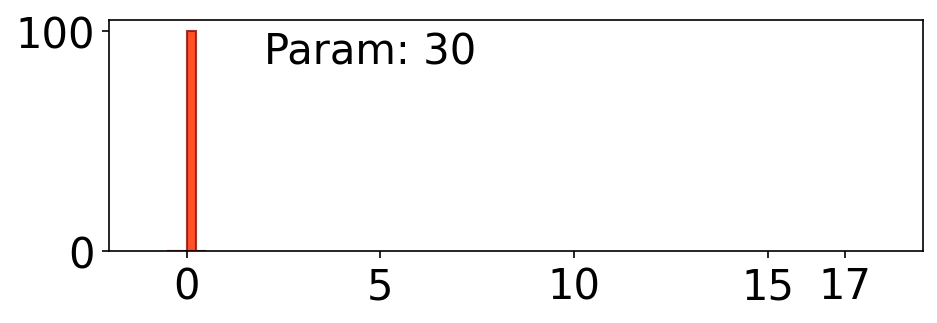}
            \includegraphics[width=1.03\linewidth]{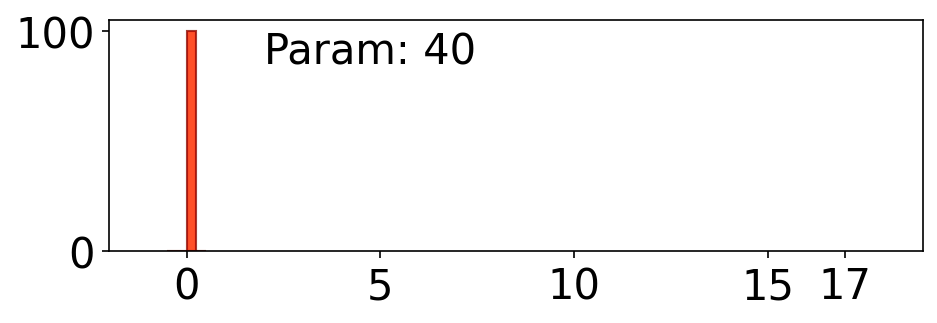}
            \includegraphics[width=1.03\linewidth]{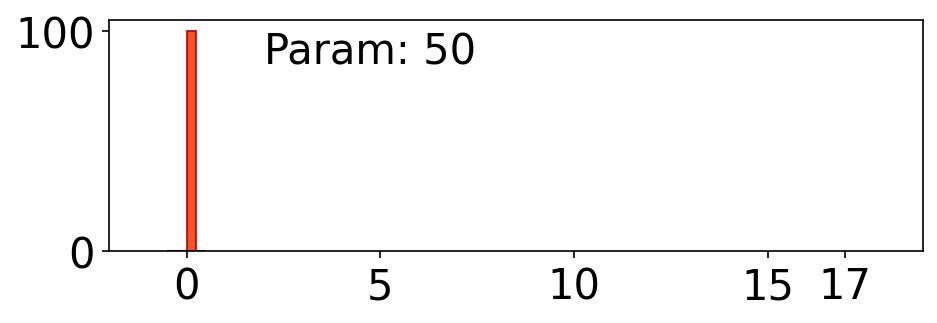}}
        \end{subfigure}
        
    \caption{The effects of different perturbations on $\|\mathbf{w}-\hat{\mathbf{w}}^{\prime}\|_H$. From left to right, the perturbations are set with different parameters (Param). 
    Watermark: PRCW decoding component (1e-5 FPR). In this case, the number of bits is $\left\lceil\log_2(1/10^{-5})\right\rceil=17$.}
    \label{fig: PRCW decoding 1e-5fpr attack shifting}
\end{figure}

\begin{figure}[t]
        \begin{subfigure}[c]{0.185\textwidth}
            \centering
            \subfloat{\centering
            \makebox[10pt][l]{\raisebox{5pt}{\rotatebox[origin=l]{90}{\tiny Blurring}}}
            \includegraphics[width=1.03\linewidth]{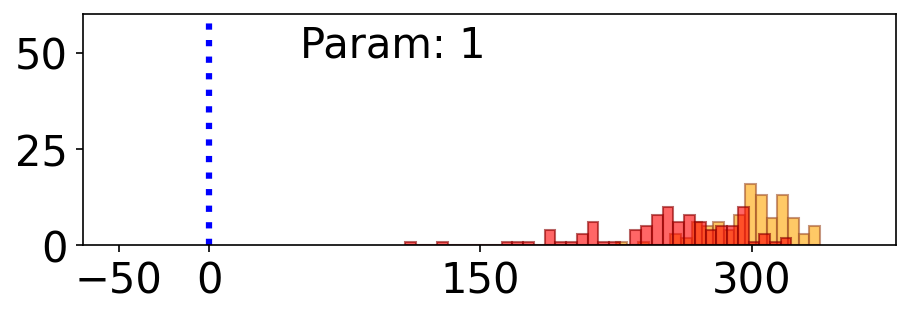}
            \includegraphics[width=1.03\linewidth]{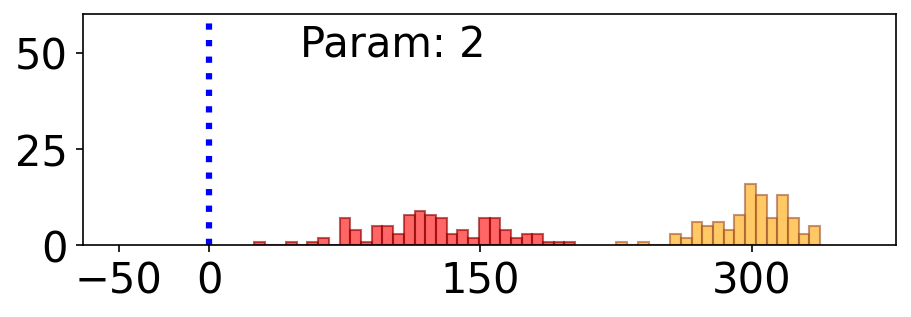}
            \includegraphics[width=1.03\linewidth]{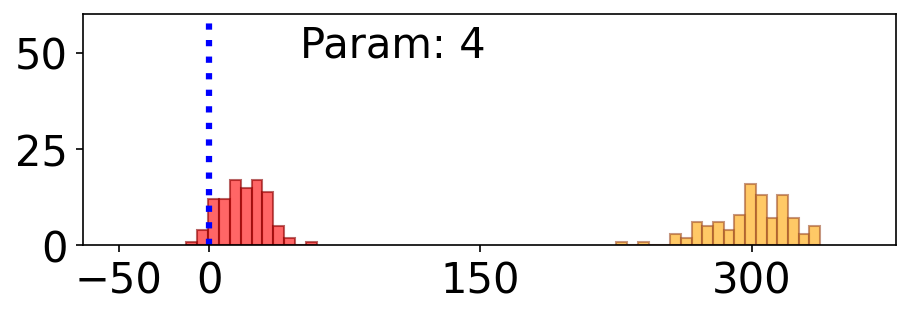}
            \includegraphics[width=1.03\linewidth]{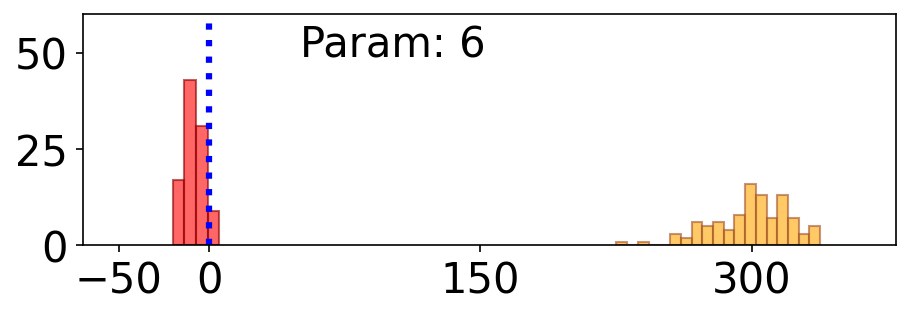}
            \includegraphics[width=1.03\linewidth]{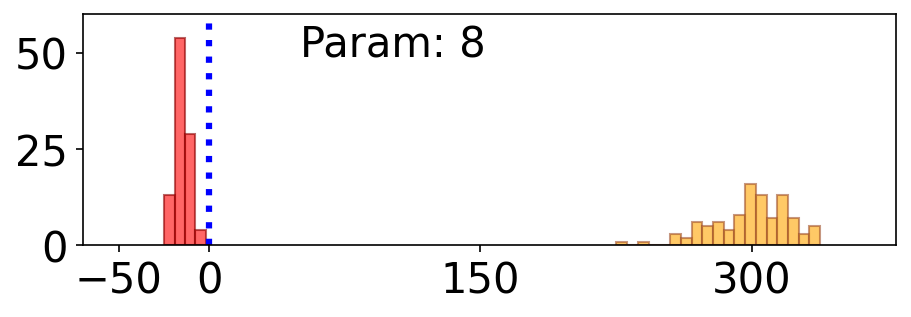}}
    
            \subfloat{\centering
            \makebox[10pt][l]{\raisebox{7pt}{\rotatebox[origin=l]{90}{\tiny Jitter}}}
            \includegraphics[width=1.03\linewidth]{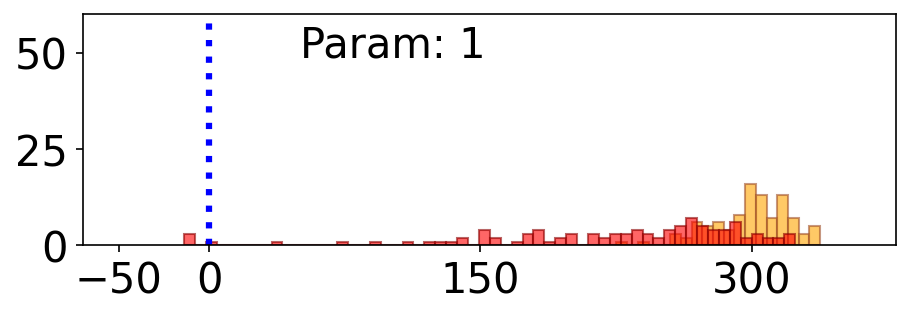}
            \includegraphics[width=1.03\linewidth]{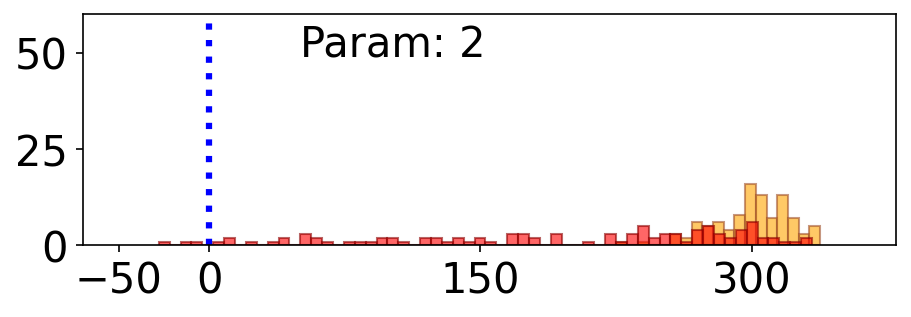}
            \includegraphics[width=1.03\linewidth]{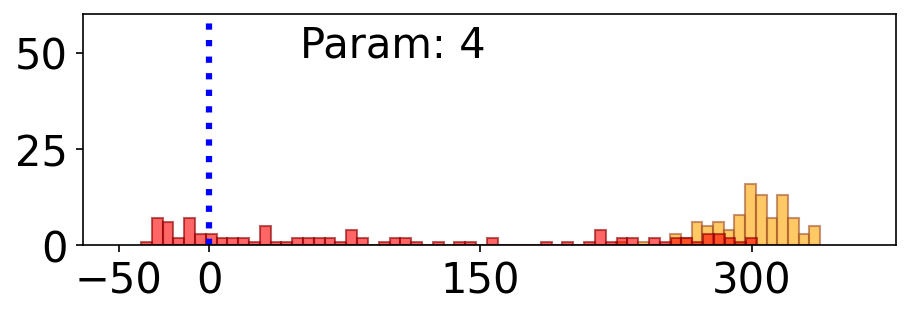}
            \includegraphics[width=1.03\linewidth]{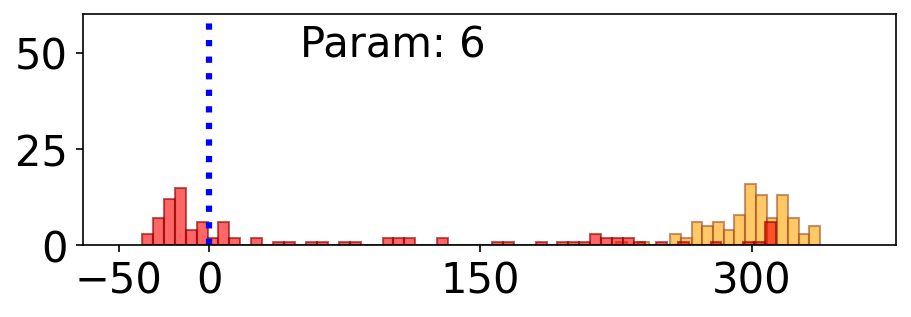}
            \includegraphics[width=1.03\linewidth]{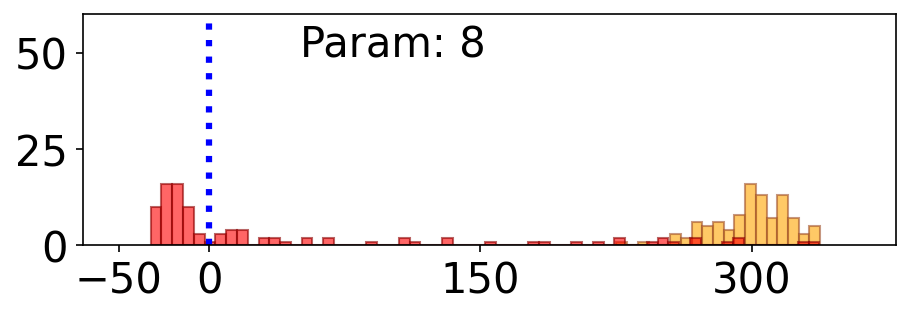}}
    
            \subfloat{\centering
            \makebox[10pt][l]{\raisebox{7pt}{\rotatebox[origin=l]{90}{\tiny JPEG}}}
            \includegraphics[width=1.03\linewidth]{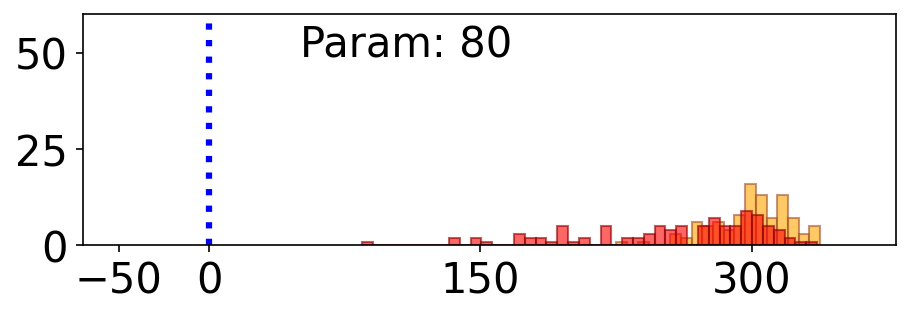}
            \includegraphics[width=1.03\linewidth]{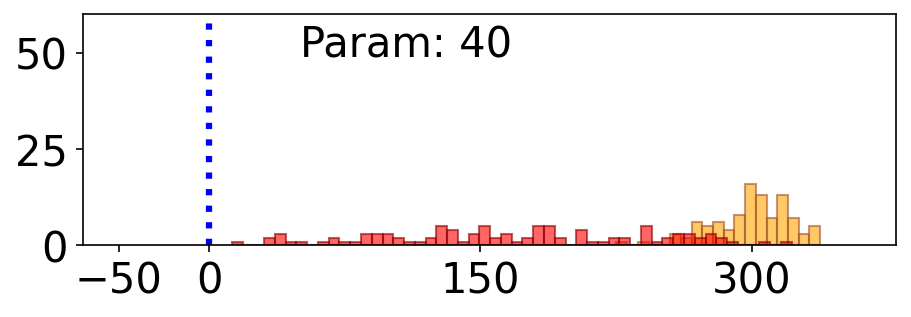}
            \includegraphics[width=1.03\linewidth]{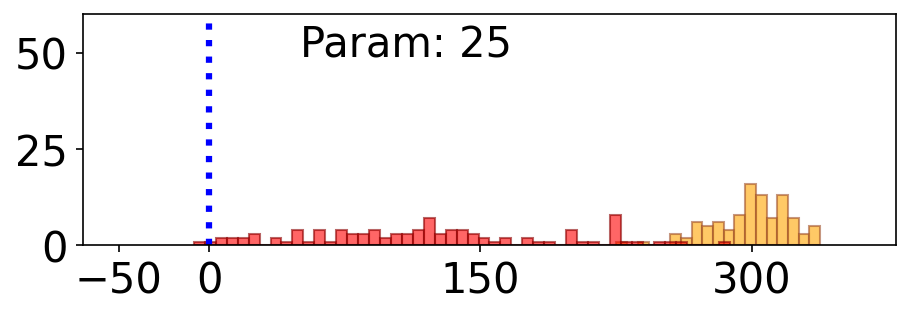}
            \includegraphics[width=1.03\linewidth]{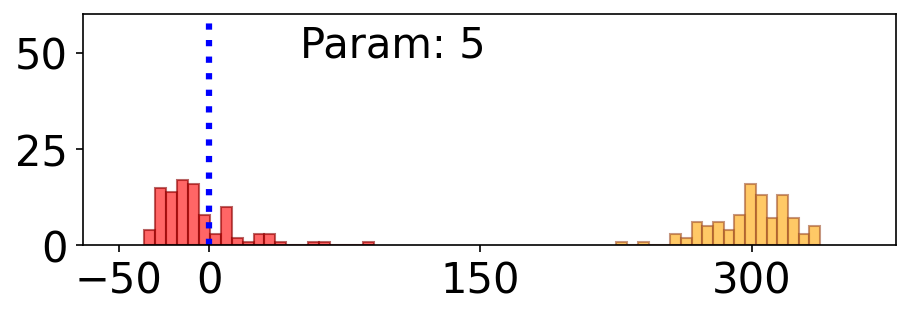}
            \includegraphics[width=1.03\linewidth]{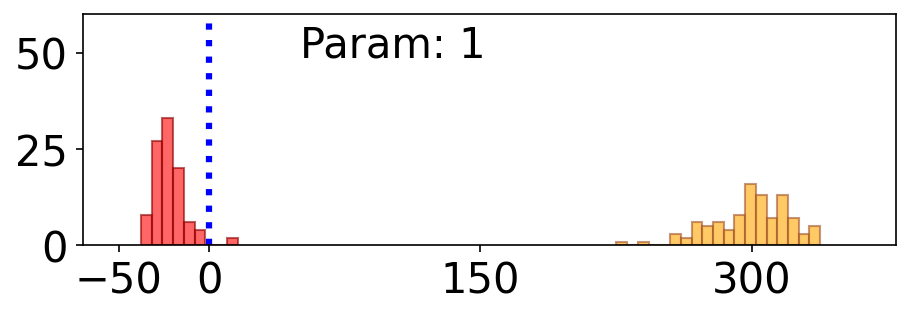}}
    
            \subfloat{\centering
            \makebox[10pt][l]{\raisebox{3pt}{\rotatebox[origin=l]{90}{\tiny Rotation}}}
            \includegraphics[width=1.03\linewidth]{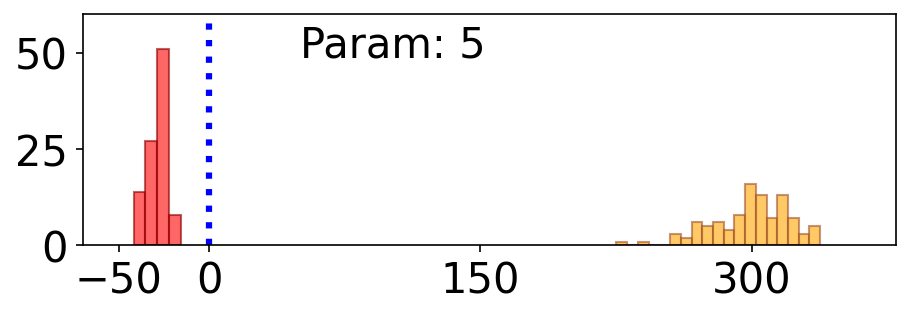}
            \includegraphics[width=1.03\linewidth]{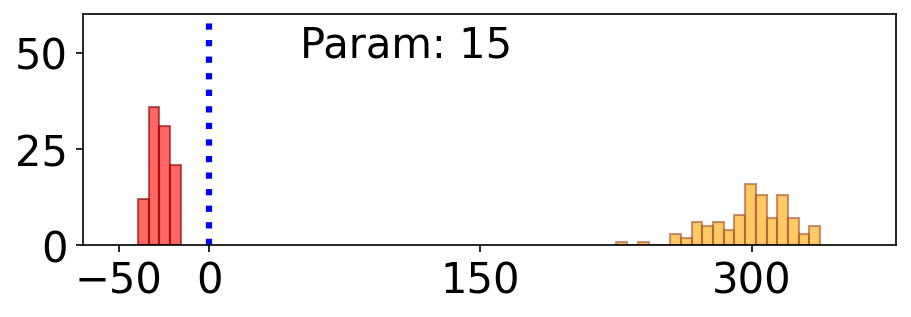}
            \includegraphics[width=1.03\linewidth]{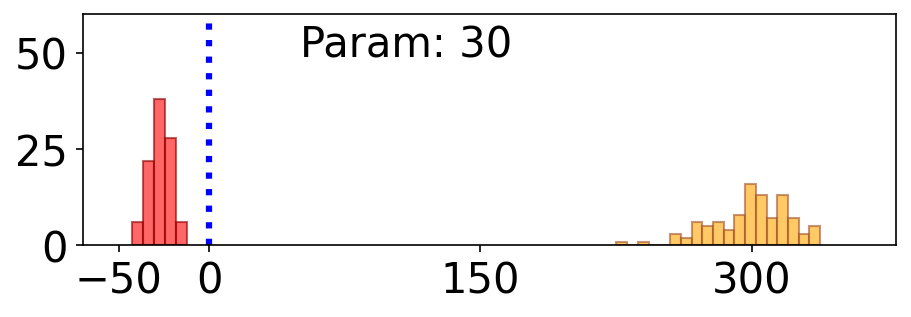}
            \includegraphics[width=1.03\linewidth]{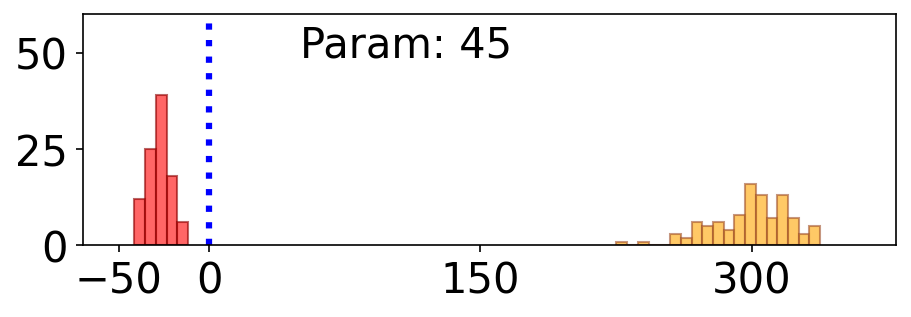}
            \includegraphics[width=1.03\linewidth]{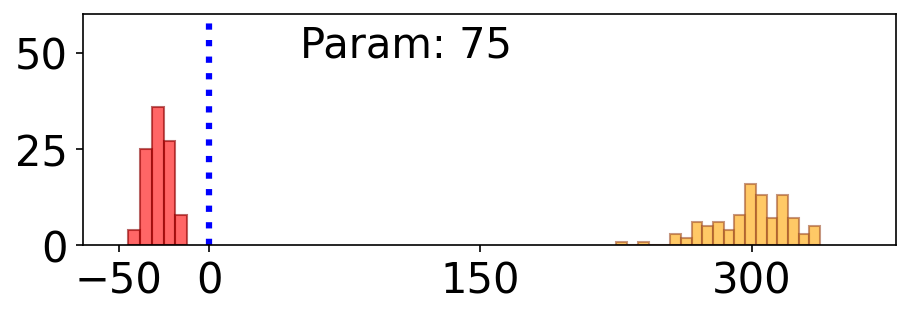}}
    
            \subfloat{\centering
            \makebox[10pt][l]{\raisebox{3pt}{\rotatebox[origin=l]{90}{\tiny Cropping}}}
            \includegraphics[width=1.03\linewidth]{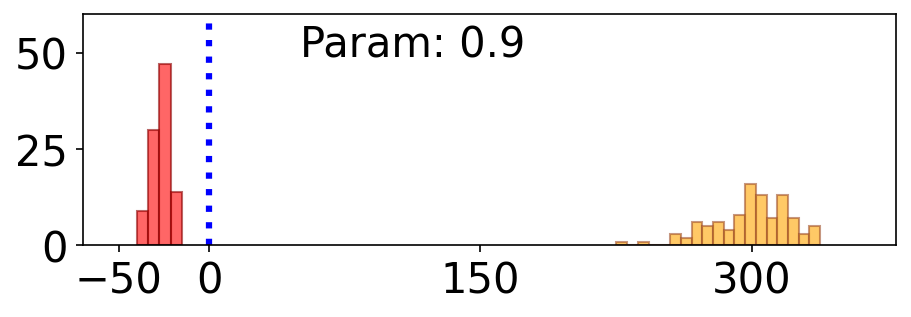}
            \includegraphics[width=1.03\linewidth]{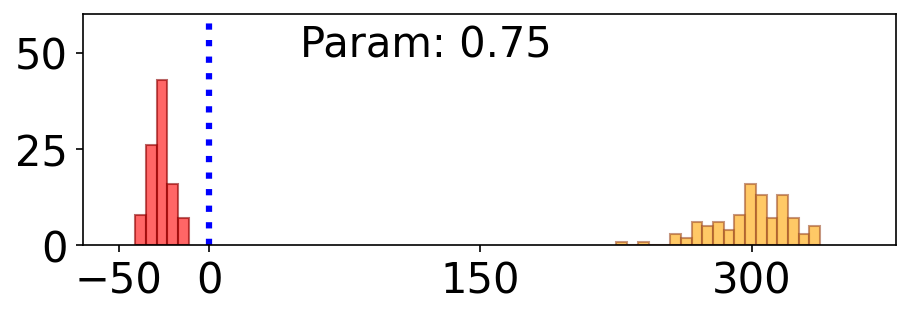}
            \includegraphics[width=1.03\linewidth]{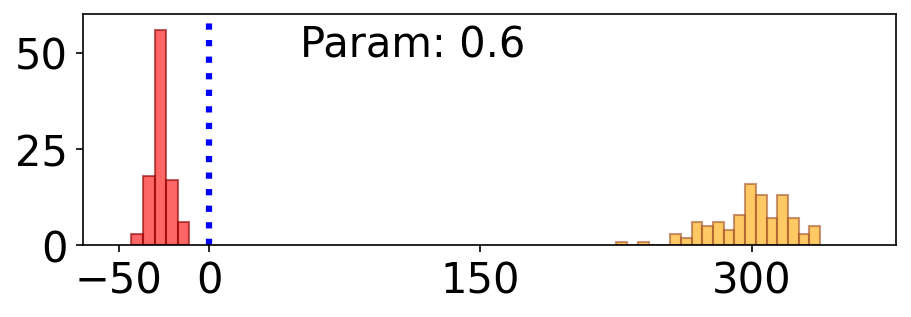}
            \includegraphics[width=1.03\linewidth]{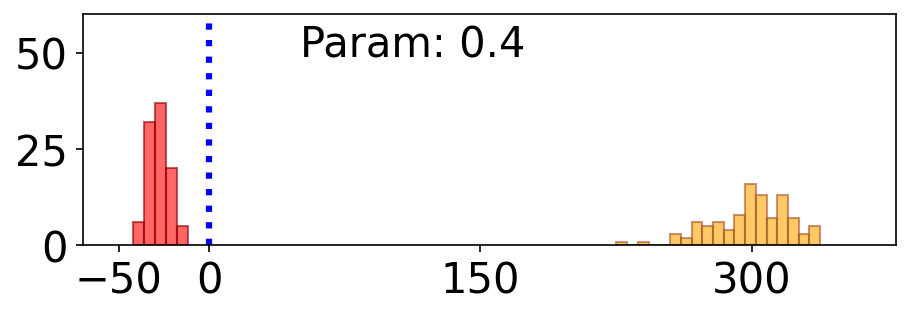}
            \includegraphics[width=1.03\linewidth]{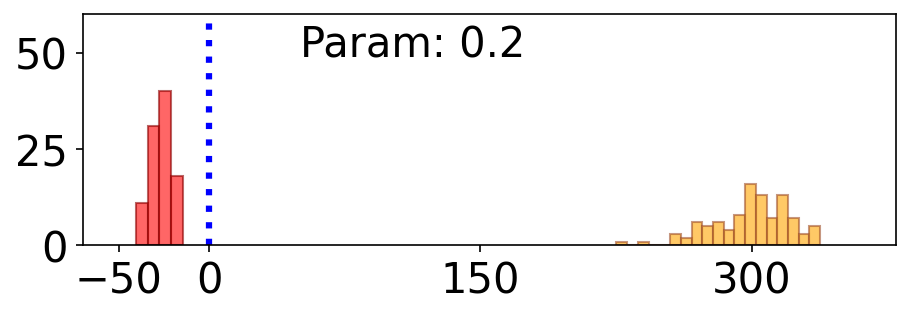}}
    
            \subfloat{\centering
            \makebox[10pt][l]{\raisebox{3pt}{\rotatebox[origin=l]{90}{\tiny Elastic}}}
            \includegraphics[width=1.03\linewidth]{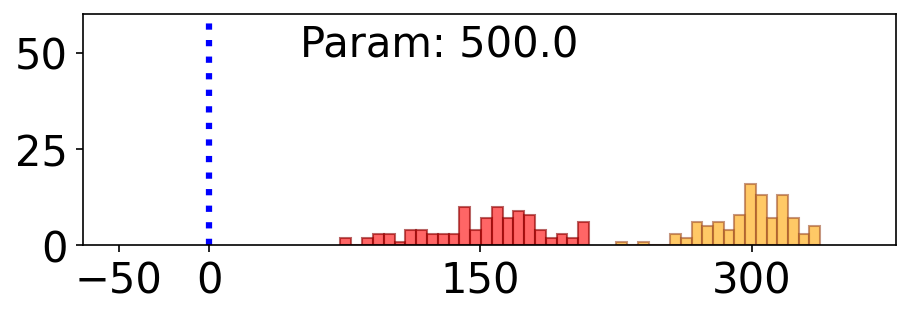}
            \includegraphics[width=1.03\linewidth]{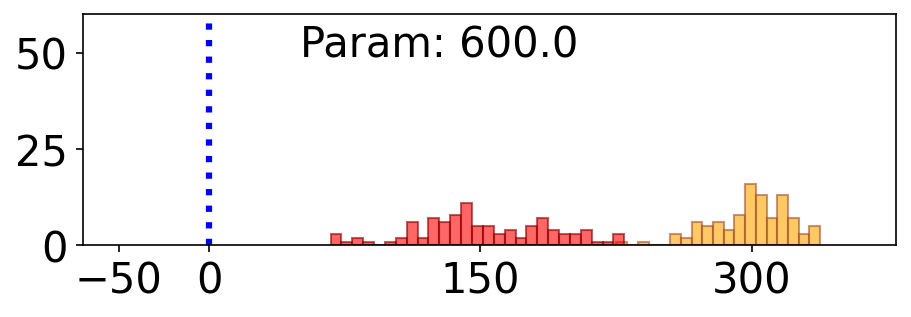}
            \includegraphics[width=1.03\linewidth]{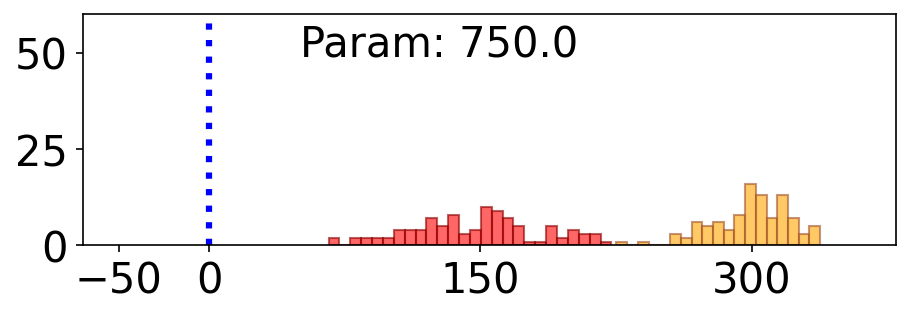}
            \includegraphics[width=1.03\linewidth]{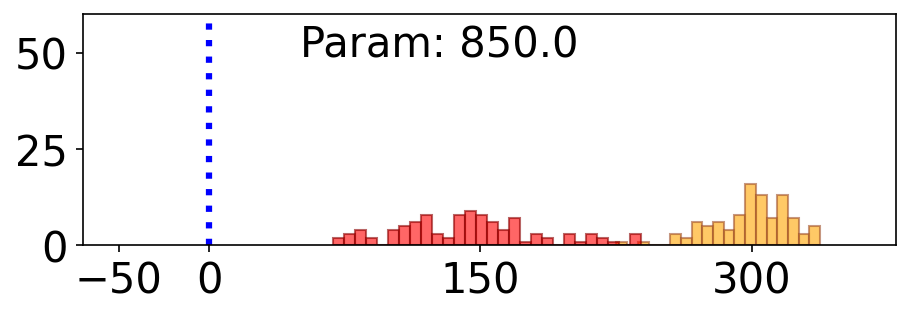}
            \includegraphics[width=1.03\linewidth]{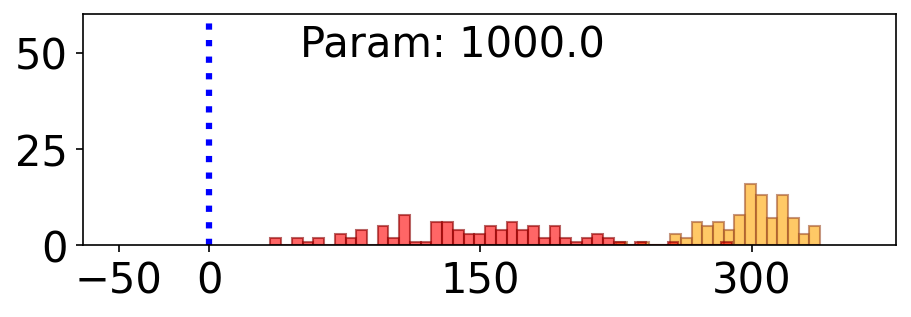}}
    
            \subfloat{\centering
            \makebox[10pt][l]{\raisebox{3pt}{\rotatebox[origin=l]{90}{\scalebox{.4}{Perspective}}}}
            \includegraphics[width=1.03\linewidth]{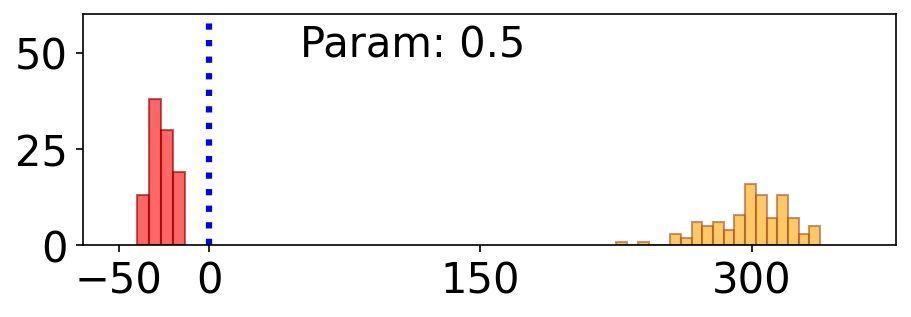}
            \includegraphics[width=1.03\linewidth]{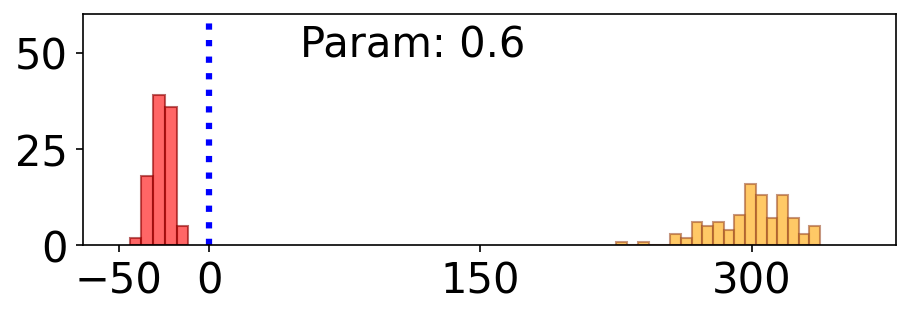}
            \includegraphics[width=1.03\linewidth]{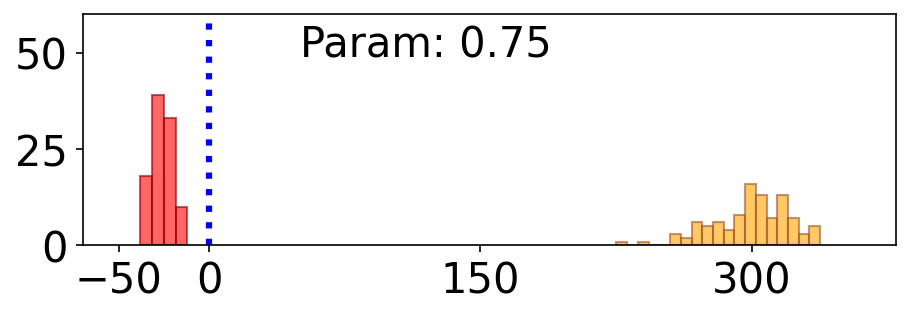}
            \includegraphics[width=1.03\linewidth]{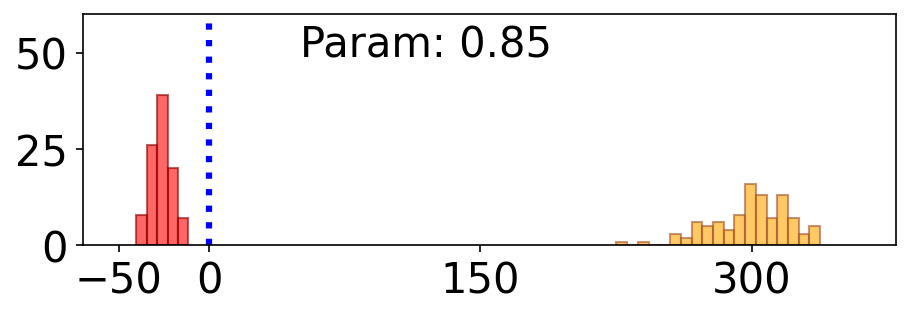}
            \includegraphics[width=1.03\linewidth]{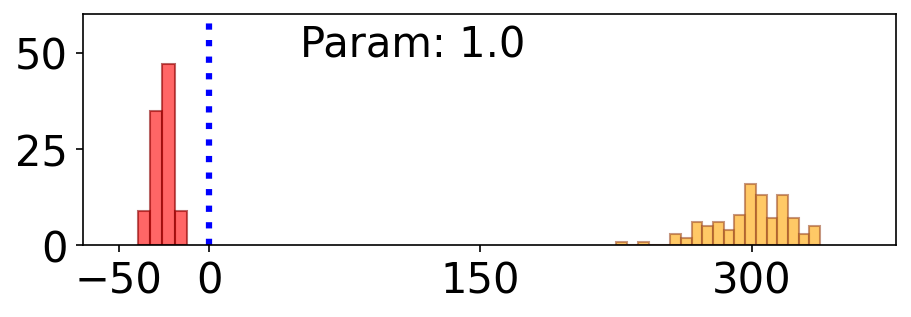}}
    
            \subfloat{\centering
            \makebox[10pt][l]{\raisebox{3pt}{\rotatebox[origin=l]{90}{\tiny Shear}}}
            \includegraphics[width=1.03\linewidth]{figs/figs/prcw/hist_detection_fpr_0.01_shear_5.png}
            \includegraphics[width=1.03\linewidth]{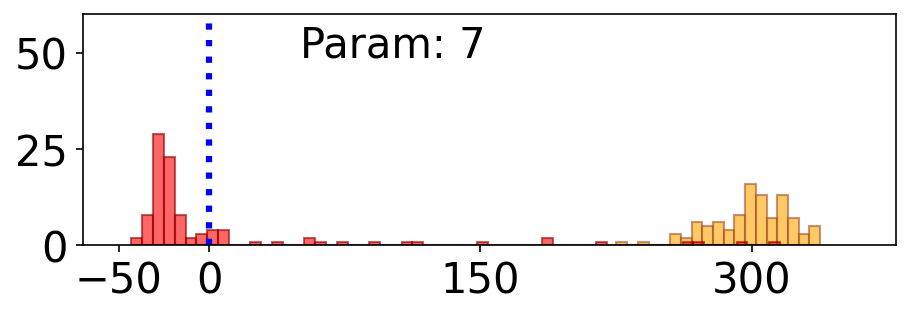}
            \includegraphics[width=1.03\linewidth]{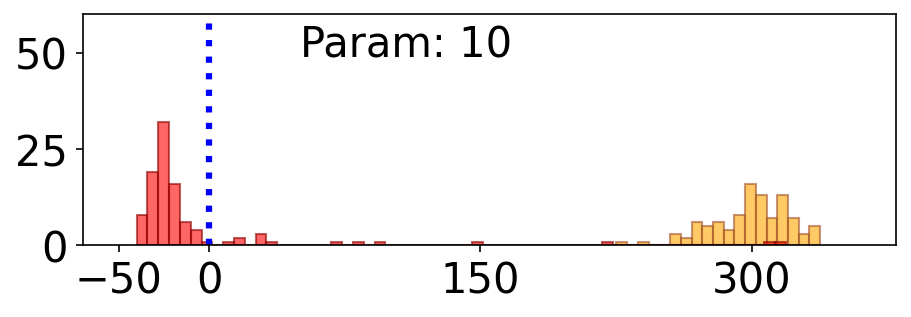}
            \includegraphics[width=1.03\linewidth]{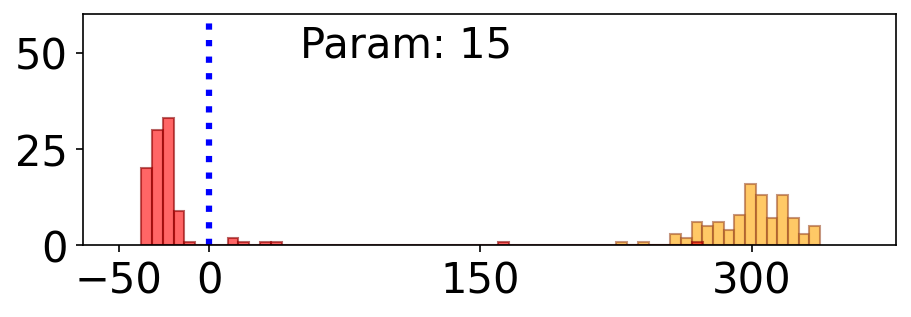}
            \includegraphics[width=1.03\linewidth]{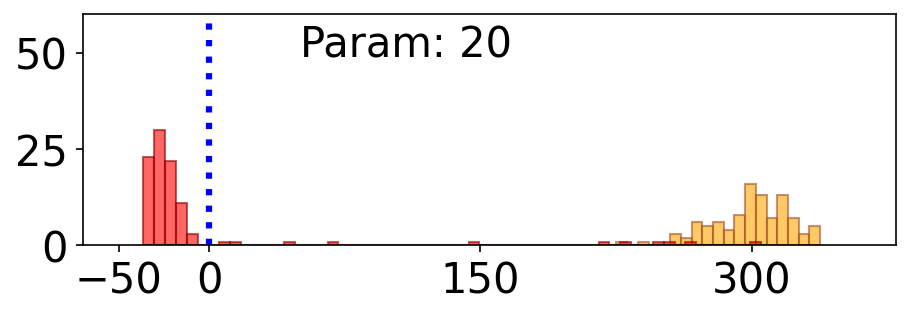}}

            \subfloat{\centering
            \makebox[10pt][l]{\raisebox{1pt}{\rotatebox[origin=l]{90}{\scalebox{.4}{Regen-VAE}}}}
            \includegraphics[width=1.03\linewidth]{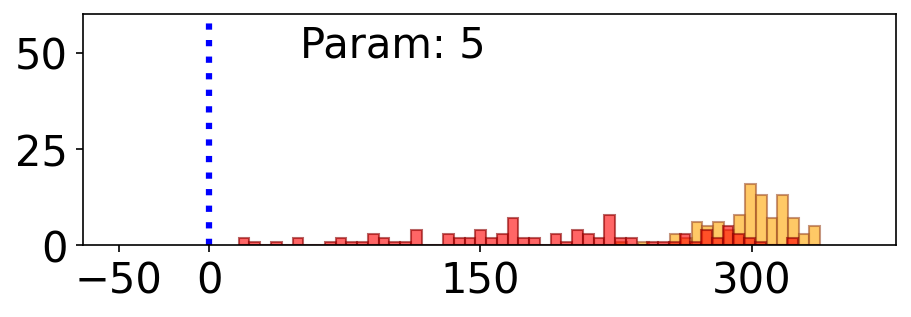}
            \includegraphics[width=1.03\linewidth]{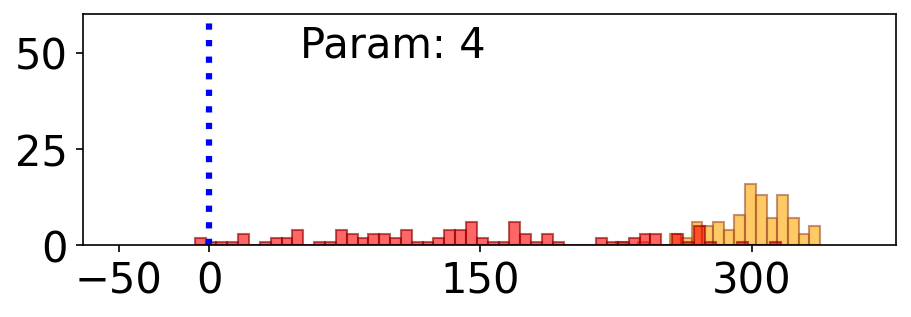}
            \includegraphics[width=1.03\linewidth]{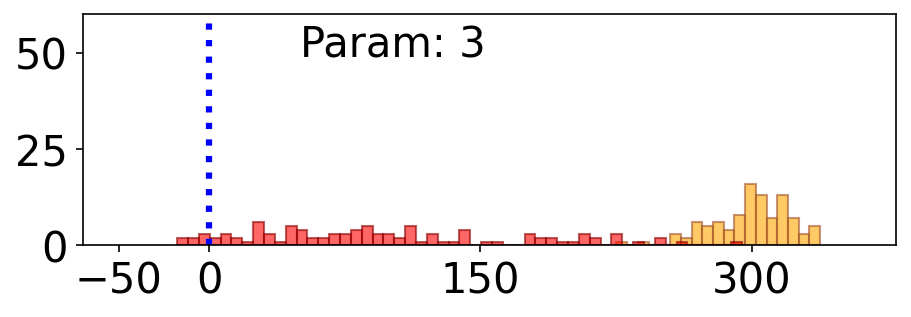}
            \includegraphics[width=1.03\linewidth]{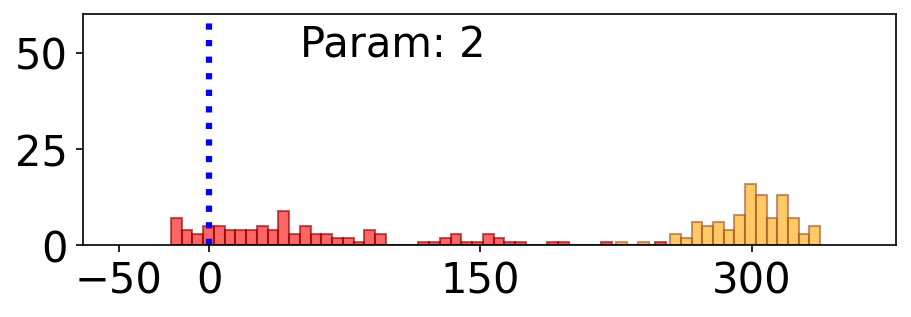}
            \includegraphics[width=1.03\linewidth]{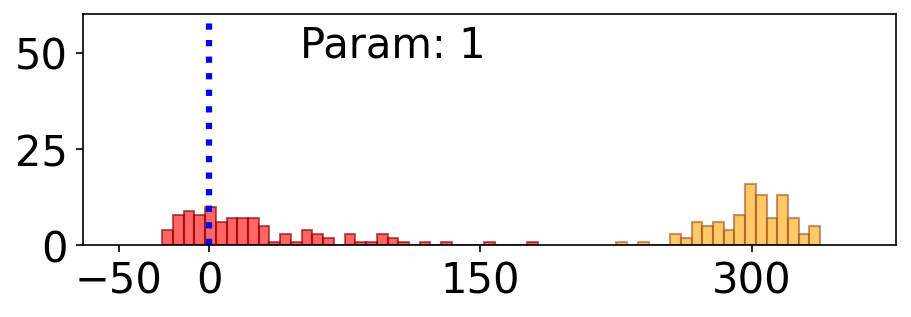}}
    
            \subfloat{\centering
            \makebox[10pt][l]{\raisebox{0pt}{\rotatebox[origin=l]{90}{\scalebox{.4}{Regen-Diff}}}}
            \includegraphics[width=1.03\linewidth]{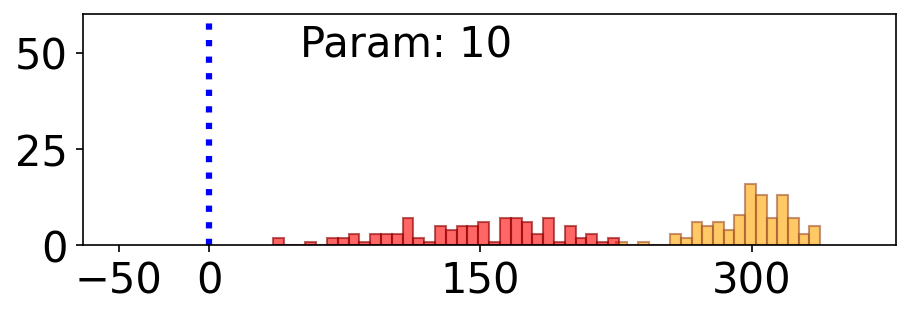}
            \includegraphics[width=1.03\linewidth]{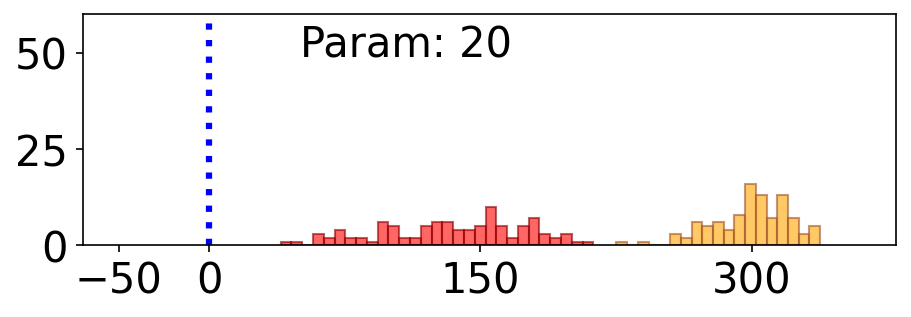}
            \includegraphics[width=1.03\linewidth]{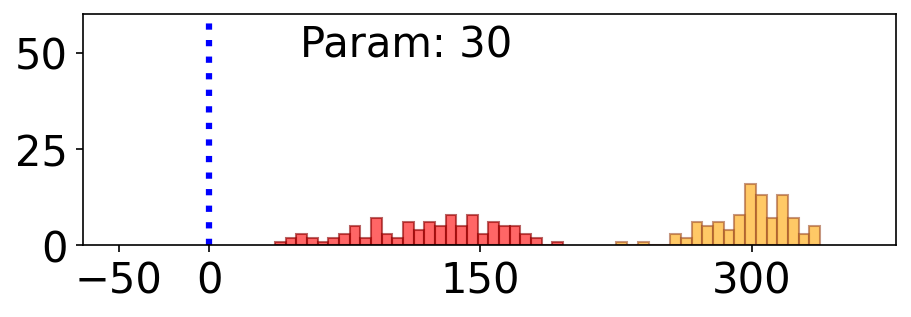}
            \includegraphics[width=1.03\linewidth]{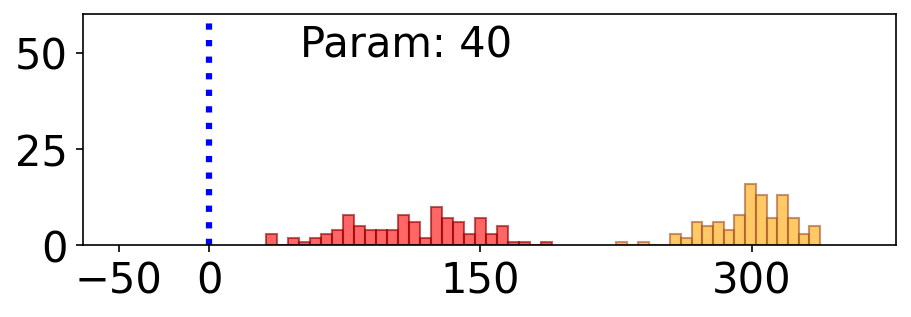}
            \includegraphics[width=1.03\linewidth]{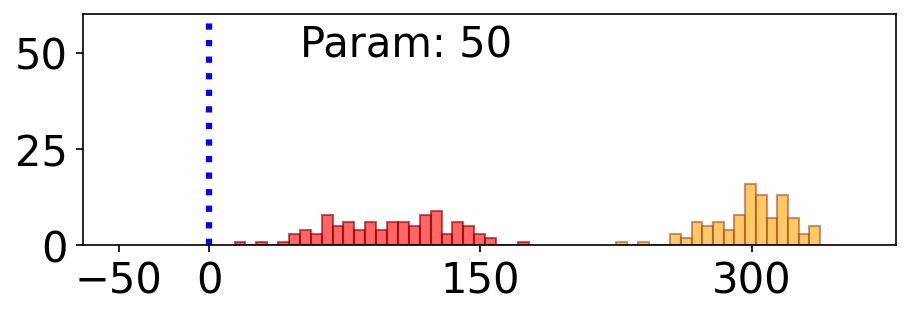}}
        \end{subfigure}
        
    \caption{The effects of different perturbations by Eq.~(\ref{eq: PRCW Alog 3 L 5}). From left to right, the perturbations are set with different parameters (Param). 
    Watermark: PRCW detection component (1e-5 FPR).}
    \label{fig: PRCW detection 1e-5fpr attack shifting}
\end{figure}

\begin{table}[t]
  \centering
  \small
  \caption{Effects of different number of inference time steps on $\|\mathbf{w}-\hat{\mathbf{w}}^{\prime}\|$. The number of inference time steps were set to 10, 30, 50 (default), 70, and 90. We present the mean and the maximum values of a set of $\|\mathbf{w}-\hat{\mathbf{w}}^{\prime}\|$.}\label{Table: numerical error}
  \begin{tabular}{|c|p{1.05cm}p{1.05cm}|p{1.05cm}p{1.05cm}|p{1.05cm}p{1.05cm}|p{1.05cm}p{1.05cm}|}
    \toprule
    WM Method & \multicolumn{2}{c|}{Tree-Ring} & \multicolumn{2}{c|}{RingID} & \multicolumn{2}{c|}{HSQR} & \multicolumn{2}{c|}{HSTR} \\
    \midrule[2pt]
     & \multicolumn{1}{c}{mean} & \multicolumn{1}{c|}{max} & \multicolumn{1}{c}{mean} & \multicolumn{1}{c|}{max} & \multicolumn{1}{c}{mean} & \multicolumn{1}{c|}{max} & \multicolumn{1}{c}{mean} & \multicolumn{1}{c|}{max} \\
    \hline
    $t=10$ & \multicolumn{1}{c}{55.36} & \multicolumn{1}{c|}{62.38} & \multicolumn{1}{c}{29.34} & \multicolumn{1}{c|}{45.37} & \multicolumn{1}{c}{43.17} & \multicolumn{1}{c|}{48.91} & \multicolumn{1}{c}{19.94} & \multicolumn{1}{c|}{25.93} \\
    $t=30$ & \multicolumn{1}{c}{53.96} & \multicolumn{1}{c|}{60.62} & \multicolumn{1}{c}{26.31} & \multicolumn{1}{c|}{45.52} & \multicolumn{1}{c}{41.69} & \multicolumn{1}{c|}{47.76} & \multicolumn{1}{c}{17.48} & \multicolumn{1}{c|}{24.95} \\
    $t=50$ & \multicolumn{1}{c}{53.72} & \multicolumn{1}{c|}{60.53} & \multicolumn{1}{c}{25.63} & \multicolumn{1}{c|}{45.91} & \multicolumn{1}{c}{41.33} & \multicolumn{1}{c|}{47.53} & \multicolumn{1}{c}{16.94} & \multicolumn{1}{c|}{24.86} \\
    $t=70$ & \multicolumn{1}{c}{53.58} & \multicolumn{1}{c|}{60.28} & \multicolumn{1}{c}{25.36} & \multicolumn{1}{c|}{44.42} & \multicolumn{1}{c}{41.30} & \multicolumn{1}{c|}{47.75} & \multicolumn{1}{c}{16.68} & \multicolumn{1}{c|}{23.88} \\
    $t=90$ & \multicolumn{1}{c}{53.52} & \multicolumn{1}{c|}{60.16} & \multicolumn{1}{c}{25.03} & \multicolumn{1}{c|}{43.66} & \multicolumn{1}{c}{41.19} & \multicolumn{1}{c|}{47.75} & \multicolumn{1}{c}{16.55} & \multicolumn{1}{c|}{23.54} \\ 
    \bottomrule
  \end{tabular}
\end{table}

\begin{figure}[t]
    \begin{subfigure}[c]{0.185\textwidth}
        \centering\subfloat{\centering\makebox[10pt][l]{\raisebox{5pt}{\rotatebox[origin=l]{90}{\tiny Tree-Ring}}}
        \includegraphics[width=\figWidth\linewidth]{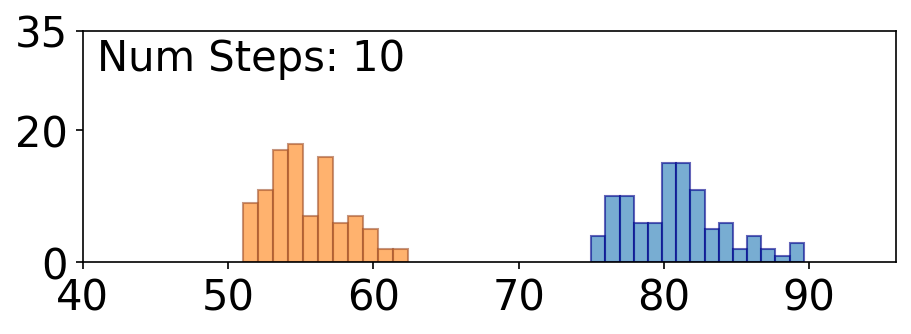}
        \includegraphics[width=\figWidth\linewidth]{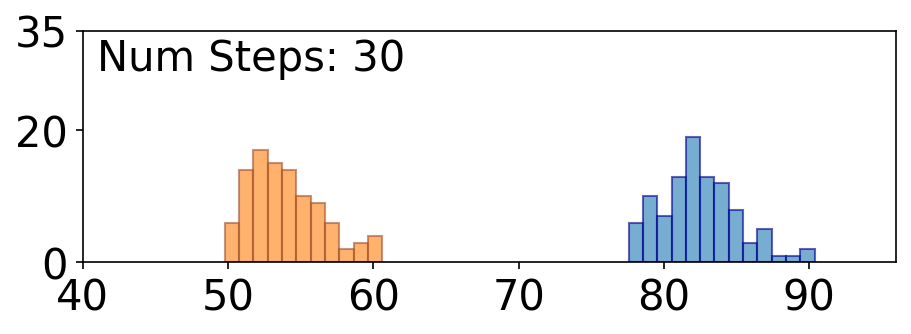}
        \includegraphics[width=\figWidth\linewidth]{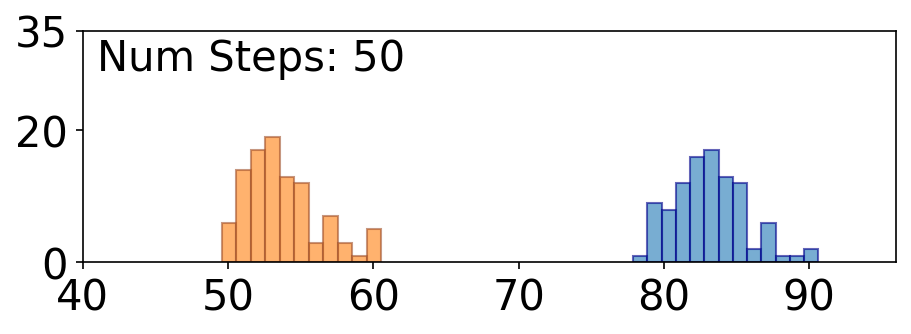}
        \includegraphics[width=\figWidth\linewidth]{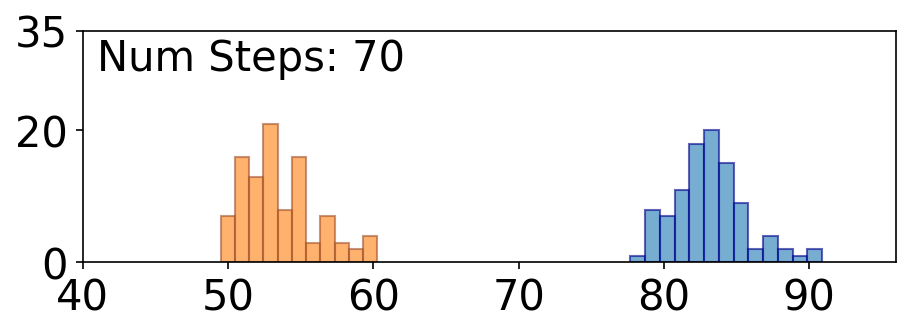}
        \includegraphics[width=\figWidth\linewidth]{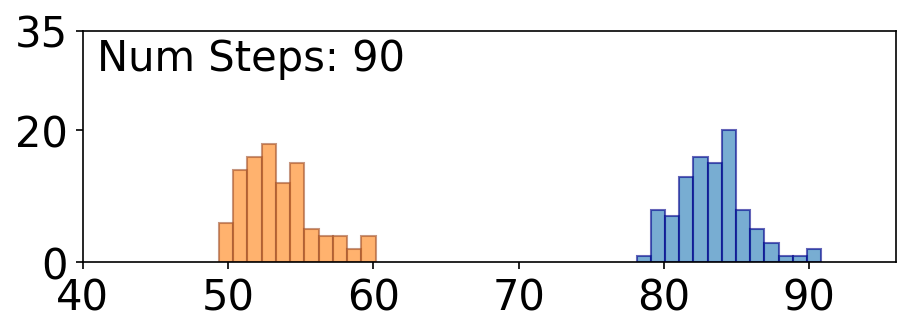}}

        \subfloat{\centering\makebox[10pt][l]{\raisebox{5pt}{\rotatebox[origin=l]{90}{\tiny RingID}}}
        \includegraphics[width=\figWidth\linewidth]{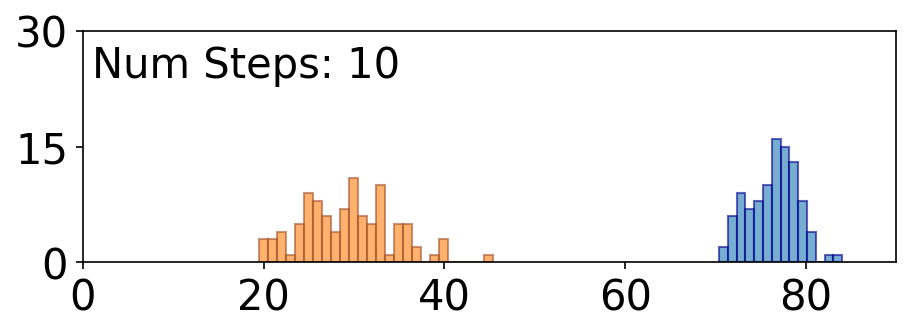}
        \includegraphics[width=\figWidth\linewidth]{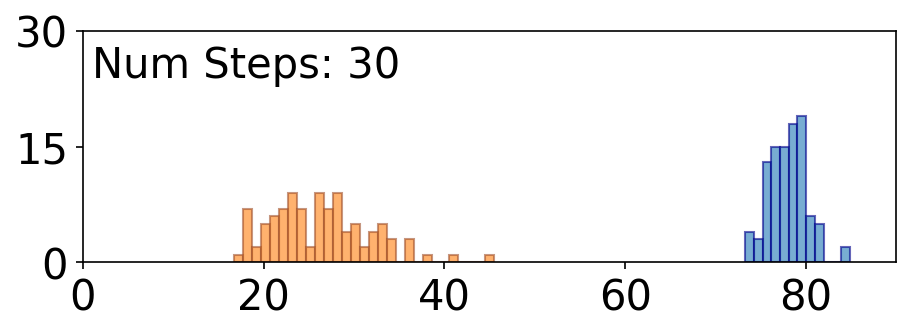}
        \includegraphics[width=\figWidth\linewidth]{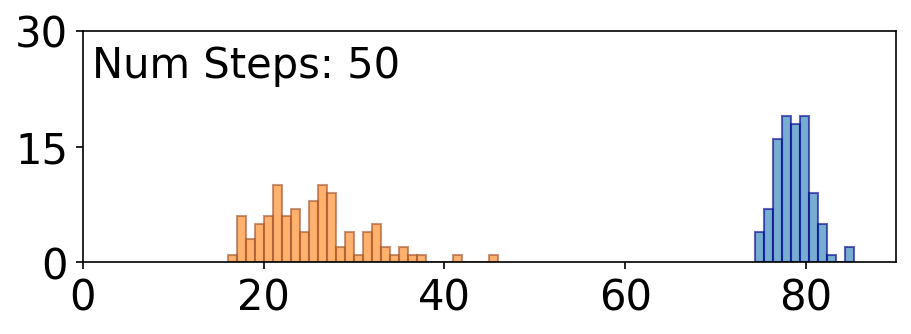}
        \includegraphics[width=\figWidth\linewidth]{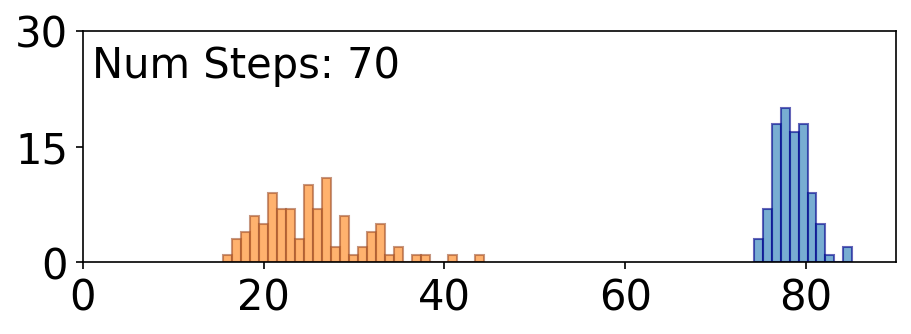}
        \includegraphics[width=\figWidth\linewidth]{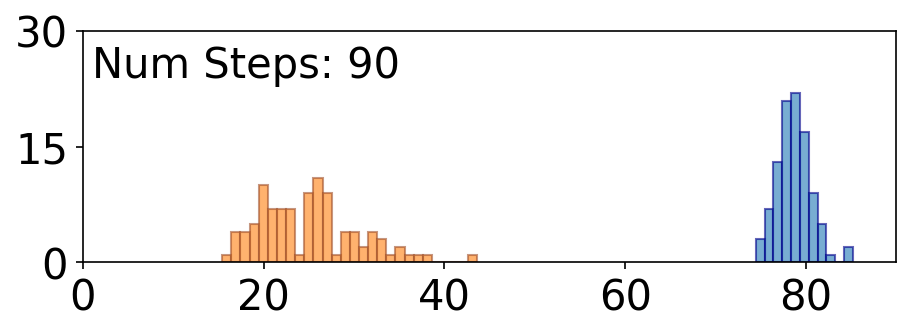}}

        \subfloat{\centering\makebox[10pt][l]{\raisebox{3pt}{\rotatebox[origin=l]{90}{\tiny HSQR}}}
        \includegraphics[width=\figWidth\linewidth]{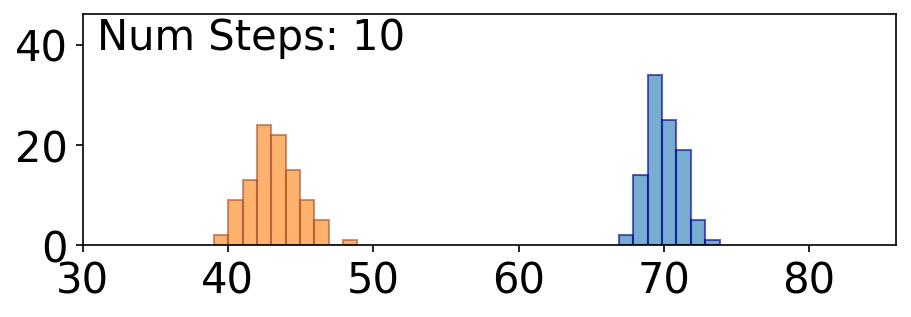}
        \includegraphics[width=\figWidth\linewidth]{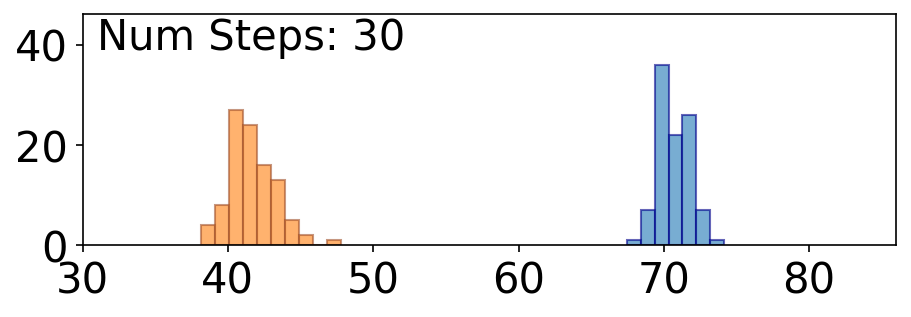}
        \includegraphics[width=\figWidth\linewidth]{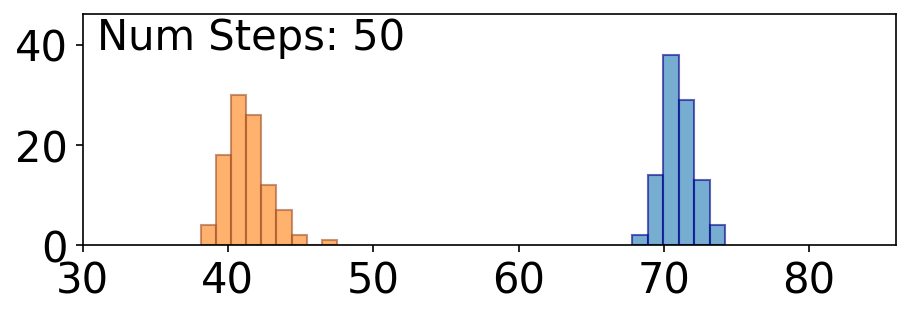}
        \includegraphics[width=\figWidth\linewidth]{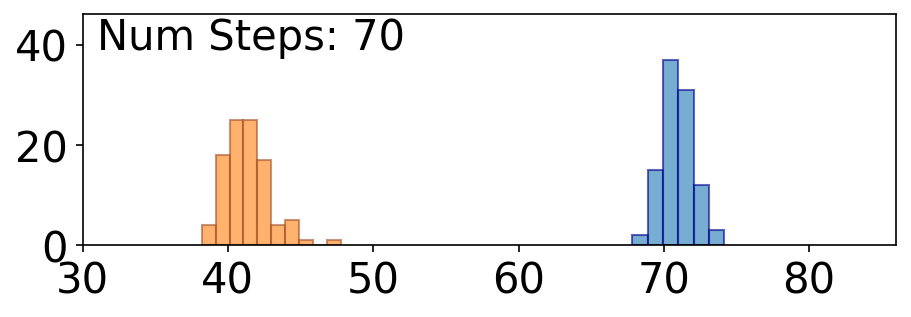}
        \includegraphics[width=\figWidth\linewidth]{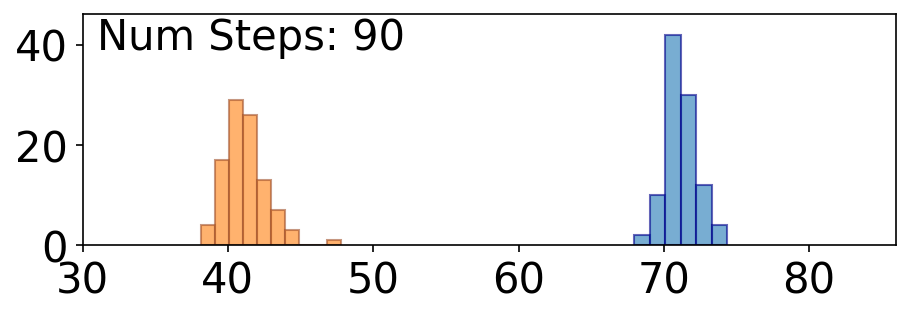}}

        \subfloat{\centering\makebox[10pt][l]{\raisebox{3pt}{\rotatebox[origin=l]{90}{\tiny HSTR}}}
        \includegraphics[width=\figWidth\linewidth]{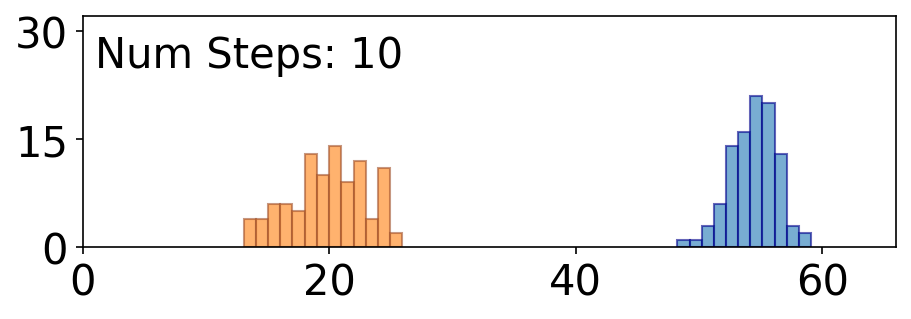}
        \includegraphics[width=\figWidth\linewidth]{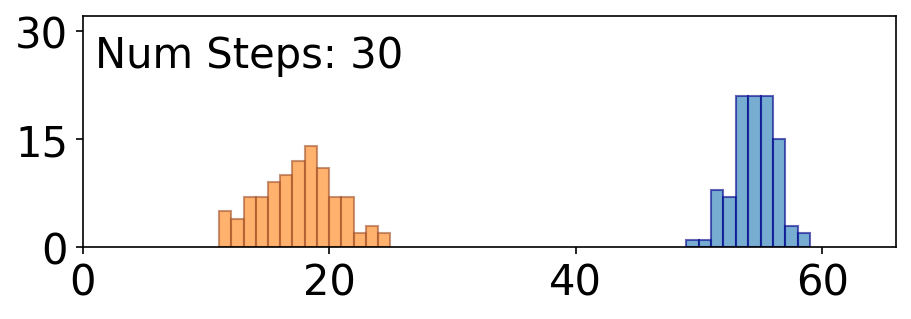}
        \includegraphics[width=\figWidth\linewidth]{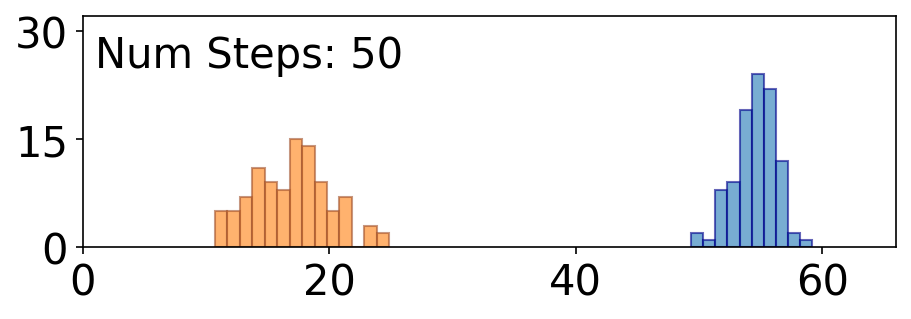}
        \includegraphics[width=\figWidth\linewidth]{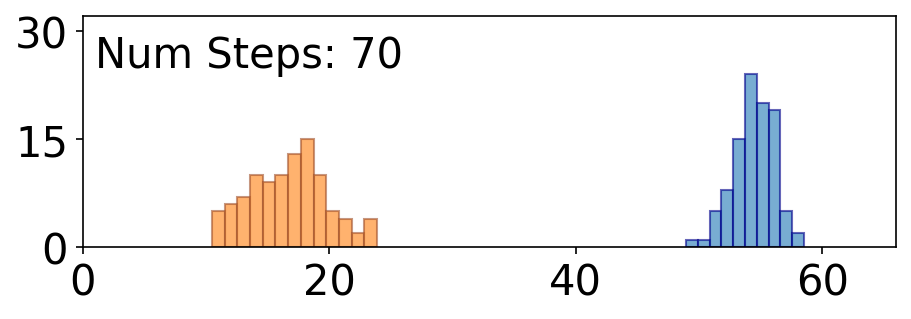}
        \includegraphics[width=\figWidth\linewidth]{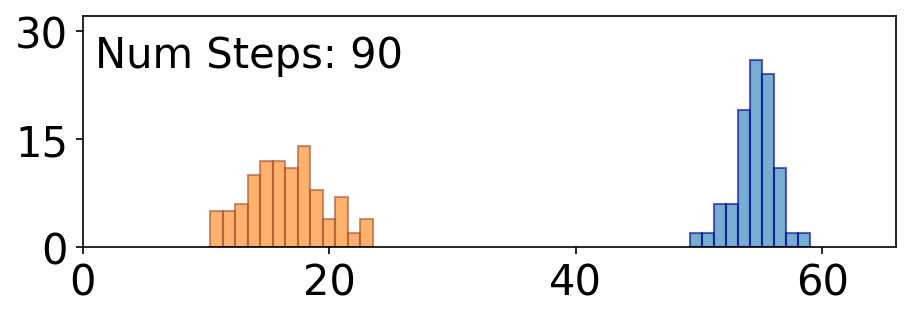}}
    \end{subfigure}
    \caption{Effects of different number of inference time steps on $\|\mathbf{w}-\hat{\mathbf{w}}^{\prime}\|$. From left to right, the number of inference time steps (Num Steps) were set to 10, 30, 50 (default), 70, and 90.}\label{fig: numerical error}
\end{figure}

\section{Effects of $\|\mathbf{w}\|$ on Detection Distance}\label{Sec: w norm increase on RID HSTR}
        We examined the effect of $\|\mathbf{w}\|$ on watermark detection, as discussed in Lemma~\ref{lem: max bound} and Sec.~\ref{dist shift}. We used RingID~\cite{ci2024ringid} and HSTR~\cite{lee2025semantic} as representative examples, with the corresponding results presented in Figs.~\ref{fig: RingID attack shifting subfig} and~\ref{fig: RingID 2 128 attack shifting subfig}, and Figs.~\ref{fig: HSTR 1 1 attack shifting} and~\ref{fig: HSTR 2 2 attack shifting} in Sec.~\ref{exps: ringid} of Appendix. According to Eq.~(\ref{max perturb bound}), we can expand the upper bound by increasing $\|\mathbf{w}\|$. The increase in $\|\mathbf{w}\|$ can be done by setting $\alpha$ in RingID~\cite{ci2024ringid} from $64$ to $128$, and multiplying the pattern injected in channel 0 by $2$. Comparing Fig.~\ref{fig: RingID attack shifting subfig} and Fig.~\ref{fig: RingID 2 128 attack shifting subfig}, we see that the distance between the orange and blue distributions increases in Fig.~\ref{fig: RingID 2 128 attack shifting subfig}. Such a gap can accommodate estimated watermarks that corresponding images are perturbed in the pixel space, and thereby create a buffer to withstand perturbations, especially when we use the threshold (red) of 1e-2 FPR. This modification can be done on HSTR. We also provide the experiments that we only set $\alpha$ to 128 without multiplying the pattern injected in channel 0 by $2$. See Fig.~\ref{fig: RingID 128 attack shifting} in Sec.~\ref{exps: ringid} of Appendix. 
        For more thorough attacks, we used Stirmark~\cite{petitcolas1998attacks} to examine the robustness of Tree-Ring and RingID in Sec.~\ref{stirmark exp} of Appendix.

        \begin{figure}[t]
            \begin{subfigure}[c]{0.185\textwidth}
                \centering\subfloat{\centering\makebox[10pt][l]{\raisebox{7pt}{\rotatebox[origin=l]{90}{\tiny Jitter}}}
                \includegraphics[width=\figWidth\linewidth]{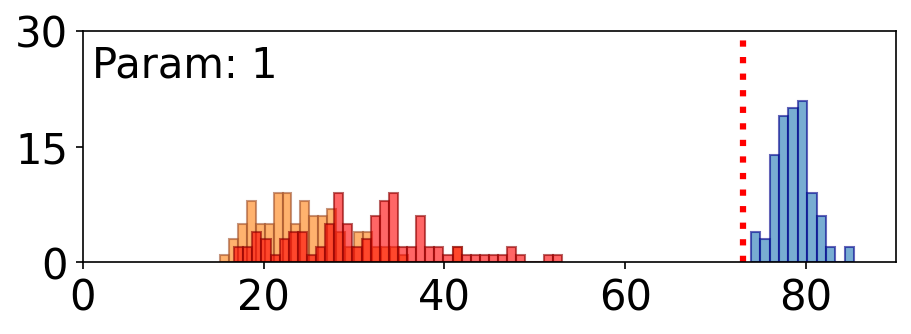}
                \includegraphics[width=\figWidth\linewidth]{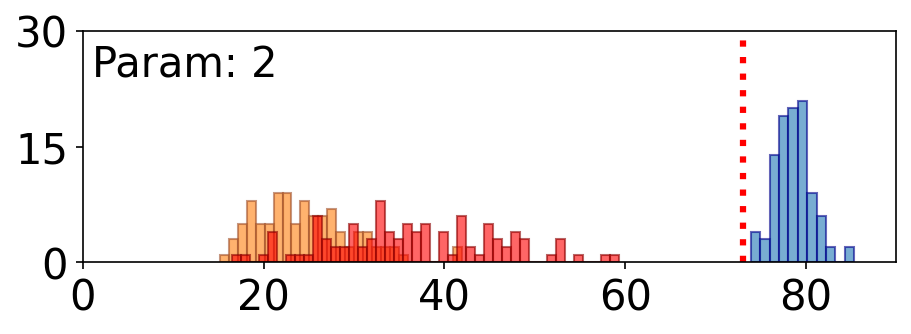}
                \includegraphics[width=\figWidth\linewidth]{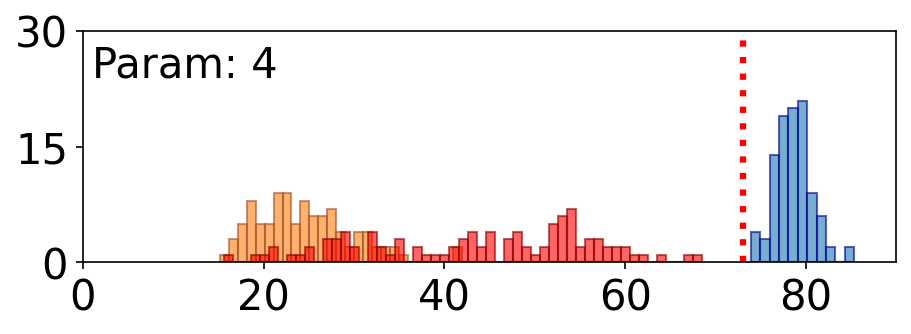}
                \includegraphics[width=\figWidth\linewidth]{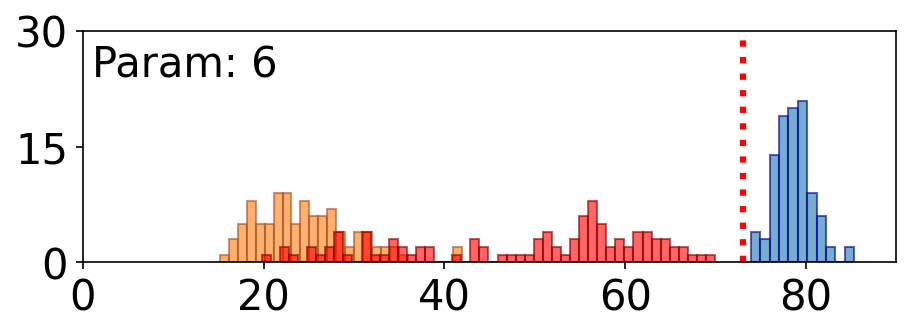}
                \includegraphics[width=\figWidth\linewidth]{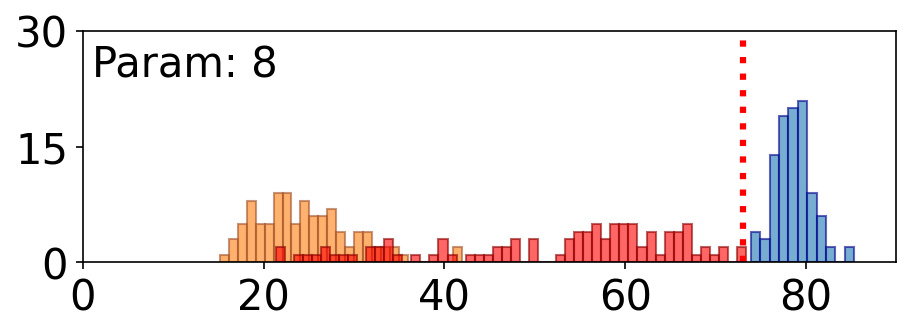}}
    
                \subfloat{\centering\makebox[10pt][l]{\raisebox{7pt}{\rotatebox[origin=l]{90}{\tiny JPEG}}}
                \includegraphics[width=\figWidth\linewidth]{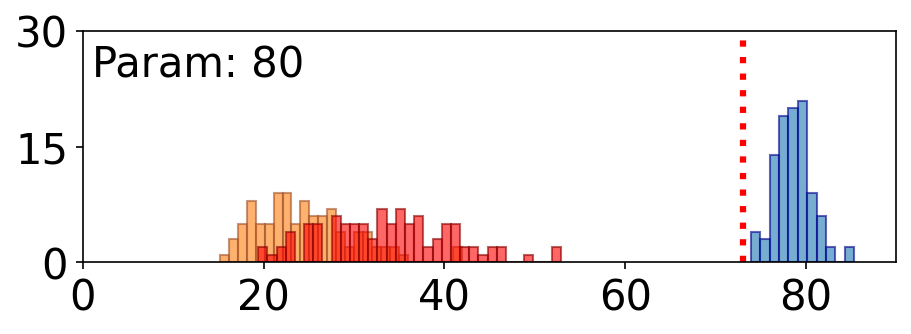}
                \includegraphics[width=\figWidth\linewidth]{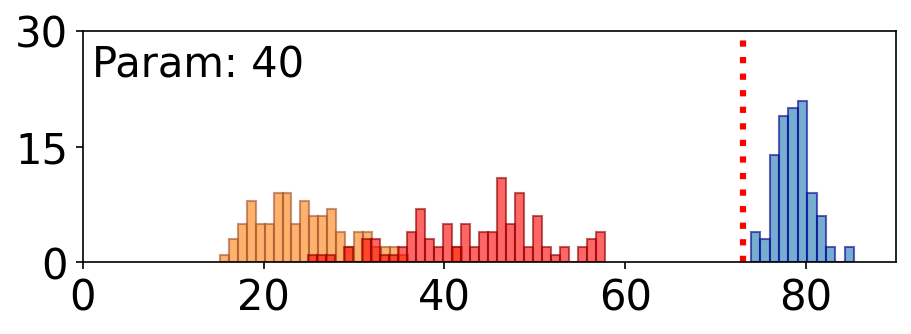}
                \includegraphics[width=\figWidth\linewidth]{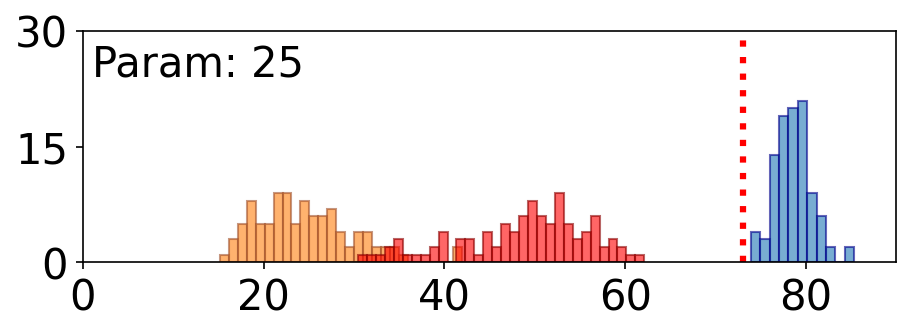}
                \includegraphics[width=\figWidth\linewidth]{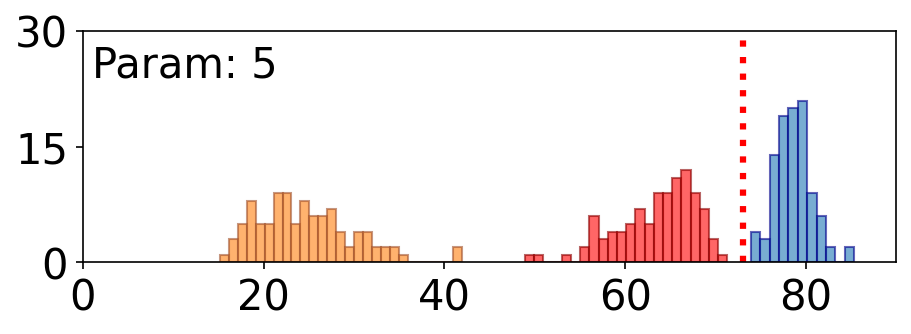}
                \includegraphics[width=\figWidth\linewidth]{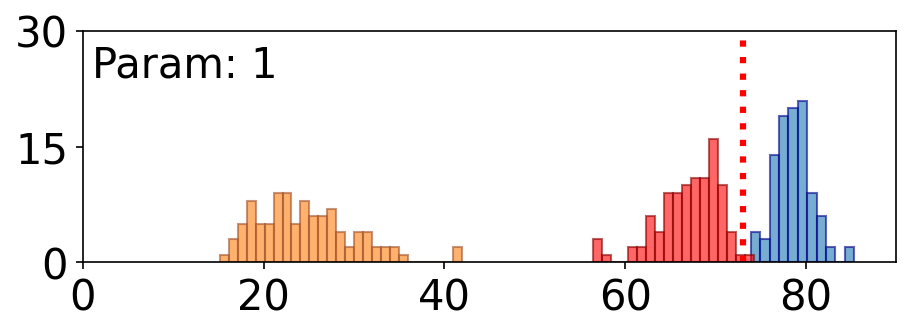}}
    
                \subfloat{\centering\makebox[10pt][l]{\raisebox{3pt}{\rotatebox[origin=l]{90}{\tiny Rotation}}}
                \includegraphics[width=\figWidth\linewidth]{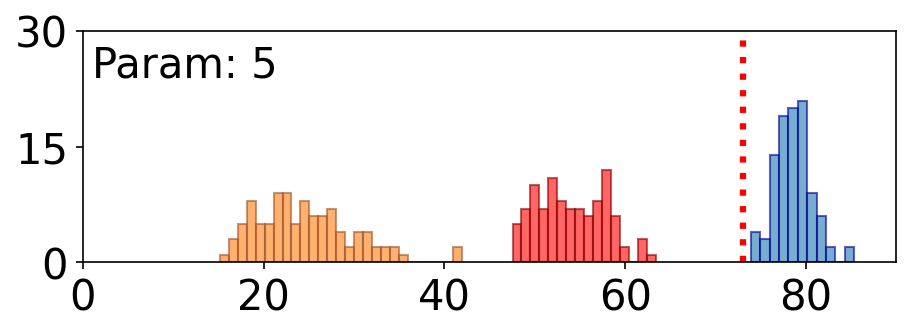}
                \includegraphics[width=\figWidth\linewidth]{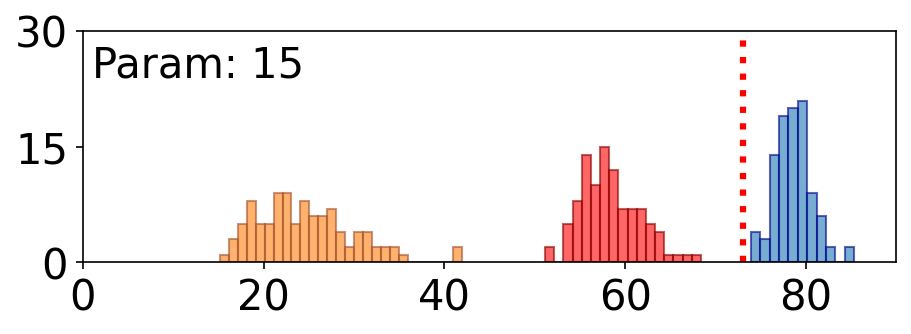}
                \includegraphics[width=\figWidth\linewidth]{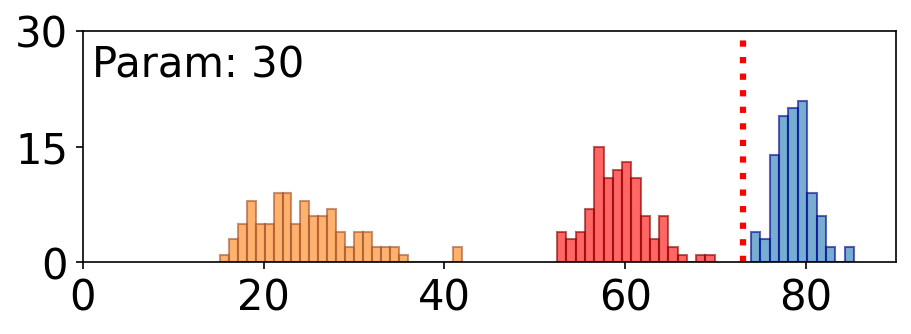}
                \includegraphics[width=\figWidth\linewidth]{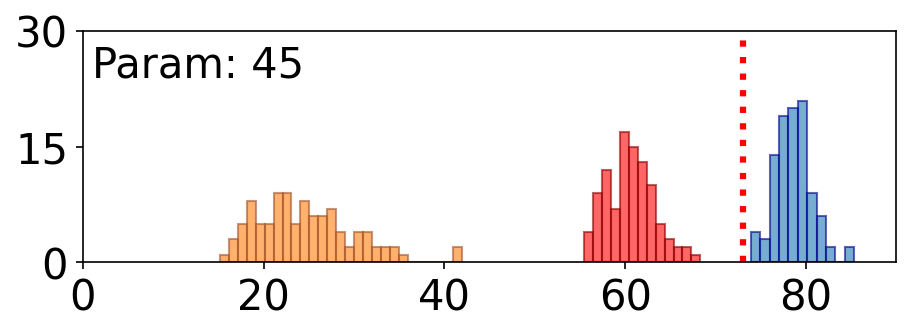}
                \includegraphics[width=\figWidth\linewidth]{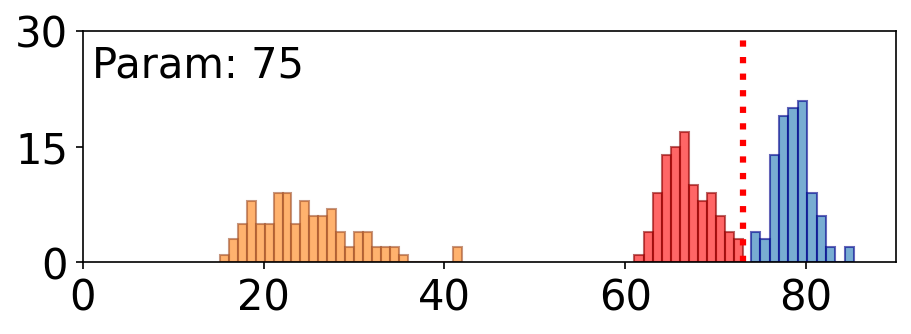}}
    
                \subfloat{\centering\makebox[10pt][l]{\raisebox{3pt}{\rotatebox[origin=l]{90}{\tiny Cropping}}}
                \includegraphics[width=\figWidth\linewidth]{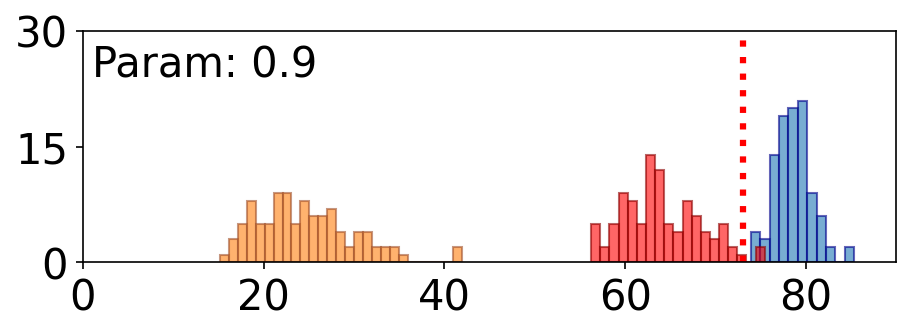}
                \includegraphics[width=\figWidth\linewidth]{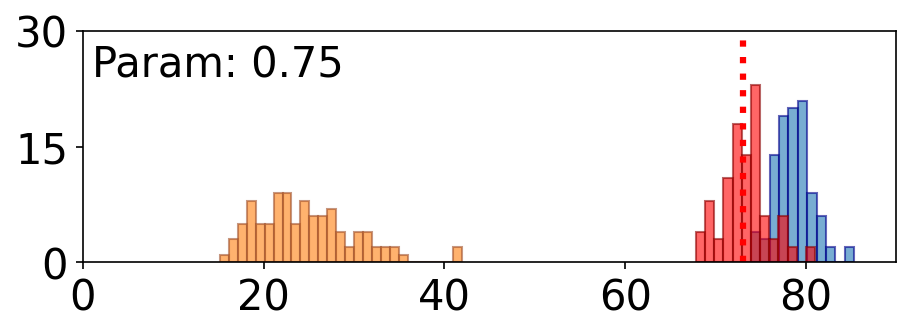}
                \includegraphics[width=\figWidth\linewidth]{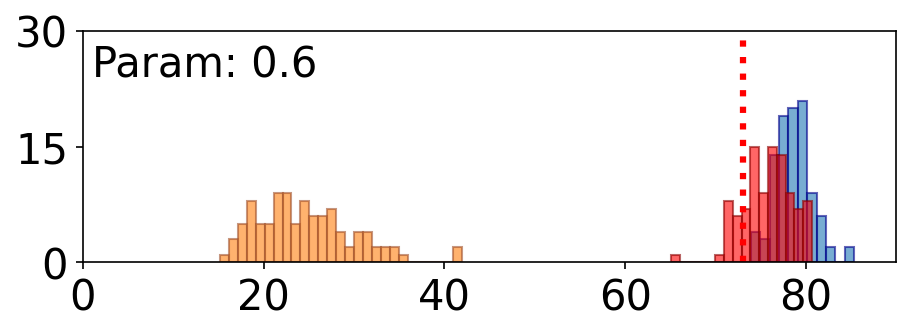}
                \includegraphics[width=\figWidth\linewidth]{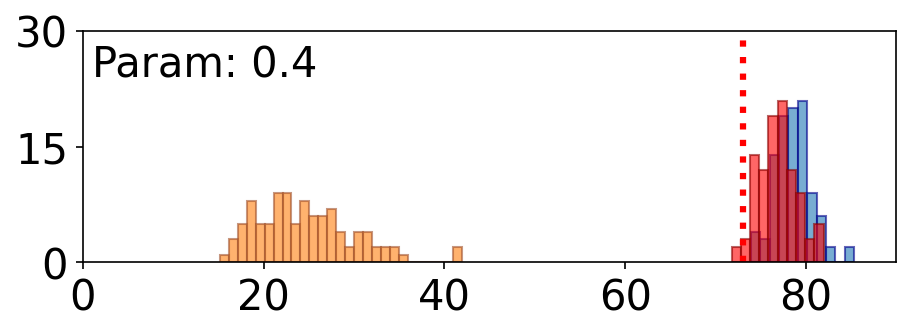}
                \includegraphics[width=\figWidth\linewidth]{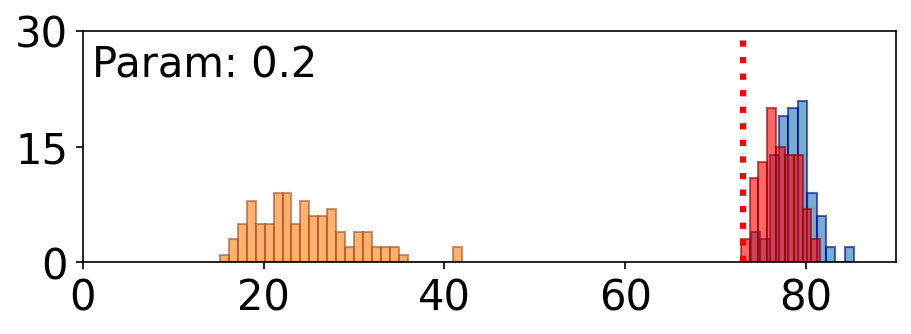}}

                \subfloat{\centering\makebox[10pt][l]{\raisebox{0pt}{\rotatebox[origin=l]{90}{\tiny Regen-VAE}}}
                \includegraphics[width=\figWidth\linewidth]{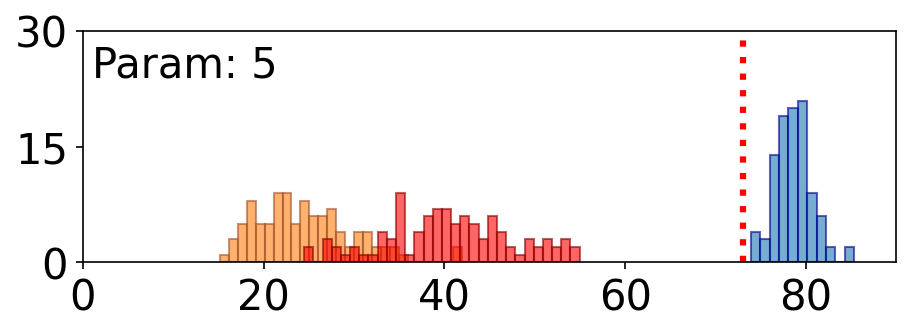}
                \includegraphics[width=\figWidth\linewidth]{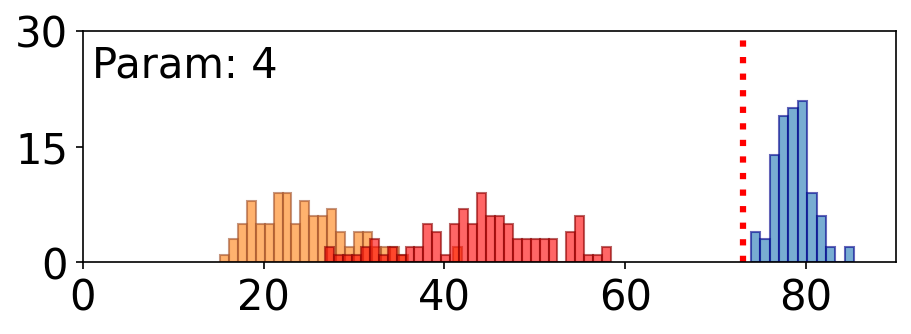}
                \includegraphics[width=\figWidth\linewidth]{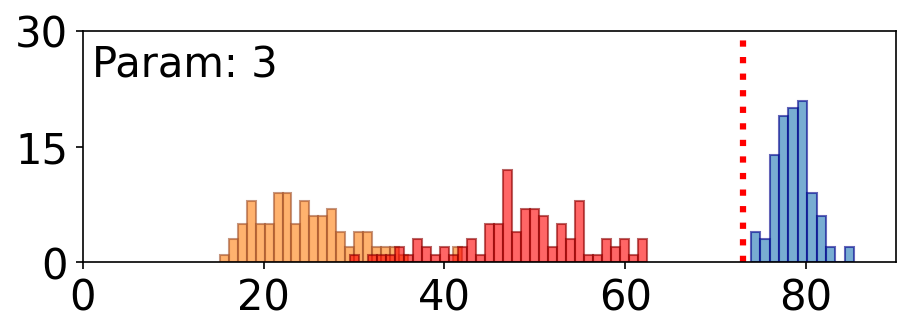}
                \includegraphics[width=\figWidth\linewidth]{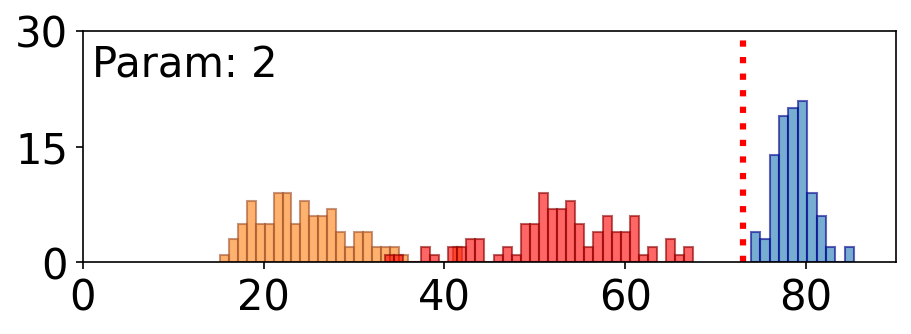}
                \includegraphics[width=\figWidth\linewidth]{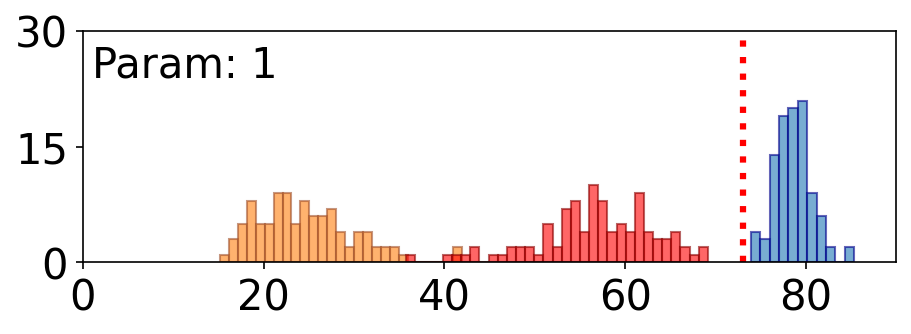}}
            \end{subfigure}
            \caption{Effects of different perturbations on $\|\mathbf{w}-\hat{\mathbf{w}}^{\prime}\|$. Watermark: RingID. Thresholds: 73.0 (1e-2 FPR).}\label{fig: RingID attack shifting subfig}
        \end{figure}

        \begin{figure}[t]
            \begin{subfigure}[c]{0.185\textwidth}
                \centering\subfloat{\centering\makebox[10pt][l]{\raisebox{7pt}{\rotatebox[origin=l]{90}{\tiny Jitter}}}
                \includegraphics[width=\figWidth\linewidth]{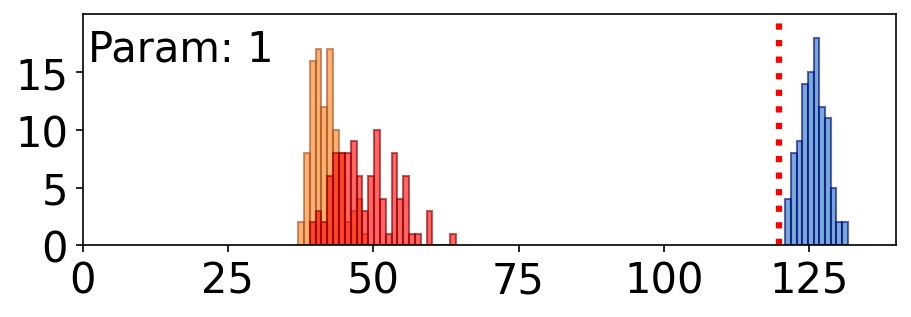}
                \includegraphics[width=\figWidth\linewidth]{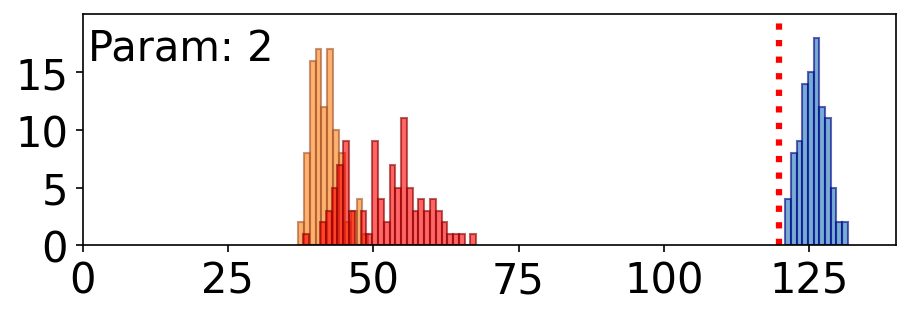}
                \includegraphics[width=\figWidth\linewidth]{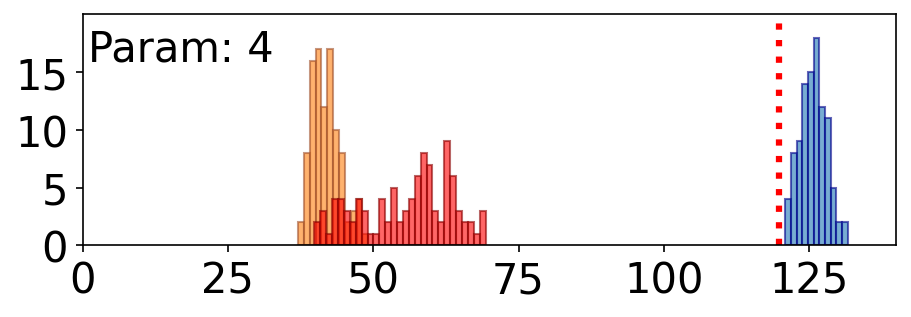}
                \includegraphics[width=\figWidth\linewidth]{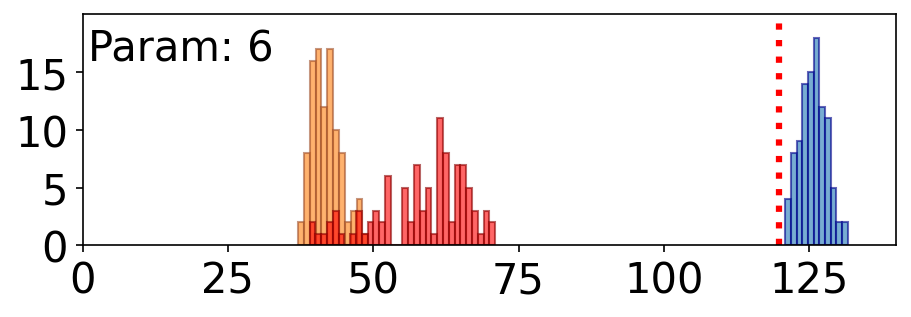}
                \includegraphics[width=\figWidth\linewidth]{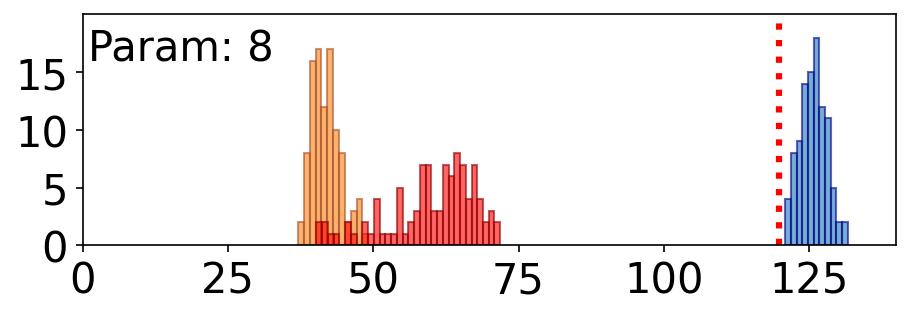}}
    
                \subfloat{\centering\makebox[10pt][l]{\raisebox{7pt}{\rotatebox[origin=l]{90}{\tiny JPEG}}}
                \includegraphics[width=\figWidth\linewidth]{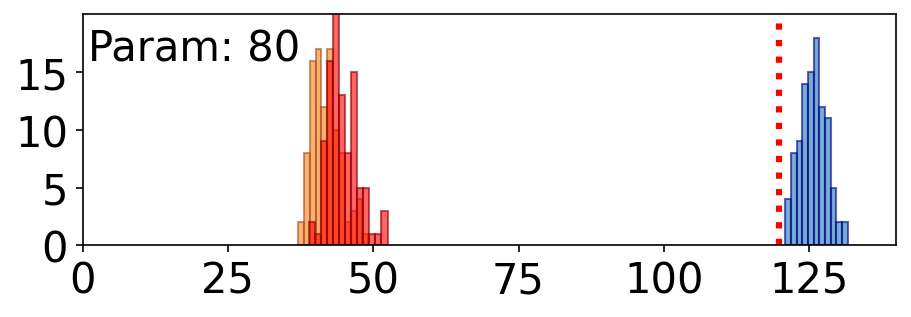}
                \includegraphics[width=\figWidth\linewidth]{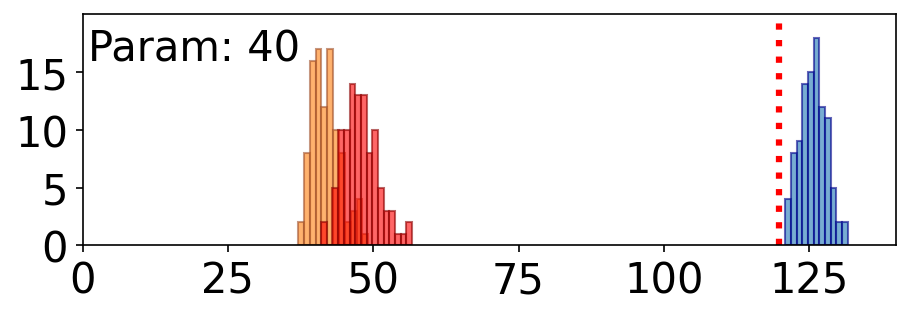}
                \includegraphics[width=\figWidth\linewidth]{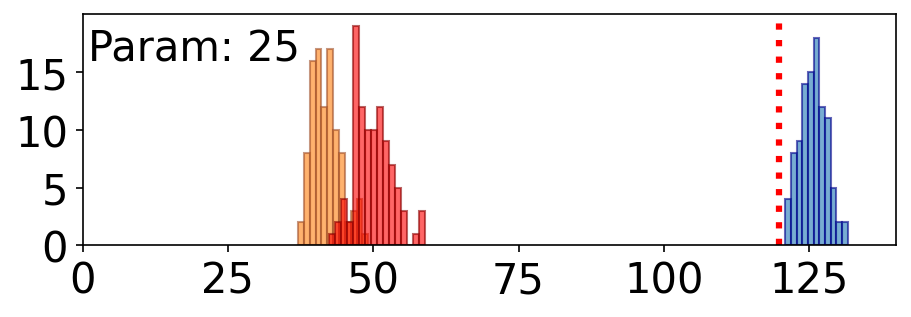}
                \includegraphics[width=\figWidth\linewidth]{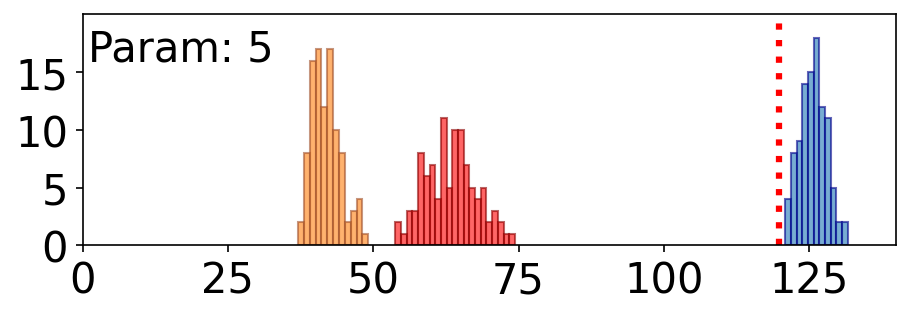}
                \includegraphics[width=\figWidth\linewidth]{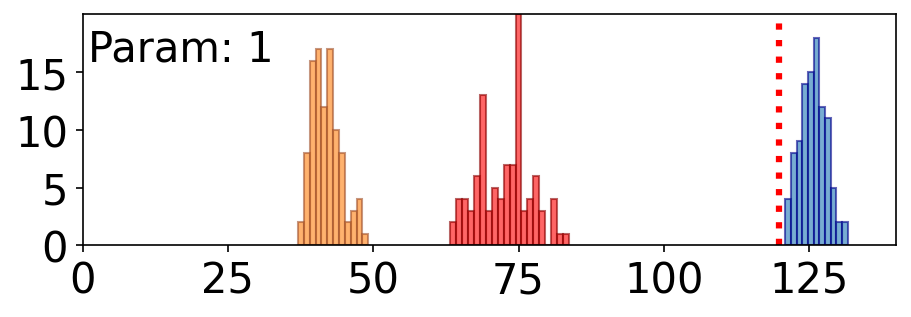}}
    
                \subfloat{\centering\makebox[10pt][l]{\raisebox{3pt}{\rotatebox[origin=l]{90}{\tiny Rotation}}}
                \includegraphics[width=\figWidth\linewidth]{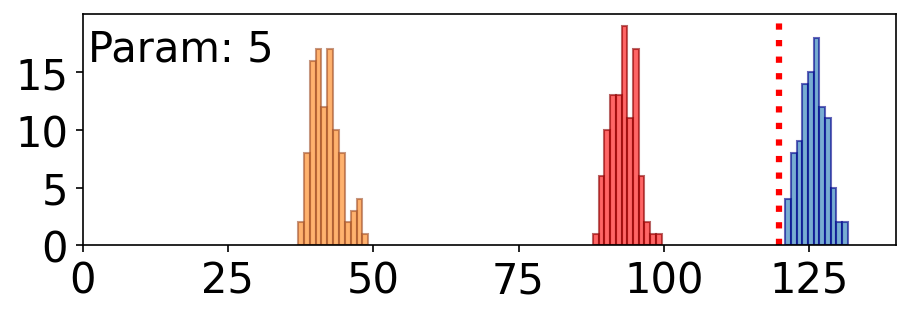}
                \includegraphics[width=\figWidth\linewidth]{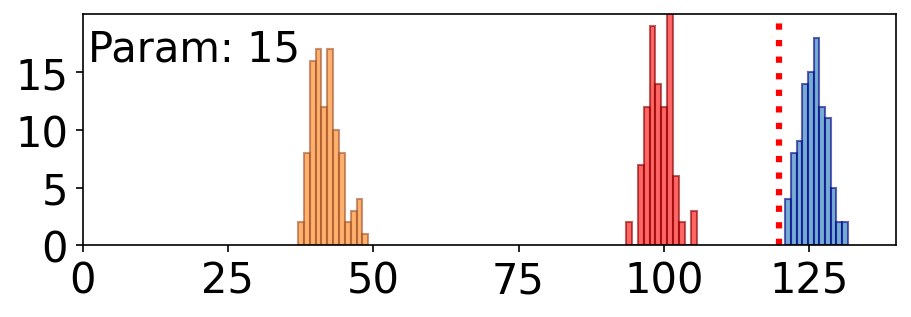}
                \includegraphics[width=\figWidth\linewidth]{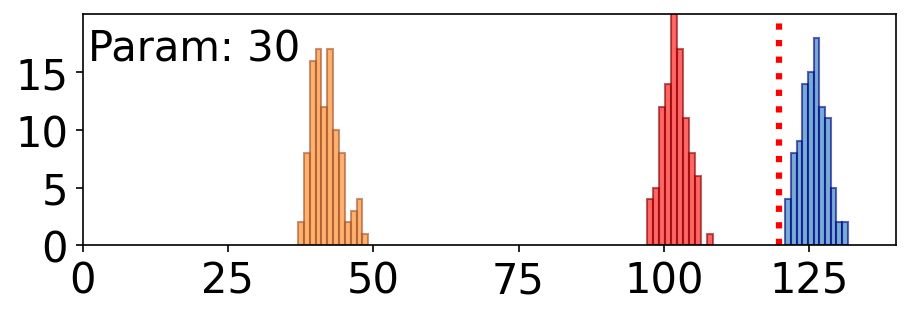}
                \includegraphics[width=\figWidth\linewidth]{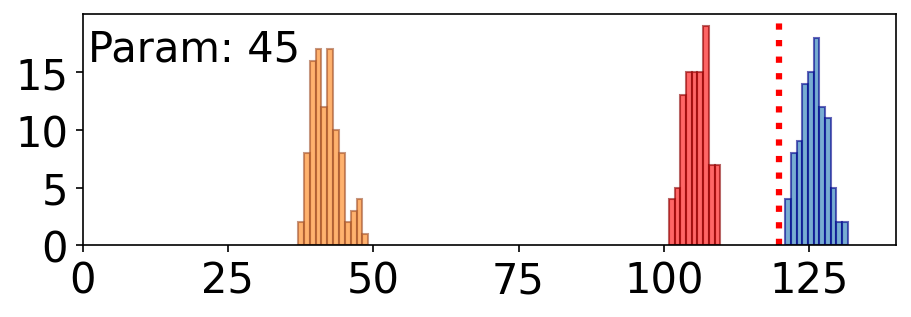}
                \includegraphics[width=\figWidth\linewidth]{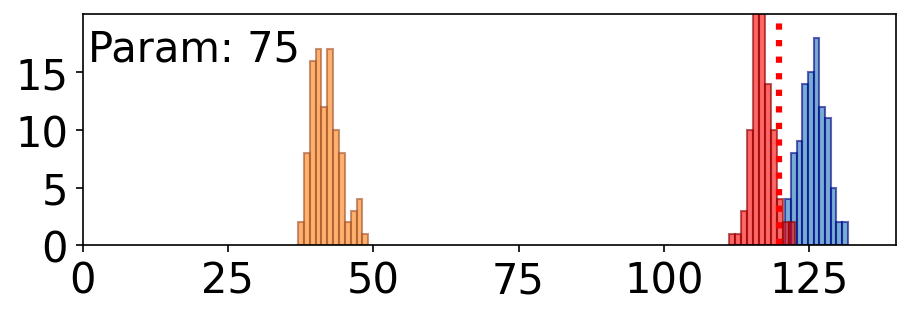}}
    
                \subfloat{\centering\makebox[10pt][l]{\raisebox{3pt}{\rotatebox[origin=l]{90}{\tiny Cropping}}}
                \includegraphics[width=\figWidth\linewidth]{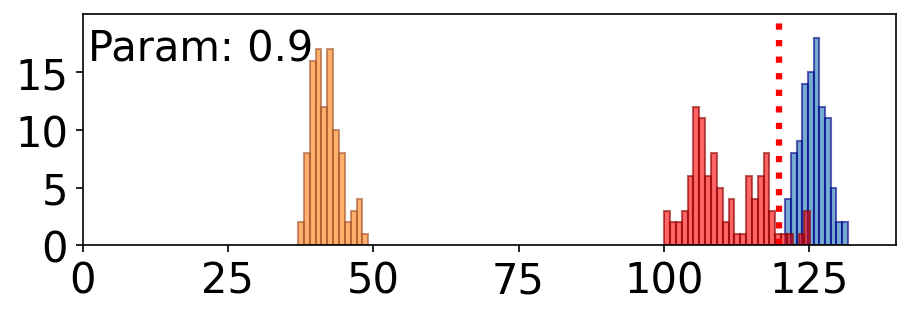}
                \includegraphics[width=\figWidth\linewidth]{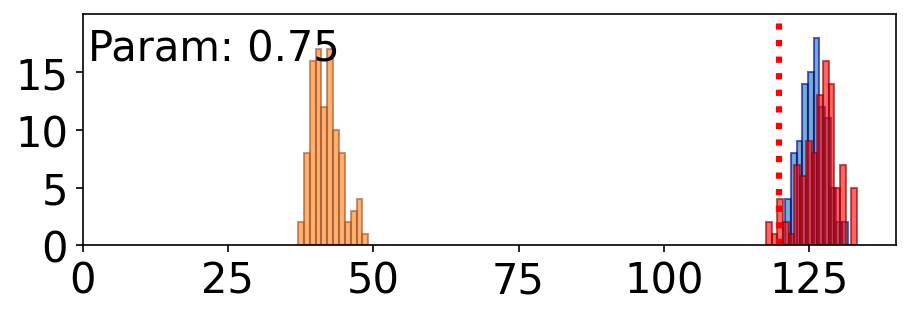}
                \includegraphics[width=\figWidth\linewidth]{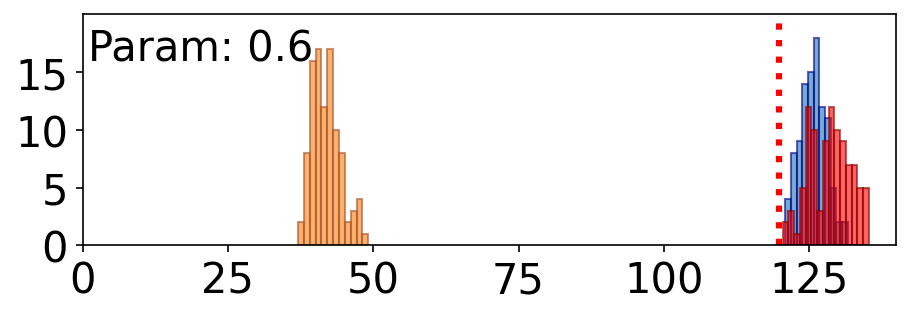}
                \includegraphics[width=\figWidth\linewidth]{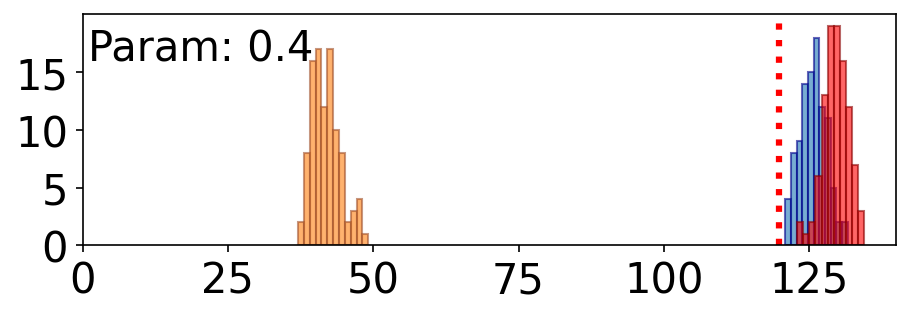}
                \includegraphics[width=\figWidth\linewidth]{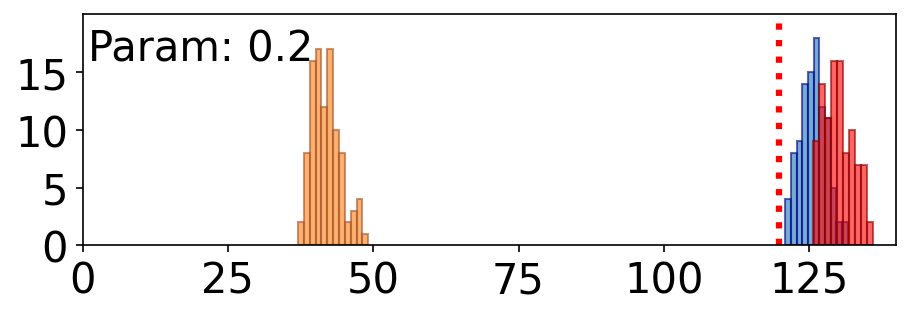}}

                \subfloat{\centering\makebox[10pt][l]{\raisebox{0pt}{\rotatebox[origin=l]{90}{\tiny Regen-VAE}}}
                \includegraphics[width=\figWidth\linewidth]{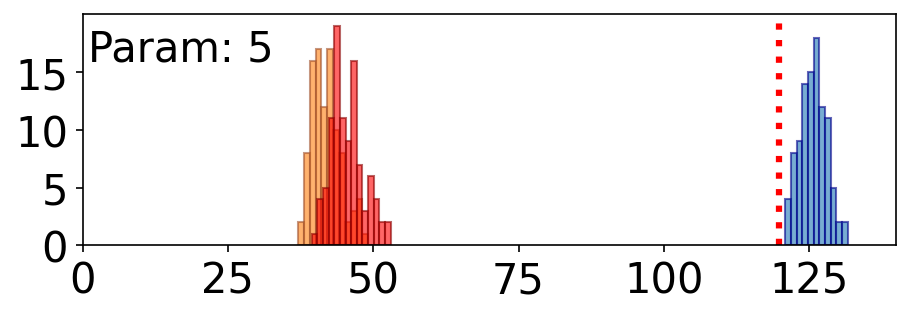}
                \includegraphics[width=\figWidth\linewidth]{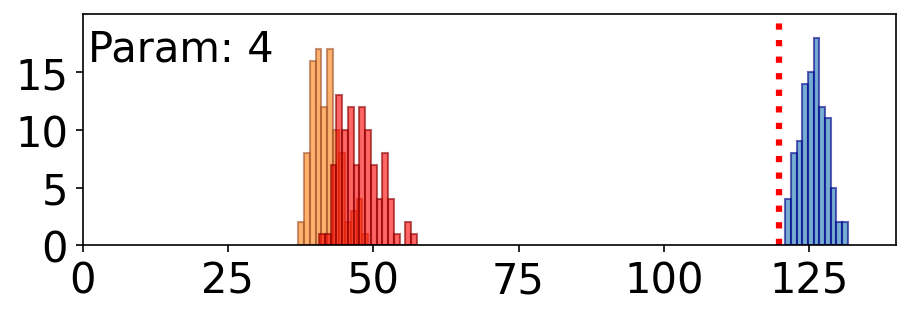}
                \includegraphics[width=\figWidth\linewidth]{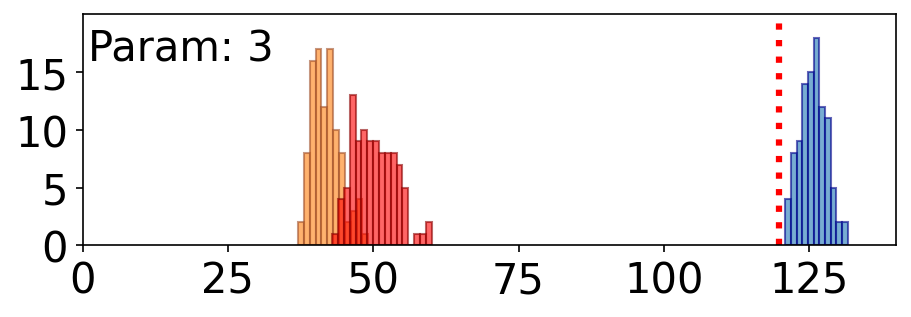}
                \includegraphics[width=\figWidth\linewidth]{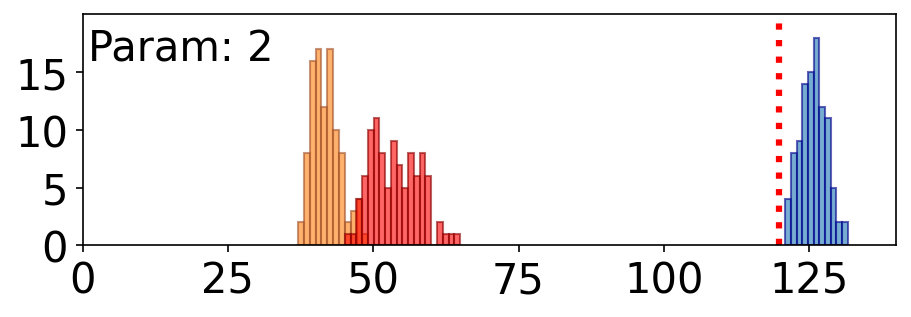}
                \includegraphics[width=\figWidth\linewidth]{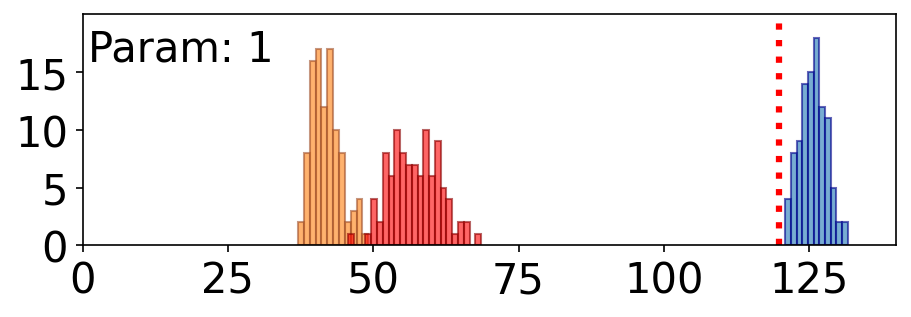}}
            \end{subfigure}
            \caption{Effects of different perturbations on $\|\mathbf{w}-\hat{\mathbf{w}}^{\prime}\|$. Watermark: RingID with channel 0 pattern multiplied by $2$ and $\alpha=128$. Thresholds: 119.85 (1e-2 FPR).}\label{fig: RingID 2 128 attack shifting subfig}
        \end{figure}

\subsection{Complete experiments of the effect of $\|\mathbf{w}\|$ on Detection Distance}\label{exps: ringid}
In this section, we provide experiments on the effects of increasing $\|\mathbf{w}\|$ on robustness of watermark, which we consider RingID and HSTR as representative watermarking schemes. 
For notational simplicity, RingID-$m$-$\alpha$ denotes a configuration where the pattern on channel 0 is scaled by a factor of $m$ (default: $m=1$) and the pattern on channel 3 is governed by the parameter $\alpha$ (default: $\alpha=64$). Similarly, HSTR-$m$-$m$ indicates that the patterns on both channels 0 and 3 are multiplied by $m$ (default: $m=1$).
Please refer to Figs.~\ref{fig: RingID attack shifting},~\ref{fig: RingID 2 128 attack shifting},~\ref{fig: RingID 128 attack shifting},~\ref{fig: HSTR 1 1 attack shifting}, and~\ref{fig: HSTR 2 2 attack shifting}. For FID score results, please refer to Table~\ref{tab: FID}.

\begin{table}[t]
\centering
\small

\begin{tabular}{@{}cccccc@{}}

\toprule


WM Methods &  FID ($\dto$)  \\ \midrule

RingID & 45.31 \\


RingID-2-128 & 428.68 \\

HSTR & 45.32  \\

HSTR-2-2 & 187.77 \\ \bottomrule

\end{tabular}%

\caption{The FID score of watermark methods with different magnitude $\|\mathbf{w}\|$.}
    \label{tab: FID}

\end{table}

\begin{figure*}[t]
        \begin{subfigure}[c]{0.185\textwidth}
            \centering
            \subfloat{\centering
            \makebox[10pt][l]{\raisebox{5pt}{\rotatebox[origin=l]{90}{\tiny Blurring}}}
            \includegraphics[width=1.03\linewidth]{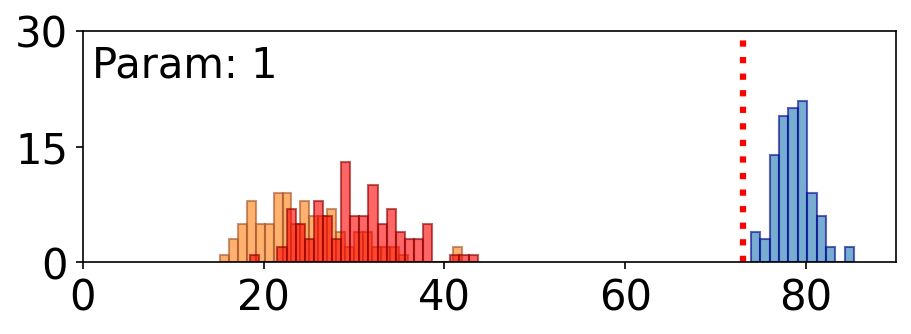}
            \includegraphics[width=1.03\linewidth]{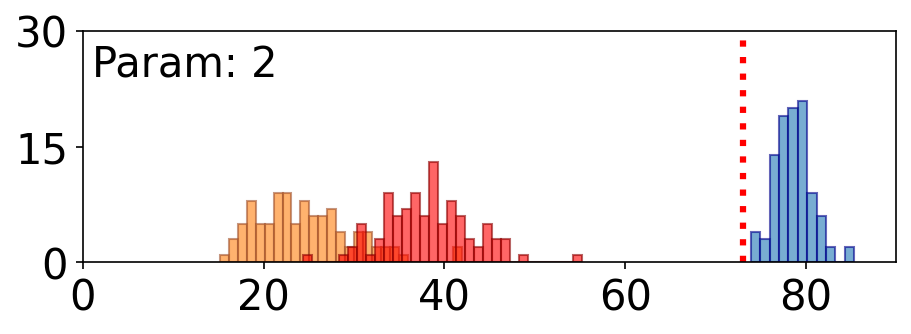}
            \includegraphics[width=1.03\linewidth]{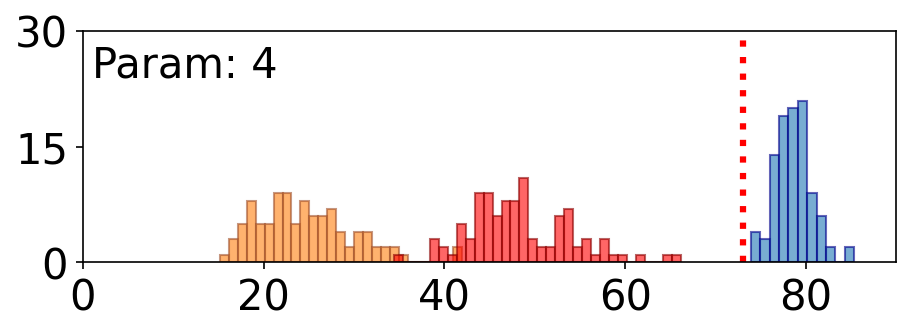}
            \includegraphics[width=1.03\linewidth]{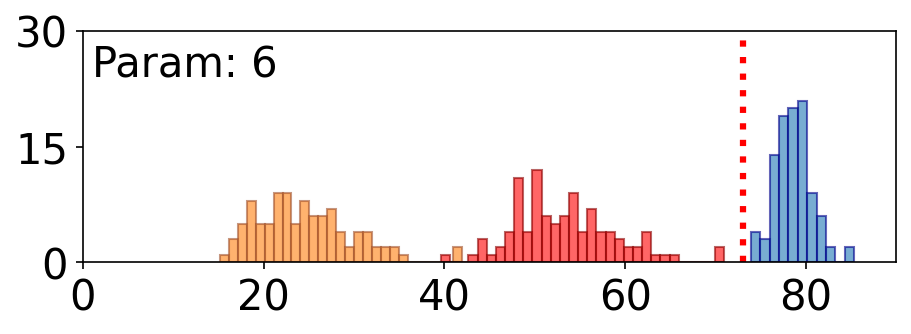}
            \includegraphics[width=1.03\linewidth]{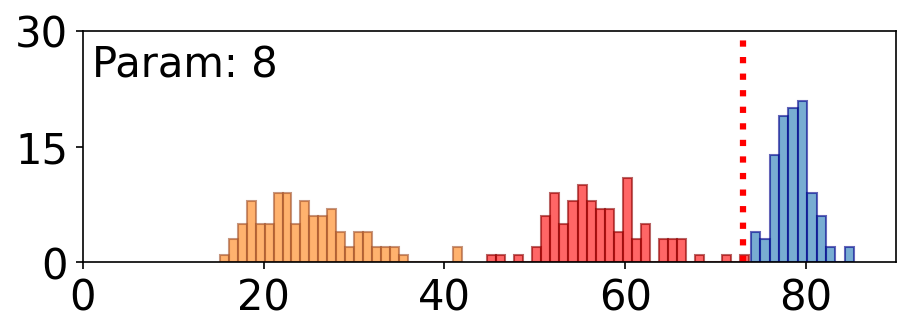}}
    
            \subfloat{\centering
            \makebox[10pt][l]{\raisebox{7pt}{\rotatebox[origin=l]{90}{\tiny Jitter}}}
            \includegraphics[width=1.03\linewidth]{figs/figs/ringid_64/hist_dist_ringid_64_color_jitter_1.png}
            \includegraphics[width=1.03\linewidth]{figs/figs/ringid_64/hist_dist_ringid_64_color_jitter_2.png}
            \includegraphics[width=1.03\linewidth]{figs/figs/ringid_64/hist_dist_ringid_64_color_jitter_4.png}
            \includegraphics[width=1.03\linewidth]{figs/figs/ringid_64/hist_dist_ringid_64_color_jitter_6.png}
            \includegraphics[width=1.03\linewidth]{figs/figs/ringid_64/hist_dist_ringid_64_color_jitter_8.png}}
    
            \subfloat{\centering
            \makebox[10pt][l]{\raisebox{7pt}{\rotatebox[origin=l]{90}{\tiny JPEG}}}
            \includegraphics[width=1.03\linewidth]{figs/figs/ringid_64/hist_dist_ringid_64_jpeg_80.png}
            \includegraphics[width=1.03\linewidth]{figs/figs/ringid_64/hist_dist_ringid_64_jpeg_40.png}
            \includegraphics[width=1.03\linewidth]{figs/figs/ringid_64/hist_dist_ringid_64_jpeg_25.png}
            \includegraphics[width=1.03\linewidth]{figs/figs/ringid_64/hist_dist_ringid_64_jpeg_5.png}
            \includegraphics[width=1.03\linewidth]{figs/figs/ringid_64/hist_dist_ringid_64_jpeg_1.png}}
    
            \subfloat{\centering
            \makebox[10pt][l]{\raisebox{3pt}{\rotatebox[origin=l]{90}{\tiny Rotation}}}
            \includegraphics[width=1.03\linewidth]{figs/figs/ringid_64/hist_dist_ringid_64_rotation_5.png}
            \includegraphics[width=1.03\linewidth]{figs/figs/ringid_64/hist_dist_ringid_64_rotation_15.png}
            \includegraphics[width=1.03\linewidth]{figs/figs/ringid_64/hist_dist_ringid_64_rotation_30.png}
            \includegraphics[width=1.03\linewidth]{figs/figs/ringid_64/hist_dist_ringid_64_rotation_45.png}
            \includegraphics[width=1.03\linewidth]{figs/figs/ringid_64/hist_dist_ringid_64_rotation_75.png}}
    
            \subfloat{\centering
            \makebox[10pt][l]{\raisebox{3pt}{\rotatebox[origin=l]{90}{\tiny Cropping}}}
            \includegraphics[width=1.03\linewidth]{figs/figs/ringid_64/hist_dist_ringid_64_cropping_0.9.png}
            \includegraphics[width=1.03\linewidth]{figs/figs/ringid_64/hist_dist_ringid_64_cropping_0.75.png}
            \includegraphics[width=1.03\linewidth]{figs/figs/ringid_64/hist_dist_ringid_64_cropping_0.6.png}
            \includegraphics[width=1.03\linewidth]{figs/figs/ringid_64/hist_dist_ringid_64_cropping_0.4.png}
            \includegraphics[width=1.03\linewidth]{figs/figs/ringid_64/hist_dist_ringid_64_cropping_0.2.png}}
    
            \subfloat{\centering
            \makebox[10pt][l]{\raisebox{3pt}{\rotatebox[origin=l]{90}{\tiny Elastic}}}
            \includegraphics[width=1.03\linewidth]{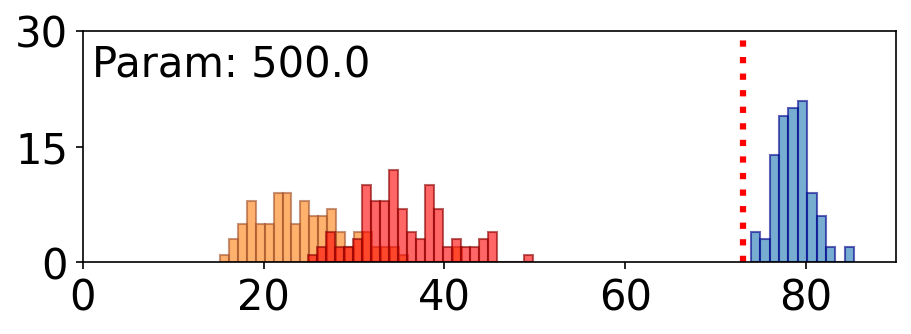}
            \includegraphics[width=1.03\linewidth]{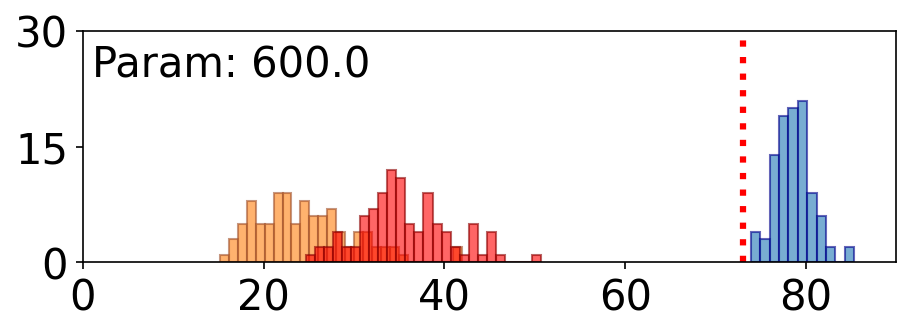}
            \includegraphics[width=1.03\linewidth]{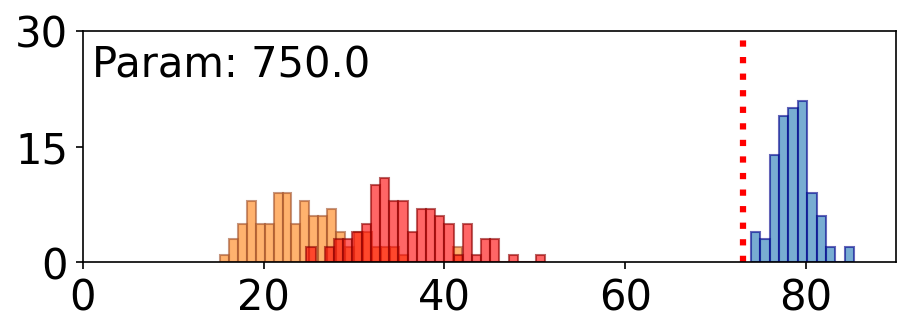}
            \includegraphics[width=1.03\linewidth]{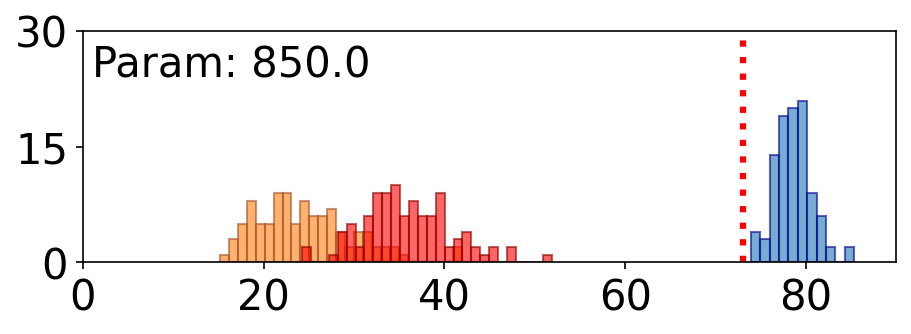}
            \includegraphics[width=1.03\linewidth]{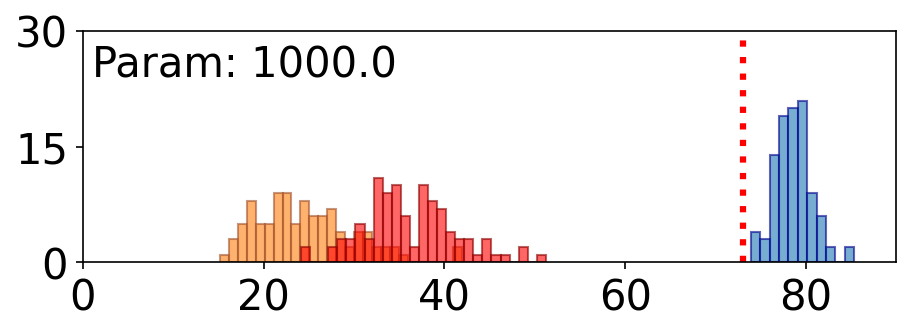}}
    
            \subfloat{\centering
            \makebox[10pt][l]{\raisebox{3pt}{\rotatebox[origin=l]{90}{\scalebox{.4}{Perspective}}}}
            \includegraphics[width=1.03\linewidth]{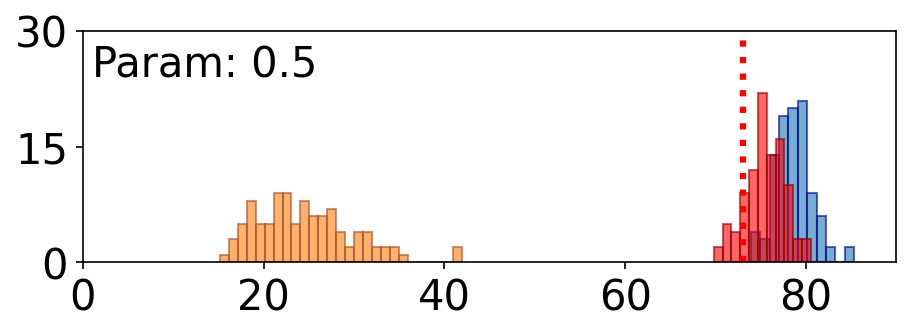}
            \includegraphics[width=1.03\linewidth]{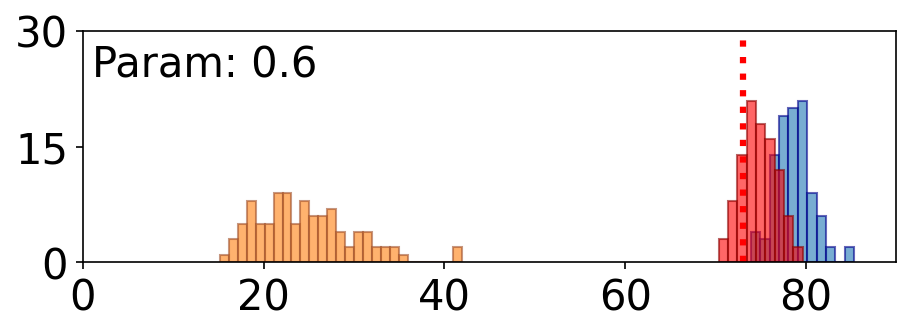}
            \includegraphics[width=1.03\linewidth]{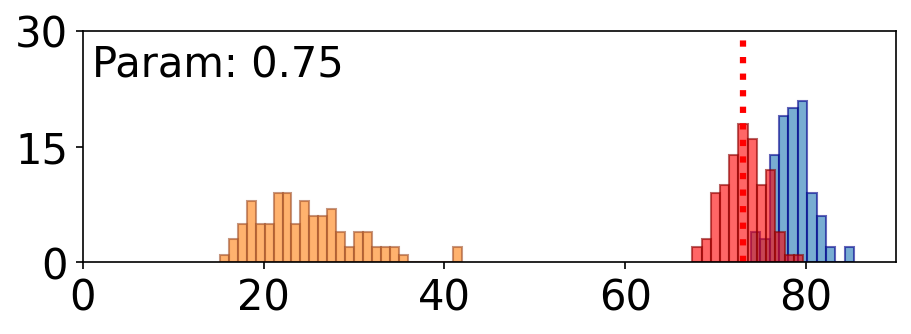}
            \includegraphics[width=1.03\linewidth]{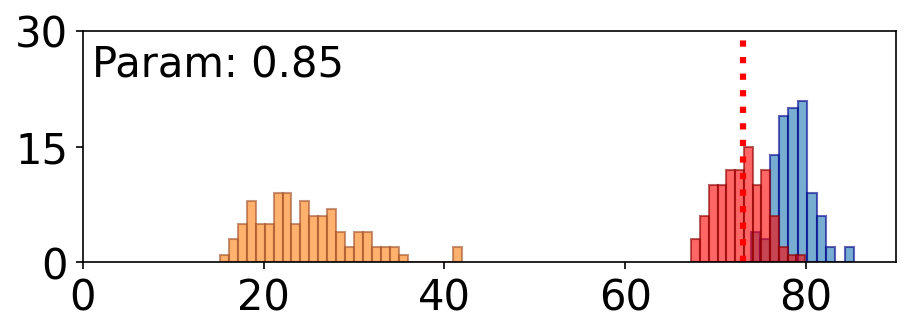}
            \includegraphics[width=1.03\linewidth]{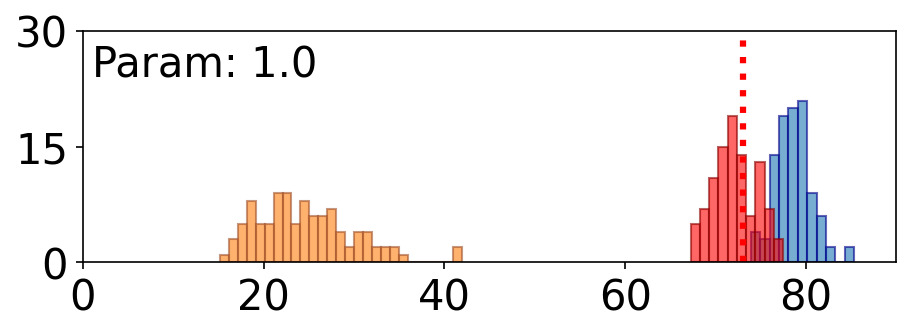}}
    
            \subfloat{\centering
            \makebox[10pt][l]{\raisebox{3pt}{\rotatebox[origin=l]{90}{\tiny Shear}}}
            \includegraphics[width=1.03\linewidth]{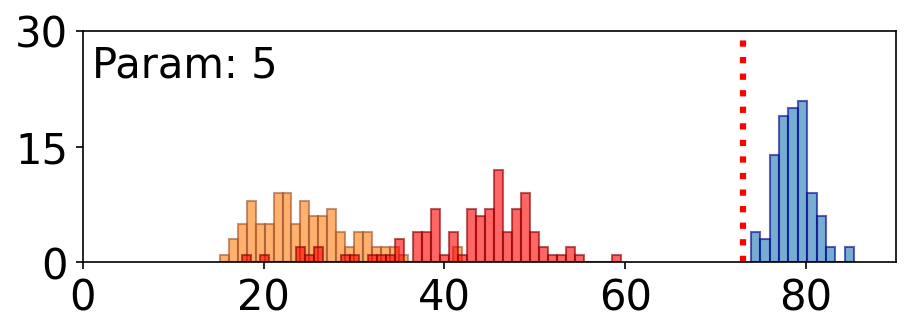}
            \includegraphics[width=1.03\linewidth]{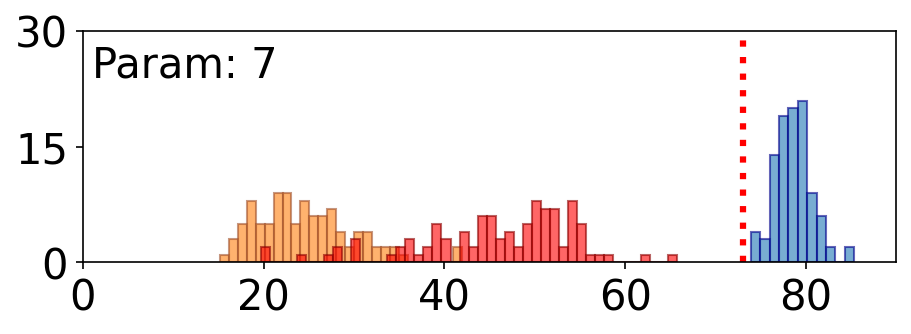}
            \includegraphics[width=1.03\linewidth]{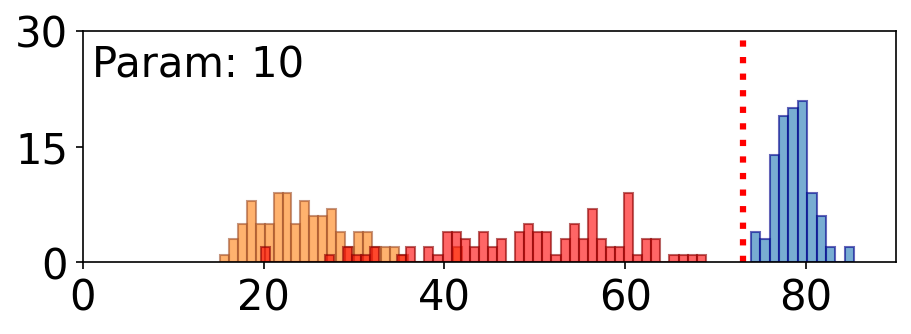}
            \includegraphics[width=1.03\linewidth]{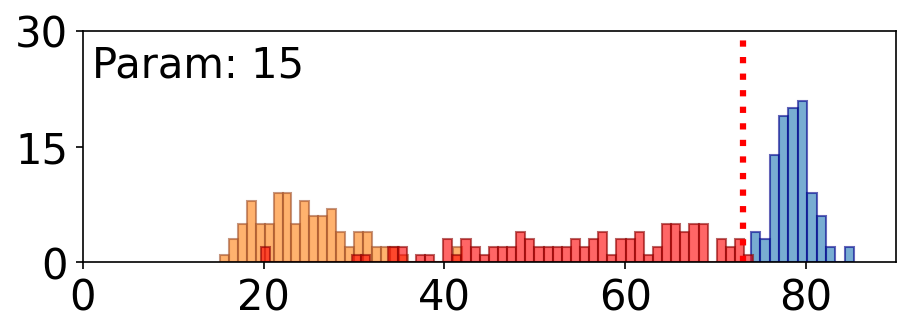}
            \includegraphics[width=1.03\linewidth]{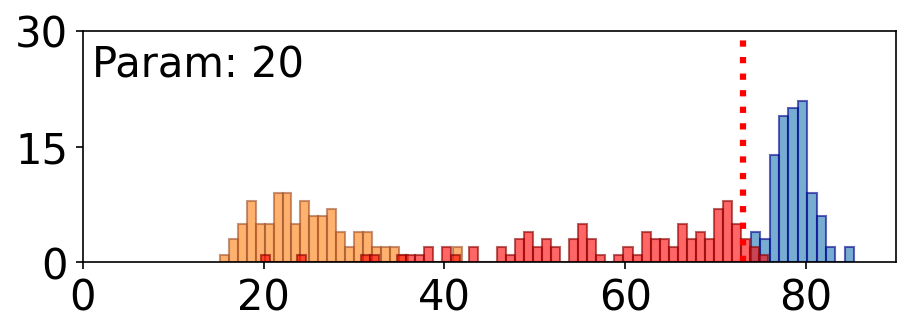}}

            \subfloat{\centering
            \makebox[10pt][l]{\raisebox{1pt}{\rotatebox[origin=l]{90}{\scalebox{.4}{Regen-VAE}}}}
            \includegraphics[width=1.03\linewidth]{figs/figs/ringid_64/hist_dist_ringid_64_regen_vae_5.png}
            \includegraphics[width=1.03\linewidth]{figs/figs/ringid_64/hist_dist_ringid_64_regen_vae_4.png}
            \includegraphics[width=1.03\linewidth]{figs/figs/ringid_64/hist_dist_ringid_64_regen_vae_3.png}
            \includegraphics[width=1.03\linewidth]{figs/figs/ringid_64/hist_dist_ringid_64_regen_vae_2.png}
            \includegraphics[width=1.03\linewidth]{figs/figs/ringid_64/hist_dist_ringid_64_regen_vae_1.png}}
    
            \subfloat{\centering
            \makebox[10pt][l]{\raisebox{0pt}{\rotatebox[origin=l]{90}{\scalebox{.4}{Regen-Diff}}}}
            \includegraphics[width=1.03\linewidth]{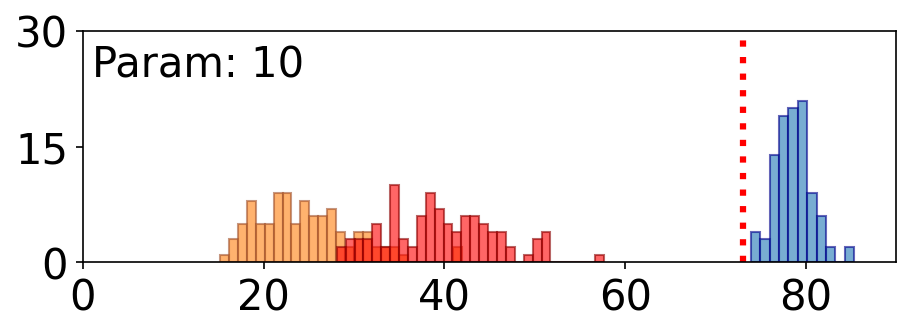}
            \includegraphics[width=1.03\linewidth]{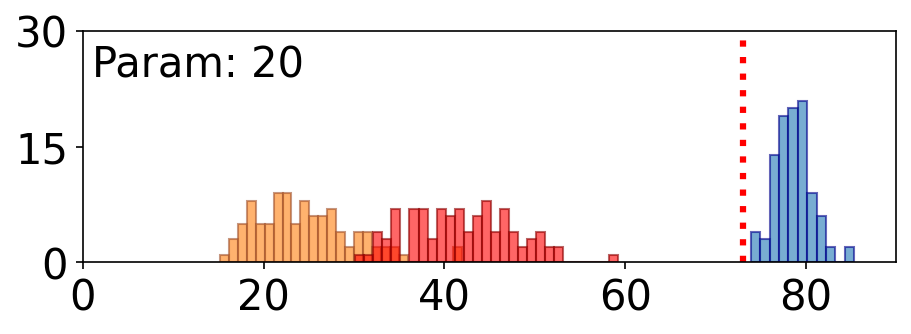}
            \includegraphics[width=1.03\linewidth]{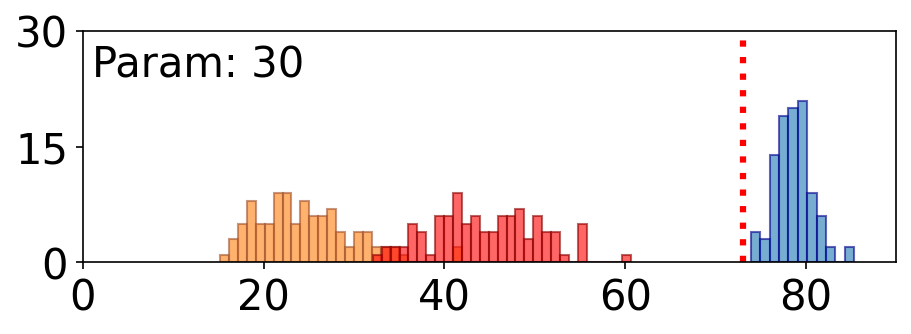}
            \includegraphics[width=1.03\linewidth]{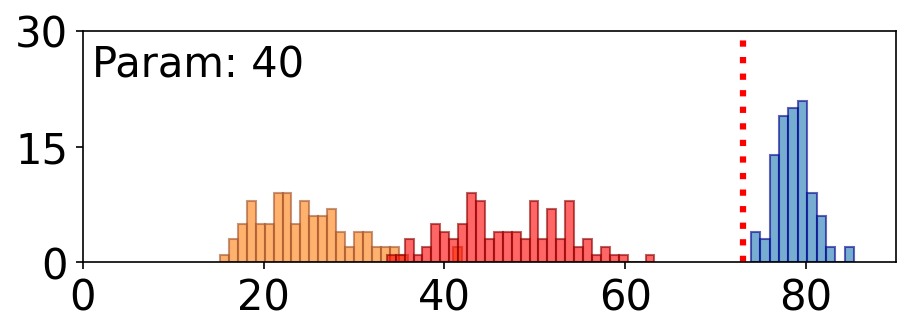}
            \includegraphics[width=1.03\linewidth]{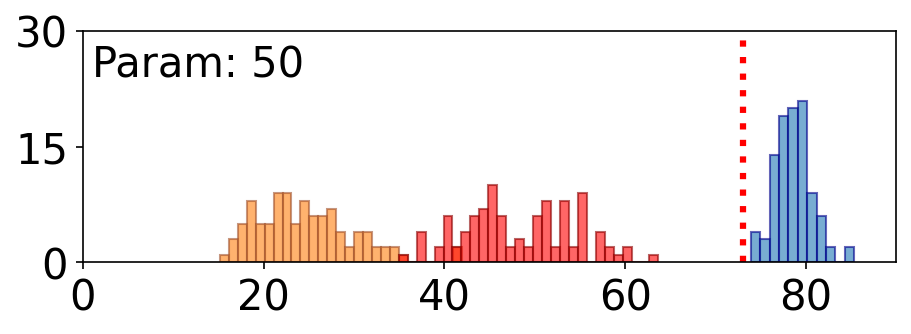}}
        \end{subfigure}
        
    \caption{The effects of different perturbations on $\|\mathbf{w}-\hat{\mathbf{w}}^{\prime}\|$. From left to right, the perturbations are set with different parameters (Param). 
    Watermark: RingID. Thresholds: 73.0 (1e-2 FPR).}
    \label{fig: RingID attack shifting}
\end{figure*}

\begin{figure*}[t]
        \begin{subfigure}[c]{0.185\textwidth}
            \centering
            \subfloat{\centering
            \makebox[10pt][l]{\raisebox{5pt}{\rotatebox[origin=l]{90}{\tiny Blurring}}}
            \includegraphics[width=1.03\linewidth]{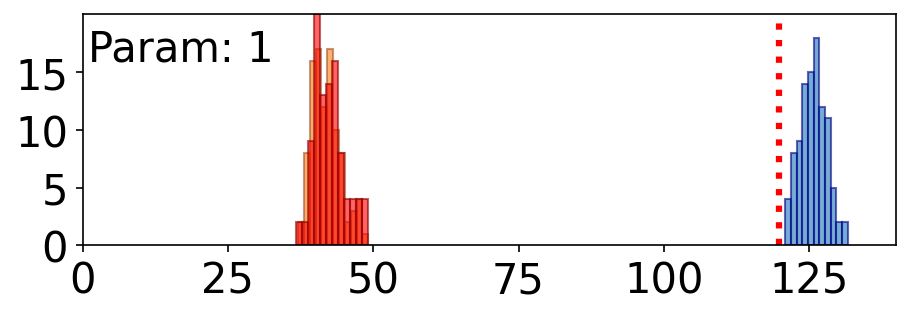}
            \includegraphics[width=1.03\linewidth]{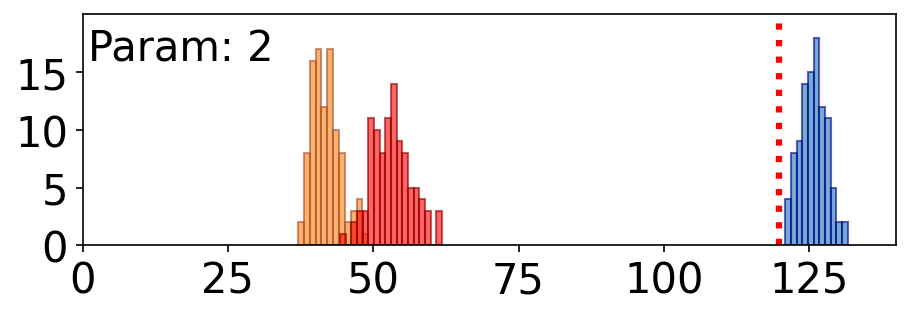}
            \includegraphics[width=1.03\linewidth]{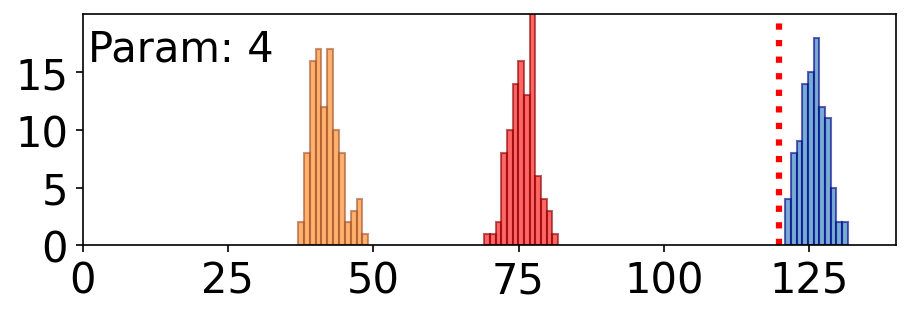}
            \includegraphics[width=1.03\linewidth]{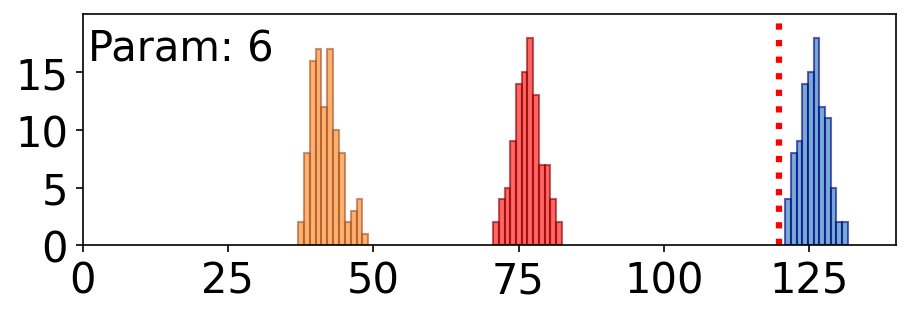}
            \includegraphics[width=1.03\linewidth]{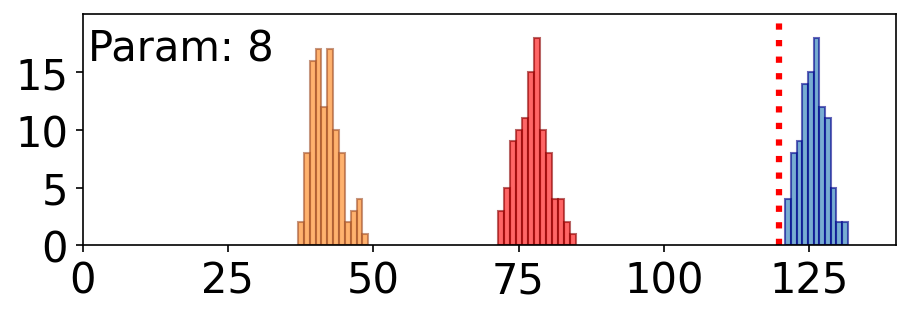}}
    
            \subfloat{\centering
            \makebox[10pt][l]{\raisebox{5pt}{\rotatebox[origin=l]{90}{\tiny Jitter}}}
            \includegraphics[width=1.03\linewidth]{figs/figs/ringid_2_128/hist_dist_ringid_128_color_jitter_1.png}
            \includegraphics[width=1.03\linewidth]{figs/figs/ringid_2_128/hist_dist_ringid_128_color_jitter_2.png}
            \includegraphics[width=1.03\linewidth]{figs/figs/ringid_2_128/hist_dist_ringid_128_color_jitter_4.png}
            \includegraphics[width=1.03\linewidth]{figs/figs/ringid_2_128/hist_dist_ringid_128_color_jitter_6.png}
            \includegraphics[width=1.03\linewidth]{figs/figs/ringid_2_128/hist_dist_ringid_128_color_jitter_8.png}}
    
            \subfloat{\centering
            \makebox[10pt][l]{\raisebox{5pt}{\rotatebox[origin=l]{90}{\tiny JPEG}}}
            \includegraphics[width=1.03\linewidth]{figs/figs/ringid_2_128/hist_dist_ringid_128_jpeg_80.png}
            \includegraphics[width=1.03\linewidth]{figs/figs/ringid_2_128/hist_dist_ringid_128_jpeg_40.png}
            \includegraphics[width=1.03\linewidth]{figs/figs/ringid_2_128/hist_dist_ringid_128_jpeg_25.png}
            \includegraphics[width=1.03\linewidth]{figs/figs/ringid_2_128/hist_dist_ringid_128_jpeg_5.png}
            \includegraphics[width=1.03\linewidth]{figs/figs/ringid_2_128/hist_dist_ringid_128_jpeg_1.png}}
    
            \subfloat{\centering
            \makebox[10pt][l]{\raisebox{5pt}{\rotatebox[origin=l]{90}{\tiny Rotation}}}
            \includegraphics[width=1.03\linewidth]{figs/figs/ringid_2_128/hist_dist_ringid_128_rotation_5.png}
            \includegraphics[width=1.03\linewidth]{figs/figs/ringid_2_128/hist_dist_ringid_128_rotation_15.png}
            \includegraphics[width=1.03\linewidth]{figs/figs/ringid_2_128/hist_dist_ringid_128_rotation_30.png}
            \includegraphics[width=1.03\linewidth]{figs/figs/ringid_2_128/hist_dist_ringid_128_rotation_45.png}
            \includegraphics[width=1.03\linewidth]{figs/figs/ringid_2_128/hist_dist_ringid_128_rotation_75.png}}
    
            \subfloat{\centering
            \makebox[10pt][l]{\raisebox{5pt}{\rotatebox[origin=l]{90}{\tiny Cropping}}}
            \includegraphics[width=1.03\linewidth]{figs/figs/ringid_2_128/hist_dist_ringid_128_cropping_0.9.png}
            \includegraphics[width=1.03\linewidth]{figs/figs/ringid_2_128/hist_dist_ringid_128_cropping_0.75.png}
            \includegraphics[width=1.03\linewidth]{figs/figs/ringid_2_128/hist_dist_ringid_128_cropping_0.6.png}
            \includegraphics[width=1.03\linewidth]{figs/figs/ringid_2_128/hist_dist_ringid_128_cropping_0.4.png}
            \includegraphics[width=1.03\linewidth]{figs/figs/ringid_2_128/hist_dist_ringid_128_cropping_0.2.png}}
    
            \subfloat{\centering
            \makebox[10pt][l]{\raisebox{3pt}{\rotatebox[origin=l]{90}{\tiny Elastic}}}
            \includegraphics[width=1.03\linewidth]{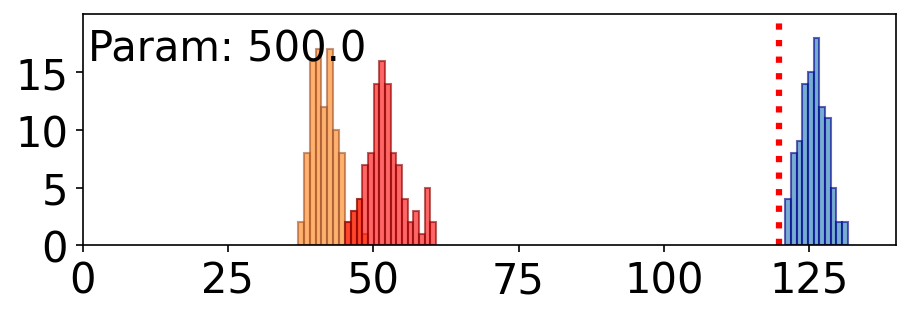}
            \includegraphics[width=1.03\linewidth]{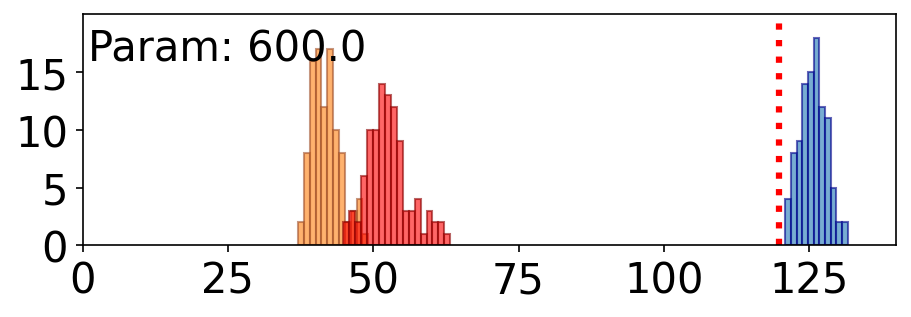}
            \includegraphics[width=1.03\linewidth]{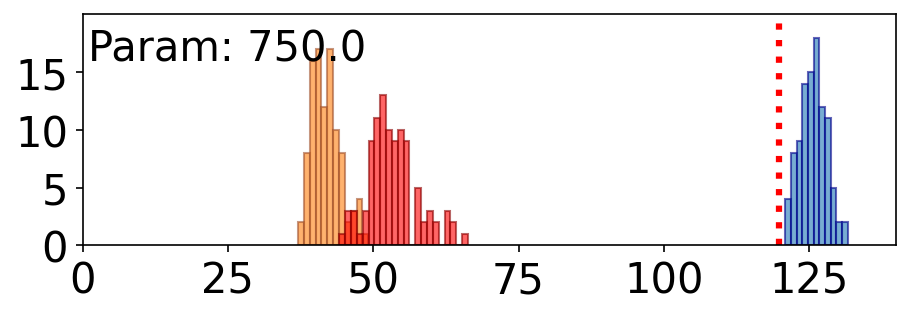}
            \includegraphics[width=1.03\linewidth]{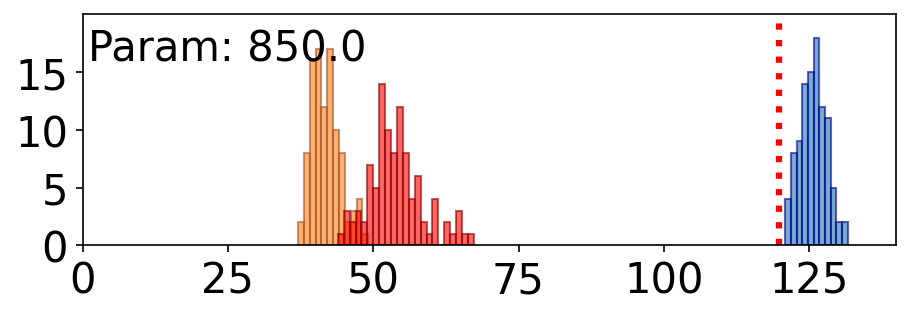}
            \includegraphics[width=1.03\linewidth]{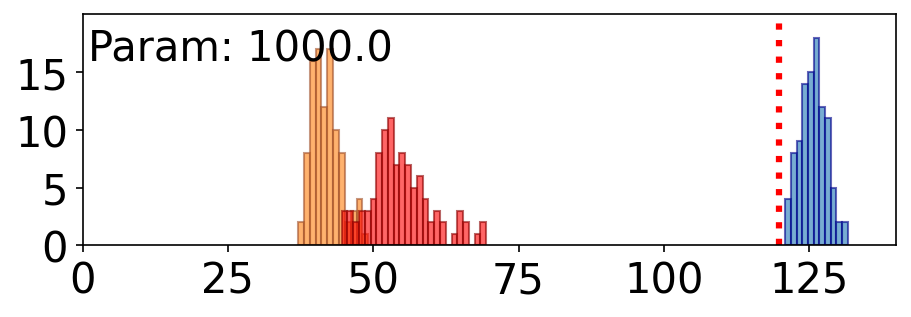}}
    
            \subfloat{\centering
            \makebox[10pt][l]{\raisebox{3pt}{\rotatebox[origin=l]{90}{\scalebox{.4}{Perspective}}}}
            \includegraphics[width=1.03\linewidth]{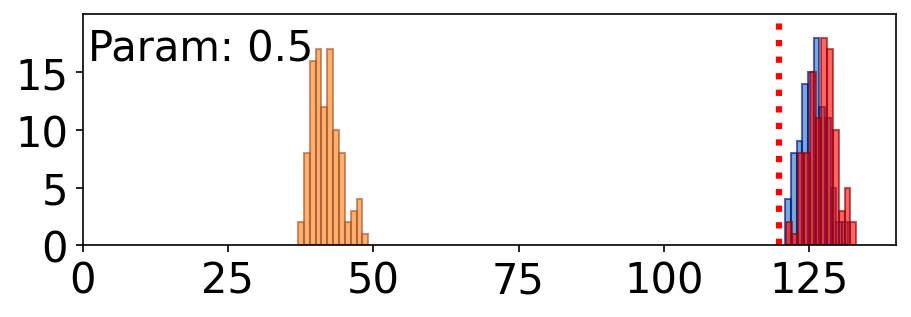}
            \includegraphics[width=1.03\linewidth]{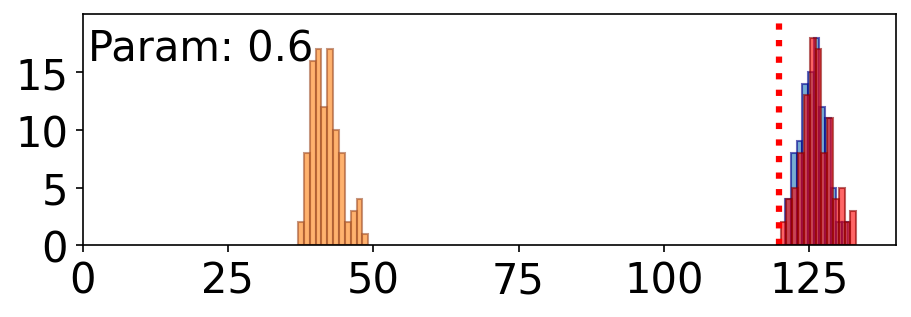}
            \includegraphics[width=1.03\linewidth]{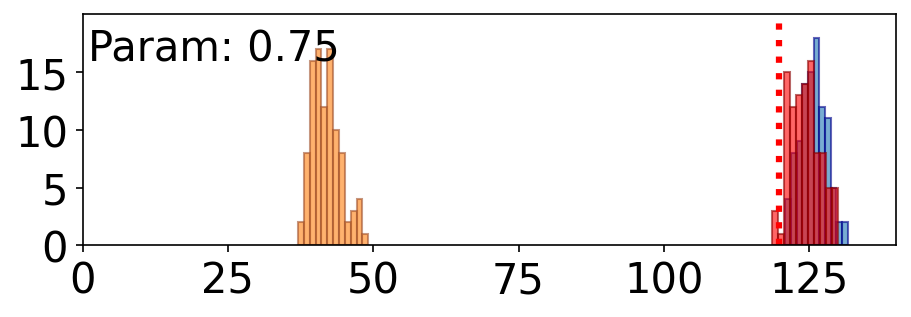}
            \includegraphics[width=1.03\linewidth]{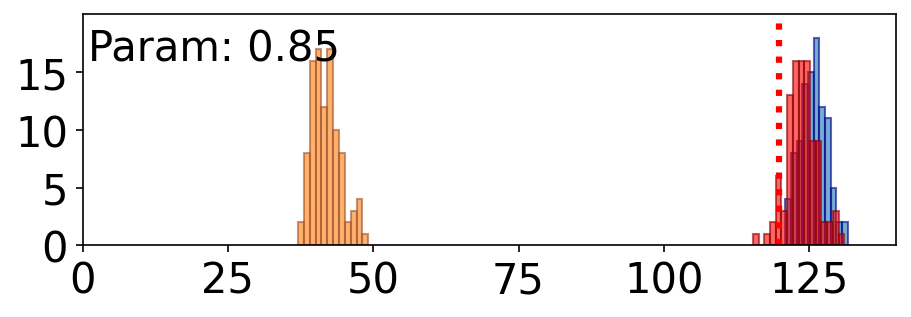}
            \includegraphics[width=1.03\linewidth]{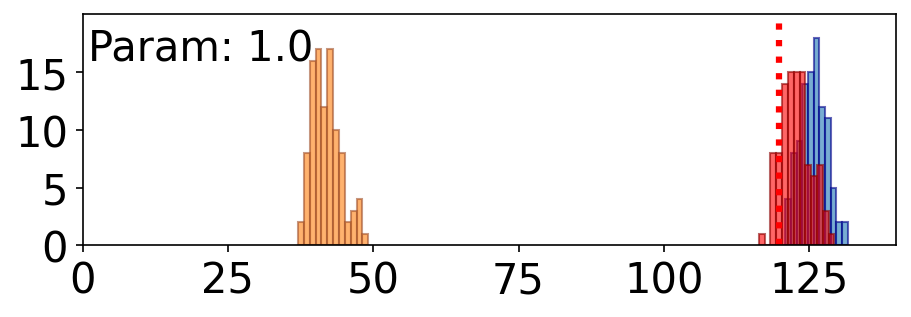}}
    
            \subfloat{\centering
            \makebox[10pt][l]{\raisebox{3pt}{\rotatebox[origin=l]{90}{\tiny Shear}}}
            \includegraphics[width=1.03\linewidth]{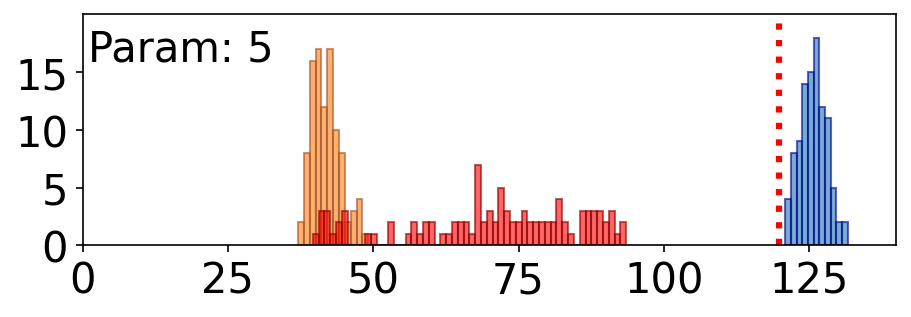}
            \includegraphics[width=1.03\linewidth]{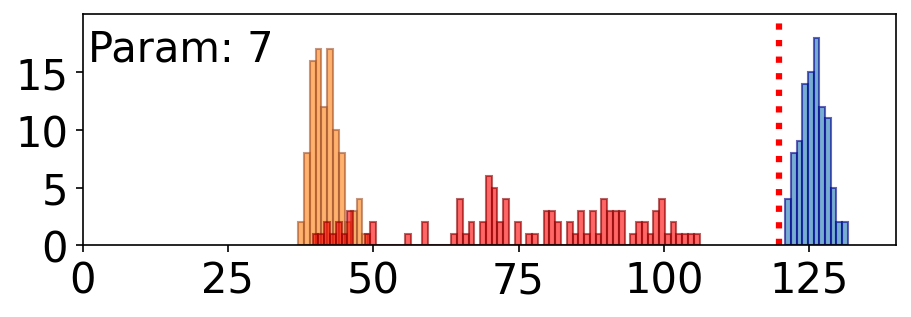}
            \includegraphics[width=1.03\linewidth]{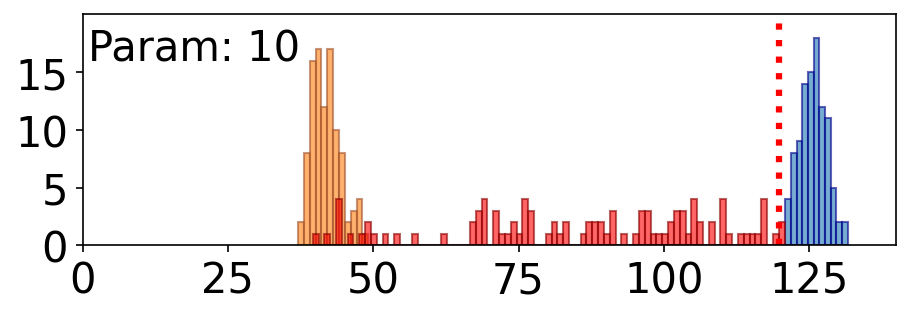}
            \includegraphics[width=1.03\linewidth]{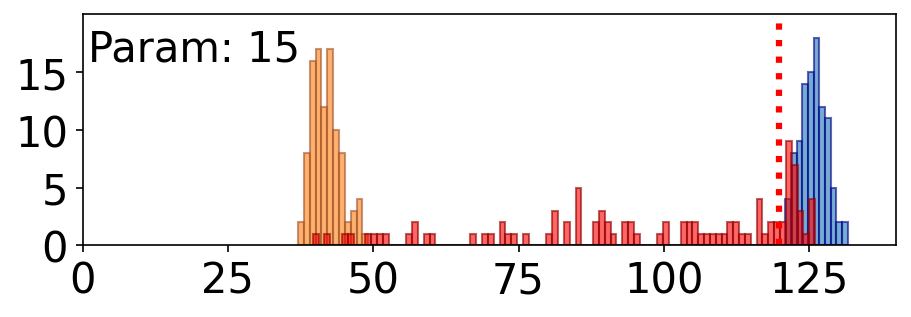}
            \includegraphics[width=1.03\linewidth]{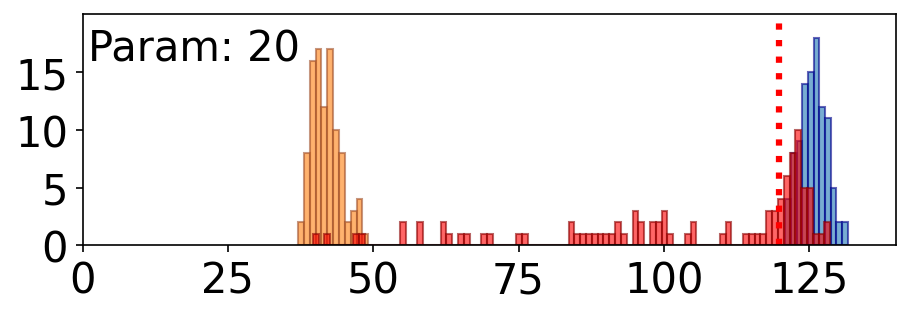}}

            \subfloat{\centering
            \makebox[10pt][l]{\raisebox{3pt}{\rotatebox[origin=l]{90}{\scalebox{.4}{Regen-VAE}}}}
            \includegraphics[width=1.03\linewidth]{figs/figs/ringid_2_128/hist_dist_ringid_128_regen_vae_5.png}
            \includegraphics[width=1.03\linewidth]{figs/figs/ringid_2_128/hist_dist_ringid_128_regen_vae_4.png}
            \includegraphics[width=1.03\linewidth]{figs/figs/ringid_2_128/hist_dist_ringid_128_regen_vae_3.png}
            \includegraphics[width=1.03\linewidth]{figs/figs/ringid_2_128/hist_dist_ringid_128_regen_vae_2.png}
            \includegraphics[width=1.03\linewidth]{figs/figs/ringid_2_128/hist_dist_ringid_128_regen_vae_1.png}}
    
            \subfloat{\centering
            \makebox[10pt][l]{\raisebox{0pt}{\rotatebox[origin=l]{90}{\scalebox{.4}{Regen-Diff}}}}
            \includegraphics[width=1.03\linewidth]{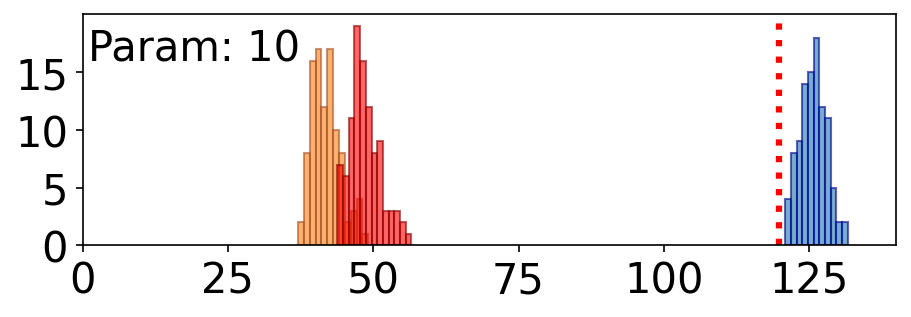}
            \includegraphics[width=1.03\linewidth]{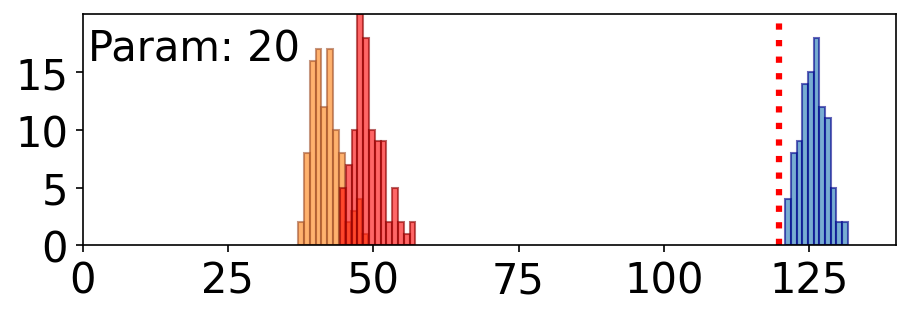}
            \includegraphics[width=1.03\linewidth]{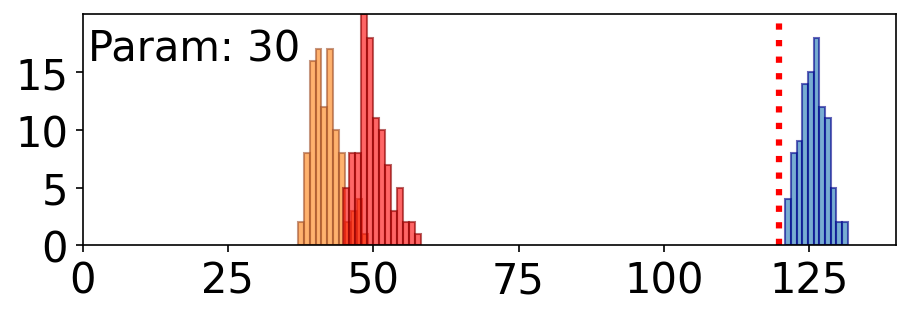}
            \includegraphics[width=1.03\linewidth]{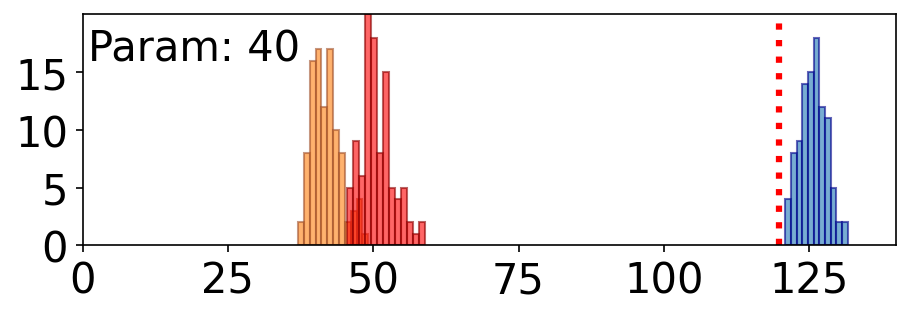}
            \includegraphics[width=1.03\linewidth]{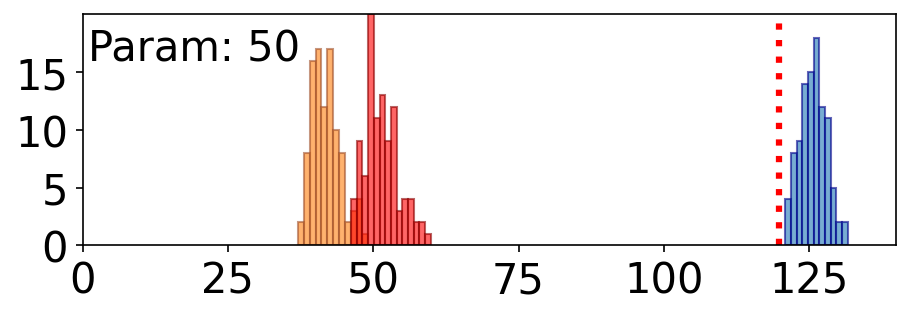}}
        \end{subfigure}
        
    \caption{The effects of different perturbations on $\|\mathbf{w}-\hat{\mathbf{w}}^{\prime}\|$. From left to right, the perturbations are set with different parameters (Param). 
    Watermark: RingID with channel 0 pattern multiplied by $2$ and $\alpha=128$ (RingID-2-128). Thresholds: 119.85 (1e-2 FPR).}
    \label{fig: RingID 2 128 attack shifting}
\end{figure*}

\begin{figure*}[t]
        \begin{subfigure}[c]{0.185\textwidth}
            \centering
            \subfloat{\centering
            \makebox[10pt][l]{\raisebox{5pt}{\rotatebox[origin=l]{90}{\tiny Blurring}}}
            \includegraphics[width=1.03\linewidth]{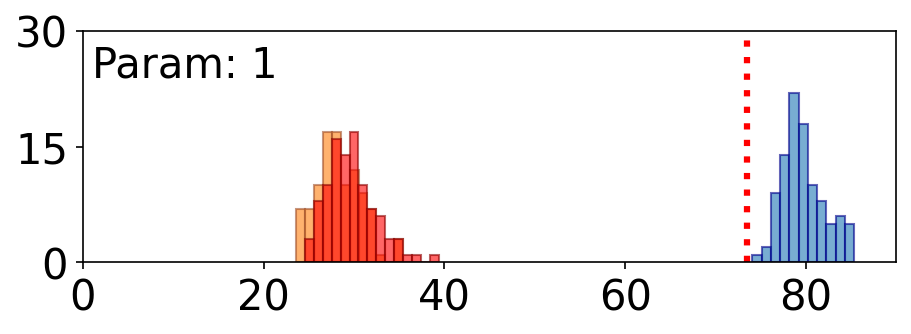}
            \includegraphics[width=1.03\linewidth]{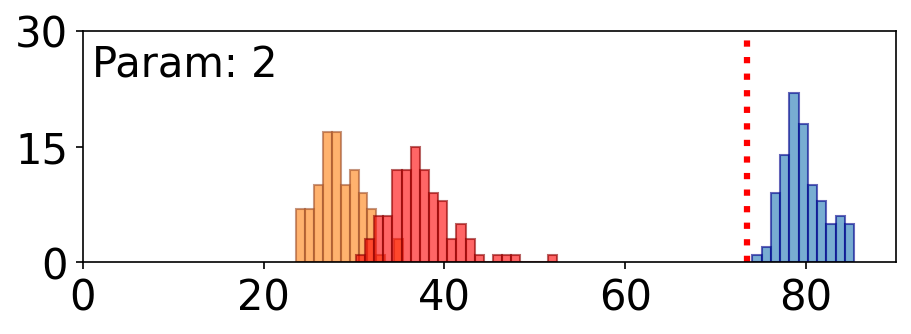}
            \includegraphics[width=1.03\linewidth]{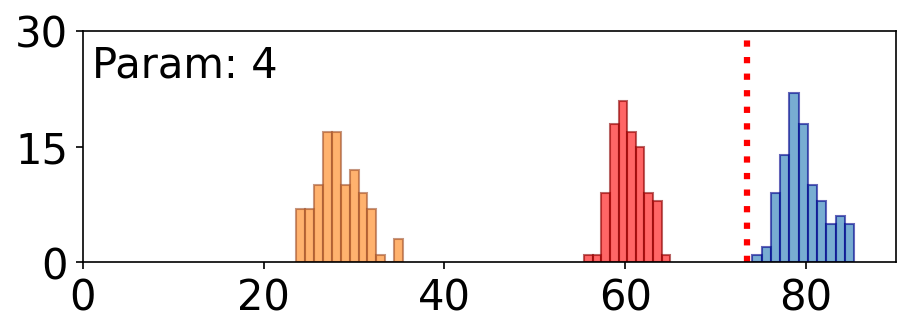}
            \includegraphics[width=1.03\linewidth]{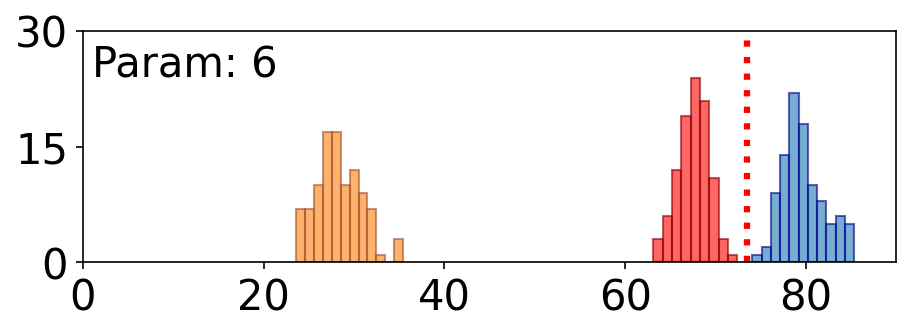}
            \includegraphics[width=1.03\linewidth]{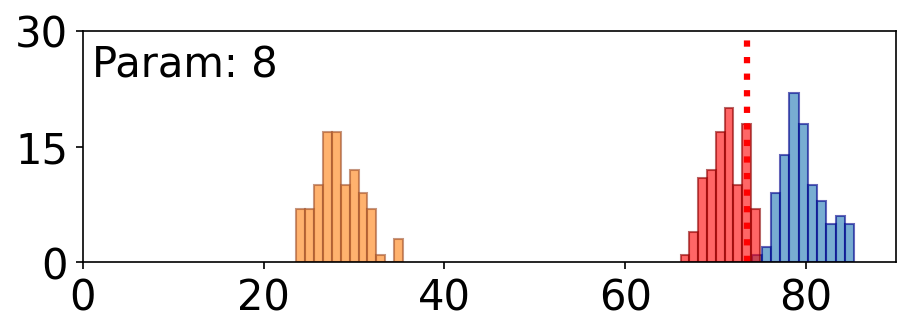}}
    
            \subfloat{\centering
            \makebox[10pt][l]{\raisebox{5pt}{\rotatebox[origin=l]{90}{\tiny Jitter}}}
            \includegraphics[width=1.03\linewidth]{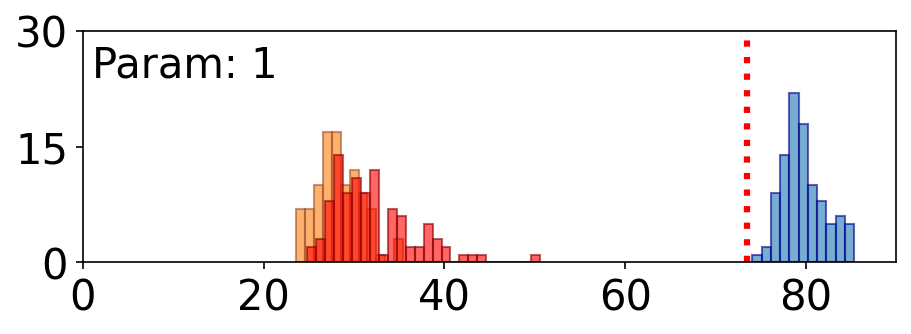}
            \includegraphics[width=1.03\linewidth]{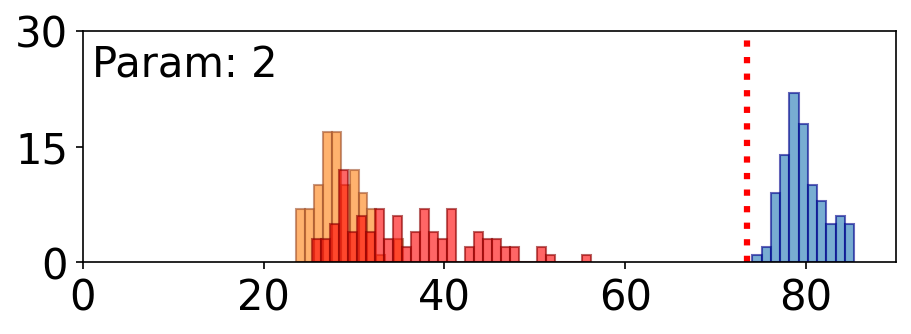}
            \includegraphics[width=1.03\linewidth]{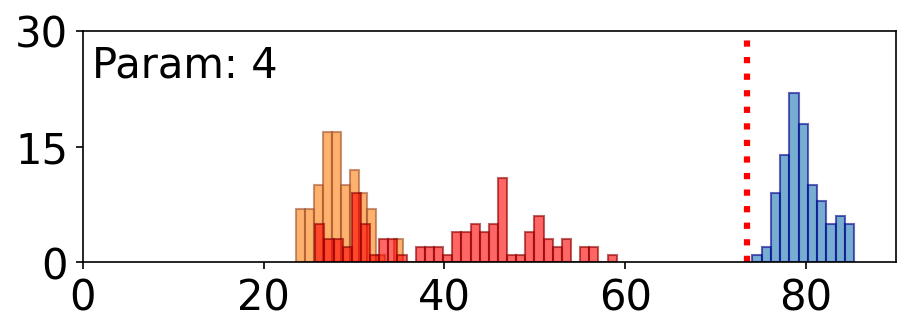}
            \includegraphics[width=1.03\linewidth]{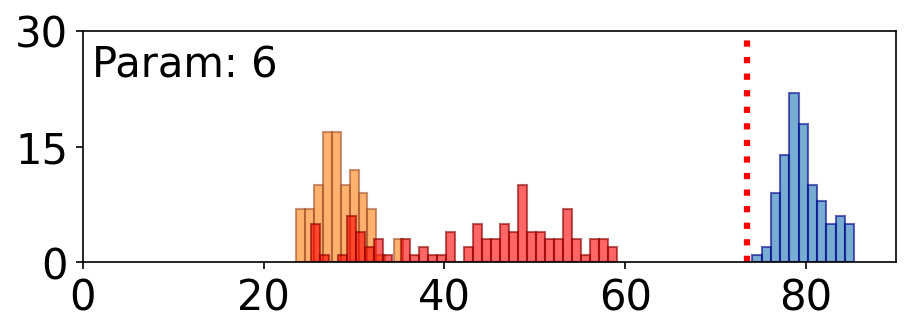}
            \includegraphics[width=1.03\linewidth]{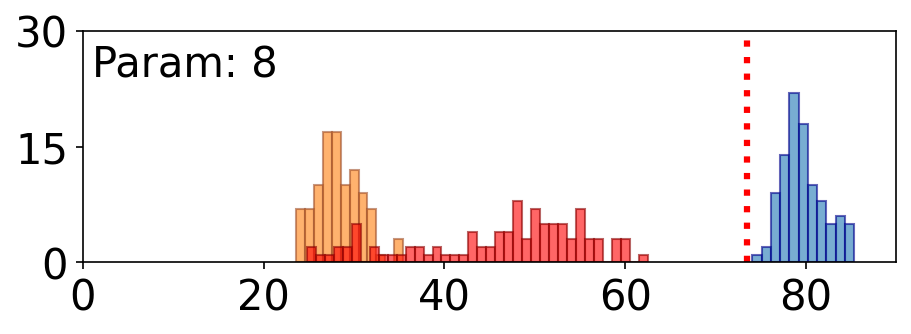}}
    
            \subfloat{\centering
            \makebox[10pt][l]{\raisebox{5pt}{\rotatebox[origin=l]{90}{\tiny JPEG}}}
            \includegraphics[width=1.03\linewidth]{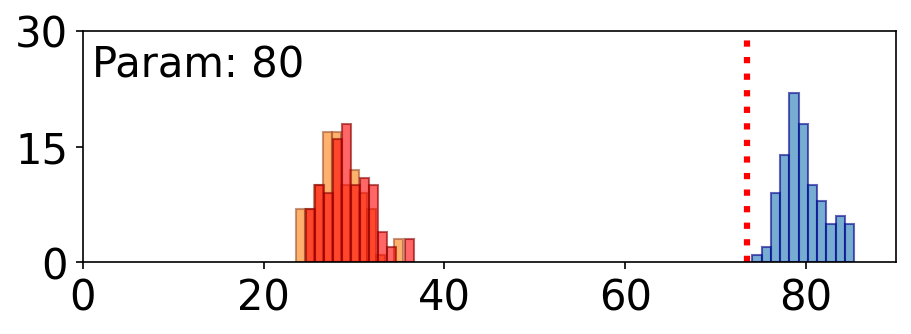}
            \includegraphics[width=1.03\linewidth]{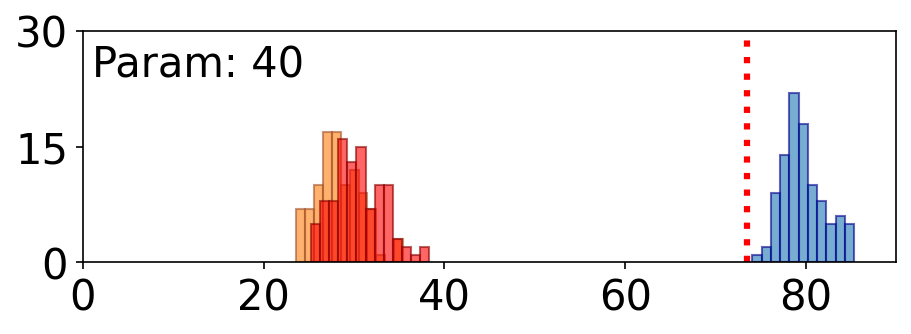}
            \includegraphics[width=1.03\linewidth]{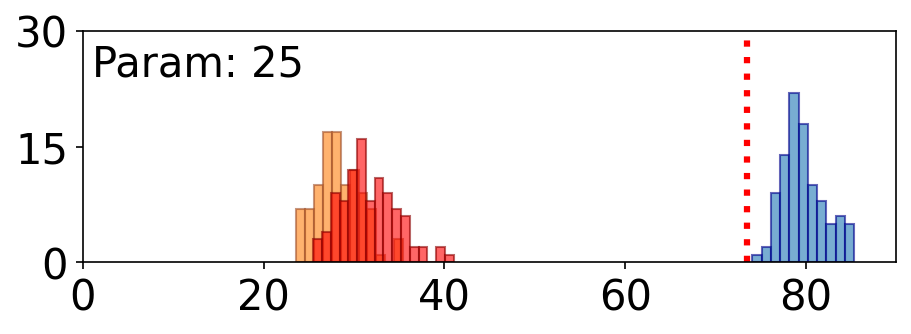}
            \includegraphics[width=1.03\linewidth]{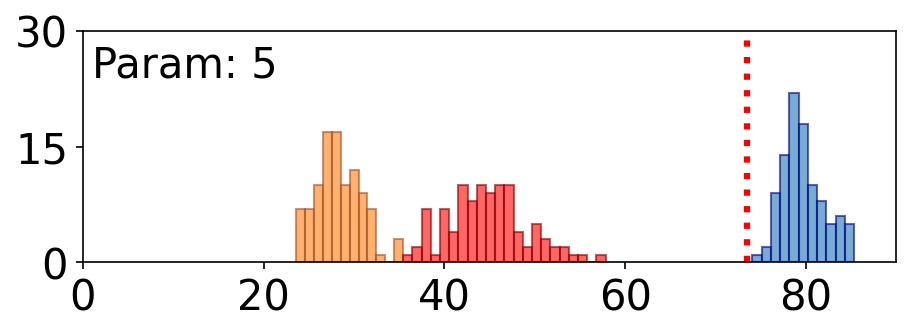}
            \includegraphics[width=1.03\linewidth]{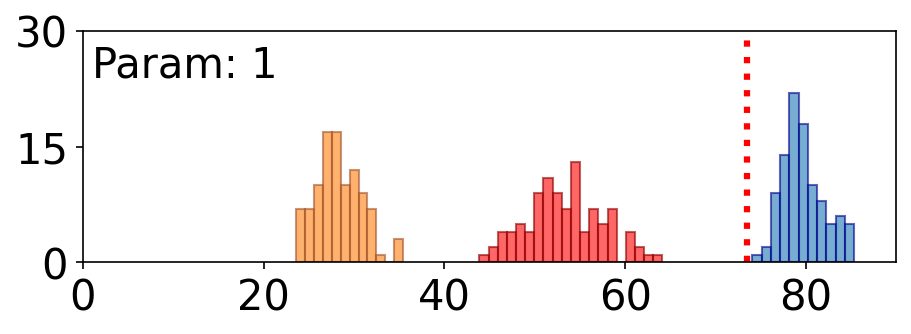}}
    
            \subfloat{\centering
            \makebox[10pt][l]{\raisebox{5pt}{\rotatebox[origin=l]{90}{\tiny Rotation}}}
            \includegraphics[width=1.03\linewidth]{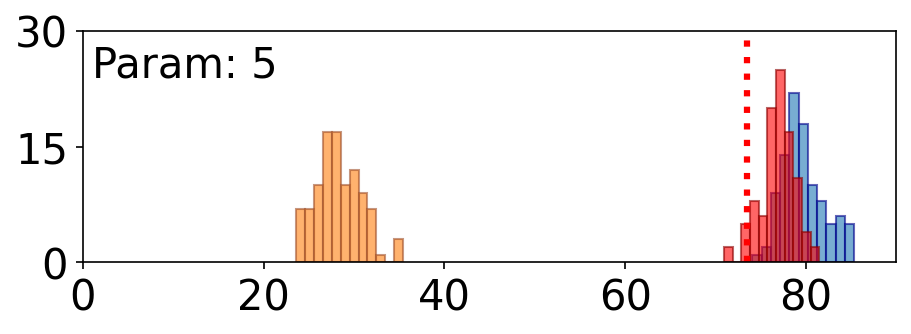}
            \includegraphics[width=1.03\linewidth]{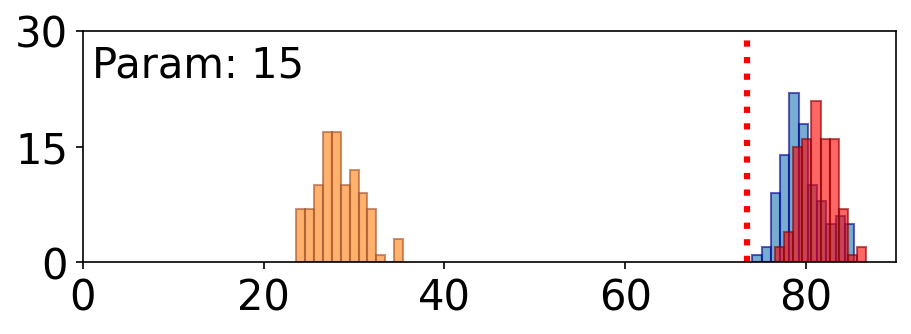}
            \includegraphics[width=1.03\linewidth]{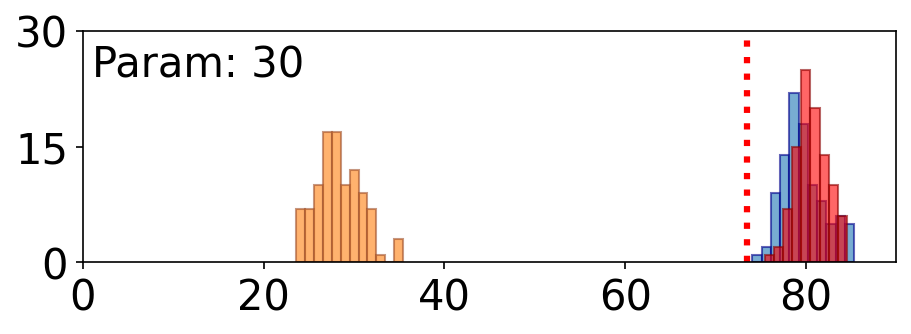}
            \includegraphics[width=1.03\linewidth]{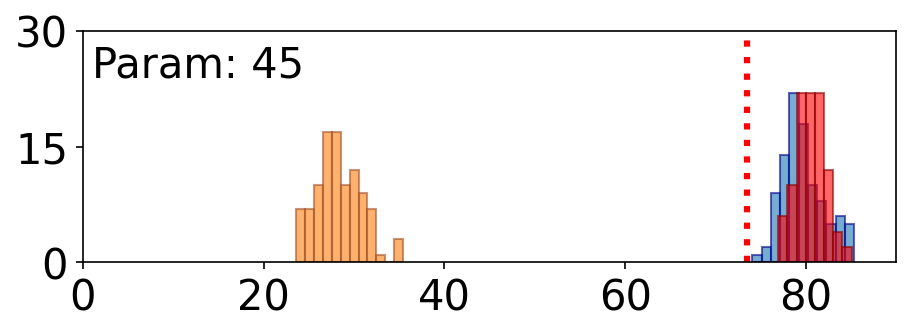}
            \includegraphics[width=1.03\linewidth]{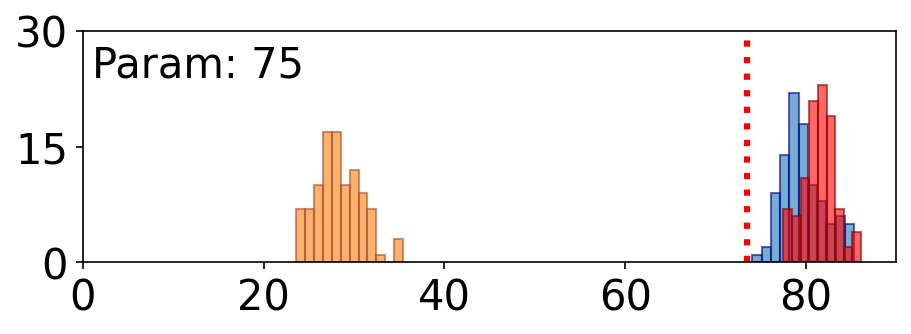}}
    
            \subfloat{\centering
            \makebox[10pt][l]{\raisebox{5pt}{\rotatebox[origin=l]{90}{\tiny Cropping}}}
            \includegraphics[width=1.03\linewidth]{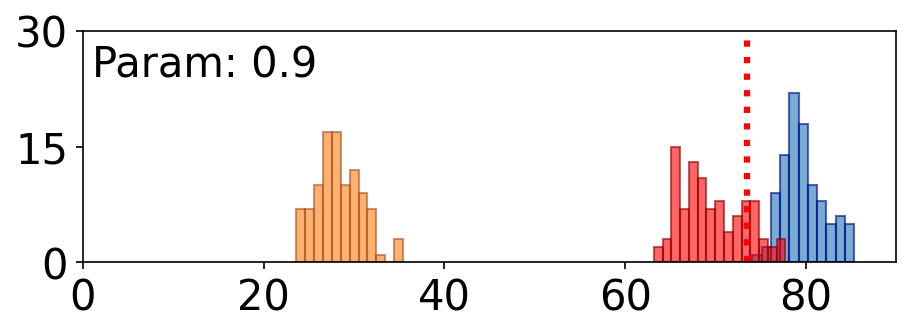}
            \includegraphics[width=1.03\linewidth]{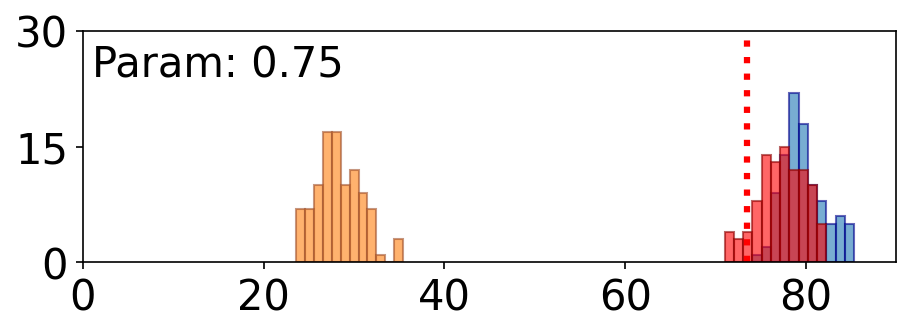}
            \includegraphics[width=1.03\linewidth]{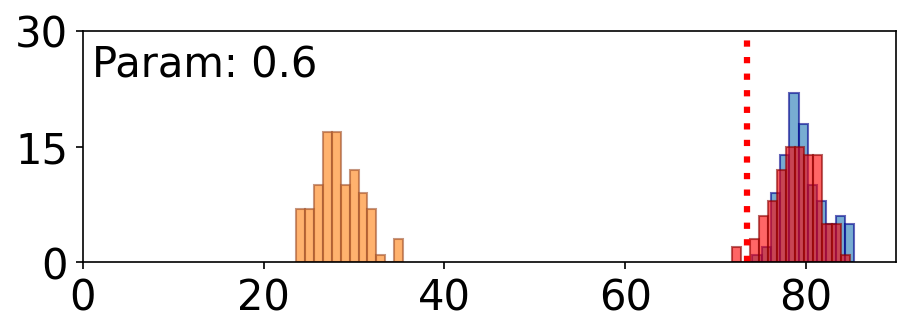}
            \includegraphics[width=1.03\linewidth]{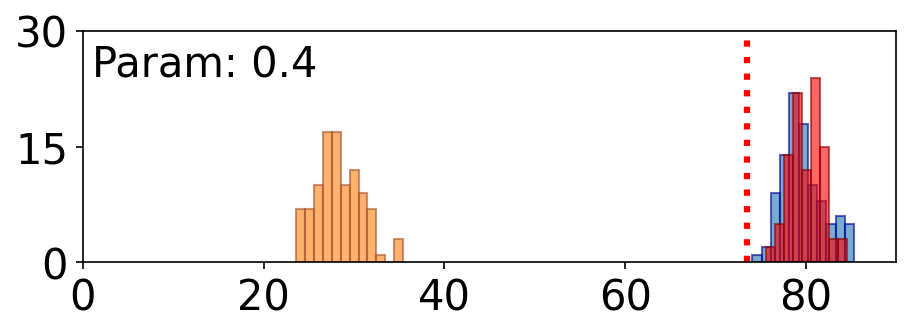}
            \includegraphics[width=1.03\linewidth]{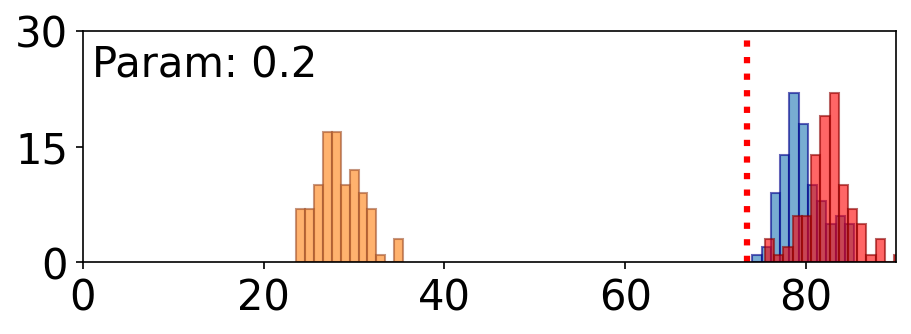}}
    
            \subfloat{\centering
            \makebox[10pt][l]{\raisebox{3pt}{\rotatebox[origin=l]{90}{\tiny Elastic}}}
            \includegraphics[width=1.03\linewidth]{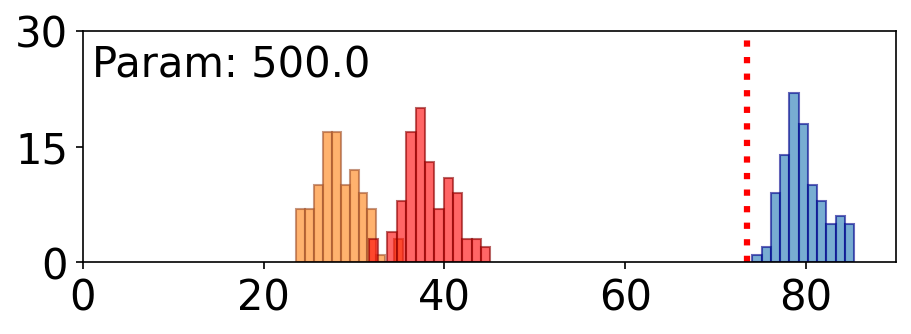}
            \includegraphics[width=1.03\linewidth]{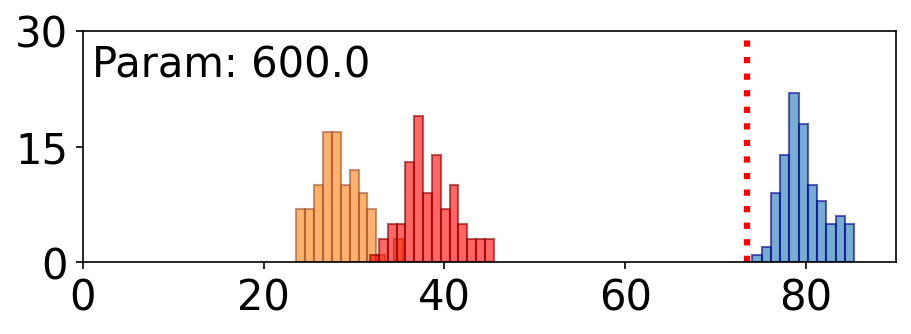}
            \includegraphics[width=1.03\linewidth]{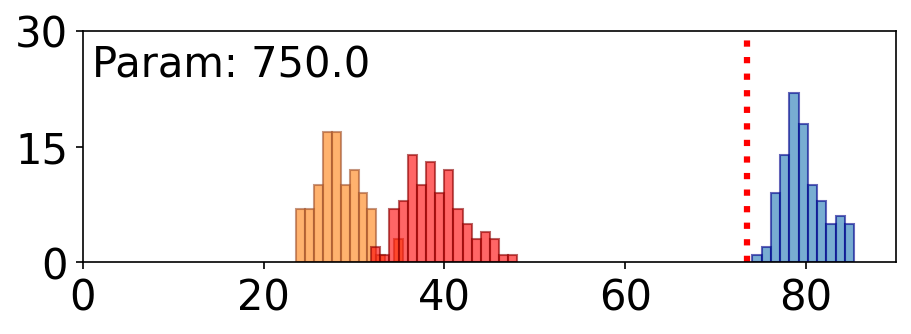}
            \includegraphics[width=1.03\linewidth]{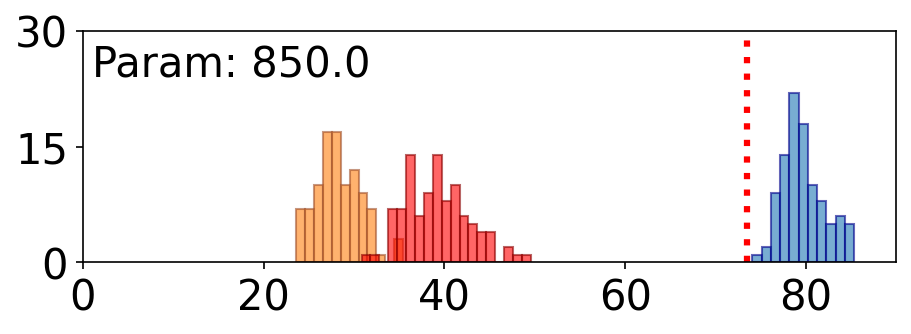}
            \includegraphics[width=1.03\linewidth]{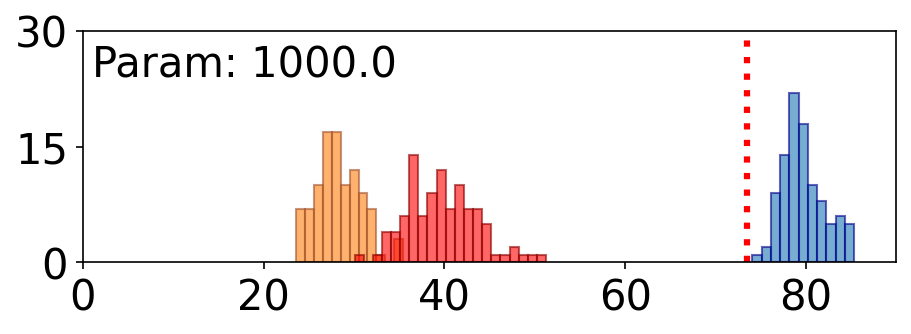}}
    
            \subfloat{\centering
            \makebox[10pt][l]{\raisebox{3pt}{\rotatebox[origin=l]{90}{\scalebox{.4}{Perspective}}}}
            \includegraphics[width=1.03\linewidth]{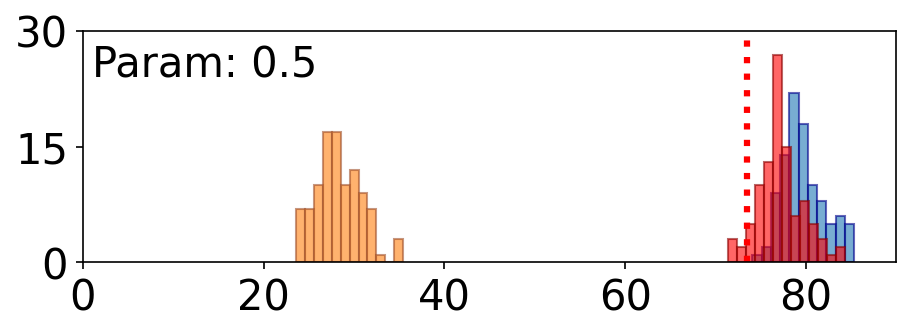}
            \includegraphics[width=1.03\linewidth]{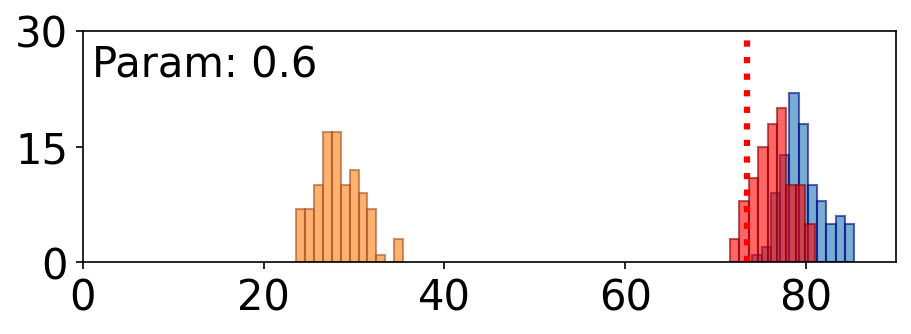}
            \includegraphics[width=1.03\linewidth]{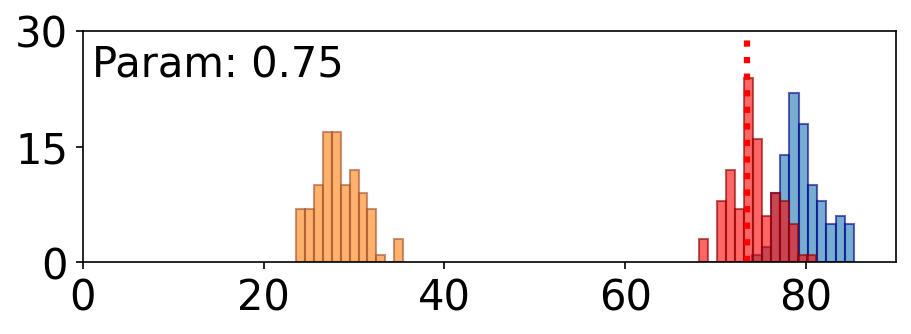}
            \includegraphics[width=1.03\linewidth]{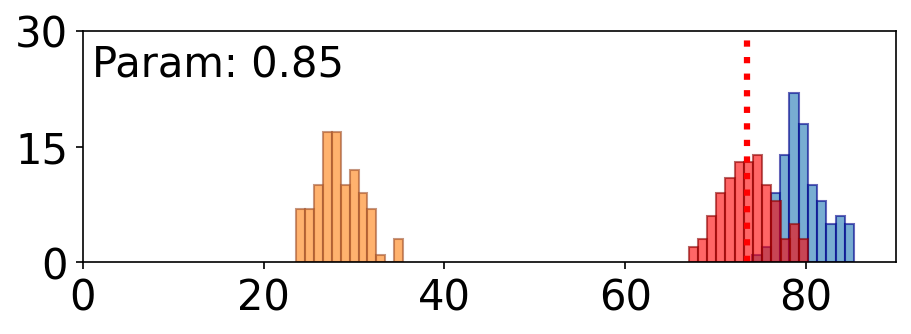}
            \includegraphics[width=1.03\linewidth]{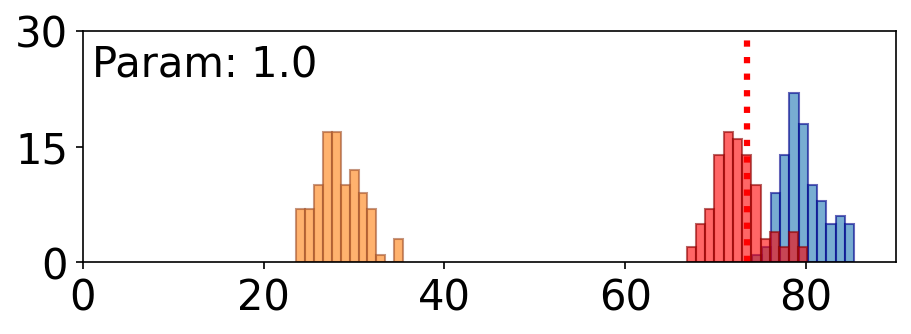}}
    
            \subfloat{\centering
            \makebox[10pt][l]{\raisebox{3pt}{\rotatebox[origin=l]{90}{\tiny Shear}}}
            \includegraphics[width=1.03\linewidth]{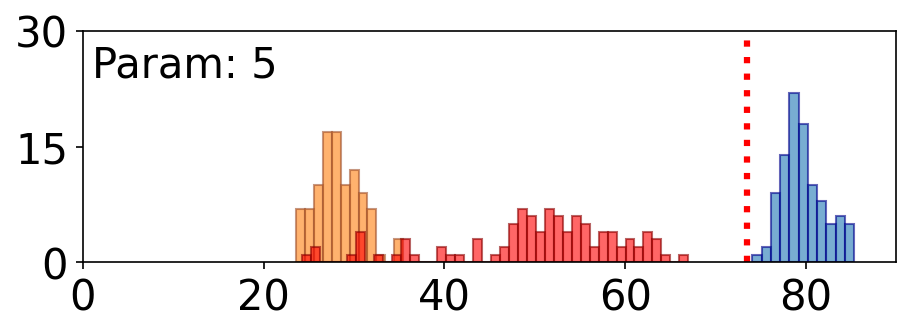}
            \includegraphics[width=1.03\linewidth]{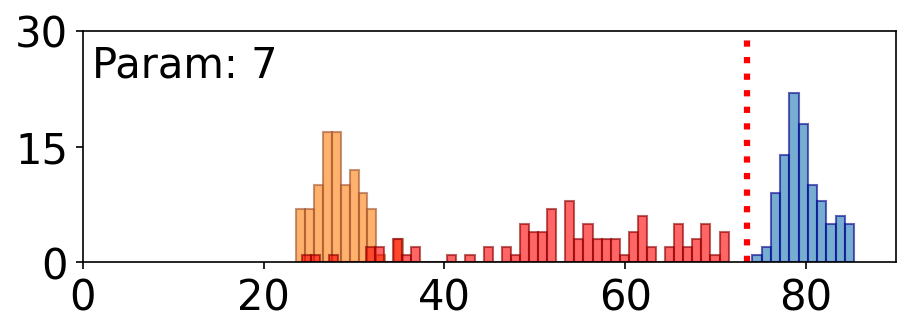}
            \includegraphics[width=1.03\linewidth]{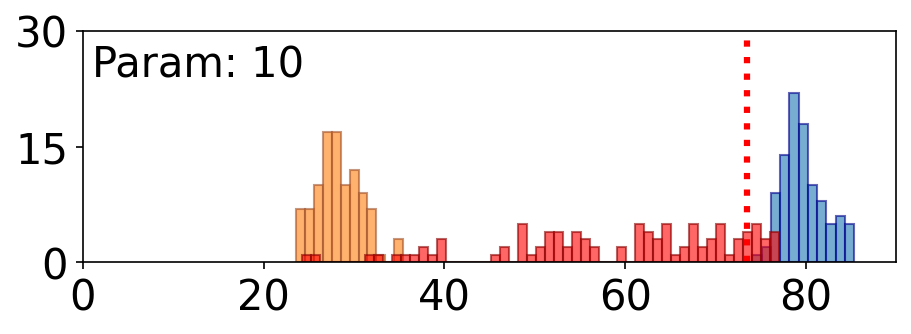}
            \includegraphics[width=1.03\linewidth]{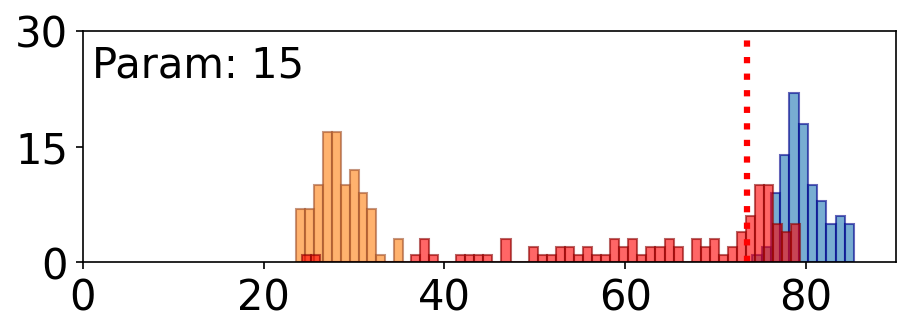}
            \includegraphics[width=1.03\linewidth]{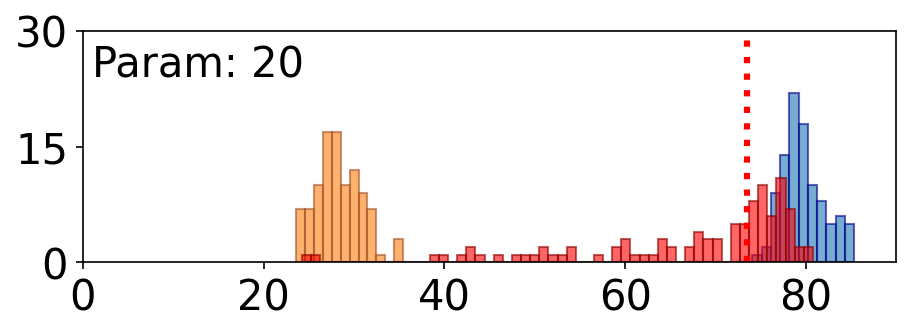}}

            \subfloat{\centering
            \makebox[10pt][l]{\raisebox{10pt}{\rotatebox[origin=l]{90}{\scalebox{.4}{Regen-VAE}}}}
            \includegraphics[width=1.03\linewidth]{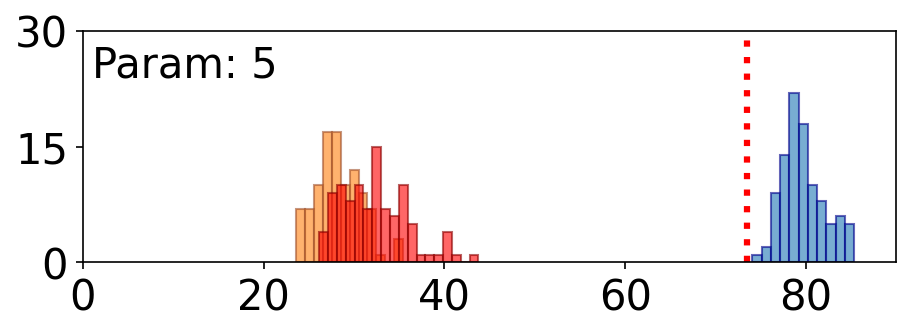}
            \includegraphics[width=1.03\linewidth]{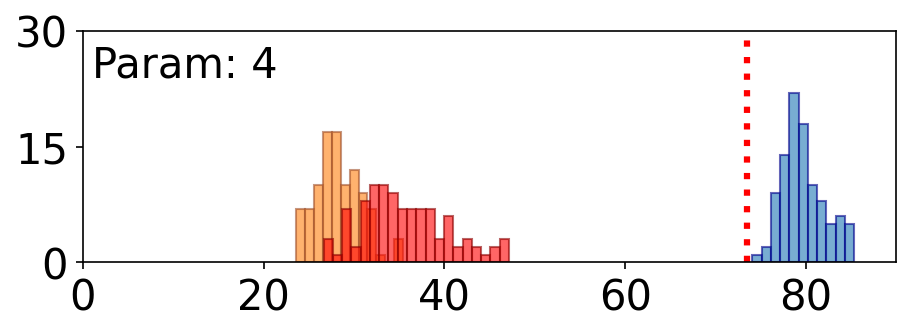}
            \includegraphics[width=1.03\linewidth]{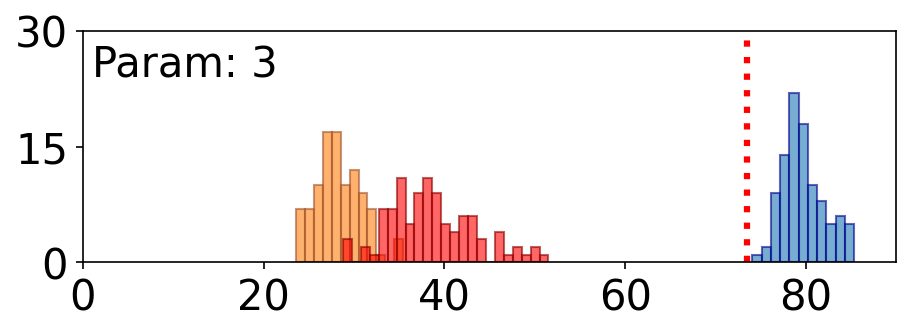}
            \includegraphics[width=1.03\linewidth]{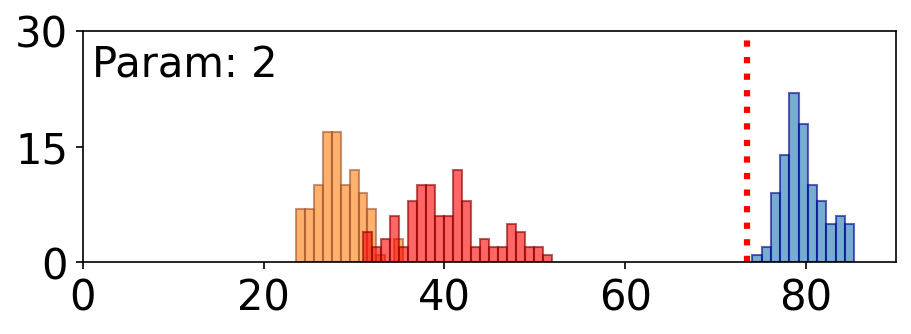}
            \includegraphics[width=1.03\linewidth]{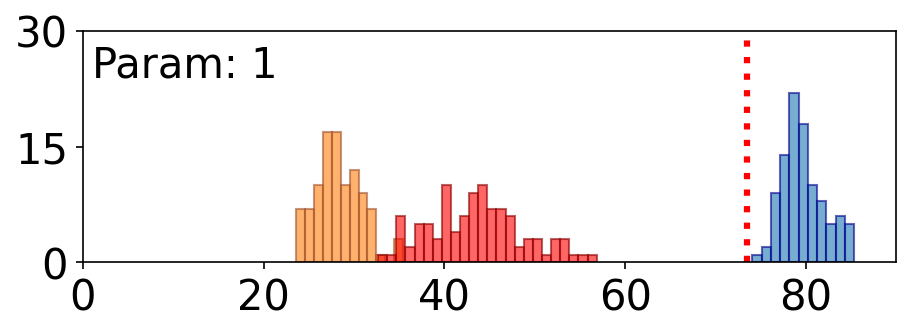}}
    
            \subfloat{\centering
            \makebox[10pt][l]{\raisebox{10pt}{\rotatebox[origin=l]{90}{\scalebox{.4}{Regen-Diff}}}}
            \includegraphics[width=1.03\linewidth]{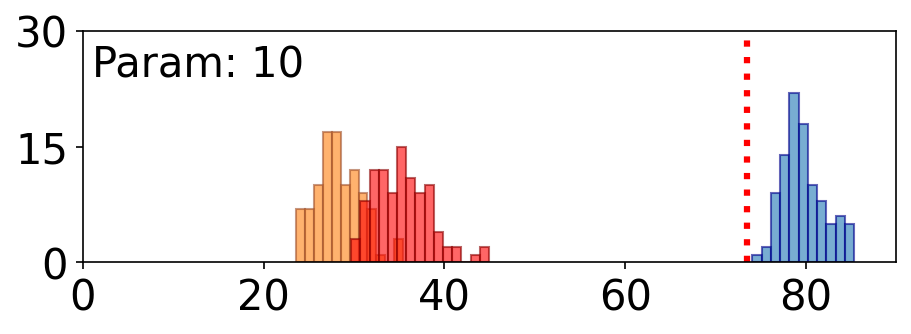}
            \includegraphics[width=1.03\linewidth]{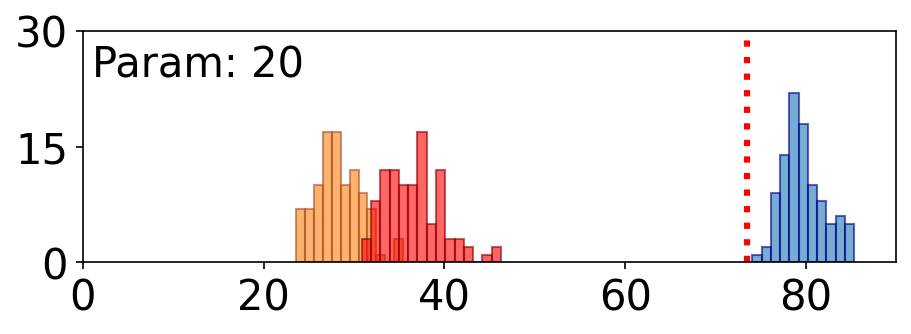}
            \includegraphics[width=1.03\linewidth]{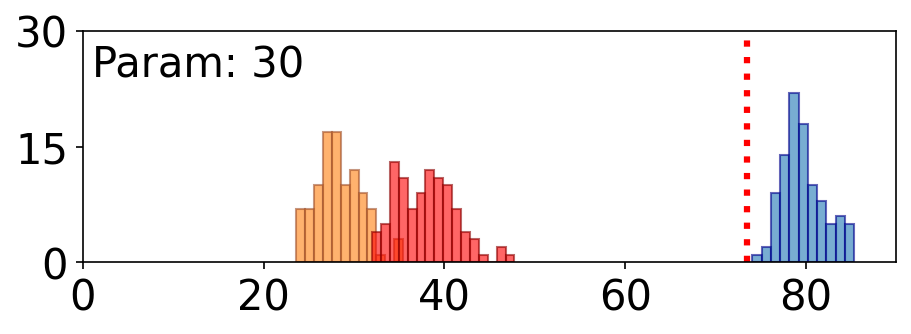}
            \includegraphics[width=1.03\linewidth]{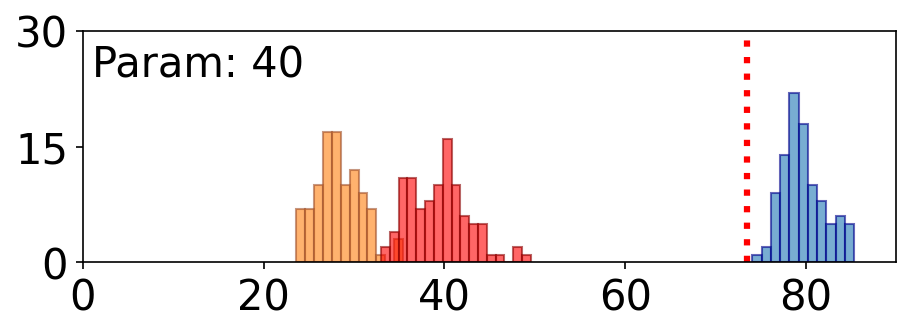}
            \includegraphics[width=1.03\linewidth]{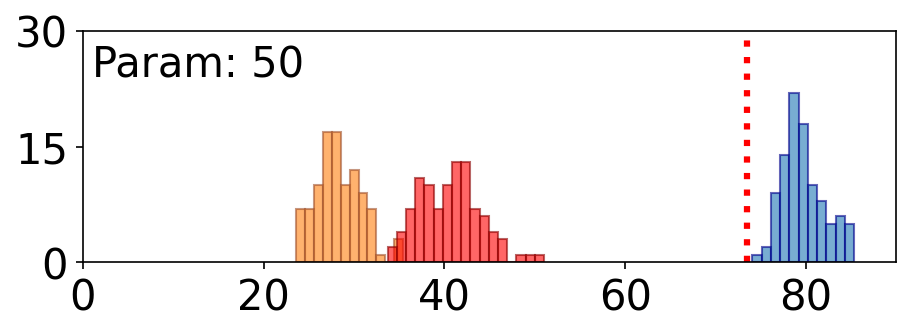}}
        \end{subfigure}
        
    \caption{The effects of different perturbations on $\|\mathbf{w}-\hat{\mathbf{w}}^{\prime}\|$. From left to right, the perturbations are set with different parameters (Param). 
    Watermark: RingID with $\alpha=128$  (RingID-1-128). Thresholds: 73.51 (1e-2 FPR).}
    \label{fig: RingID 128 attack shifting}
\end{figure*}

\begin{figure*}[t]
        \begin{subfigure}[c]{0.185\textwidth}
            \centering
            \subfloat{\centering
            \makebox[10pt][l]{\raisebox{5pt}{\rotatebox[origin=l]{90}{\tiny Blurring}}}
            \includegraphics[width=1.03\linewidth]{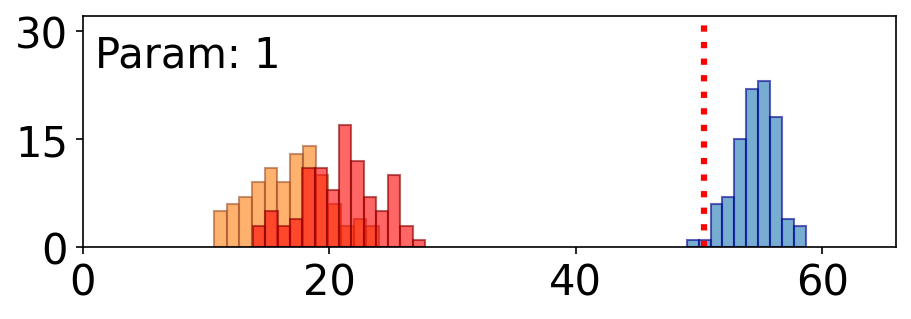}
            \includegraphics[width=1.03\linewidth]{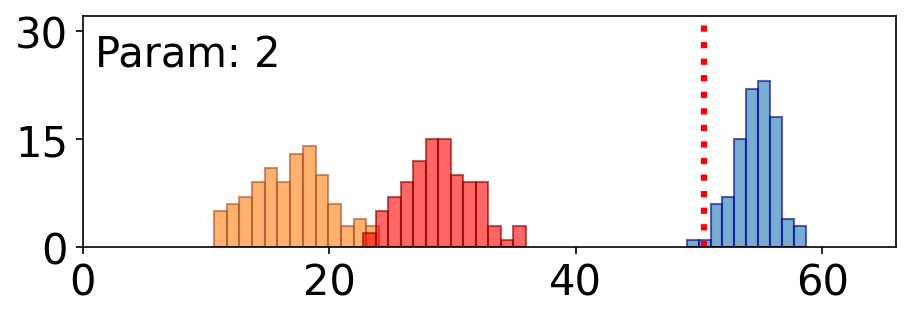}
            \includegraphics[width=1.03\linewidth]{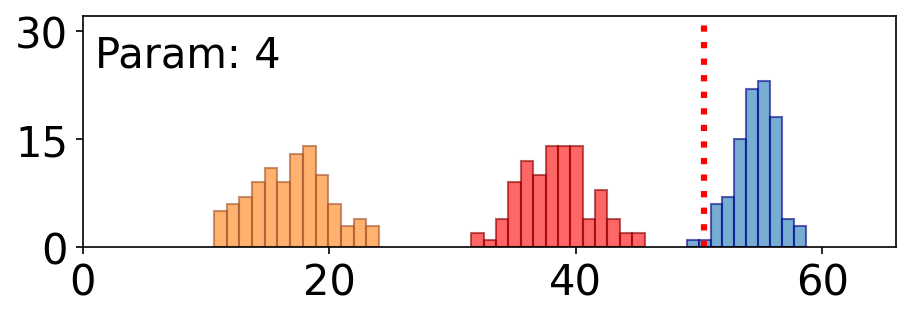}
            \includegraphics[width=1.03\linewidth]{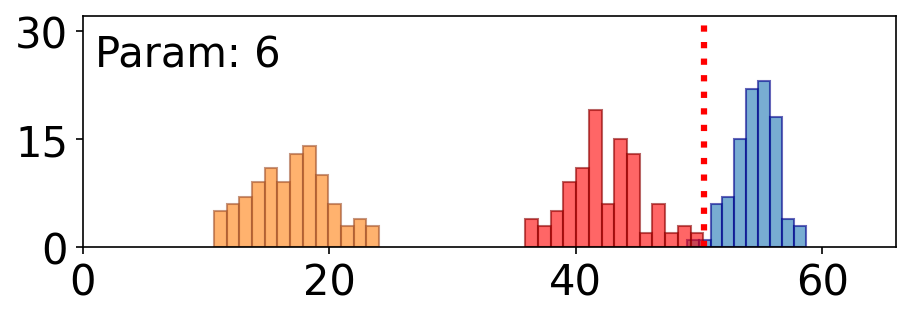}
            \includegraphics[width=1.03\linewidth]{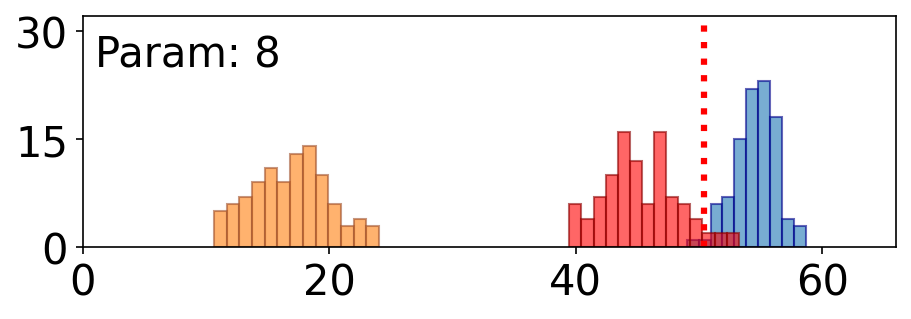}}
    
            \subfloat{\centering
            \makebox[10pt][l]{\raisebox{5pt}{\rotatebox[origin=l]{90}{\tiny Jitter}}}
            \includegraphics[width=1.03\linewidth]{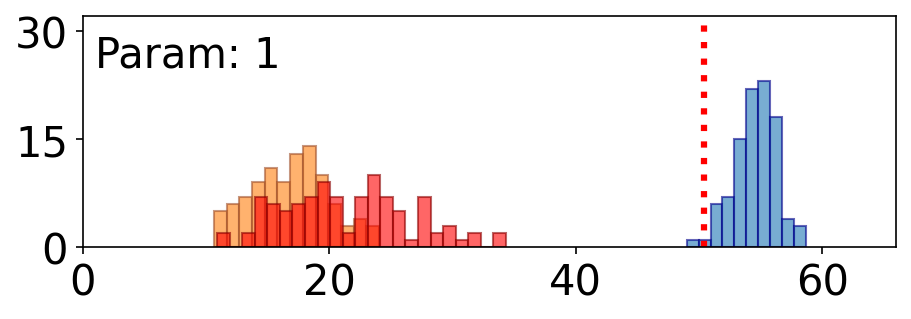}
            \includegraphics[width=1.03\linewidth]{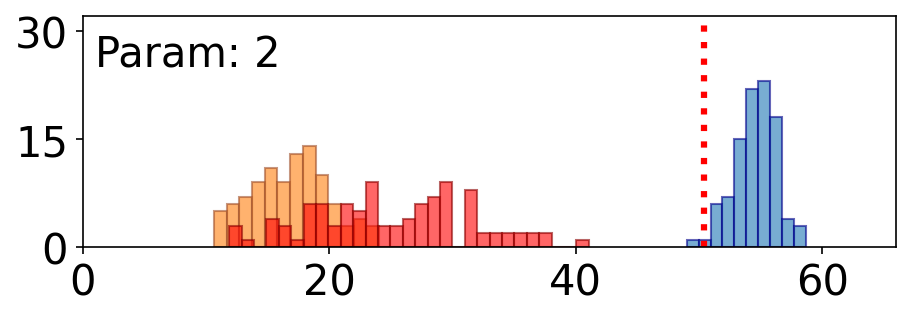}
            \includegraphics[width=1.03\linewidth]{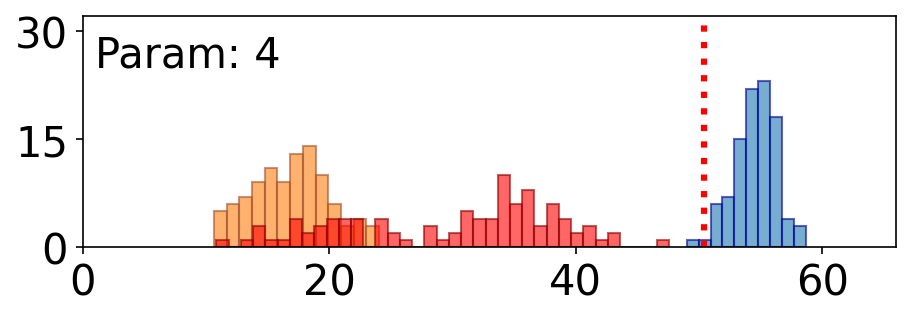}
            \includegraphics[width=1.03\linewidth]{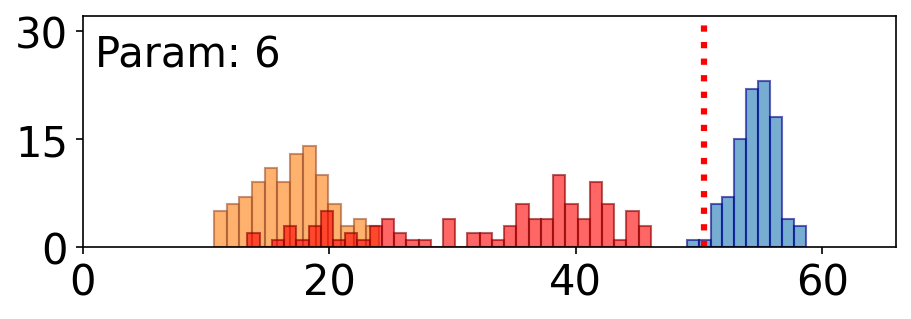}
            \includegraphics[width=1.03\linewidth]{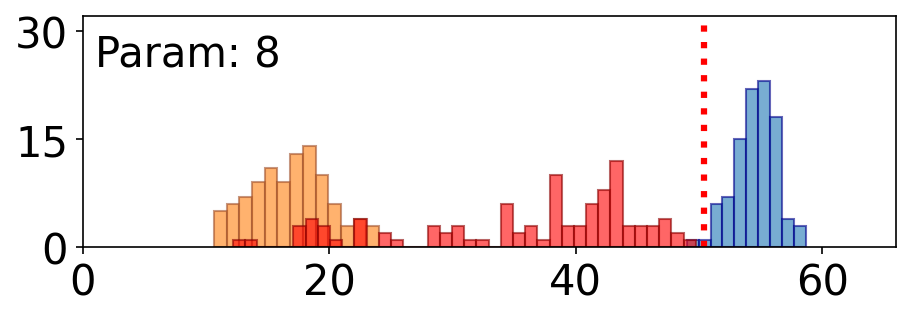}}
    
            \subfloat{\centering
            \makebox[10pt][l]{\raisebox{5pt}{\rotatebox[origin=l]{90}{\tiny JPEG}}}
            \includegraphics[width=1.03\linewidth]{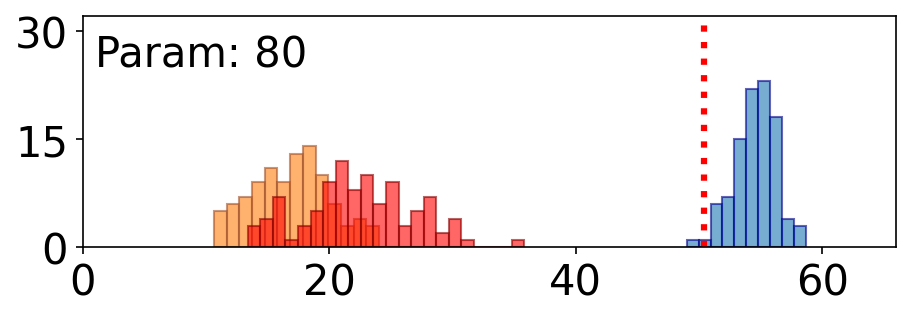}
            \includegraphics[width=1.03\linewidth]{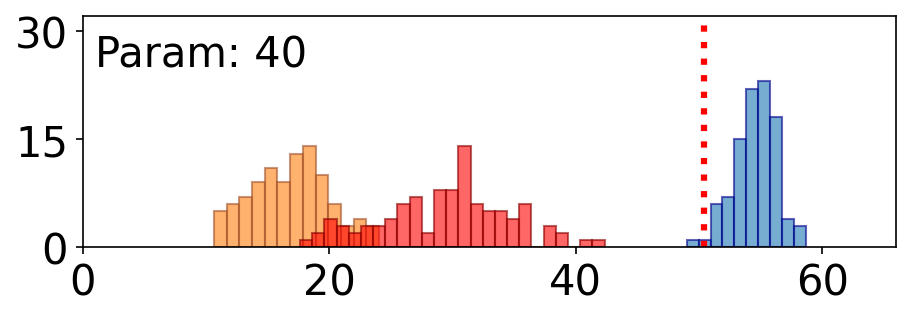}
            \includegraphics[width=1.03\linewidth]{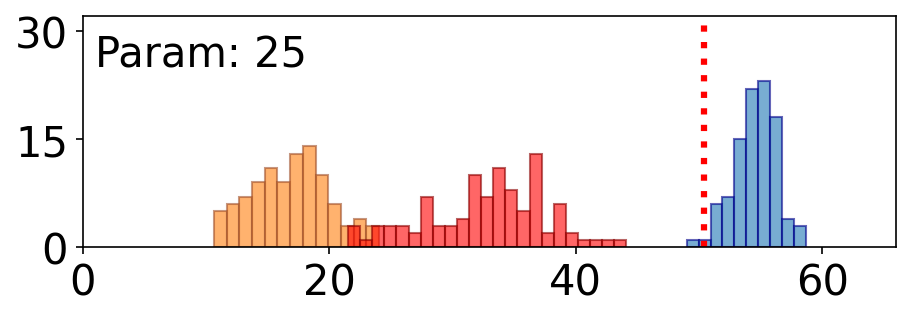}
            \includegraphics[width=1.03\linewidth]{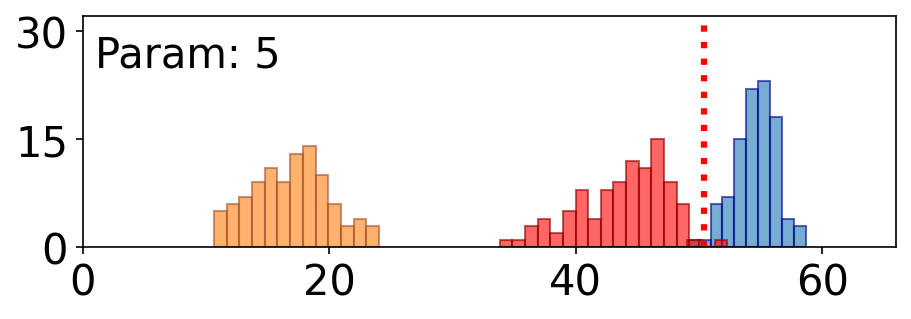}
            \includegraphics[width=1.03\linewidth]{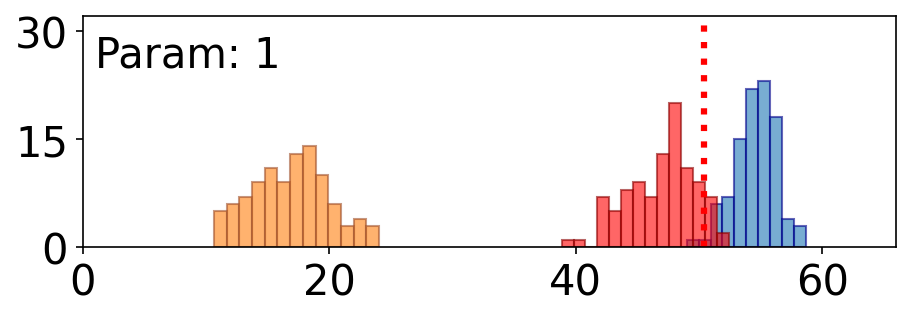}}
    
            \subfloat{\centering
            \makebox[10pt][l]{\raisebox{5pt}{\rotatebox[origin=l]{90}{\tiny Rotation}}}
            \includegraphics[width=1.03\linewidth]{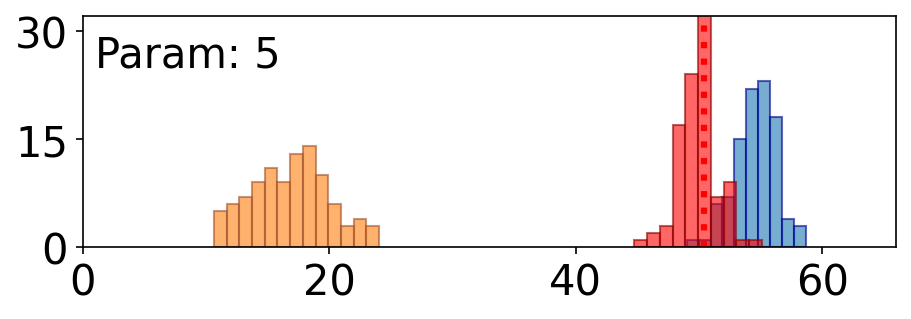}
            \includegraphics[width=1.03\linewidth]{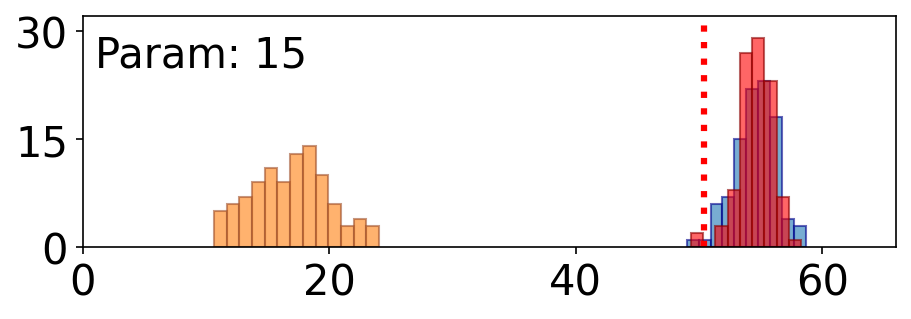}
            \includegraphics[width=1.03\linewidth]{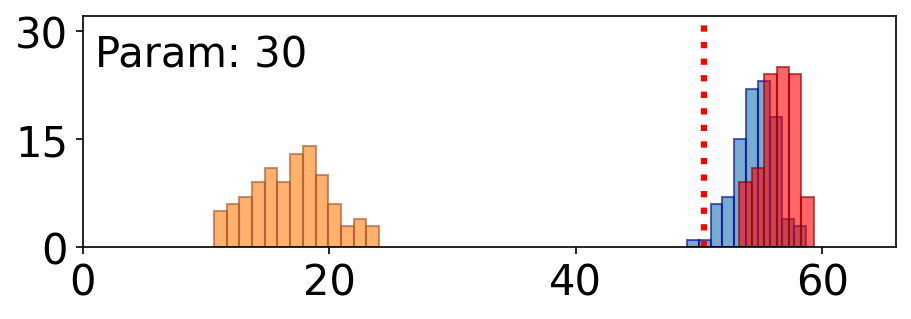}
            \includegraphics[width=1.03\linewidth]{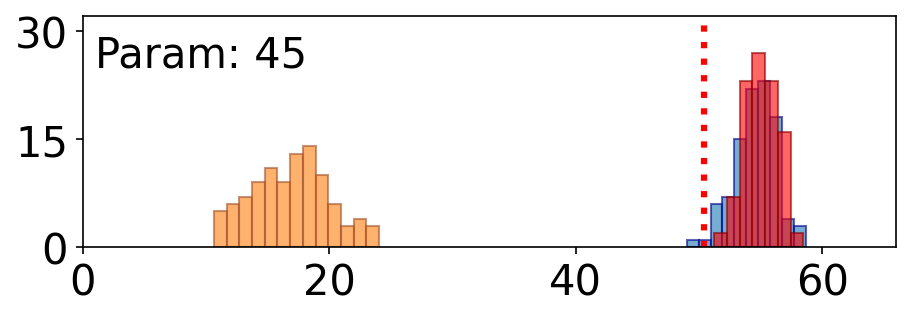}
            \includegraphics[width=1.03\linewidth]{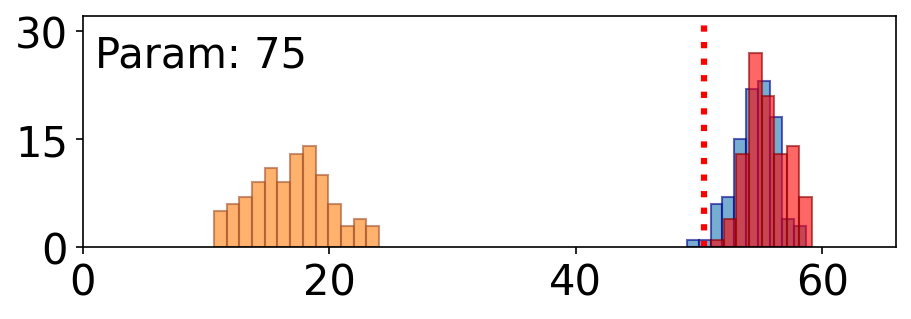}}
    
            \subfloat{\centering
            \makebox[10pt][l]{\raisebox{5pt}{\rotatebox[origin=l]{90}{\tiny Cropping}}}
            \includegraphics[width=1.03\linewidth]{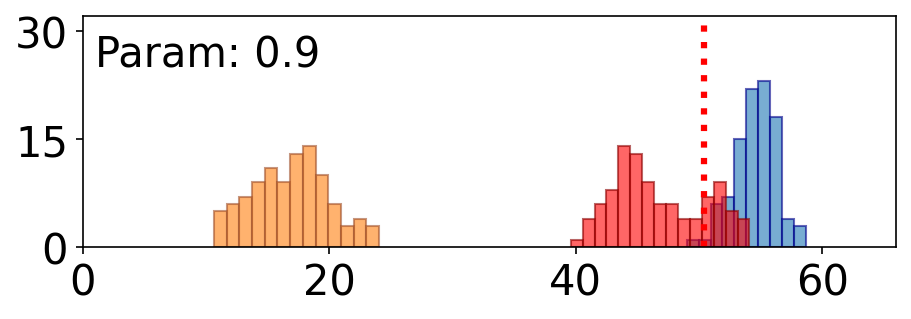}
            \includegraphics[width=1.03\linewidth]{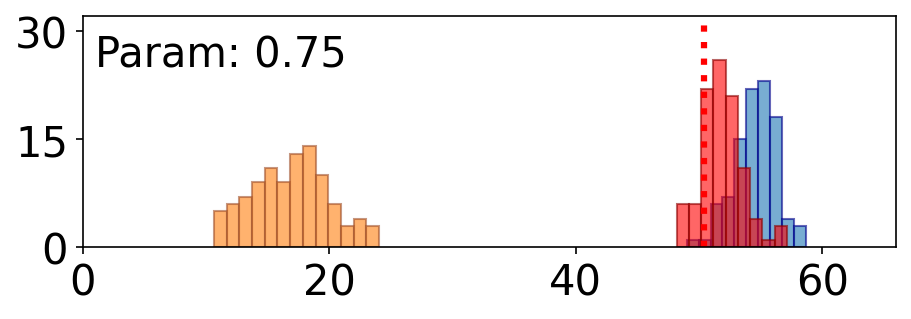}
            \includegraphics[width=1.03\linewidth]{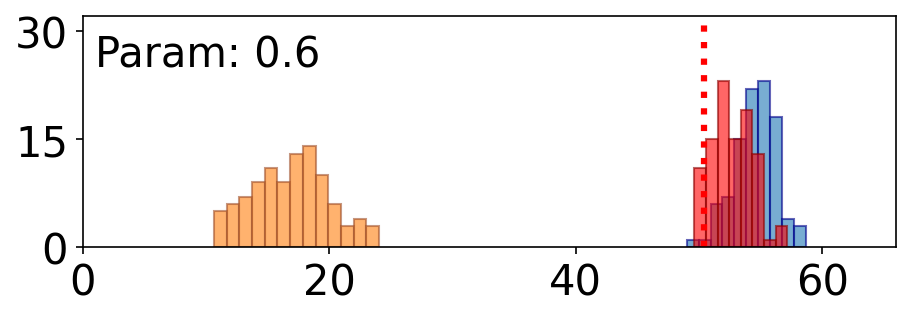}
            \includegraphics[width=1.03\linewidth]{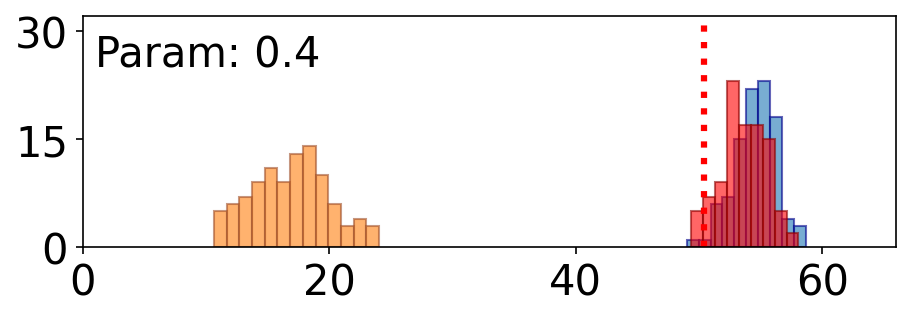}
            \includegraphics[width=1.03\linewidth]{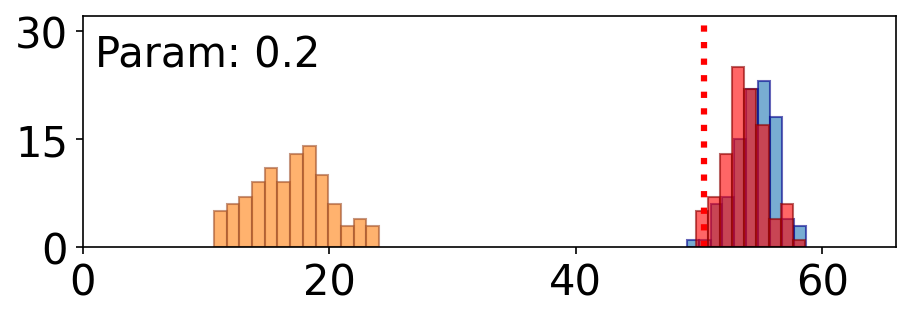}}
    
            \subfloat{\centering
            \makebox[10pt][l]{\raisebox{3pt}{\rotatebox[origin=l]{90}{\tiny Elastic}}}
            \includegraphics[width=1.03\linewidth]{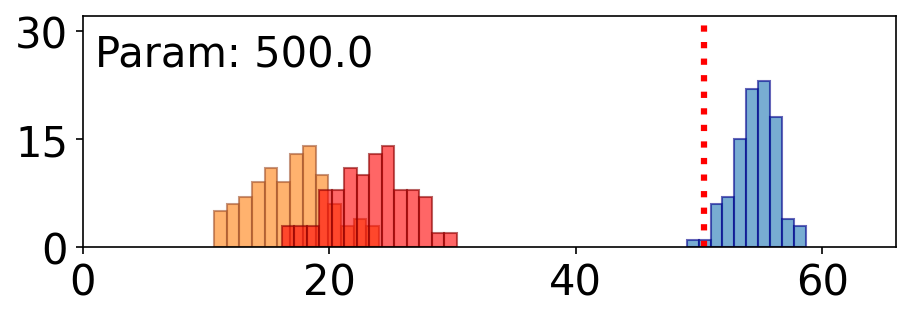}
            \includegraphics[width=1.03\linewidth]{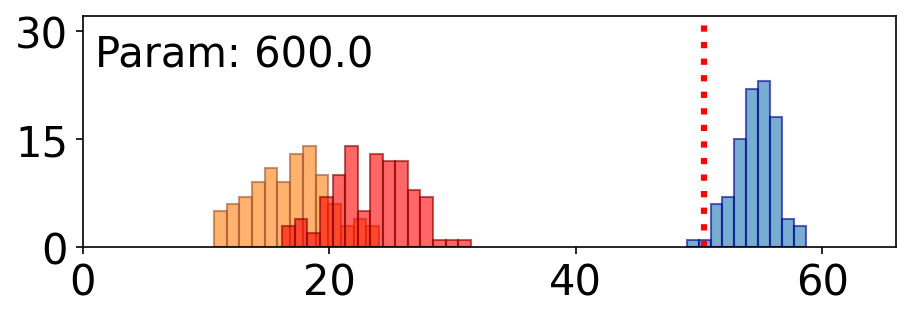}
            \includegraphics[width=1.03\linewidth]{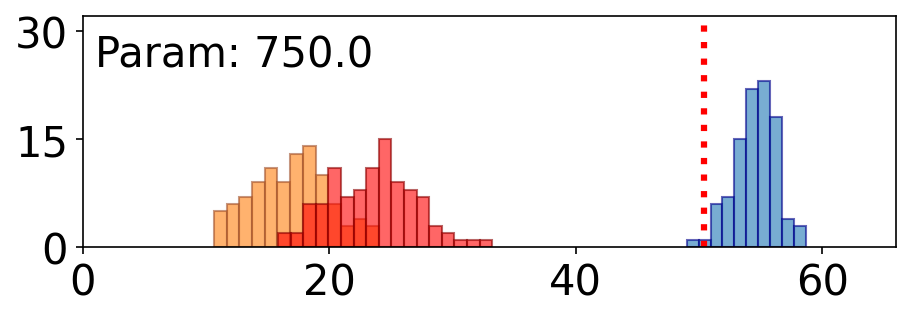}
            \includegraphics[width=1.03\linewidth]{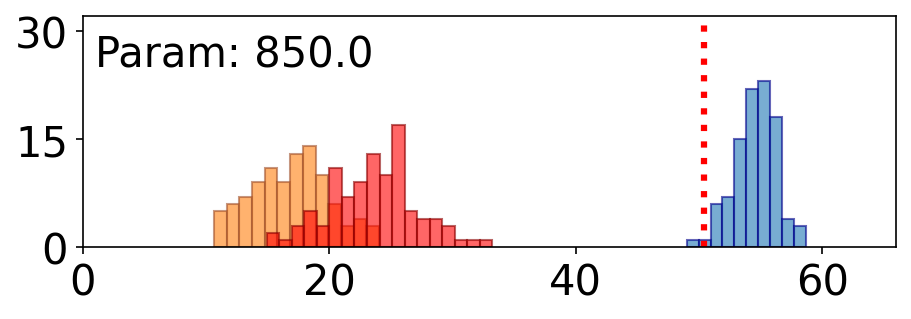}
            \includegraphics[width=1.03\linewidth]{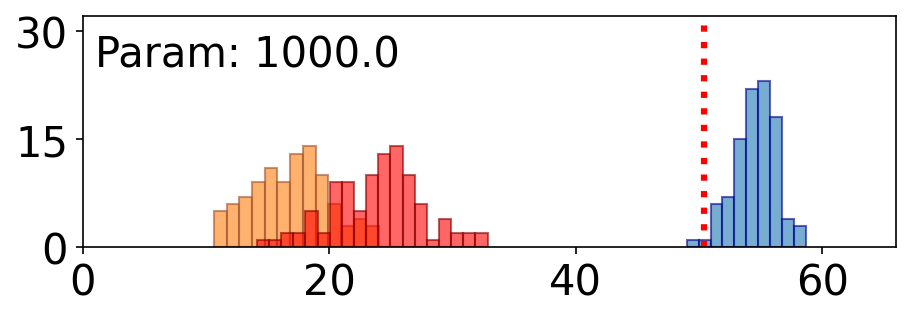}}
    
            \subfloat{\centering
            \makebox[10pt][l]{\raisebox{3pt}{\rotatebox[origin=l]{90}{\scalebox{.4}{Perspective}}}}
            \includegraphics[width=1.03\linewidth]{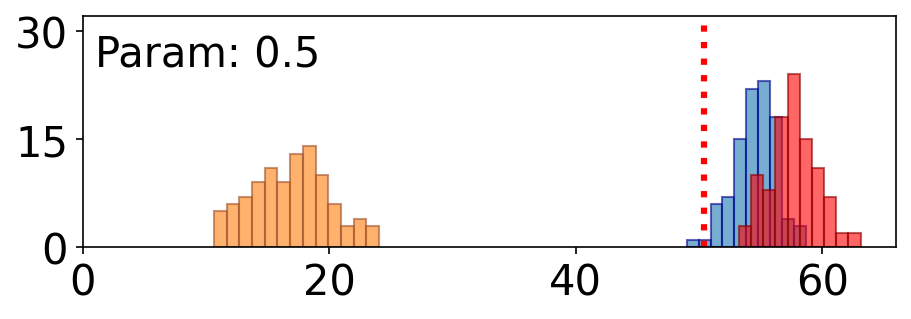}
            \includegraphics[width=1.03\linewidth]{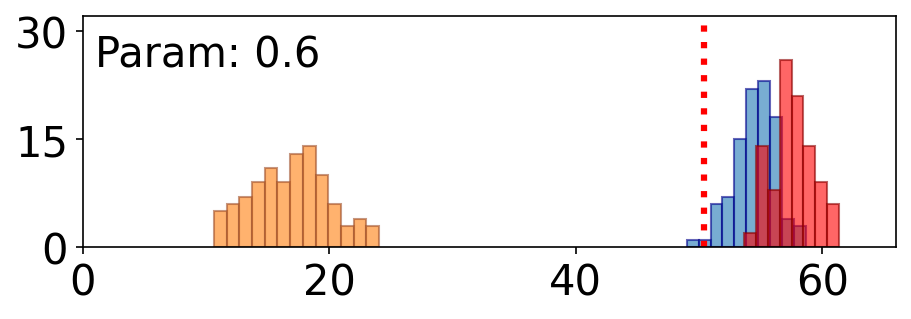}
            \includegraphics[width=1.03\linewidth]{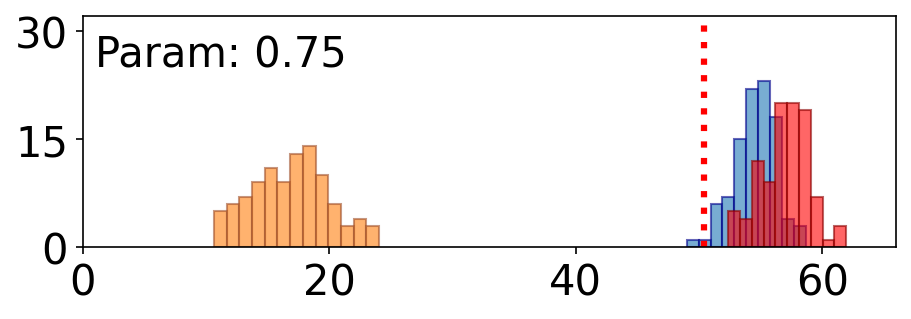}
            \includegraphics[width=1.03\linewidth]{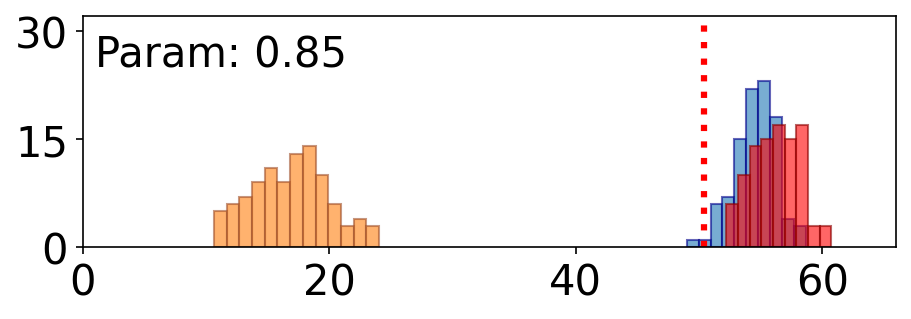}
            \includegraphics[width=1.03\linewidth]{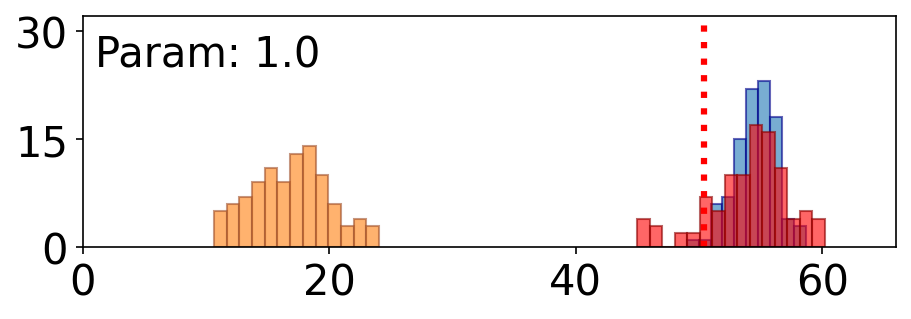}}
    
            \subfloat{\centering
            \makebox[10pt][l]{\raisebox{3pt}{\rotatebox[origin=l]{90}{\tiny Shear}}}
            \includegraphics[width=1.03\linewidth]{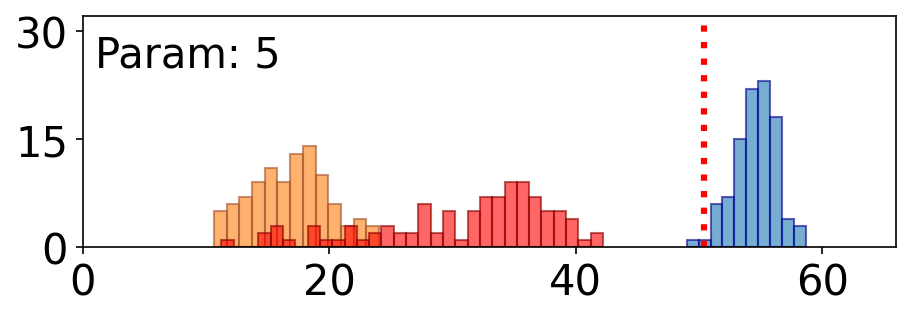}
            \includegraphics[width=1.03\linewidth]{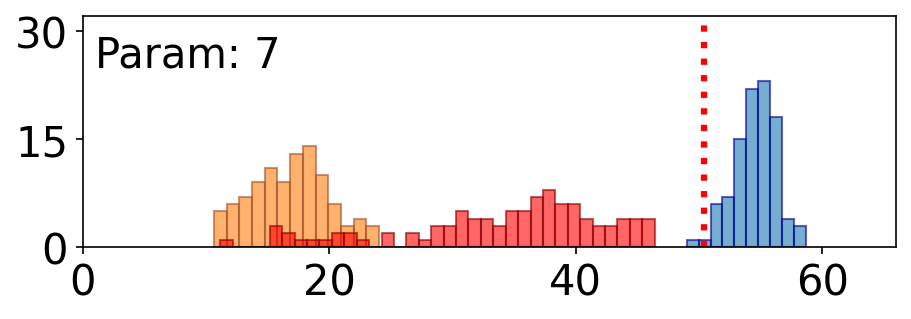}
            \includegraphics[width=1.03\linewidth]{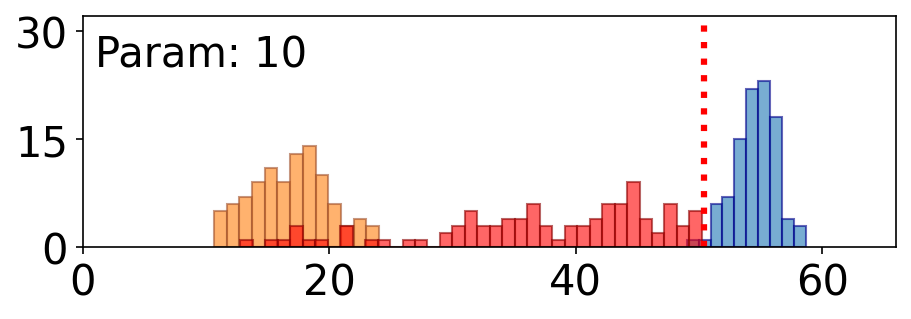}
            \includegraphics[width=1.03\linewidth]{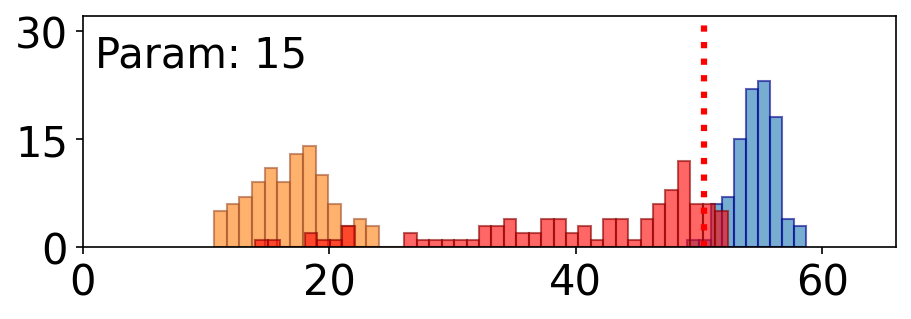}
            \includegraphics[width=1.03\linewidth]{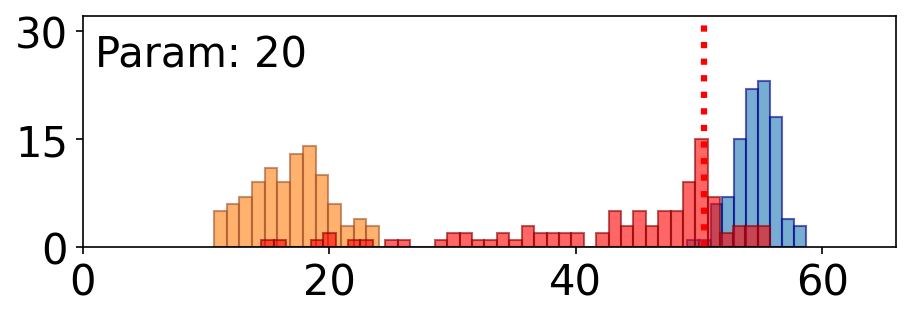}}

            \subfloat{\centering
            \makebox[10pt][l]{\raisebox{3pt}{\rotatebox[origin=l]{90}{\scalebox{.4}{Regen-VAE}}}}
            \includegraphics[width=1.03\linewidth]{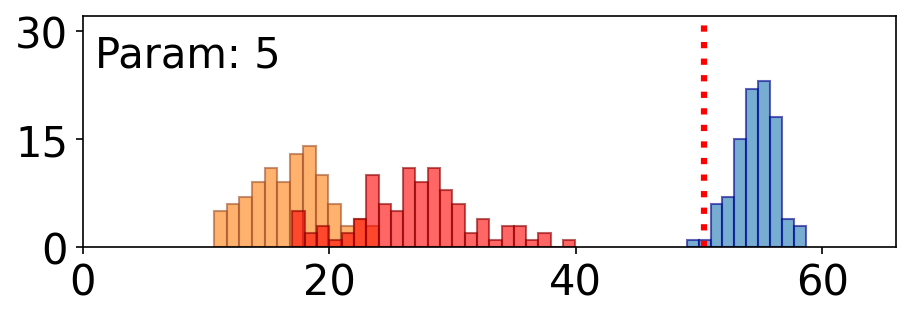}
            \includegraphics[width=1.03\linewidth]{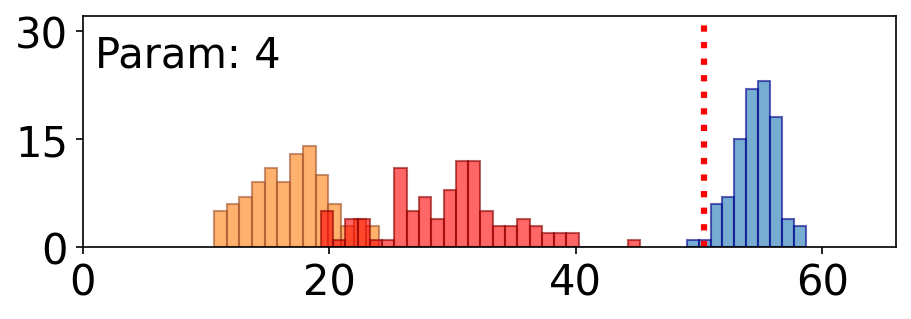}
            \includegraphics[width=1.03\linewidth]{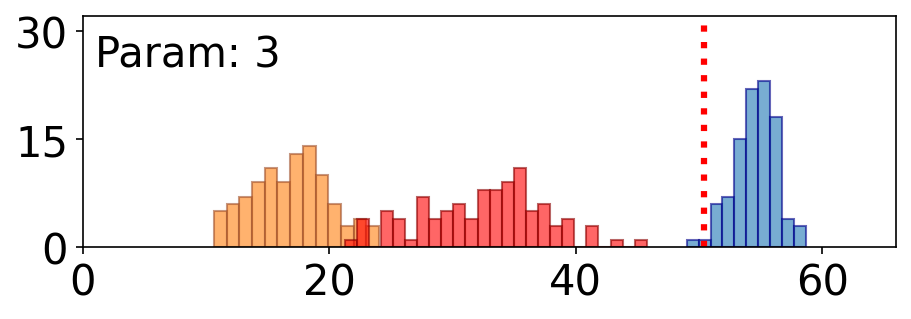}
            \includegraphics[width=1.03\linewidth]{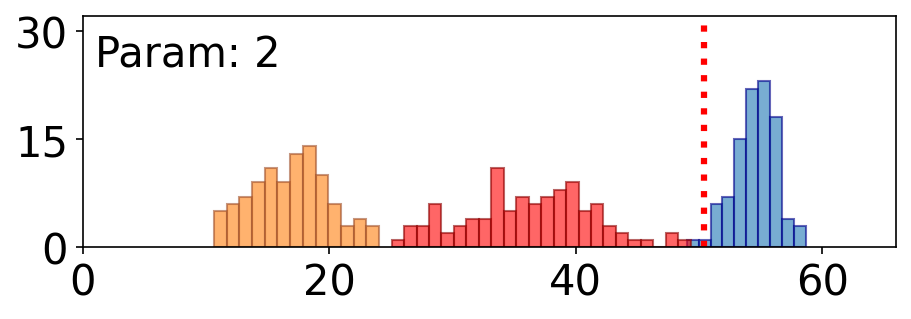}
            \includegraphics[width=1.03\linewidth]{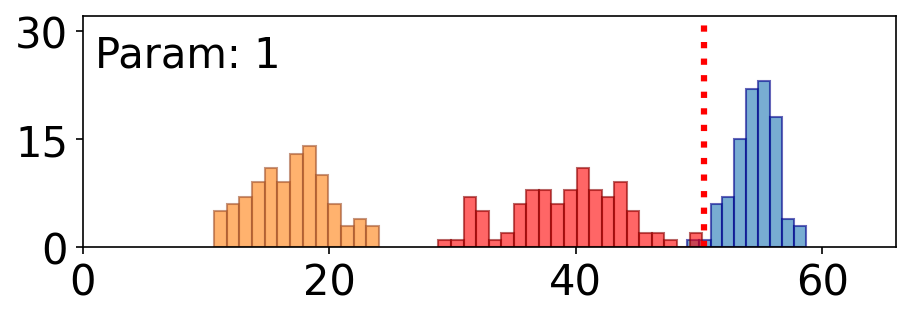}}
    
            \subfloat{\centering
            \makebox[10pt][l]{\raisebox{0pt}{\rotatebox[origin=l]{90}{\scalebox{.4}{Regen-Diff}}}}
            \includegraphics[width=1.03\linewidth]{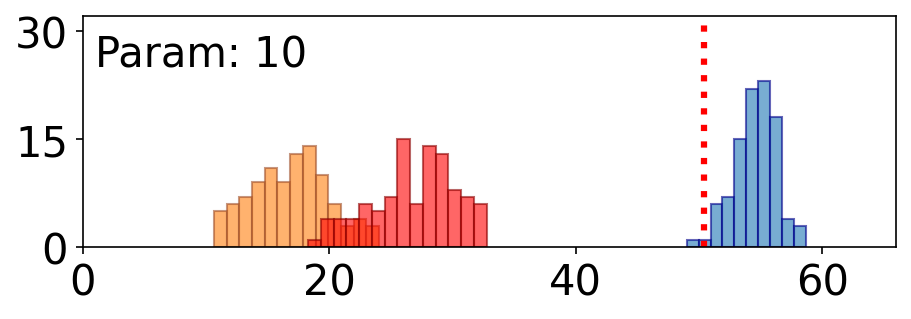}
            \includegraphics[width=1.03\linewidth]{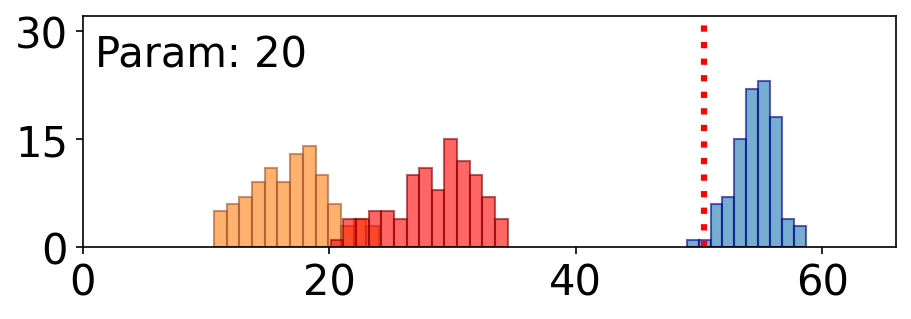}
            \includegraphics[width=1.03\linewidth]{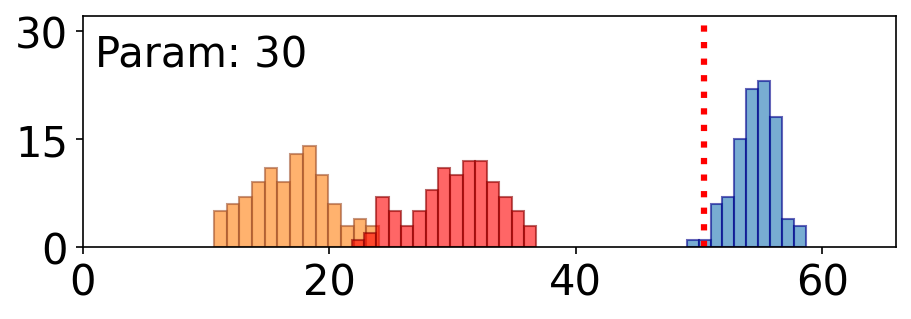}
            \includegraphics[width=1.03\linewidth]{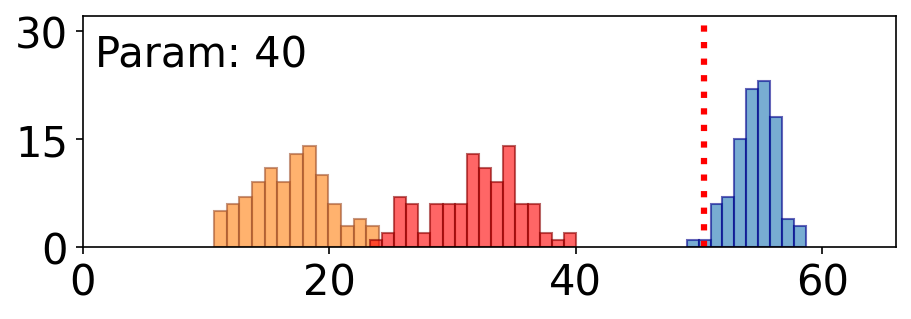}
            \includegraphics[width=1.03\linewidth]{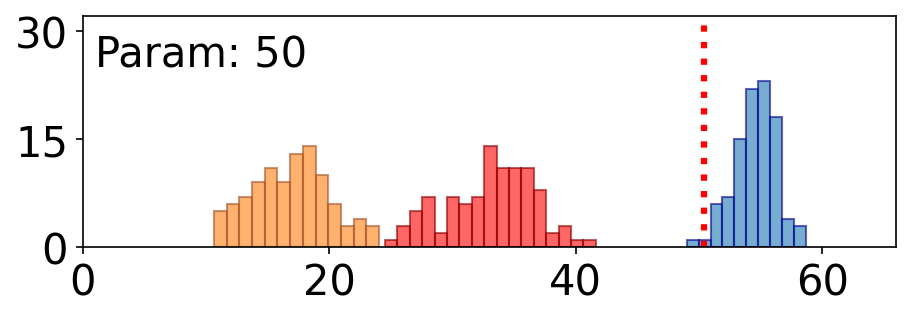}}
        \end{subfigure}
        
    \caption{The effects of different perturbations on $\|\mathbf{w}-\hat{\mathbf{w}}^{\prime}\|$. From left to right, the perturbations are set with different parameters (Param). 
    Watermark: HSTR. Thresholds: 50.39 (1e-2 FPR).}
    \label{fig: HSTR 1 1 attack shifting}
\end{figure*}

\begin{figure*}[t]
        \begin{subfigure}[c]{0.185\textwidth}
            \centering
            \subfloat{\centering
            \makebox[10pt][l]{\raisebox{5pt}{\rotatebox[origin=l]{90}{\tiny Blurring}}}
            \includegraphics[width=1.03\linewidth]{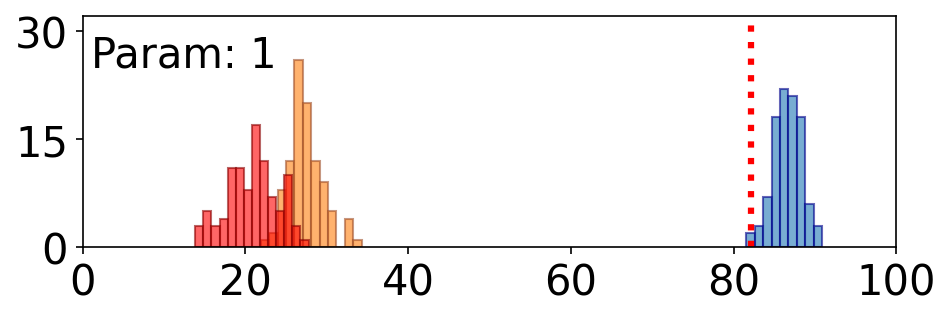}
            \includegraphics[width=1.03\linewidth]{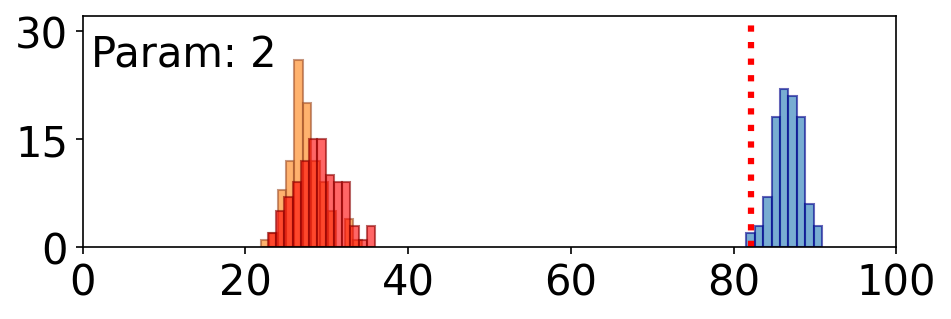}
            \includegraphics[width=1.03\linewidth]{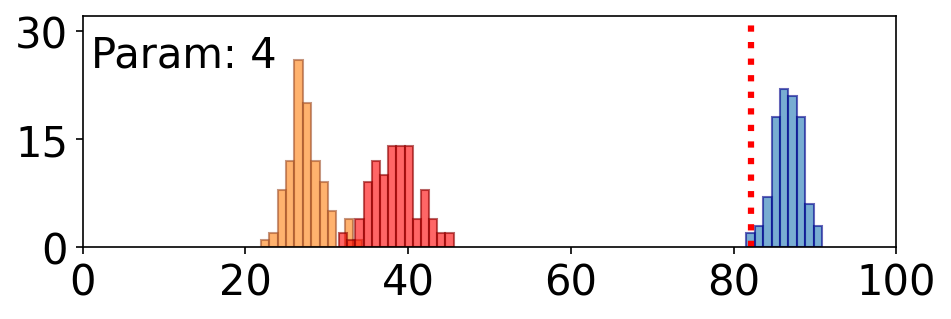}
            \includegraphics[width=1.03\linewidth]{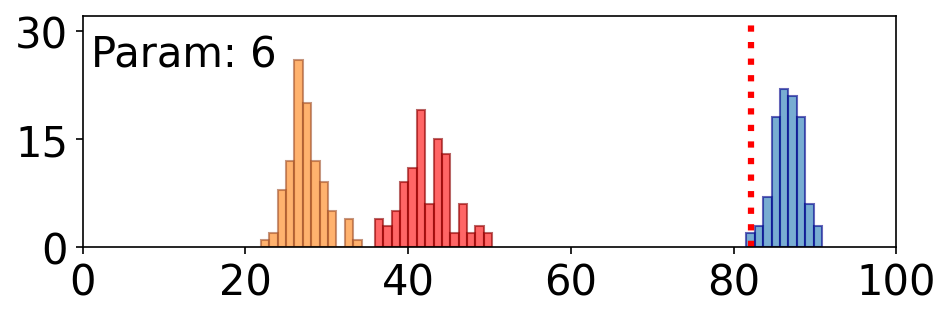}
            \includegraphics[width=1.03\linewidth]{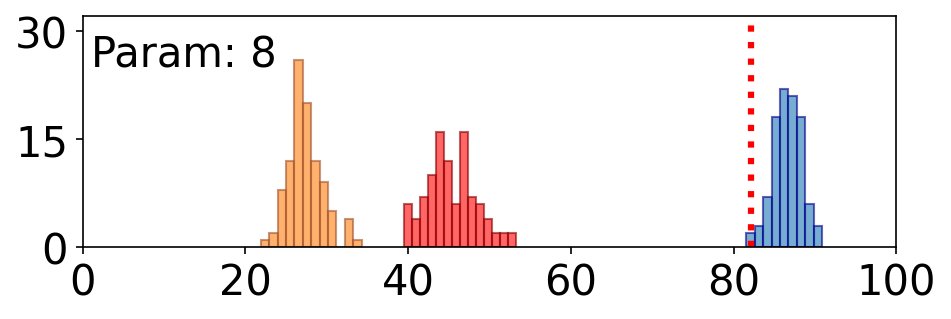}}
    
            \subfloat{\centering
            \makebox[10pt][l]{\raisebox{5pt}{\rotatebox[origin=l]{90}{\tiny Jitter}}}
            \includegraphics[width=1.03\linewidth]{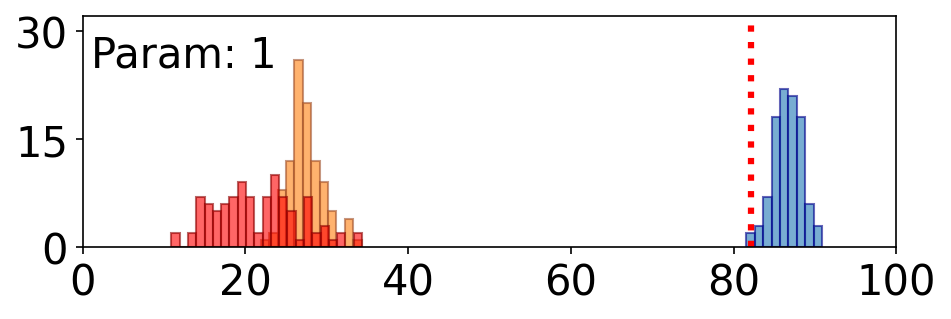}
            \includegraphics[width=1.03\linewidth]{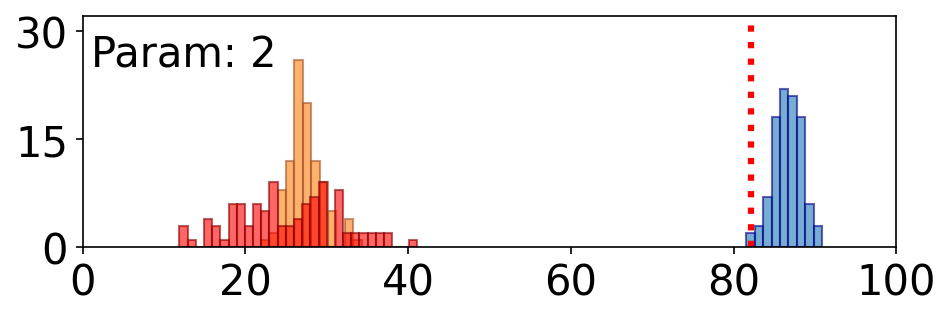}
            \includegraphics[width=1.03\linewidth]{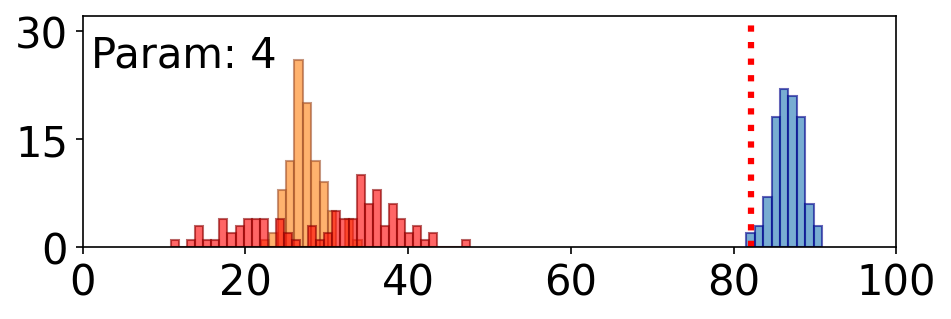}
            \includegraphics[width=1.03\linewidth]{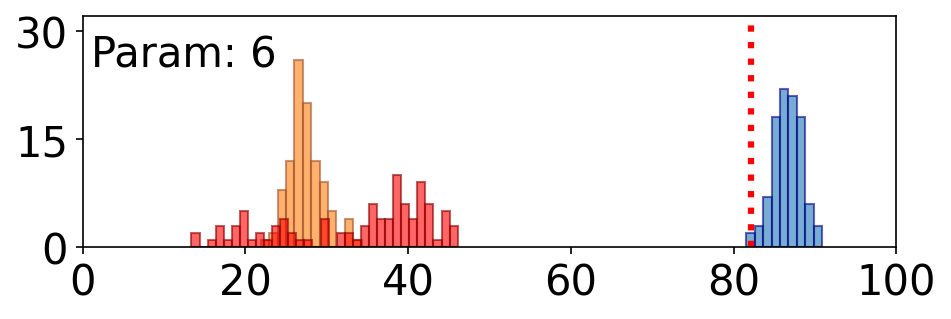}
            \includegraphics[width=1.03\linewidth]{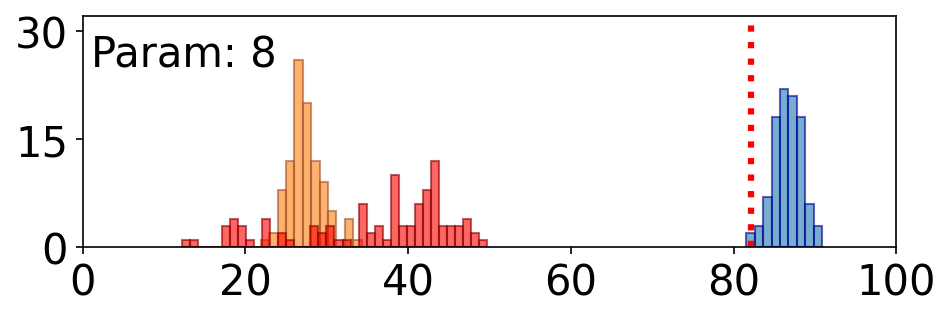}}
    
            \subfloat{\centering
            \makebox[10pt][l]{\raisebox{5pt}{\rotatebox[origin=l]{90}{\tiny JPEG}}}
            \includegraphics[width=1.03\linewidth]{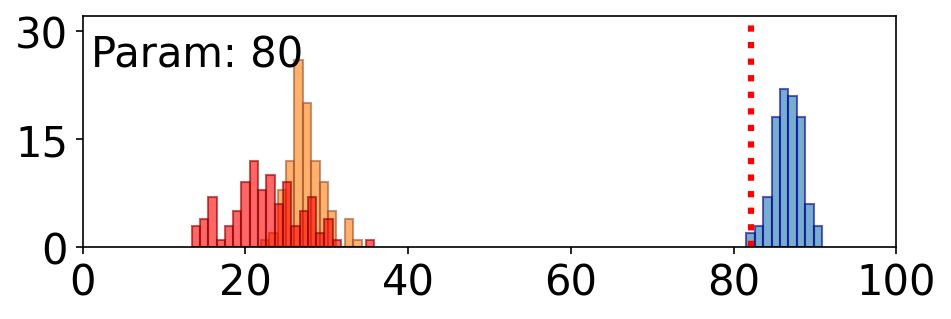}
            \includegraphics[width=1.03\linewidth]{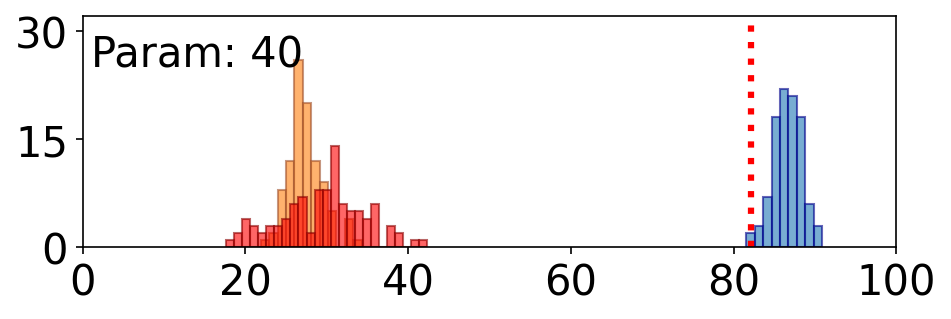}
            \includegraphics[width=1.03\linewidth]{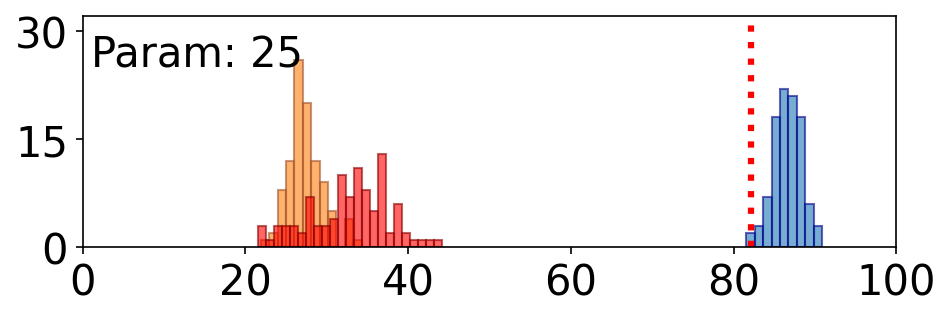}
            \includegraphics[width=1.03\linewidth]{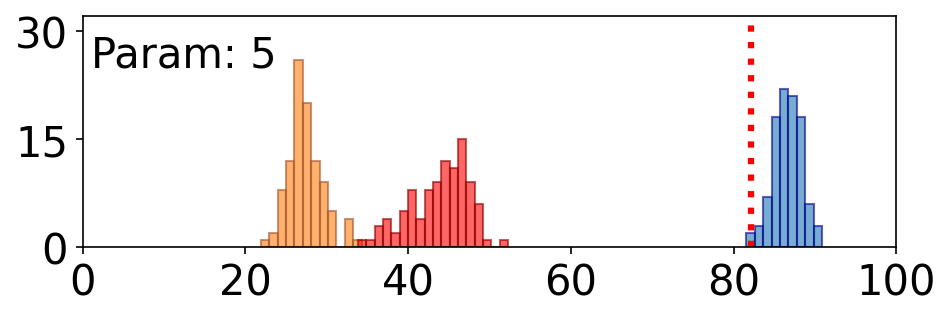}
            \includegraphics[width=1.03\linewidth]{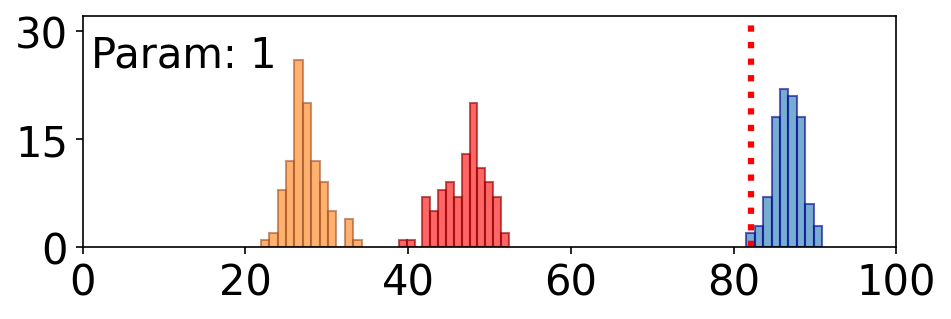}}
    
            \subfloat{\centering
            \makebox[10pt][l]{\raisebox{5pt}{\rotatebox[origin=l]{90}{\tiny Rotation}}}
            \includegraphics[width=1.03\linewidth]{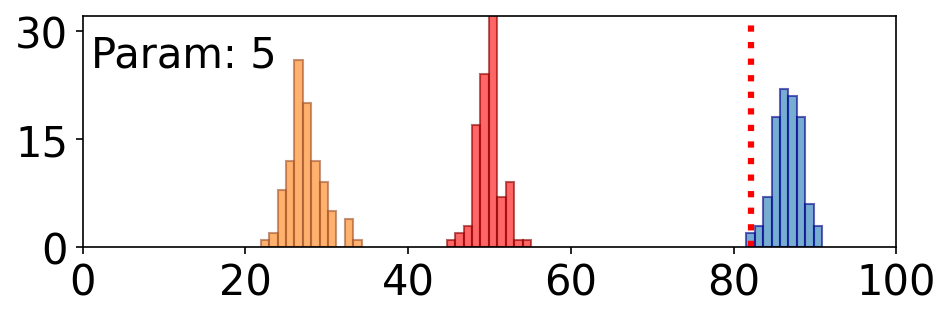}
            \includegraphics[width=1.03\linewidth]{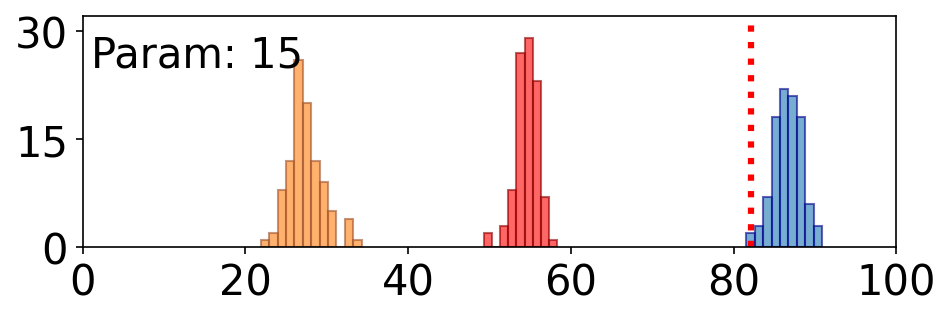}
            \includegraphics[width=1.03\linewidth]{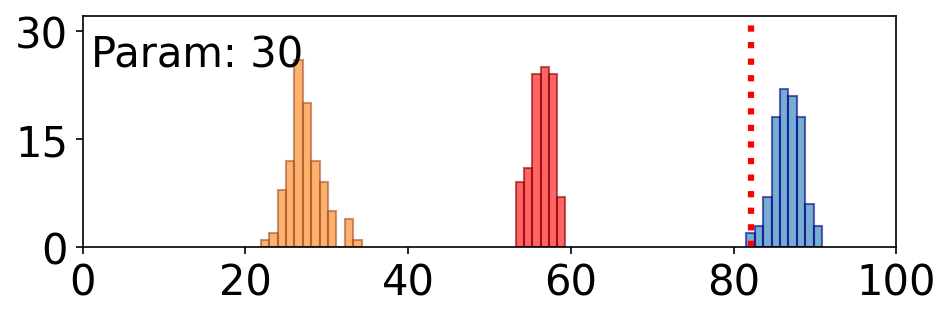}
            \includegraphics[width=1.03\linewidth]{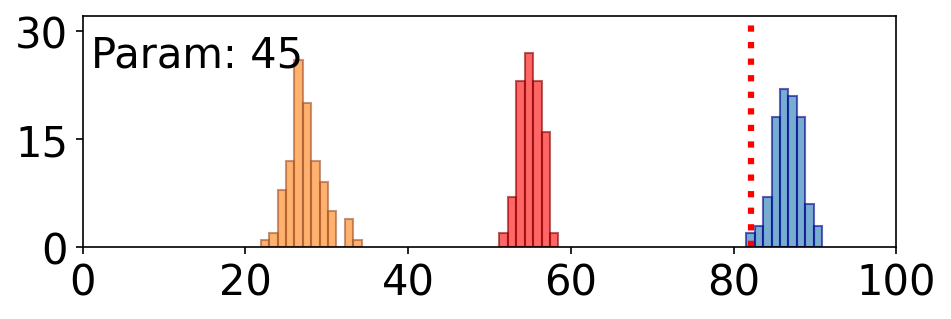}
            \includegraphics[width=1.03\linewidth]{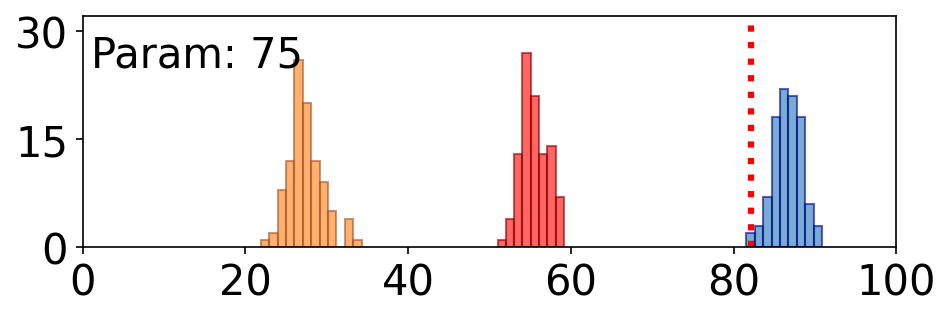}}
    
            \subfloat{\centering
            \makebox[10pt][l]{\raisebox{5pt}{\rotatebox[origin=l]{90}{\tiny Cropping}}}
            \includegraphics[width=1.03\linewidth]{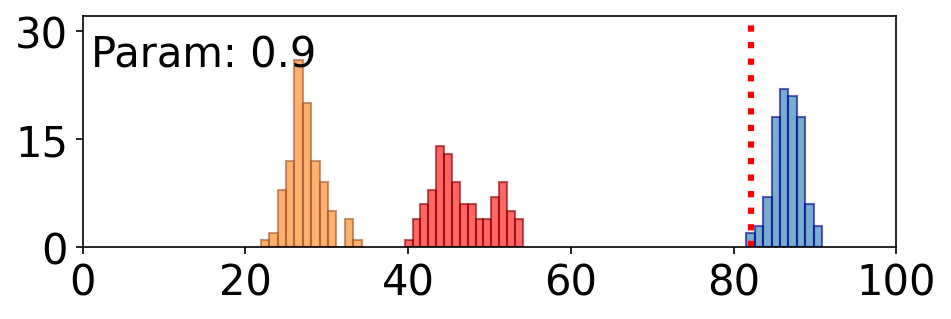}
            \includegraphics[width=1.03\linewidth]{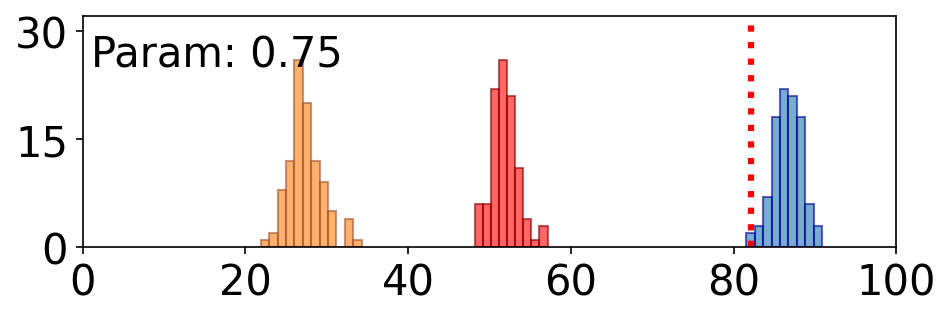}
            \includegraphics[width=1.03\linewidth]{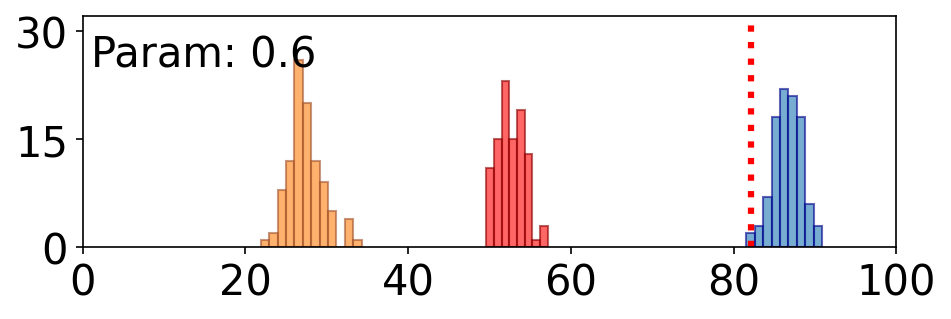}
            \includegraphics[width=1.03\linewidth]{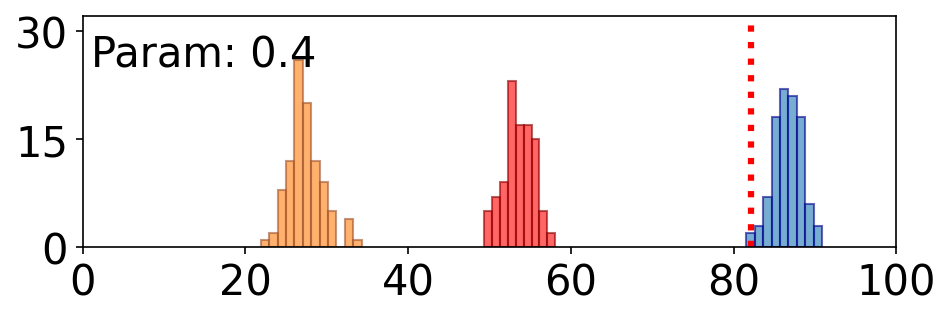}
            \includegraphics[width=1.03\linewidth]{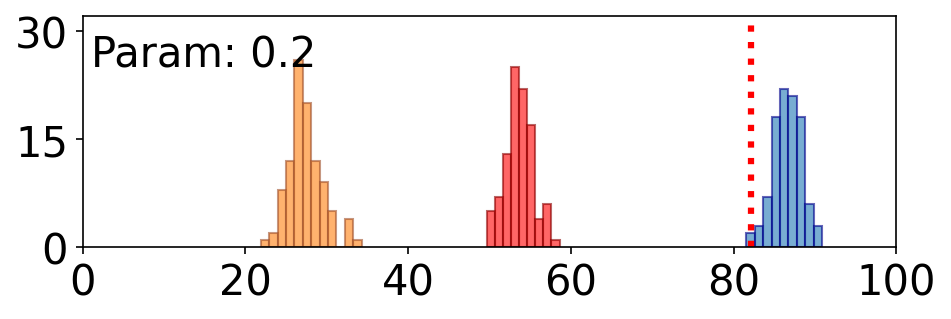}}
    
            \subfloat{\centering
            \makebox[10pt][l]{\raisebox{3pt}{\rotatebox[origin=l]{90}{\tiny Elastic}}}
            \includegraphics[width=1.03\linewidth]{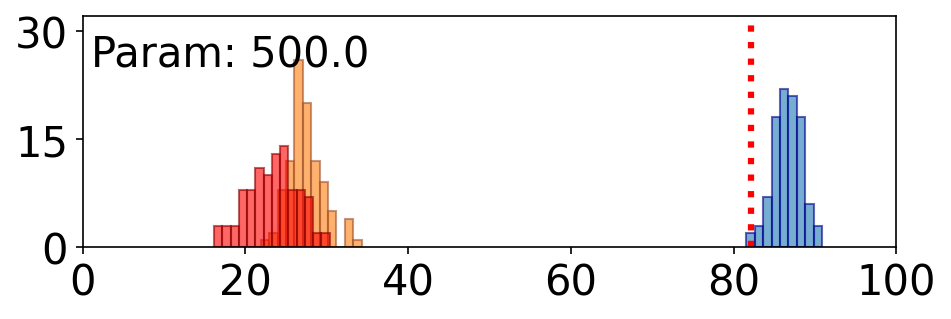}
            \includegraphics[width=1.03\linewidth]{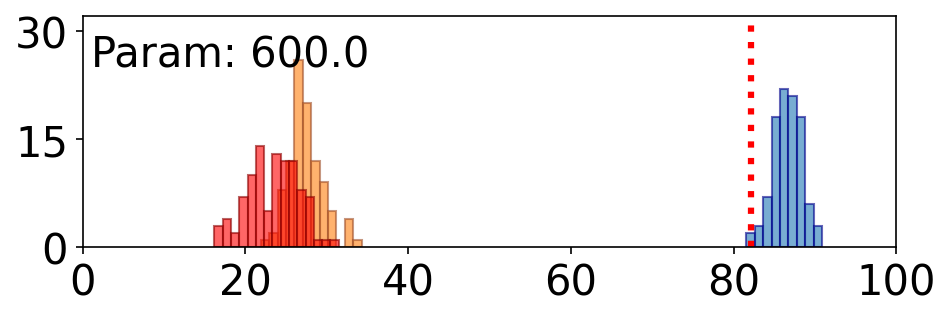}
            \includegraphics[width=1.03\linewidth]{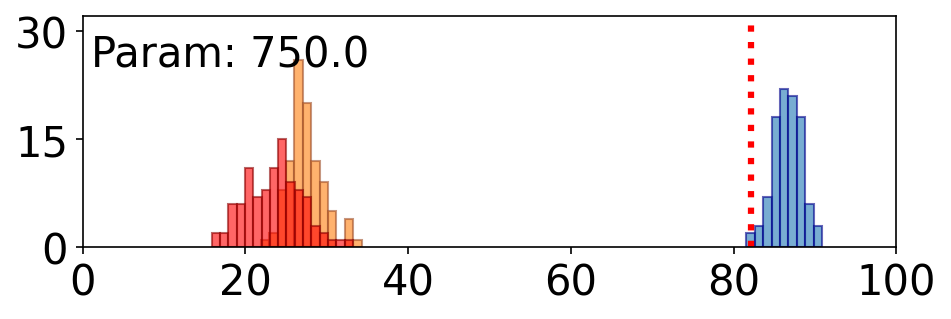}
            \includegraphics[width=1.03\linewidth]{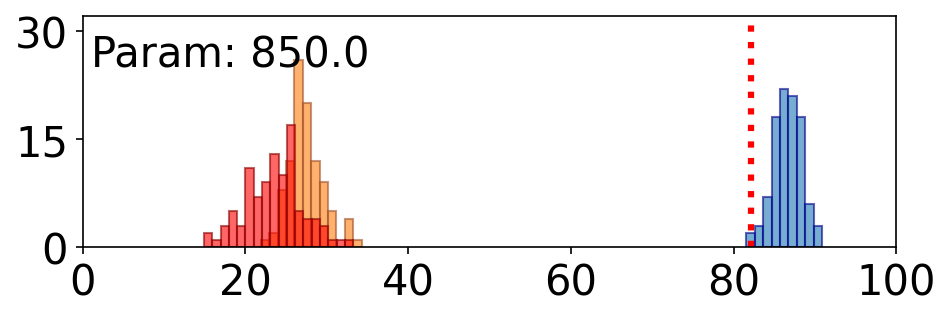}
            \includegraphics[width=1.03\linewidth]{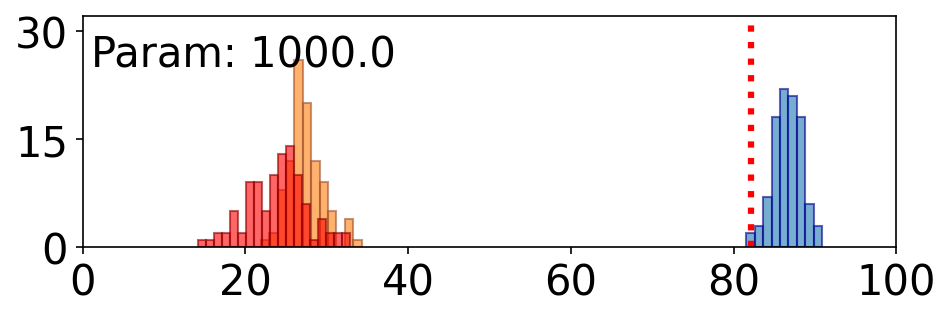}}
    
            \subfloat{\centering
            \makebox[10pt][l]{\raisebox{3pt}{\rotatebox[origin=l]{90}{\scalebox{.4}{Perspective}}}}
            \includegraphics[width=1.03\linewidth]{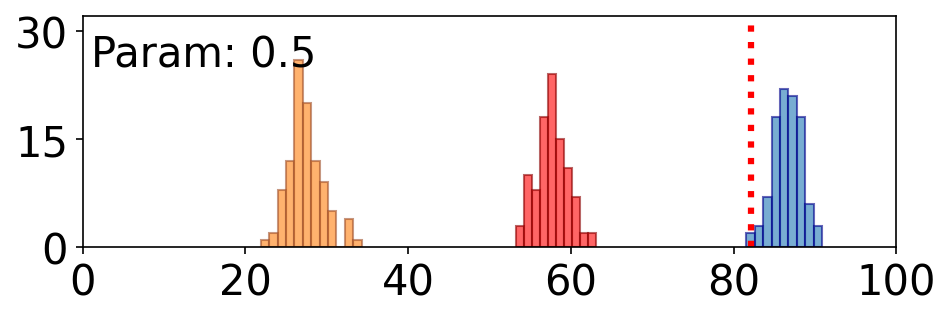}
            \includegraphics[width=1.03\linewidth]{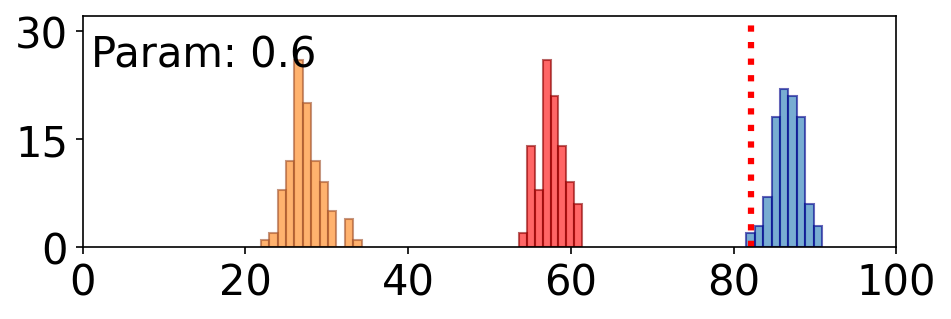}
            \includegraphics[width=1.03\linewidth]{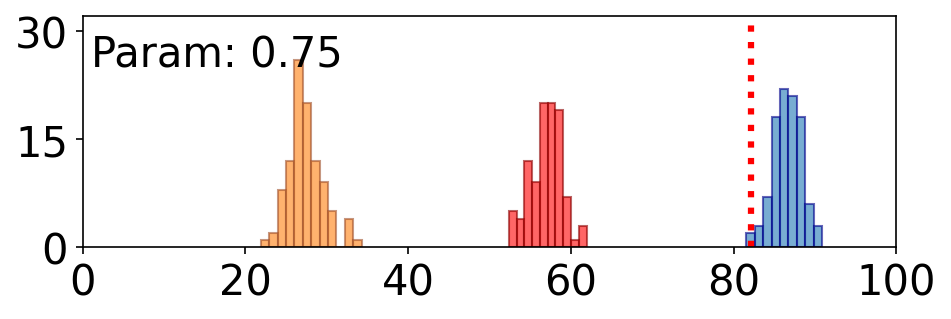}
            \includegraphics[width=1.03\linewidth]{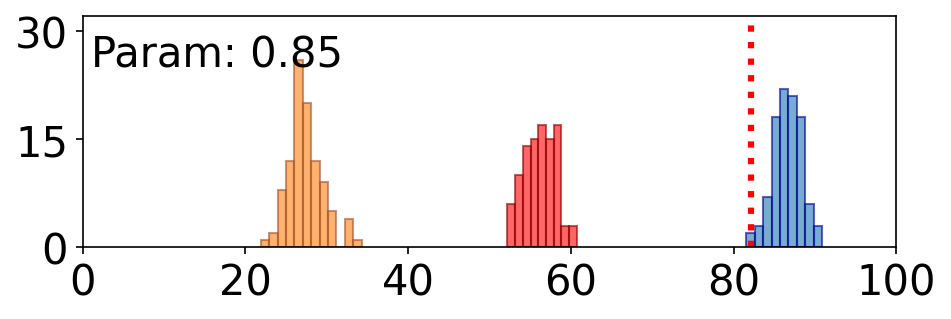}
            \includegraphics[width=1.03\linewidth]{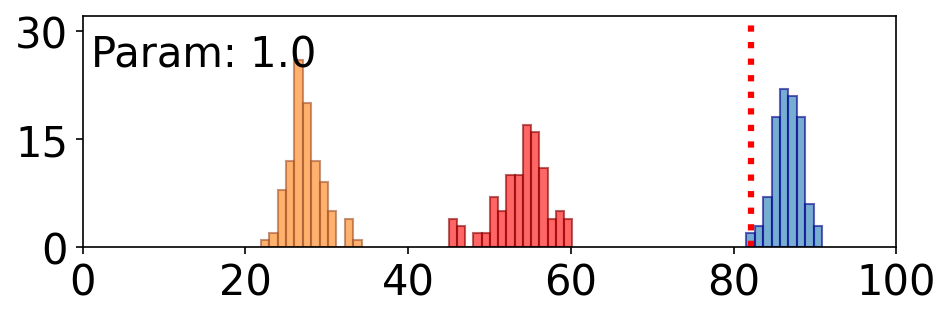}}
    
            \subfloat{\centering
            \makebox[10pt][l]{\raisebox{3pt}{\rotatebox[origin=l]{90}{\tiny Shear}}}
            \includegraphics[width=1.03\linewidth]{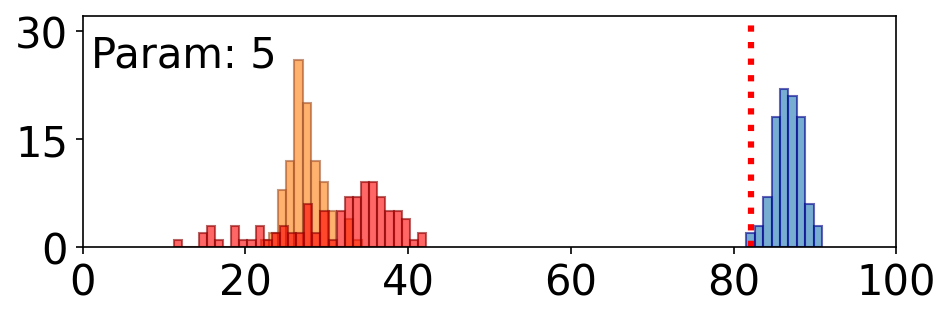}
            \includegraphics[width=1.03\linewidth]{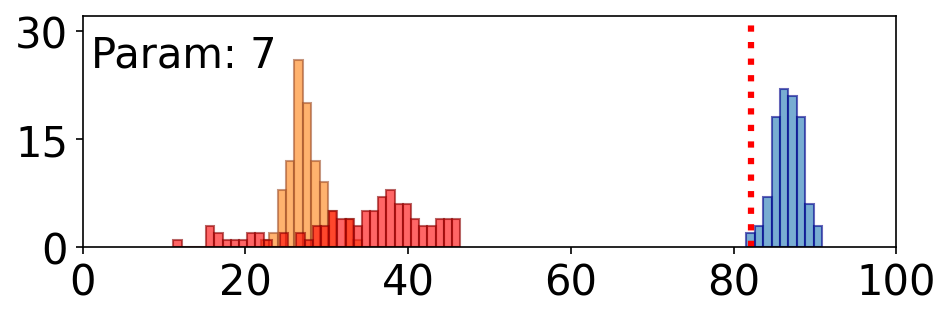}
            \includegraphics[width=1.03\linewidth]{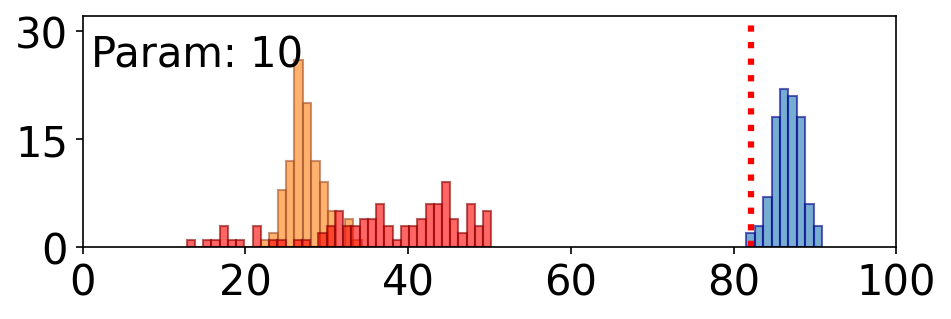}
            \includegraphics[width=1.03\linewidth]{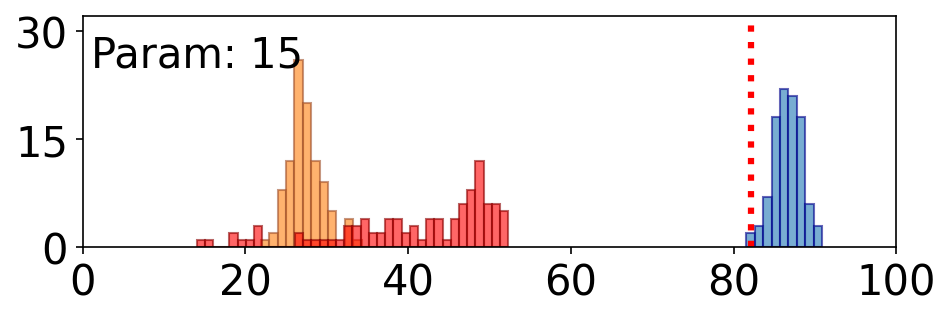}
            \includegraphics[width=1.03\linewidth]{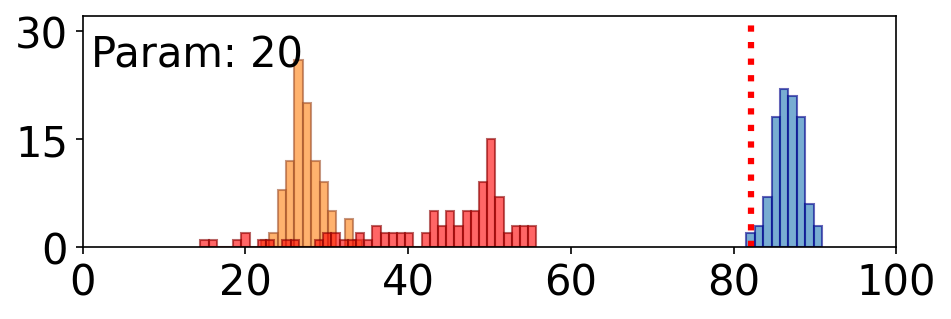}}

            \subfloat{\centering
            \makebox[10pt][l]{\raisebox{3pt}{\rotatebox[origin=l]{90}{\scalebox{.4}{Regen-VAE}}}}
            \includegraphics[width=1.03\linewidth]{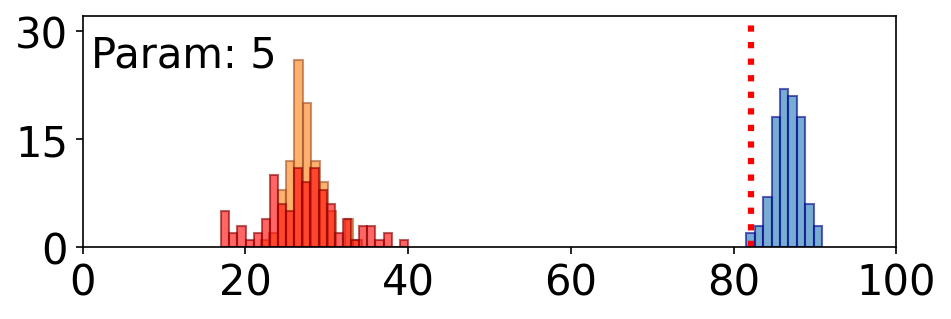}
            \includegraphics[width=1.03\linewidth]{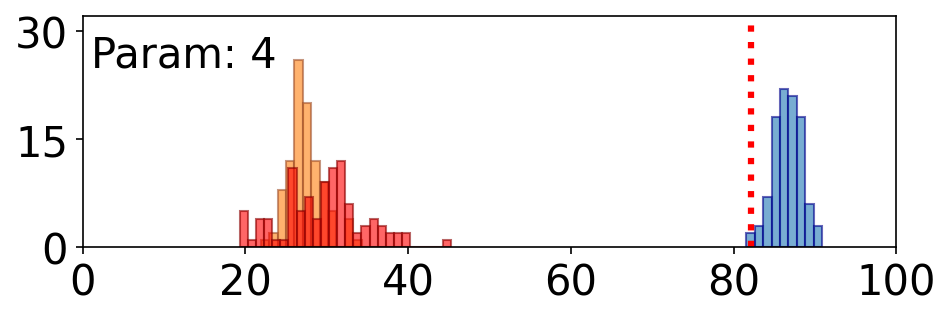}
            \includegraphics[width=1.03\linewidth]{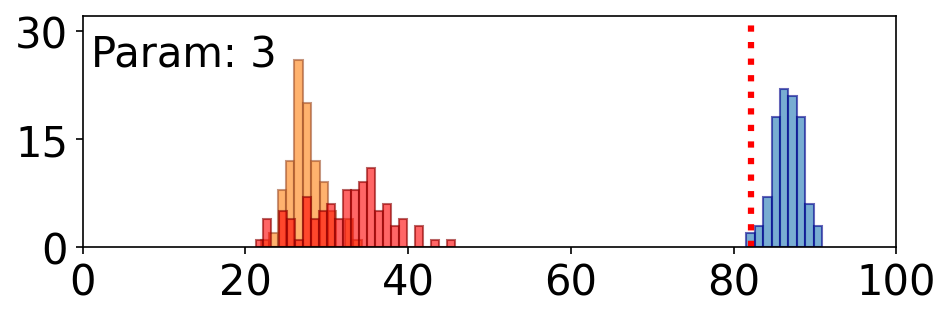}
            \includegraphics[width=1.03\linewidth]{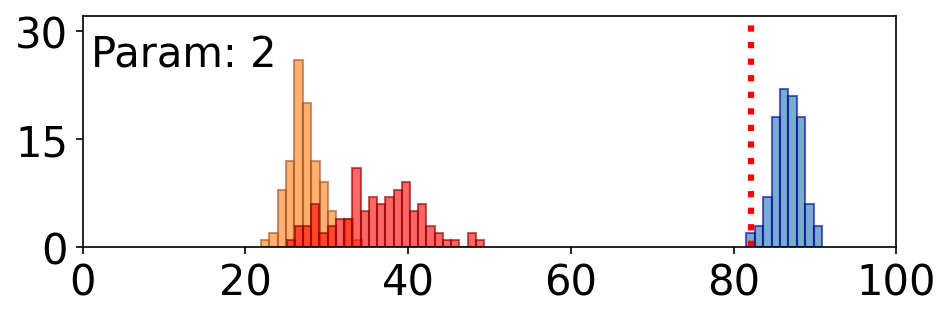}
            \includegraphics[width=1.03\linewidth]{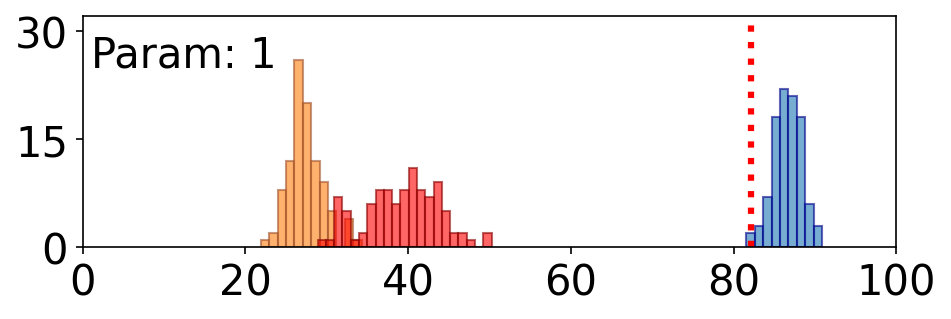}}
    
            \subfloat{\centering
            \makebox[10pt][l]{\raisebox{0pt}{\rotatebox[origin=l]{90}{\scalebox{.4}{Regen-Diff}}}}
            \includegraphics[width=1.03\linewidth]{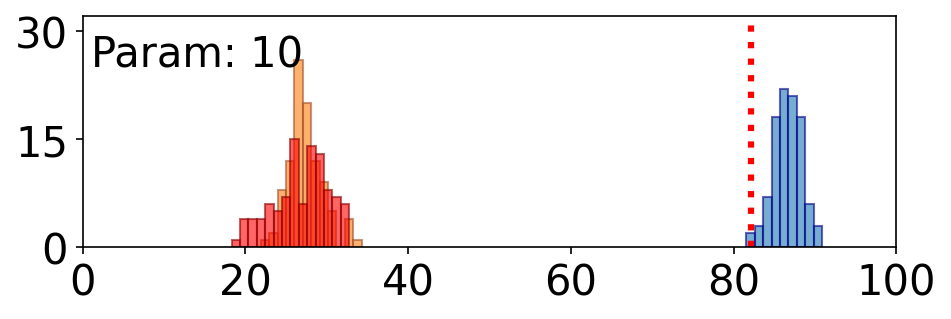}
            \includegraphics[width=1.03\linewidth]{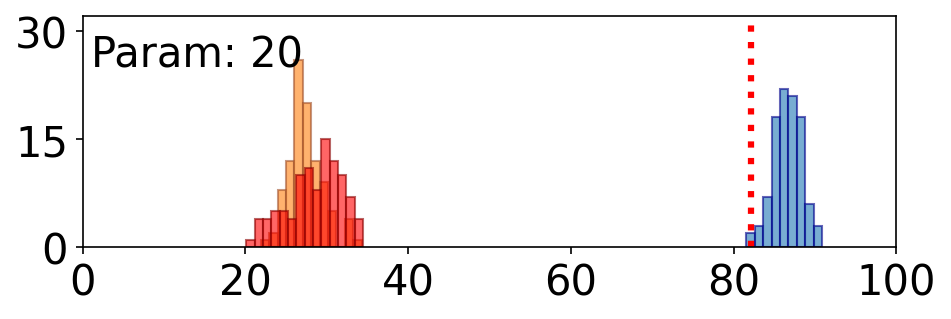}
            \includegraphics[width=1.03\linewidth]{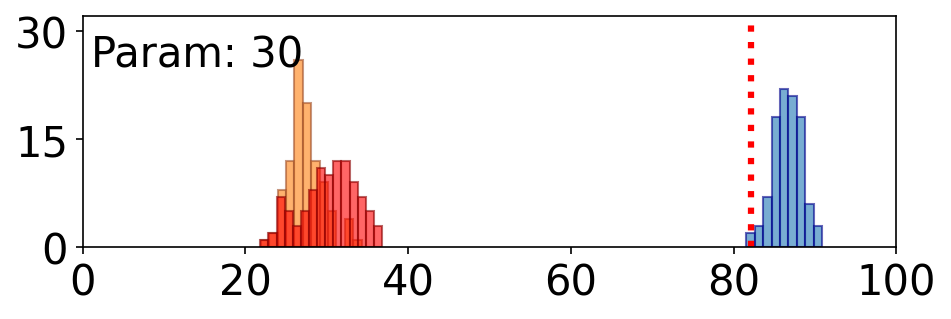}
            \includegraphics[width=1.03\linewidth]{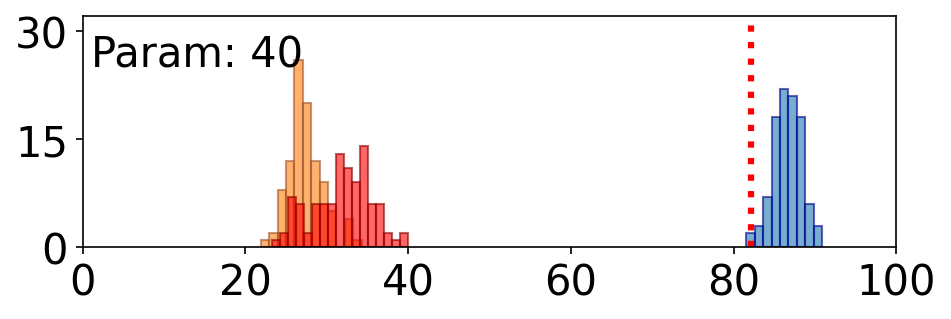}
            \includegraphics[width=1.03\linewidth]{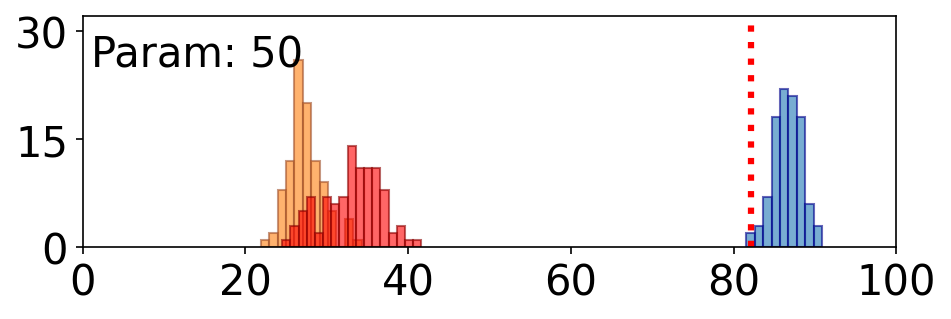}}
        \end{subfigure}
        
    \caption{The effects of different perturbations on $\|\mathbf{w}-\hat{\mathbf{w}}^{\prime}\|$. From left to right, the perturbations are set with different parameters (Param). 
    Watermark: HSTR with channels 0 and 3 pattern multiplied by $2$ (HSTR-2-2). Thresholds: 82.15 (1e-2 FPR).}
    \label{fig: HSTR 2 2 attack shifting}
\end{figure*}

\section{Experiments of the Lipschitz Constant on thresholds}\label{sec: full lip exp}
We present the estimated thresholds with different watermarking schemes. Please refer to Figs.~\ref{fig: experimental detection threshold (ringid 64)} and~\ref{fig: experimental detection threshold (HSQR)}.
\begin{figure}[ht]
    \centering
    \includegraphics[width=1.0\linewidth]{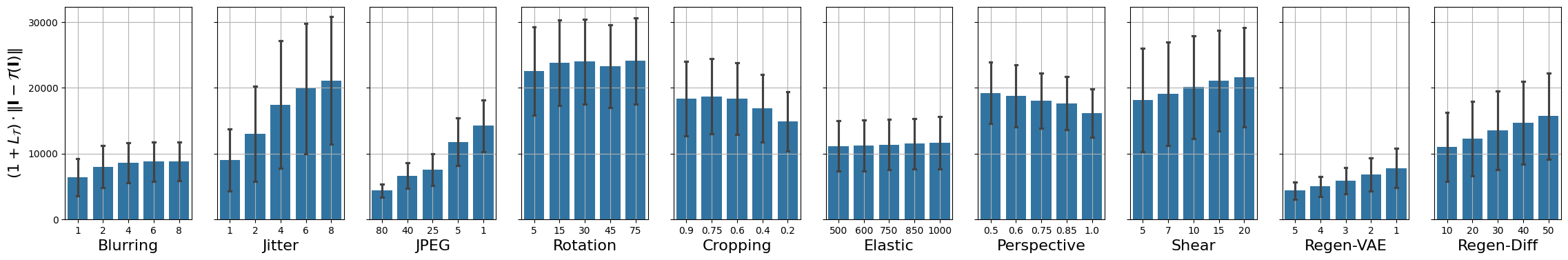}
    \caption{The experimental thresholds for different perturbations on $\|\mathbf{w}-\hat{\mathbf{w}}^{\prime}\|$, each with different parameters on x-axis. The value on y-axis represents the potential threat that $\hat{\mathbf{w}}'$ is getting far from $\mathbf{w}$, which corresponds to Eq.~(\ref{eq: ideal detection threshold}).
    Watermark: RingID.}
    \label{fig: experimental detection threshold (ringid 64)}
\end{figure}


\begin{figure}[ht]
    \centering
    \includegraphics[width=1.0\linewidth]{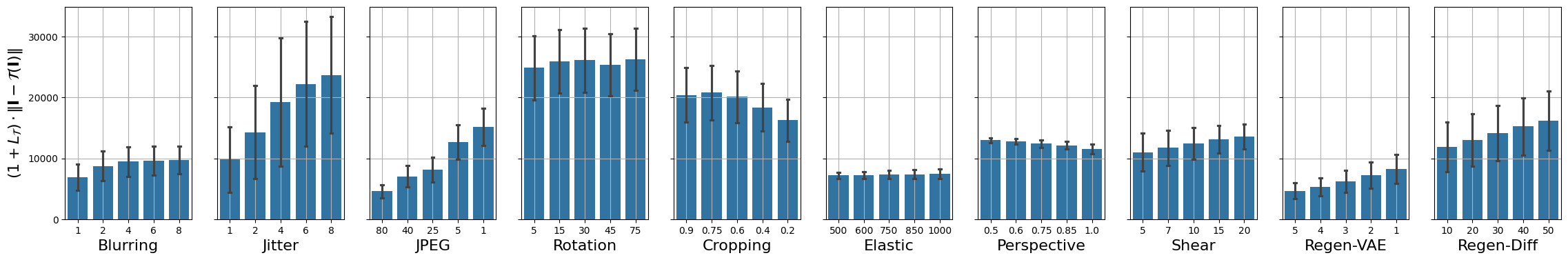}
    \caption{The experimental thresholds for different perturbations on $\|\mathbf{w}-\hat{\mathbf{w}}^{\prime}\|$, each with different parameters on x-axis. The value on y-axis represents the potential threat that $\hat{\mathbf{w}}'$ is getting far from $\mathbf{w}$, which corresponds to Eq.~(\ref{eq: ideal detection threshold}).
    Watermark: HSQR.}
    \label{fig: experimental detection threshold (HSQR)}
\end{figure}

\section{Performance of Tree-Ring \& RingID on Stirmark}\label{stirmark exp}
To check the robustness claimed by Tree-Ring~\cite{wen2023treerings} and RingID~\cite{ci2024ringid}, we verified these methods via a well-known benchmark, called Stirmark~\cite{petitcolas2000watermarking, petitcolas1998attacks}, which extensively includes both non-geometric image manipulations and geometric distortions, as shown in Tables~\ref{Table: Tree-Ring on Stirmark part I},~\ref{Table: Tree-Ring on Stirmark part II},~\ref{Table: RingID on Stirmark part I}, and~\ref{Table: RingID on Stirmark part II}.
More information can be found in Stirmark~\cite{petitcolas1998attacks}.


In this experiment, we further examine the performance in terms of detection accuracy and TPR under more extensive perturbations with FPR set to 1e-2. Here, apart from the watermark detection accuracy in main text, the ACC represented in these tables is defined as
\begin{align*}
    \mathrm{
    ACC=\max\left\{1-\frac{FPR+(1-TPR)}{2}\right\},
    }
\end{align*}
which coincides with the usual definition of ACC since the numbers of watermarked and unwatermarked samples are equal.

\begin{table*}[hb]
  \centering
  \caption{Attacker: StirMark 3.1.79. Defense: Tree-Ring~\cite{wen2023treerings} (watermark: ring). Diffusion model: LDM. Number of testing images: watermarked: 1000 / unwatermarked: 1000. Generated image size: 3x512x512. Latent size: 4x64x64.}\label{Table: Tree-Ring on Stirmark part I}
  \begin{tabular}{@{}l|ccc@{}}
    \toprule
    Attack & AUC & ACC & TPR@0.01FPR \\
    \midrule[2pt]
    Median filter 2x2 & 1.0000 & 1.0000 & 1.0000 \\
    Median filter 3x3 & 1.0000 & 1.0000 & 1.0000 \\
    Median filter 4x4 & 1.0000 & 0.9995 & 0.9990 \\
    Gaussian filter 3x3 & 1.0000 & 1.0000 & 1.0000 \\
    JPEG 90 & 1.0000 & 1.0000 & 1.0000 \\
    JPEG 80 & 1.0000 & 1.0000 & 1.0000 \\
    JPEG 70 & 1.0000 & 1.0000 & 1.0000 \\
    JPEG 60 & 1.0000 & 1.0000 & 1.0000 \\
    JPEG 50 & 1.0000 & 0.9995 & 1.0000 \\
    JPEG 40 & 1.0000 & 0.9975 & 0.9990 \\
    JPEG 30 & 0.9999 & 0.9970 & 0.9980 \\
    JPEG 20 & 0.9998 & 0.9945 & 0.9970 \\
    JPEG 10 & 0.9968 & 0.9760 & 0.9360 \\
    FMLR (Frequency Mode Laplacian Removal) & 1.0000 & 0.9990 & 1.0000 \\
    Color reduce & 1.0000 & 0.9995 & 1.0000 \\
    Sharpening 3x3 & 1.0000 & 1.0000 & 1.0000 \\
    1 column, 1 row removed & 1.0000 & 1.0000 & 1.0000 \\
    5 column, 1 row removed & 1.0000 & 1.0000 & 1.0000 \\
    1 column, 5 row removed & 1.0000 & 1.0000 & 1.0000 \\
    17 column, 5 row removed & 1.0000 & 1.0000 & 1.0000 \\
    5 column, 17 row removed & 1.0000 & 1.0000 & 1.0000 \\
    Cropping 1\% off & 1.0000 & 1.0000 & 1.0000 \\
    Cropping 2\% off & 1.0000 & 0.9995 & 1.0000 \\
    Cropping 5\% off & 1.0000 & 0.9990 & 1.0000 \\
    Cropping 10\% off & 0.7880 & 0.7190 & 0.1070 \\
    Cropping 15\% off & 0.6690 & 0.6275 & 0.0620 \\
    Cropping 20\% off & 0.8747 & 0.7975 & 0.2470 \\
    Cropping 25\% off & 0.9265 & 0.8490 & 0.3330 \\
    Cropping 50\% off & 0.7645 & 0.7090 & 0.0470 \\
    Cropping 75\% off & 0.6154 & 0.5915 & 0.0280 \\
    \bottomrule
  \end{tabular}
\end{table*}

\begin{table*}[hb]
  \centering
  \caption{Attacker: StirMark 3.1.79. Defense: Tree-Ring~\cite{wen2023treerings} (watermark: ring). Diffusion model: LDM. Number of testing images: watermarked: 1000 / unwatermarked: 1000. Generated image size: 3x512x512. Latent size: 4x64x64.}\label{Table: Tree-Ring on Stirmark part II}
  \begin{tabular}{@{}l|ccc@{}}
    \toprule
    Attack & AUC & ACC & TPR@0.01FPR \\ 
    \midrule[2pt]
    Linear (1.007, 0.010, 0.010, 1.012) & 1.0000 & 1.0000 & 1.0000 \\
    Linear (1.010, 0.013, 0.009, 1.011) & 1.0000 & 1.0000 & 1.0000 \\
    Linear (1.013, 0.008, 0.011, 1.008) & 1.0000 & 1.0000 & 1.0000 \\
    Aspect ratio change (0.80, 1.00) & 0.6752 & 0.6300 & 0.0650 \\
    Aspect ratio change (0.90, 1.00) & 0.9999 & 0.9955 & 0.9990 \\
    Aspect ratio change (1.00, 0.80) & 0.9206 & 0.8400 & 0.4470 \\
    Aspect ratio change (1.00, 0.90) & 1.0000 & 0.9995 & 1.0000 \\
    Aspect ratio change (1.00, 1.20) & 0.8612 & 0.7755 & 0.2640 \\
    Aspect ratio change (1.00, 1.10) & 1.0000 & 0.9990 & 1.0000 \\
    Aspect ratio change (1.10, 1.00) & 1.0000 & 0.9990 & 1.0000 \\
    Aspect ratio change (1.20, 1.00) & 0.9781 & 0.9270 & 0.7360 \\
    Rotation 1.00 & 1.0000 & 1.0000 & 1.0000 \\
    Rotation 2.00 & 1.0000 & 0.9985 & 1.0000 \\
    Rotation 5.00 & 0.9584 & 0.8880 & 0.5500 \\
    Rotation 10.00 & 0.8968 & 0.8150 & 0.2930 \\
    Rotation 15.00 & 0.9339 & 0.8560 & 0.4210 \\
    Rotation 30.00 & 0.8914 & 0.8100 & 0.2280 \\
    Rotation 45.00 & 0.8735 & 0.7865 & 0.1920 \\
    Rotation 90.00 & 0.9994 & 0.9935 & 0.9950 \\
    Flipping & 1.0000 & 1.0000 & 1.0000 \\
    Rotation scale 1.00 & 1.0000 & 1.0000 & 1.0000 \\
    Rotation scale 10.00 & 0.8993 & 0.8180 & 0.3770\\
    Rotation scale 15.00 & 0.9373 & 0.8620 & 0.4510 \\
    Rotation scale 30.00 & 0.8942 & 0.8155 & 0.2080 \\
    Rotation scale 45.00 & 0.8767 & 0.7885 & 0.2720 \\
    Rotation scale 90.00 & 0.9994 & 0.9935 & 0.9950 \\
    Shearing x-0\% y-1\% & 1.0000 & 1.0000 & 1.0000 \\
    Shearing x-1\% y-0\% & 1.0000 & 1.0000 & 1.0000 \\
    Shearing x-1\% y-1\% & 1.0000 & 1.0000 & 1.0000 \\
    Shearing x-0\% y-5\% & 1.0000 & 0.9970 & 1.0000 \\
    Shearing x-5\% y-0\% & 1.0000 & 0.9965 & 0.9980 \\
    Shearing x-5\% y-5\% & 0.9999 & 0.9950 & 0.9960 \\
    Random bending & 0.9999 & 0.9985 & 0.9990 \\
    \bottomrule
  \end{tabular}
\end{table*}

\begin{table*}[hb]
  \centering
  \caption{Attacker: StirMark 3.1.79. Defense: RingID~\cite{ci2024ringid} (watermark: ring \& noise). Diffusion model: LDM. Number of testing images: watermarked: 1000 / unwatermarked: 1000. Generated image size: 3x512x512. Latent size: 4x64x64.}\label{Table: RingID on Stirmark part I}
  \begin{tabular}{@{}l|ccc@{}}
    \toprule
    Attack & AUC & ACC & TPR@0.01FPR \\
    \midrule[2pt]
    Median filter 2x2 & 1.0000 & 1.0000 & 1.0000 \\
    Median filter 3x3 & 1.0000 & 1.0000 & 1.0000 \\
    Median filter 4x4 & 1.0000 & 1.0000 & 1.0000 \\
    Gaussian filter 3x3 & 1.0000 & 1.0000 & 1.0000 \\
    JPEG 90 & 1.0000 & 1.0000 & 1.0000 \\
    JPEG 80 & 1.0000 & 1.0000 & 1.0000 \\
    JPEG 70 & 1.0000 & 1.0000 & 1.0000 \\
    JPEG 60 & 1.0000 & 1.0000 & 1.0000 \\
    JPEG 50 & 1.0000 & 1.0000 & 1.0000 \\
    JPEG 40 & 1.0000 & 1.0000 & 1.0000 \\
    JPEG 30 & 1.0000 & 1.0000 & 1.0000 \\
    JPEG 20 & 1.0000 & 1.0000 & 1.0000 \\
    JPEG 10 & 1.0000 & 0.9995 & 1.0000 \\
    FMLR (Frequency Mode Laplacian Removal) & 1.0000 & 1.0000 & 1.0000 \\
    Color reduce & 1.0000 & 1.0000 & 1.0000 \\
    Sharpening 3x3 & 1.0000 & 1.0000 & 1.0000 \\
    1 column, 1 row removed & 1.0000 & 1.0000 & 1.0000 \\
    5 column, 1 row removed & 1.0000 & 1.0000 & 1.0000 \\
    1 column, 5 row removed & 1.0000 & 1.0000 & 1.0000 \\
    17 column, 5 row removed & 1.0000 & 1.0000 & 1.0000 \\
    5 column, 17 row removed & 1.0000 & 1.0000 & 1.0000 \\
    Cropping 1\% off & 1.0000 & 1.0000 & 1.0000 \\
    Cropping 2\% off & 1.0000 & 1.0000 & 1.0000 \\
    Cropping 5\% off & 1.0000 & 1.0000 & 1.0000 \\
    Cropping 10\% off & 0.8307 & 0.7505 & 0.1030 \\
    Cropping 15\% off & 0.5457 & 0.5415 & 0.0120 \\
    Cropping 20\% off & 0.8277 & 0.7535 & 0.1070 \\
    Cropping 25\% off & 0.5457 & 0.5450 & 0.0120 \\
    Cropping 50\% off & 0.3713 & 0.5020 & 0.0070 \\
    Cropping 75\% off & 0.5412 & 0.5425 & 0.0150 \\
    \bottomrule
  \end{tabular}
\end{table*}

\begin{table*}[hb]
  \centering
  \caption{Attacker: StirMark 3.1.79. Defense: RingID~\cite{ci2024ringid} (watermark: ring \& noise). Diffusion model: LDM. Number of testing images: watermarked: 1000 / unwatermarked: 1000. Generated image size: 3x512x512. Latent size: 4x64x64.}\label{Table: RingID on Stirmark part II}
  \begin{tabular}{@{}l|ccc@{}}
    \toprule
    Attack & AUC & ACC & TPR@0.01FPR \\
    \midrule[2pt]
    Linear (1.007, 0.010, 0.010, 1.012) & 1.0000 & 1.0000 & 1.0000 \\
    Linear (1.010, 0.013, 0.009, 1.011) & 1.0000 & 1.0000 & 1.0000 \\
    Linear (1.013, 0.008, 0.011, 1.008) & 1.0000 & 1.0000 & 1.0000 \\
    Aspect ratio change (0.80, 1.00) & 0.9996 & 0.9920 & 0.9920 \\
    Aspect ratio change (0.90, 1.00) & 1.0000 & 1.0000 &1.0000 \\
    Aspect ratio change (1.00, 0.80) & 0.9999 & 0.9935 & 0.9950 \\
    Aspect ratio change (1.00, 0.90) & 1.0000 & 1.0000 & 1.0000 \\
    Aspect ratio change (1.00, 1.20) & 1.0000 & 0.9980 & 1.0000 \\
    Aspect ratio change (1.00, 1.10) & 1.0000 & 1.0000 & 1.0000 \\
    Aspect ratio change (1.10, 1.00) & 1.0000 & 1.0000 & 1.0000 \\
    Aspect ratio change (1.20, 1.00) & 1.0000 & 0.9980 & 0.9980 \\
    Rotation 1.00 & 1.0000 & 1.0000 & 1.0000 \\
    Rotation 2.00 & 1.0000 & 1.0000 & 1.0000 \\
    Rotation 5.00 & 0.9777 & 0.9260 & 0.6050 \\
    Rotation 10.00 & 0.6227 & 0.5955 & 0.0080 \\
    Rotation 15.00 & 0.8210 & 0.7435 & 0.1100 \\
    Rotation 30.00 & 0.4720 & 0.5060 & 0.0070 \\
    Rotation 45.00 & 0.3698 & 0.5005 & 0.0020 \\
    Rotation 90.00 & 1.0000 & 1.0000 & 1.0000 \\
    Flipping & 1.0000 & 0.9995 & 1.0000 \\
    Rotation scale 1.00 & 1.0000 & 1.0000 & 1.0000 \\
    Rotation scale 10.00 & 0.6313 & 0.6040 & 0.0090 \\
    Rotation scale 15.00 & 0.8145 & 0.7415 & 0.1070 \\
    Rotation scale 30.00 & 0.4687 & 0.5080 & 0.0050 \\
    Rotation scale 45.00 & 0.3762 & 0.5000 & 0.0020 \\
    Rotation scale 90.00 & 1.0000 & 1.0000 & 1.0000 \\
    Shearing x-0\% y-1\% & 1.0000 & 1.0000 & 1.0000 \\
    Shearing x-1\% y-0\% & 1.0000 & 1.0000 & 1.0000 \\
    Shearing x-1\% y-1\% & 1.0000 & 1.0000 & 1.0000 \\
    Shearing x-0\% y-5\% & 1.0000 & 1.0000 & 1.0000 \\
    Shearing x-5\% y-0\% & 1.0000 & 1.0000 & 1.0000 \\
    Shearing x-5\% y-5\% & 1.0000 & 1.0000 & 1.0000 \\
    Random bending & 1.0000 & 1.0000 & 1.0000 \\
    \bottomrule
  \end{tabular}
\end{table*}


\end{document}